\documentclass[11pt]{article}
\usepackage[letterpaper,margin=1in]{geometry}
\usepackage{lmodern}
\usepackage[authoryear,round]{natbib}
\usepackage[T1]{fontenc}
\usepackage{microtype,amsmath,amssymb,amsthm,mathtools,mathrsfs}
\usepackage{booktabs,tabularx,longtable,xltabular,array,enumitem}
\usepackage{needspace}
\usepackage{placeins}
\usepackage{xcolor,graphicx}
\usepackage{hyperref,url}
\hypersetup{hidelinks}

\input{macros}

\newcommand{\iout}{{\rm out}}   %

\title{Where, When, and How Many Bits? \\
Scaling Laws for SGD with Stochastic Rounding}
\author{
\begin{minipage}{0.475\textwidth}
  \centering
  Zhichao Wang\\
  ICSI and Department of Statistics \\
  University of California, Berkeley, USA \\
  \texttt{zhichao.wang@berkeley.edu}
  \end{minipage}
  \begin{minipage}{0.475\textwidth}
    \centering
  Michael W. Mahoney \\
  ICSI, LBNL and Department of Statistics \\
  University of California, Berkeley, USA \\
  \texttt{mmahoney@stat.berkeley.edu}
  \end{minipage}
  \hfill
  }
\date{}

\begin{document}
\maketitle
\begin{abstract}
Low-precision training reduces computation but introduces additional rounding error.
We study how stochastic rounding (SR) affects prediction risk as model size,
training time and precision change. We consider a two-layer linear network
with a fixed random first layer, trained by mini-batch SGD on fresh Gaussian
data with power-law spectra. At each SGD step, the features used in forward and backward
computations are rounded independently. We prove matched upper and lower
bounds on prediction error for the final model and for uniform parameter
averaging, with general learning-rate and precision schedules chosen before training.
For fixed precisions and constant learning rates with parameter averaging,
we derive how the smallest prediction error decreases with compute and
identify model sizes and training times that achieve this optimal-compute scaling. Rounding only the forward features preserves the compute
scaling of exact arithmetic, while rounding backward features changes it.
Averaging parameters can change which computation needs higher
precision and when higher precision is useful. The laws also give bit
requirements for a given accuracy tolerance and show how backward rounding changes
learning-rate schedule design. At a fixed compute budget, lower precision can improve prediction when the benefit of a larger model or longer training outweighs
the additional SR-induced error. Model simulations illustrate these tradeoffs,
and experiments with an MLP and a small transformer examine rounding choices
beyond the theoretical model.
\end{abstract}

\section{Introduction}
\label{sec:intro}

Training large machine learning models in low precision can reduce memory usage and computational costs. For example,
mixed-precision training computes matrix multiplications of the forward and
backward passes in 16-bit arithmetic while keeping a high-precision copy
of the weights \citep{micikevicius2018mixed}. Recent LLMs are even pretrained
with 8-bit and 4-bit floating-point matrix products
\citep{micikevicius2022fp8,deepseek2024v3,tseng2025mxfp4,castro2025quartet,chmiel2025fp4,panferov2026quartet2}.
Meanwhile, quantization-aware training (QAT) simulates low-precision weights and
activations during training, making the trained model run in low
precision at inference \citep{jacob2018quantization,esser2020lsq,liu2024llmqat}.
A third setting saves training memory by quantizing the activations saved for backpropagation, while leaving the forward pass
unchanged \citep{chen2021actnn,liu2022gact}.
Separately, \citet{chen2020statistical} show extra gradient variance
caused by rounding in fully quantized training, where weights, activations and gradients are quantized.

In this paper, we focus on the first setting, in which the arithmetic of
training itself is rounded. Stochastic rounding (SR) rounds a number to
one of its neighboring representable values with probabilities that
preserve the input value in expectation. This unbiasedness can avoid update
stagnation at precisions where round-to-nearest stalls
\citep{gupta2015limited,zhang2017zipml,desa2018halp,zhao2024direct,ozkara2025sr}.
Existing SR theory provides probabilistic bounds on rounding errors in
numerical computations
\citep{higham2019new,higham2020sharper,connolly2021sr,croci2022sr,elarar2022sr,drineas2024stochastic,elarar2026survey}
and convergence guarantees for low-precision SGD
\citep{li2017training,desa2018halp,ozkara2025sr}.
Its benefit on small singular values has also been studied
\citep{boutsikas2024small,dexter2025stochastic,ma2026stochastic}.
These results do not further describe how prediction risk depends
jointly on model width, training time and precision, or how to allocate
precision bits across operations and training steps. So we aim to answer the following question in this paper:
\begin{center}
\emph{Where, when, and with how many bits should we use stochastic rounding during training?}
\end{center}

To answer this question, we prove a neural scaling law, which describes how loss asymptotically decreases with model size, training time and compute
\citep{kaplan2020scaling,hoffmann2022training,subramanian2026weather},
This law practically guides the allocation of compute resources during training. Recent empirical work extends these laws to low-precision training and model quantization \citep{kumar2024precision,sun2025floating,frantar2025compression,chen2025scaling}.
Typically, \citet{kumar2024precision,sun2025floating} empirically use fitted scaling laws to choose model size, training data and precision for a given compute budget.
These findings motivate a theoretical study of precision-dependent loss scaling and quantitative analyses of training dynamics with quantization \citep{zhang2025quantization,zhang2026precision}.

In the exact-arithmetic setting, theory of neural scaling law has been extensively studied \citep{bahri2024explaining,maloney2022solvable,bordelon2024dynamical,paquette2024phases,paquette2024homogenization,lin2024scaling,lin2025reuse,bordelon2026optimal,chen2026minibatch}.
Specifically, \citet{paquette2024phases} gives the baseline compute optimal scaling law. We follow these results and study a two-layer linear network with a fixed random first layer and train its output layer using online SGD with SR. We consider both the final model and a model formed by averaging parameters during training.
And the features used in the forward and backward computations are rounded independently, possibly at different precisions over the training. 
\citet{kim2026mixed} concurrently fit separate forward/backward precision effects and hardware-aware allocations, supported by a convex-optimization upper bound.
We theoretically derive binary-SR risk scaling laws and optimal compute allocations in low precision. 

\begin{figure}[htp]\centering
\includegraphics[width=.325\linewidth]{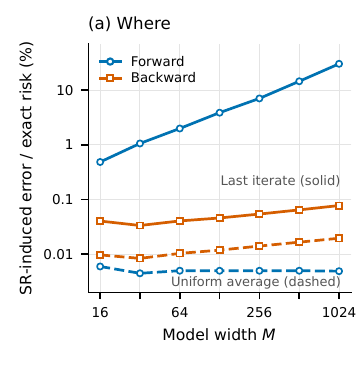}\hfill
\includegraphics[width=.325\linewidth]{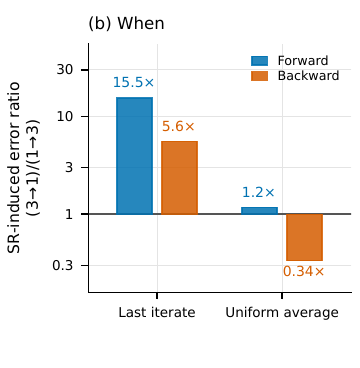}\hfill
\includegraphics[width=.325\linewidth]{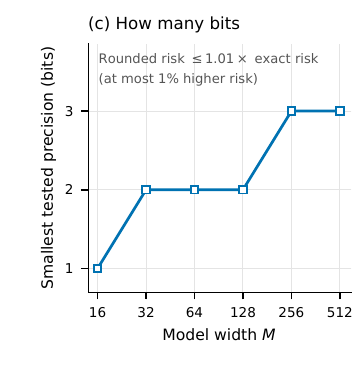}
\caption{\small
(a)~We round the features either at forward pass or backward pass in SGD dynamics \eqref{eq:sgd} with one-bit precision and compare the SR-induced error over the corresponding risk
without rounding. Solid lines evaluate the final model (last iterate); dashed lines evaluate the model obtained by averaging parameters over training
(uniform average). Forward rounding causes more error for last iterate, while backward causes more error for the average. 
(b)~Using three bits in the first half of training and one bit
in the second, or the reverse, with nearly equal average precision.
Each bar divides the SR-induced risk with higher precision early by that with
higher precision late. Values above one indicate higher precision at late phase of training outperforms the case with higher precision at early phase of training.
(c)~Each point selects the smallest tested
precision that keeps the final prediction risk no more than $1\%$ higher than
its value without rounding, at the same width and duration. Both locations
use the same precision from 1 to 4 bits. 
Appendix~\ref{app:experiments} gives experimental details.}
\label{fig:compute_law}
\end{figure}

\paragraph{Summary of main contributions.}
\begin{itemize}[leftmargin=*,itemsep=2pt,topsep=3pt]
\item \textbf{Neural scaling laws and compute scaling laws}
(Section~\ref{sec:dynamics}).
We prove general neural scaling laws, showing how prediction error changes with model size, training time and SR precision. For fixed precisions and constant learning rates with parameter averaging, we also derive how the smallest prediction error decreases with
compute and which model sizes and training times achieve this scaling under our
compute model.
\item \textbf{Where, when and how much to round}
(Sections~\ref{sec:where}--\ref{sec:howmuch}).
Using our scaling laws, we explain which computations need higher precision, when SR is useful during training, and how many needed bits meet an accuracy target. Figure~\ref{fig:compute_law}
illustrates how the answers change with different predictors, model size
and training time.
\item \textbf{Learning-rate design and compute allocation}
(Sections~\ref{sec:lr}--\ref{sec:compute-allocation}). Our scaling law covers general learning-rate and precision schedules.
We show how backward rounding changes learning-rate schedule design and compare
precision choices at a fixed compute budget. With a fixed compute budget, lower precision can improve prediction when the benefit of a larger model or longer training outweighs
the added rounding error; Figure~\ref{fig:frontier} illustrates this trade-off.
\end{itemize}
For more related work, see Appendix~\ref{app:related}. Appendices~\ref{app:actual-scaling}--\ref{app:rules} give the mathematical
foundations and proofs of our theorems. Appendix~\ref{app:experiments} shows the experimental details.

\section{Model setup}
\label{sec:setup}

\paragraph{Dataset and random features.}
We consider online SGD for Gaussian squared-loss regression with input dimension $d$ and feature width $M$.
For each sample $x\sim N(0,\Sigma_d)$ and label noise
$\epsilon\sim N(0,\sigma^2)$ with fixed $\sigma^2>0$, set the label to be
$y=x^\top\theta_*+\epsilon$.
Here $\Sigma_d=\diag(\lambda_j)\in\R^{d\times d}$ is the input covariance,
and $\theta_*\sim N(0,C_\theta)$ is a target parameter with covariance $C_\theta=\diag(\vartheta_j)\in\R^{d\times d}$.
We assume, for fixed constants $c_\lambda,C_\lambda,c_{\lambda\vartheta},C_{\lambda\vartheta}>0$,
\begin{equation}
\label{eq:population}
\begin{gathered}
d=\lceil c_dM\rceil,\quad c_d>1,\quad 1<b<a,\\
c_\lambda j^{-a}\le\lambda_j\le C_\lambda j^{-a},\qquad
c_{\lambda\vartheta}j^{-b}\le\lambda_j\vartheta_j
 \le C_{\lambda\vartheta}j^{-b}\quad(1\le j\le d).
\end{gathered}
\end{equation}
The exponents $a,b$ describe spectral and target-energy decay, relate to the source and capacity conditions \citep{paquette2024phases}. The constants in \eqref{eq:population} do not depend on $M$.
Let $S\in\R^{M\times d}$ be a random weight matrix with i.i.d. $N(0,1/M)$ entries,
drawn once and held fixed during training.
The sketch $S$, $\theta_*$, inputs $x$ and label noises are jointly independent.
The feature vector is $z=Sx$ and the predictor is the two-layer linear model
$v^\top z$. The population covariance of the feature vector $z$ and its largest eigenvalue are defined by
\begin{equation}
\label{eq:G}
 G=S\Sigma_dS^\top,\qquad g_1=\lambda_{\max}(G).
\end{equation}

\paragraph{Stochastic rounding (SR).}
For integer $p\ge0$ we use the binary grid
$\mathbb F_p=\{0\}\cup\{\pm2^e(1+k2^{-p}):e,k\in\mathbb Z,\ 0\le k<2^p\}$.
Here exponent $e$ is unbounded. For any $x\in\R$, let $\ell_p(x)$ and $u_p(x)$ be the neighboring grid points such that $\ell_p(x)<x<u_p(x)$. The stochastic rounding operator $Q_p(x)$ quantize $x$ to the upper point $u_p(x)$ with probability
$(x-\ell_p(x))/(u_p(x)-\ell_p(x))$ and the lower point $\ell_p(x)$ otherwise.
Grid points are fixed and rounding is unbiased: $\E[Q_p(x)\mid x]=x$. Besides, the variance of the rounding error is bounded by
\begin{equation}
\label{eq:cell}
\nu_p(x)=\E[(Q_p(x)-x)^2\mid x]
=(x-\ell_p(x))(u_p(x)-x)\le\tfrac14\,4^{-p}x^2.
\end{equation}
Denote the exact arithmetic by $Q_\infty(x)=x$. On vectors, $Q_p$ acts
coordinatewise independently.

\paragraph{The SGD training algorithm.}
Start from $v_0=0$ and take $T\ge1$ online SGD steps with fixed batch
size $B\ge1$ for mean squared loss of power-law random features. At update $u$, we draw $B$ fresh samples $\{x_{u\ell},y_{u\ell}\}_{\ell=1}^B$ and
\emph{independently} round features for prediction and gradient computation:
\begin{equation}
\label{eq:copies}
a_{u\ell}=Q_{p_{\fw,u}}(Sx_{u\ell}),\qquad
b_{u\ell}=Q_{p_{\bw,u}}(Sx_{u\ell}).
\end{equation}
Here $p_{\fw,u}$ and $p_{\bw,u}$ are the mantissa precisions of the two
independently rounded copies of $Sx_{u\ell}$.
The forward copy $a_{u\ell}$ is used to compute the prediction
$a_{u\ell}^\top v_u$; the backward copy $b_{u\ell}$ multiplies the prediction
residual in the gradient. We round only these two feature copies.
The SGD update is
\begin{equation}
\label{eq:sgd}
v_{u+1}=v_u-\frac{\eta_u}{B}\sum_{\ell=1}^B
 b_{u\ell}(a_{u\ell}^\top v_u-y_{u\ell}),\qquad 0\le u<T,
\end{equation}
where $\eta_u\ge0$ is the learning rate at step $u$. All other arithmetic is exact.
Write
\begin{equation}\label{eq:precision}
q_p=4^{-p},\qquad q_{r,u}=q_{p_{r,u}},\quad
r\in\{\fw,\bw\},\qquad q_\infty=0,
\end{equation}
\begin{equation}\label{eq:clock}
 \tau_t=\sum_{u<t}\eta_u,\qquad \rho_u=\eta_u/B,\qquad h_u=\eta_u^2/B.
\end{equation}
Here cumulative learning rate $\tau_t$ measures optimization time. We denote constant learning rate if $\eta_u=\eta$ for all $u$. In this case, define $\rho=\eta/B$ and $\tau_t=\eta t$.

\paragraph{Outputs and prediction risks.}
For SGD training in \eqref{eq:sgd}, we consider the last iterate and the uniform average outputs at final step $T$ as our predictors
\begin{equation}\label{eq:output-weights}
 v_T^{\rm last}=v_T,\qquad v_T^{\rm avg}=\frac1T\sum_{t=1}^Tv_t.
\end{equation}
Averaging can reduce SGD variance
\citep{polyak1992acceleration,yang2019swalp} in SGD dynamics.
Write $\|u\|_A^2=u^\top Au$. The population prediction excess risk can be written and simplified as
\begin{equation}
\label{eq:risk}
\mathcal E_S(v,\theta_*)=\frac{1}{2}\E_{x,\epsilon}[(y-v^\top Sx)^2|S,\theta_*]-\frac{\sigma^2}{2}=\tfrac12\|S^\top v-\theta_*\|_{\Sigma_d}^2.
\end{equation}
Evaluation uses exact arithmetic. 
Define its prediction risk and rounding-induced risk increase by
\begin{equation}
\label{eq:penalty}
\begin{aligned}
\mathscr R^{\iout}(T)&=
 \E[\mathcal E_S(v_T^{\iout},\theta_*)\mid S],\\
\Delta^{\iout}(T)&=
 \mathscr R^{\iout}(T)-\mathscr R_0^{\iout}(T),
\end{aligned}
\end{equation}
where index $\iout\in\{\mathrm{last},\mathrm{avg}\}$, expectation averages over the teacher, training data and SR,
conditional on $S$. The exact-arithmetic risk $\mathscr R_0^{\iout}(T)$ is the prediction risk with $p_{\fw,u},p_{\bw,u}=\infty$ for all $u$, under the same initialization, learning rates, batch size, time horizon and
output. 

\section{Scaling laws for stochastic-rounding SGD}
\label{sec:dynamics}\label{sec:limit}\label{sec:scaling}\label{sec:exact}

In this section, we give a neural scaling law of the prediction risks in \eqref{eq:penalty} for general learning-rate and precision
schedules, then evaluate it at constant learning rate. The law is decomposed into the error of exact-arithmetic training and the additional error due to
rounding. It applies to both outputs in \eqref{eq:output-weights}. The difference of their scaling laws
explain why they need different precision policies.

\subsection{Main result for general learning-rate and precision schedules}
\label{sec:general-schedules}

A rounding error introduced at update $u$ first affects $v_{u+1}$.
Its contribution to the final predictor depends on the subsequent
learning rates and the output rule. 
We introduce the following scalars to describe how rounding error at step $u$ contributes to prediction risk for last iterate and uniform average predictors. For $r=1,2$ and $0\le u<T$, define
\begin{align}
 K_r^{\rm last}(u)&=[1+2(\tau_T-\tau_{u+1})]^{\frac{1}{a}-r},\label{s3:eq:scalar-responses_last}\\
 K_r^{\rm avg}(u)&=\frac1{T^2}\sum_{v,w=u+1}^{T}
       [1+\tau_v+\tau_w-2\tau_{u+1}]^{\frac{1}{a}-r}.\label{s3:eq:scalar-responses_avg}
\end{align}
For the last iterate, an error introduced at update $u$ is propagated through
updates $u+1,\ldots,T-1$ before the final prediction. The cumulative learning
rate $\tau_T-\tau_{u+1}=\sum_{v=u+1}^{T-1}\eta_v$ measures the amount of
learning after that error occurs. For uniform averaging, the same error
contributes to every later iterate $v_{u+1},\ldots,v_T$. These contributions
are correlated because they originate from the same error. Squaring their
average therefore produces a term for every pair of later iterates, indexed
by $v,w$, and the factor $T^{-2}$ in \eqref{s3:eq:scalar-responses_avg}.
For exact-arithmetic training and $\iout\in\{\mathrm{last},\mathrm{avg}\}$, we define
\begin{equation}\label{s3:eq:scalar-reference}
\begin{aligned}
 \widehat B^{\rm last}&=\tfrac12(1+2\tau_T)^{-(b-1)/a},&
 \widehat B^{\rm avg}&=\frac1{2T^2}\sum_{v,w=1}^{T}
                              (1+\tau_v+\tau_w)^{-(b-1)/a},\\
 \widehat V^{\iout}&=\frac{\sigma^2}{2}
                    \sum_{u<T}h_uK_2^{\iout}(u),&
 h_u&=\eta_u^2/B,
\end{aligned}
\end{equation}
to characterize the bias and the variance from
minibatch sampling at step $T$.
For $0\le t\le T$, define
\begin{equation}\label{s3:eq:scalar-source}
 \widehat U_t=\frac{\tau_t^2}{(1+\tau_t)^{(a+b-1)/a}}
  +\sigma^2\sum_{u<t}h_u
       [1+2(\tau_t-\tau_{u+1})]^{1/a-1}.
\end{equation}
For $\iout\in\{\rm last,\rm avg\}$, define $\widehat N^{\iout}$ to control the rounding-induced risk increase:
\begin{equation}\label{s3:eq:scalar-rounding}
\begin{aligned}
 \widehat N^{\iout}=&\frac12\sum_{u<T}h_u\left[
  \frac{q_{\fw,u}\widehat U_u}{M}K_2^{\iout}(u)
  +\left(\frac{\sigma^2q_{\bw,u}}M+
      \frac{q_{\fw,u}q_{\bw,u}\widehat U_u}{M^2}\right)
                                           K_1^{\iout}(u)\right]\\
 &+\frac12\sum_{u<v<T}h_uh_v\left[
  \frac{\sigma^2q_{\bw,u}q_{\fw,v}}M K_2^{\iout}(v)+\frac{q_{\fw,u}q_{\bw,v}\widehat U_u}{M^2}
  [1+2(\tau_v-\tau_{u+1})]^{\frac{1}{a}-2}K_1^{\iout}(v)\right].
\end{aligned}
\end{equation}
For more explanation of these quantities, see Appendix~\ref{cl:finite}.

Fix $c_0\in(0,1)$. Appendix~\ref{app:forest} constructs positive thresholds
$\tau_*^{\rm sch}$ and $\rho_{\rm sch}$, depending only on the fixed model
quantities and $c_0$. For sufficiently large $M$, we require the learning rates
$\{\eta_u\}_{u=0}^{T}$ to satisfy
\begin{equation}\label{eq:scheduled-class}
 \tau_*^{\rm sch}\le\tau_T\le M^a,\qquad
 0\le\eta_ug_1\le c_0<1,\qquad \eta_u/B\le\rho_{\rm sch}.
\end{equation}
The thresholds
are independent of width, time horizon, batch size, schedules and the realized
sketch. The learning-rate and precision schedules are deterministically
chosen before the SR.

The following theorem gives matching upper and lower bounds for the error
already present in exact-arithmetic SGD and the additional error caused by
SR. It covers both the final iterate and uniform parameter averaging,
allowing learning rates and precisions to change during training.

\begin{theorem}[Scaling law for general SR and learning-rate schedules]
\label{thm:scheduled-main}
Recall \eqref{eq:penalty}. Consider any deterministic learning-rate schedule $\{\eta_u\}_{u=0}^{T}$ satisfying \eqref{eq:scheduled-class} and precision schedule $\{p_{\fw,u},p_{\bw,u}\}_{u=0}^{T}$. Under Assumption~\ref{as:assumption-schedules}, for each $D>0$, there exist some constants $c,K_D>0$ such that, for sufficiently large width $M$, with probability at least $1-6e^{-cM}-K_D M^{-D}$, the following holds. For $\iout\in\{\mathrm{last},\mathrm{avg}\}$, the prediction risk with SR at time $T$, $\mathscr R^{\iout}(T)=\mathscr R_0^{\iout}(T)+\Delta^{\iout}(T)$, satisfies
\begin{equation}\label{eq:scheduled-law}
\begin{aligned}
 \mathscr R_0^{\iout}(T)&\asymp
            M^{1-b}+\widehat B^{\iout}+\widehat V^{\iout}, \qquad
 \Delta^{\iout}(T)\asymp\widehat N^{\iout},
\end{aligned}
\end{equation}
where the constants are independent of width $M$, time horizon $T$, batch $B$, learning-rate and precision schedules, and $S$. 
\end{theorem}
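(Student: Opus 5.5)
The plan is to reduce everything to a deterministic second-moment recursion for the error covariance, conditional on $S$. Then I evaluate that recursion in the eigenbasis of $G$ using power-law deterministic equivalents.

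\textbf{Step 1: exact covariance recursion.} Write $e_u=v_u-v^\star$, where $v^\star$ is the population minimizer, together with the residual $\theta_*-S^\top v^\star$. Taking expectations over fresh data, over SR (which is unbiased and independent across the two copies), and over the teacher, the mean update stays exactly $\E v_{u+1}=(I-\eta_uG)\E v_u+\eta_u\E[z y]$. Rounding therefore enters only through second moments. Let $P_u=\E[e_ue_u^\top]$. Using Gaussian fourth moments (Wick/Isserlis) and the per-coordinate SR variance $\nu_p(z_i)$, I will derive a linear recursion
\[
P_{u+1}=(I-\eta_uG)P_u(I-\eta_uG)+h_u\bigl[\mathcal M(P_u)+\sigma^2_{\rm eff}G\bigr]+h_u\,\mathcal Q_u(P_u),
\]
with the following structure:
\begin{itemize}
\item $\mathcal M$ is the exact-arithmetic Gaussian fourth-moment correction.
\item $\mathcal Q_u$ collects three rounding contributions: $q_{\fw,u}\,\mathrm{diag}$-type terms acting on $P_u$ (forward), $q_{\bw,u}$ times the residual variance (backward), and $q_{\fw,u}q_{\bw,u}$ cross terms.
\end{itemize}
Because $\nu_p(x)\le\frac14 q_px^2$ and $\nu_p(x)\ge c\,q_p x^2$ fails only on a thin set, I will sandwich the diagonal factor $\E[\nu_p(z_i)]$ between constant multiples of $q_p\,\mathrm{diag}(G)$. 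Concentration of $S$ then makes $\mathrm{diag}(G)$, and the relevant traces, comparable to $\mathrm{tr}(G)/M$ times the identity. This sandwich is where the $1/M$ and $1/M^2$ factors in $\widehat N^{\iout}$ come from. Since everything is linear with nonnegative operators, upper and lower bounds for the rounding part follow by monotonicity. The difference of the recursions with and without rounding isolates $\Delta^{\iout}$ as a sum of nonnegative injected terms, each propagated forward.

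\textbf{Step 2: propagation and reduction to scalars.} Unroll the recursion as a Volterra-type sum. An injection at step $u$ of the form $h_u A$ contributes $\mathrm{tr}\bigl(G\,\Phi_{u+1,T}A\Phi_{u+1,T}\bigr)$ for the last iterate, with $\Phi_{s,t}=\prod_{s\le v<t}(I-\eta_vG)$. For the average it contributes the double sum over $v,w$ with $\Phi_{u+1,v},\Phi_{u+1,w}$.

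The condition $\eta_u g_1\le c_0$ gives $\prod(1-\eta\lambda)\asymp e^{-\lambda(\tau_t-\tau_s)}$ up to constants. Then
\[
\mathrm{tr}\bigl(G^r e^{-2\tau G}\bigr)\asymp(1+2\tau)^{1/a-r},
\]
provided the empirical spectral sums of $G$ match the power law. That is the random-features deterministic-equivalent input (Assumption~\ref{as:assumption-schedules} and the high-probability spectral estimates $1-6e^{-cM}-K_DM^{-D}$ on $S$), valid for $\tau\le M^a$. This produces $K_1^{\iout}$ (for $A\propto I$, i.e.\ the $\mathrm{diag}(G)$ injections with a trace factor giving $G^1$) and $K_2^{\iout}$ (for $A\propto G$).

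The exact part yields $\widehat B^{\iout}$ from $\theta_*$ projected through $e^{-\tau G}$, using $\lambda_j\vartheta_j\asymp j^{-b}$. It yields $\widehat V^{\iout}$ from the $\sigma^2G$ injections, and $M^{1-b}$ as the approximation floor. The quantity $\widehat U_u$ appears as the scalar size of $\mathrm{tr}(GP_u)$-type terms, namely the current excess risk of the iterate driving the forward rounding noise. The $\tau^2(1+\tau)^{-(a+b-1)/a}$ term is the growing $\|v_u\|^2$-like quantity.

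\textbf{Step 3: the double sum and closing the loop.} The terms with $h_uh_v$ arise because a backward (or forward) injection at step $u$ changes $P_v$, and this then feeds the forward-rounding injection at a later step $v$. Iterating the recursion twice gives exactly these terms. Higher-order iterates should be controlled by a geometric series once $\rho_{\rm sch}$ is small, since each extra feedback costs a factor $\lesssim\rho\sum_u(1+\tau)^{1/a-1}\cdots$ bounded by a small constant.

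\textbf{Main obstacle.} I expect the hard step to be this feedback control, together with the lower bound. The Gaussian fourth-moment term $\mathcal M$ and the rounding term $\mathcal Q$ both re-inject $\mathrm{tr}(GP)$. I need a Gronwall/renewal-type argument showing the resolvent series is dominated by its first two terms uniformly over schedules, using $\tau_T\le M^a$ and $\rho\le\rho_{\rm sch}$. For the lower bounds I would keep only the first-order injections, which are nonnegative, and use the lower SR variance bound coordinatewise in expectation over the Gaussian $z_i$.
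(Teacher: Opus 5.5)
Your overall architecture (conditional second-moment recursion, nonnegative rounding injections, unrolling through the propagators, power-law spectral sums) matches the paper's. But the step you flag as the main obstacle fails as you propose it: smallness of $\rho_{\rm sch}$ does not control the rounding feedback. Backward rounding injects a covariance comparable to $R_uJ$, with $J=\frac{M}{\theta_G}\operatorname{diag}(G_{11},\ldots,G_{MM})$, and its propagated $J$-trace is of order $M$ ($K_{JJ}^H\asymp M$, cf.\ \eqref{cl:JJ}). Consequently the same-update joint term $\mathcal I_t(W)\asymp\frac{q_{\fw,t}q_{\bw,t}}{M^2}\operatorname{tr}(JW)\,J$ feeds the parameter-moment gap $\operatorname{tr}(JX_t)$ back into itself with total gain $\Gamma_{\rm stab}(T)=\sum_uh_uq_{\fw,u}q_{\bw,u}/M$. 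For a constant schedule this gain is $\rho q_\fw q_\bw\tau/M$. It is unbounded for any fixed $\rho>0$ and fixed precisions at $\tau\asymp M^a$, and then even the lower comparison recursion grows exponentially in it, so no comparison $\Delta\asymp\widehat N$ with uniform constants can hold. This is exactly what Assumption~\ref{as:assumption-schedules} supplies ($\Gamma_{\rm stab},\Gamma_\times,\Gamma_{f\to b}\le\kappa$). Lemma~\ref{sc:th:primary} then closes the loop by composing causal kernel operators with these gains and $\rho_{\rm sch}$ as the small parameters. You read that assumption as a deterministic-equivalent input and never use its content. The lower bound has a parallel gap. Keeping only first-order injections gives $\Delta\gtrsim\widehat\Psi$, but the cross-time terms $\widehat C_{bf},\widehat C_{fb}$ in $\widehat N$ can dominate every direct term. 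For example, take a constant-rate last iterate with both precisions fixed and $\tau=cM$, which is admissible for small $c$: the backward-to-forward term exceeds the direct backward term by the factor $\rho q_\fw\tau^{1-1/a}\to\infty$. You must unroll once in the lower comparison as well, as in Proposition~\ref{as:prop-native-lower}, via $y_t^-\ge Z_t^{b,-}$ and $x_t^-\ge Y_t^{f,-}$.

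Your geometric input is also misidentified. $\operatorname{diag}(G)$ is not comparable to $(\operatorname{tr}G/M)I$: $MG_{ii}=\sum_j\lambda_jZ_{ij}^2$ with i.i.d.\ standard Gaussians has $O(1)$ effective degrees of freedom and independent rows, so $\max_iJ_{ii}\gtrsim\log M$. What the proof needs are the forms $c_J\le\phi_j^\top J\phi_j\le C_J$ and $\phi_j^\top J^2\phi_j\le C_J$ along eigenvectors of $G$; they control $\mathcal R_u^{\rm out}(J)$, $K_{GJ}$, $K_{JJ}$ and the forward source. This is a delocalization statement. The paper proves it by showing the sign-randomized eigenframe is Haar and independent of the spectrum, then bounding mixed moments of Haar columns (Lemma~\ref{haar:lem:measurable}). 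That event, not a spectral estimate, produces the $K_DM^{-D}$ term. Three smaller inputs are also missing or wrong:
\begin{enumerate}
\item The forward-rounding source is $\operatorname{tr}(J\,\mathbb E[v_uv_u^\top])$, the parameter second moment, which grows like $\tau_u^{(a-b+1)/a}$. It is not the excess risk $\operatorname{tr}(GP_u)$, as your Step~2 says and as Step~1's ``acting on $P_u$'' suggests.
\item The forward map needs the matrix bound $\mathbb E[\nu_p(z_i)zz^\top]\asymp q_pG_{ii}G$, and the joint lower bound needs $\mathbb E[\nu_p(z_i)\nu_{p'}(z_k)]\gtrsim q_pq_{p'}G_{ii}G_{kk}$ for correlated coordinates (Lemmas~\ref{arithmetic:lem:scalar}--\ref{arithmetic:lem:joint}). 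Scalar bounds on $\mathbb E\nu_p(z_i)$ alone do not give these.
\item Because $V=L_SC_\theta L_S^\top$ does not commute with $G$, $\widehat B$ requires the Loewner comparison $V\asymp G^{-(a-b)/a}$ of Lemma~\ref{source:thm:main}, not a coordinatewise projection of $\theta_*$.
\end{enumerate}
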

Assumption~\ref{as:assumption-schedules} restricts the learning-rate and precision schedules so that earlier rounding errors do not excessively amplify the variance of later rounding errors. The proof of Theorem~\ref{thm:scheduled-main} mainly uses second-moment recursions in \eqref{eq:moment-sources} along the training trajectory of \eqref{eq:sgd} and is given in Appendix~\ref{cl:finite}.

\subsection{Special case: constant learning rate and precisions}\label{sec:constant-schedule}

We now specialize Theorem~\ref{thm:scheduled-main} to constant learning rates and precisions.
Take $\eta_u=\eta$, $p_{\fw,u}=p_{\fw}$, and $p_{\bw,u}=p_{\bw}$ throughout the training in \eqref{eq:sgd} for all $u=0,\dots,T$. At step $T$, $\rho_T =\eta/B = \rho$ and $\tau_T=\eta T =\tau$. Then  Theorem~\ref{thm:scheduled-main} implies that
\begin{align}
 \mathscr R_0^{\rm last}(T)&\asymp
          M^{1-b}+\tau^{-(b-1)/a}+\rho s_M,\label{eq:clean-law-last}\\
 \mathscr R_0^{\rm avg}(T)&\asymp
          M^{1-b}+\tau^{-(b-1)/a}
                    +\rho s_M\tau^{-(a-1)/a},\label{eq:clean-law-avg}
 \qquad s_M=\sigma^2+M^{1-b},
\end{align}
and their additional error induced by SR training are given by
\begin{align}
 \Delta^{\rm last}(T)\asymp{}&
  \frac{\rho q_\fw}{M}\tau^{\frac{a-b+1}{a}}
  +\frac{\rho q_\bw}{M}
                (s_M+\tau^{-\frac{b-1}{a}})\tau^{\frac{1}{a}}+\frac{\rho^2q_\fw q_\bw}{M}
                (s_M+\tau^{-\frac{b-1}{a}})\tau,\label{eq:law-last}\\
 \Delta^{\rm avg}(T)\asymp{}&
  \frac{\rho q_\fw}{M}\tau^{\frac{2-b}{a}}
  +\frac{\rho q_\bw}{M}
                (s_M+\tau^{-\frac{b-1}{a}})\tau^{\frac{1}{a}}
  +\frac{\rho q_\fw q_\bw}{M^2}\tau^{\frac{a-b+2}{a}}.\label{eq:law-avg}
\end{align}
For full statements and proofs, see Corollaries~\ref{ca:constant-reference} and~\ref{ca:constant-corollary}. In \eqref{eq:law-last} and \eqref{eq:law-avg} the terms in each SR-induced error are forward, backward and joint rounding.
Exact forward or backward arithmetic sets $q_\fw=0$ or $q_\bw=0$,
respectively.
Full risk $\mathscr R^{\rm last}(T)$ and $\mathscr R^{\rm avg}(T)$ are the sums of the corresponding terms in \eqref{eq:clean-law-last}, \eqref{eq:clean-law-avg} and \eqref{eq:law-last}, \eqref{eq:law-avg}.
As a remark, in this special case, a sufficient condition for Assumption~\ref{as:assumption-schedules} in Theorem~\ref{thm:scheduled-main} is, for some constant $\kappa_{\rm c}>0$,
\begin{equation}\label{s3:eq:constant-gain}
 \rho q_\fw q_\bw\tau/M\le\kappa_{\rm c}.
\end{equation}

\paragraph{How rounding changes the learning curve.}
For growing width $M$ and optimization time $\tau$, prediction risk $\mathscr R^{\rm avg}(T)$ can approach zero even when SR makes it decrease more slowly than its exact-arithmetic counterpart.
At fixed positive $\rho,\sigma^2$, with $\tau\to\infty$ and $\tau\le M^a$ from \eqref{eq:scheduled-class}, the exact-arithmetic risk decays as
$\mathscr R_0^{\rm avg}(T)\asymp\tau^{-(b-1)/a}$.
The last iterate instead retains an SGD-noise floor of order
$\rho\sigma^2$, so increasing width and training time alone cannot make
its risk vanish at fixed $\rho$.

For the average iterate, preserving the exact-arithmetic risk order requires
$\Delta^{\rm avg}(T)=O(\mathscr R_0^{\rm avg}(T))$; rounding error becomes negligible when
$\Delta^{\rm avg}(T)=o(\mathscr R_0^{\rm avg}(T))$.
With forward arithmetic exact and fixed finite backward precision,
one can easily check $\tau=o(M^a)$ implies \eqref{eq:law-avg} vanishing, whereas preserving the order of $\mathscr R_0^{\rm avg}(T)$ requires $\tau=O(M^{a/b})$.
Thus SR-induced excess risk in \eqref{eq:law-avg} can determine the decay rate even while $\mathscr R^{\rm avg}(T)=o(1)$. On the other hand, with SR for both forward and backward passes,
$\tau \asymp M$ with a sufficiently small proportionality constant
satisfies \eqref{s3:eq:constant-gain} which directly implies that the risk $\mathscr R^{\rm avg}(T)$ vanishes
and rounding error is negligible. Along this family, risk of the uniformly averaged predictor tends to zero as width and training time increase, while both mantissa precisions remain fixed.

\subsection{Compute-optimal scaling law with precision}
\label{sec:compute-scaling}\label{sec:precision-compute}

We now use the constant-learning-rate laws to determine how model width
and training duration should grow with a compute budget. We focus on the
uniformly averaged predictor.
We charge the forward and gradient feature computations against a compute budget $F$:
\begin{equation}\label{eq:declared-compute}
 f:=BM\sum_{u<T}[c(p_{\fw,u})+c(p_{\bw,u})]\le F,\qquad
 c(p)=\begin{cases}(p+p_0)/(p_{\rm ref}+p_0),&p<\infty,\\
 1,&p=\infty.
 \end{cases}
\end{equation}
Here sign/exponent uses $p_0>0$ bits and $p_{\rm ref}$ is a fixed large
mantissa bit count for normalization.
Each forward or gradient feature computation at precision $p$ costs
$BM\,c(p)$, so fewer bits lower the charge for the same updates. Empirical
scaling studies also include precision-dependent training costs
\citep{kumar2024precision,sun2025floating,kim2026mixed}. We use the above \eqref{eq:declared-compute} to model the compute cost of SGD training in \eqref{eq:sgd} with different precisions for simplicity. 

Consider the setup of Section~\ref{sec:constant-schedule} with constant $\eta,B$ and fixed precision pair $(p_\fw,p_\bw)$, independently of the budget $F$. At each budget, we choose width
$M$ and training duration $T$ to minimize the prediction risk $\mathscr R^{\rm avg}(T)$, subject to $f\le F$ and the assumptions of
Theorem~\ref{thm:scheduled-main}.
Table~\ref{tab:compute-precision} presents compute-optimal scaling laws (the order of the smallest achievable risk), and the orders of $M$ and $\tau=\eta T$ attaining it. The formal statements and proofs are given in Proposition~\ref{app:prop-fixed-bit-compute}. Besides, Proposition~\ref{r4:prop-variable-rho-average} justifies the same scaling laws but allowing $\eta,B$ to vary across budgets. Proposition~\ref{r4:prop-variable-rho-last} addresses similar laws for the last iterate predictor.

\begin{table}[htbp]
\centering\small
\caption{Compute-optimal prediction risk $\mathscr R^{\rm avg}(T)$ 
with fixed precisions, $\eta,B$, and compute budget $F$.
Precisions $p_\fw,p_\bw$ may differ when they are finite. Columns of $M$ and $\tau$ show the orders attaining the compute-optimal risk. Here we only focus on the scaling order as $F\to\infty$. $1<b<a$.}
\label{tab:compute-precision}
\begin{tabular}{@{}l|ccc@{}}
\toprule
Feature precision (forward, backward) & Risk & Width $M$ & Optimization time $\tau$\\
\midrule
Exact, exact
 & $F^{-\frac{b-1}{a+1}}$ & $F^{\frac1{a+1}}$ & $F^{\frac a{a+1}}$\\
Finite, exact
 & $F^{-\frac{b-1}{a+1}}$ & $F^{\frac1{a+1}}$ & $F^{\frac a{a+1}}$\\
Exact, finite
 & $F^{-\frac{b-1}{a+b}}$ & $F^{\frac b{a+b}}$ & $F^{\frac a{a+b}}$\\
Finite, finite
 & $F^{-\frac{b-1}{2a}}$ & $F^{1/2}$ & $F^{1/2}$\\
\bottomrule
\end{tabular}
\end{table}

Table~\ref{tab:compute-precision} shows that rounding location changes
both the risk exponent and the optimal allocations of width and training
time. For $v_T^{\rm avg}$, fixed finite forward precision preserves
the exact-arithmetic exponent when backward features remain exact with $\tau\asymp M^a$. However, the attaining allocation for backward rounding only uses $\tau\asymp M^{a/b}$; both finite precisions case have $\tau\asymp M$. These allocations use progressively more width
for a given optimization time, but their risk decays
successively more slowly with compute. Low precision therefore need not create an
accuracy floor when $M$ and training time increase together.
The exponents alone in Table~\ref{tab:compute-precision} do not determine which configuration has lower risk at a fixed budget: precision affects both exponents and risk coefficients. 
Section~\ref{sec:compute-allocation} develops a more detailed analysis of compute allocation mechanisms and finite-budget comparisons.

\section{Implications for low-precision training}
\label{sec:answers}
In this section, we apply scaling laws in Section~\ref{sec:dynamics} to answer three precision questions: where rounding is
least harmful, when training needs higher precision, and how many bits
meet an accuracy target. We then specifically consider learning-rate design and
compute allocation. Appendix~\ref{app:rules} gives the full
statements and proofs of the claims in this section.

\subsection{Where: rounding location for different predictors}
\label{sec:where}

Which location in training updates \eqref{eq:sgd} needs more precision? We compare two different SGD dynamics with different rounding locations: forward-only and backward-only SR in \eqref{eq:copies} using the same mantissa bit $p$. Let $q=4^{-p}>0$. For $\iout\in\{{\rm last},{\rm avg}\}$, let $\Delta^{\iout}(q,0)$ and $\Delta^{\iout}(0,q)$ denote $\Delta^{\iout}(T)$ in \eqref{eq:law-last} and \eqref{eq:law-avg} with $(q_\fw,q_\bw)=(q,0)$ and $(q_\fw,q_\bw)=(0,q)$, respectively; the other location is exact at each step. Consider $\tau\asymp M^a$ and setup of Section~\ref{sec:constant-schedule}.
Proposition~\ref{prop:regime} derives the following ratios for SR-induced risk increase:
\begin{equation}\label{eq:location-ratio}
 \frac{\Delta^{\rm last}(q,0)}{\Delta^{\rm last}(0,q)}
\asymp\frac{M^{a-b}}{\sigma^2+M^{1-b}},\qquad
 \frac{\Delta^{\rm avg}(q,0)}{\Delta^{\rm avg}(0,q)}
\asymp\frac{M^{1-b}}{\sigma^2+M^{1-b}}.
\end{equation}
With fixed label noise, the first ratio grows as $M^{a-b}$ and the second decreases as $M^{1-b}$. Thus, forward-only rounding causes a larger SR-induced error for the last iterate, whereas backward-only rounding leads to a larger error for the uniform average output, for sufficiently large $M$.

From \eqref{eq:law-last} to \eqref{eq:law-avg} with $\tau\asymp M^a$, averaging makes the forward-only SR-induced error smaller by a factor of order $M^{a-1}$,
whereas backward-only errors have the same order for both outputs.
Averaging also reduces exact-arithmetic risk in \eqref{eq:clean-law-avg}, so
backward rounding error becomes relatively dominant.
Figure~\ref{fig:compute_law}(a) illustrates this reversal with one-bit
rounding at one location. The practical lesson is to choose between the last iterate and the uniform average outputs before allocating precision across computations; under the conditions above, we need higher forward precision for the last iterate and higher backward precision for the uniform average. As a remark, \eqref{eq:location-ratio} is only for single precision location comparison, hence when both locations are rounded, the interaction terms related to $q_{\fw}q_{\bw}$ change the overall error behavior. Figure~\ref{fig:rules}(a) illustrates joint allocation at a fixed total bit count: the best tested split is $(p_\fw,p_\bw)=(4,1)$ for the last iterate and $(2,3)$ for the uniform average.

\subsection{When: precision allocation during training}
\label{sec:when}

Which updates deserve higher precision under compute budget? Fix width $M$,
batch $B$, horizon $T$, and constant learning rate $\eta$. We allocate the compute budget in \eqref{eq:declared-compute} across training updates.
First keep one feature computation exact throughout training and use
finite precision at the other. For the rounded location
$r\in\{\fw,\bw\}$, Theorem~\ref{thm:scheduled-main} shows
\begin{equation}\label{eq:single-schedule-objective}
\begin{gathered}
 \Delta^{\iout}(T)\asymp\widehat N^{\iout}
      =\sum_{u<T}A_{r,u}4^{-p_{r,u}},\quad
 A_{\fw,u}=\frac{\eta^2\widehat U_uK_2^{\iout}(u)}{2BM},\quad
 A_{\bw,u}=\frac{\eta^2\sigma^2K_1^{\iout}(u)}{2BM}
\end{gathered}
\end{equation}
where $K_1^{\iout}(u),K_2^{\iout}(u),\widehat U_u$ are defined in \eqref{s3:eq:scalar-responses_last}, \eqref{s3:eq:scalar-responses_avg}, and \eqref{s3:eq:scalar-source}. 
These coefficients $A_{r,u}$ are fixed before precision is allocated.
Given fixed compute budget \eqref{eq:declared-compute} and fixed $M,T$, we have a total precision constraint: $\sum_{u<T}p_{r,u}\le P$ for a fixed $P>0$. 
Proposition~\ref{prop:when-full} finds the minimizer of $\widehat N^{\iout}$ as
\begin{equation}\label{eq:lagrange}
 p_{r,u}=\begin{cases}
 \max\{0,\log_4(A_{r,u}/\lambda)\},&A_{r,u}>0,\\
 0,&A_{r,u}=0.
 \end{cases}
\end{equation}
Here $\lambda>0$ is a constant so that $\sum_{u<T}p_{r,u}=P$.
We assign more bits when $\widehat N^{\iout}$ reduces most.

The allocation in~\eqref{eq:lagrange} assigns more bits to larger coefficients.
For backward rounding at constant learning rate,
$K_1^{\rm last}(u)$ increases with $u$, whereas $K_1^{\rm avg}(u)$
decreases. This allocation gives nondecreasing backward
precision for the last iterate and nonincreasing backward precision
for the uniform average.
The practical lesson is that, at the same precision budget, switching
from the last iterate to a uniform average can reverse when higher
backward precision is useful. Figure~\ref{fig:compute_law}(b) compares
one- and three-bit precision in opposite time orders at nearly equal
average precision: using higher precision late during training gives smaller last-iterate error
at both locations, whereas using higher backward precision early during training gives smaller
uniform-average error. 
When both locations are rounded, interaction term changes
their precision schedules. Appendix~\ref{app:joint-phase-allocation}
and Proposition~\ref{app:prop-joint-phase-allocation} study the joint
allocation of precision over training.

\subsection{How many bits: limiting the risk increase from rounding}
\label{sec:howmuch}

\begin{table}[t]
\centering\small
\caption{Following Section~\ref{sec:constant-schedule} with $\tau\asymp M^a$.
Scaling definitions in \eqref{eq:accuracy-components} with $s_M$ defined in \eqref{eq:clean-law-avg}. }
\label{tab:where}
\begin{tabular}{@{}lcc@{}}
\toprule
 & Last iterate & Uniform average\\
\midrule
Exact-arithmetic risk scale $D^{\iout}$
 & $M^{1-b}+\rho s_M$ & $M^{1-b}+\rho s_M M^{1-a}$\\
Forward coefficient $A_\fw$
 & $\rho M^{a-b}$ & $\rho M^{1-b}$\\
Backward coefficient $A_\bw$
 & $\rho(s_M+M^{1-b})$ & $\rho(s_M+M^{1-b})$\\
Joint coefficient $A_\times$
 & $\rho^2(s_M+M^{1-b})M^{a-1}$ & $\rho M^{a-b}$\\
\bottomrule
\end{tabular}
\end{table}

How many forward and backward mantissa bits are needed to keep the risk
close to that of exact-arithmetic SGD? Consider the setting and results in Section~\ref{sec:constant-schedule}. We require the risk increase
from rounding to be at most a fraction $\vartheta$ of the exact-arithmetic risk:
\begin{equation}\label{eq:tolerance}
 \Delta^{\iout}(T)\le\vartheta\mathscr R_0^{\iout}(T),
 \qquad\vartheta>0.
\end{equation}
At $\tau\asymp M^a$, for $\iout\in\{{\rm last},{\rm avg}\}$, results in \eqref{eq:clean-law-last}-\eqref{eq:law-avg} can be simplified as
\begin{equation}\label{eq:accuracy-components}
  \mathscr R_0^{\iout}(T)
 \asymp
  D^{\iout}, \quad \Delta^{\iout}(T) 
 \asymp
  A_\fw q_\fw+A_\bw q_\bw+A_\times q_\fw q_\bw,
\end{equation}
where exact-arithmetic risk scale $D^{\iout}$ and three rounding coefficients $A_\fw,A_\bw,A_\times$ are defined in Table~\ref{tab:where}, which only depend on $M,\sigma,\eta,B$.
Corollary~\ref{cor:bits-full} then further proves that to ensure \eqref{eq:tolerance}, it suffices to satisfy the following bit requirements, for some constant $C>0$.
\begin{equation}\label{eq:accuracy-bit-thresholds}
\begin{aligned}
 p_\fw&\ge\left[\log_4
       \frac{C A_\fw}{\vartheta D^{\iout}}\right]_+,
 &p_\bw&\ge\left[\log_4
       \frac{C A_\bw}{\vartheta D^{\iout}}\right]_+,\\
 p_\fw+p_\bw&\ge\left[\log_4
       \frac{C A_\times}{\vartheta D^{\iout}}\right]_+,
 &[x]_+&=\max\{x,0\}.
\end{aligned}
\end{equation}
The first two bounds in \eqref{eq:accuracy-bit-thresholds} control precisions for forward-only or backward-only rounding, respectively, like Section~\ref{sec:where}. For joint rounding at two locations, the precisions must jointly satisfy all the bounds in \eqref{eq:accuracy-bit-thresholds}, though unequal forward and backward precisions are allowed.

For example, consider uniform averaging with exact forward computation and
fixed positive $\rho,\sigma^2,\vartheta$ at $\tau\asymp M^a$.
The lower bound on SR-induced error and upper bound on exact-arithmetic
risk in Corollary~\ref{cor:bits-full}, together with Table~\ref{tab:where},
imply the \emph{necessary} condition for \eqref{eq:tolerance},
$
 p_\bw\ge\left[(b-1)\log_4 M+
           \log_4\frac{c\rho\sigma^2}{\vartheta}\right]_+,
$
where $c>0$ is a fixed constant. Thus backward precision must
grow at least logarithmically with width to retain the prescribed relative accuracy.
Figure~\ref{fig:compute_law}(c) uses equal forward and backward precisions
for the last iterate. Among one to four tested bit counts, the smallest
that keeps additional SR-induced risk within $1\%$ of exact arithmetic risk is two bits at $M=128$
and three at $M=256$. In practice, we should choose precision alongside width and the output: fixed backward bits can prevent the model from keeping its improving exact-arithmetic accuracy as width grows.
When both computations are rounded, we should carefully consider their interaction in addition to their separate contributions before comparing compute costs
in Section~\ref{sec:compute-allocation}.

\begin{figure}[t]\centering
\includegraphics[width=.325\linewidth]{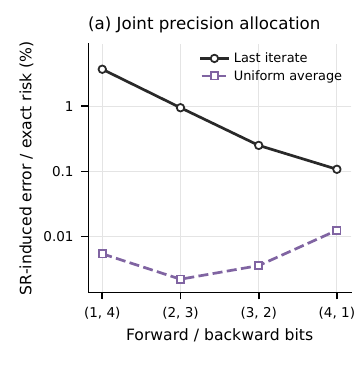}\hfill
\includegraphics[width=.325\linewidth]{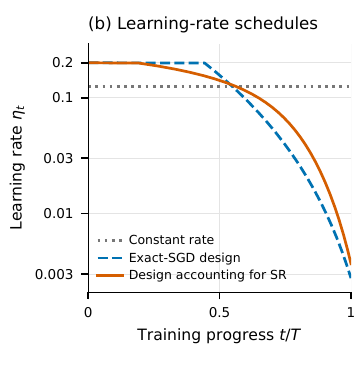}\hfill
\includegraphics[width=.325\linewidth]{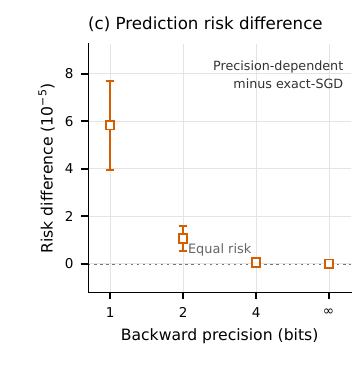}
\caption{\small
(a)~With five mantissa bits in total, more forward bits reduce last-iterate
error; averaged-output error is smallest at $(p_{\fw},p_{\bw})=(2,3)$.
Errors are relative to the corresponding exact-arithmetic risks.
(b)~The orange schedule accounts for backward-rounding error using
\eqref{eq:lr-rounded-schedule}; the blue schedule uses the exact-arithmetic
design \eqref{eq:lr-exact-schedule}. Both use one-bit backward SR;
the dotted line is a constant learning rate.
(c)~Full last-iterate prediction risk under the precision-dependent design
minus that under the exact-SGD design, at the same backward precision.
Positive values favor the exact-SGD design. In (c), points average eight
independent regression problems. Forward features are exact in (b,c).
Appendix~\ref{app:exp-figure2} gives details.}
\label{fig:rules}
\end{figure}

\subsection{How precision changes learning-rate design}
\label{sec:lr}

How should learning-rate decay change in low precision training?
Fix width $M$, batch size $B$, horizon $T$. We keep forward features exact and consider a constant backward precision $p_\bw$ and last-iterate output. Consider learning-rate upper bound $\eta_{\max}$ defined in Appendix~\ref{t3:sub-lr}, chosen before the sketch is drawn.
For either schedule below, we use continuous time $t\in[0,T]$ and write
$\tau_t=\int_0^t\eta(s)\,ds$, with $\tau=\tau_T$.
The constants $\lambda_0$ and $\lambda_\bw>0$ are chosen
separately so that the total learning rate accumulates to $\tau$.

Following the variational approach of \citet{bordelon2026optimal},
minimizing our continuous scalar risk objective at fixed $\tau$ gives the exact-arithmetic schedule
\begin{equation}\label{eq:lr-exact-schedule}
 \eta_{\rm exact}(t)=\min\left\{\eta_{\max},\,
 \sqrt{\lambda_0}\,[1+2(\tau-\tau_t)]^{1-1/(2a)}\right\}.
\end{equation}
With backward rounding, minimizing the corresponding continuous scalar objective leads to
\begin{equation}\label{eq:lr-rounded-schedule}
 \eta_{\rm SR}(t)=\min\left\{\eta_{\max},\,
 \frac{\sqrt{\lambda_\bw}\,[1+2(\tau-\tau_t)]^{1-1/(2a)}}
 {\sqrt{1+4^{-p_\bw}[1+2(\tau-\tau_t)]/M}}\right\}.
\end{equation}
Here if $p_\bw=\infty$, the denominator equals one and the normalization
$\lambda_\bw=\lambda_0$ recovers \eqref{eq:lr-exact-schedule}.
After optimizing the total learning rate and averaging over update intervals,
these formulas give discrete rates whose last-iterate risk is within a fixed
factor of the best admissible schedule at the same $M,B,T,p_\bw$
(Theorem~\ref{c2:arc-main}).

The precision-dependent term
$4^{-p_\bw}[1+2(\tau-\tau_t)]/M$ in \eqref{eq:lr-rounded-schedule} is largest at the start of training and decreases as training progresses.
If $4^{-p_\bw}(1+2\tau)/M\ll1$, this correction is small and the exact-SGD design in \eqref{eq:lr-exact-schedule} remains appropriate.
When the rate is below $\eta_{\max}$, $\tau-\tau_t$ is large, and this
term is much larger than one, the time-domain power-law approximation
has exponent $(a-1)/(a+1)$, compared with $2a-1$ for exact arithmetic.
The practical lesson is to tune precision and learning-rate
schedulers together. Lower precision does not universally imply a
lower peak or an earlier start of decay for learning rate.
Figure~\ref{fig:rules}(b) illustrates how the learning rate schedule changes with the one-bit backward rounding, compared to the exact-SGD learning rate schedule.
Figure~\ref{fig:rules}(c) shows that the precision-dependent learning rate design has slightly higher prediction risk; the difference is smaller at higher precisions rounding.

\subsection{Model width, training duration, and precision at fixed compute}
\label{sec:compute-allocation}

Given compute budget $F$ in \eqref{eq:declared-compute}, how should model width, training duration, and precision be chosen, and when can lower precision even improve prediction risk? We now use the uniformly averaged iterate with fixed learning rate $\eta$, batch $B$ and label noise $\sigma^2$.
We consider the setting of Section~\ref{sec:constant-schedule}, accuracy tolerance $\vartheta$ in \eqref{eq:tolerance} and compute budget $F$ in \eqref{eq:declared-compute}. 

\paragraph{Fixed precisions: balancing width and training duration.}
For fixed precisions $p_{\fw},p_{\bw}$ over training, the compute budget in \eqref{eq:declared-compute} bounds $M\tau$, so increasing width shortens the affordable training duration. In \eqref{eq:clean-law-avg}, larger width reduces approximation error, while longer training reduces optimization error.
Lower precision reduces compute cost, allowing a wider model and longer training. These additional training steps and larger model reduce approximation and optimization error in \eqref{eq:clean-law-avg}. But they improve full prediction risk only when those gains exceed the additional SR-induced error $\Delta^{\rm avg}(T)$. Specifically, \eqref{r4:eq:compute-width-requirements} in Appendix~\ref{app:rules} gives sufficient conditions on $M,\tau$ to ensure the accuracy tolerance in \eqref{eq:tolerance}.
In practice, this means we should choose width $M$ and training duration $T$ jointly like the result in Table~\ref{tab:compute-precision}. Lower precisions can provide more choices
of model sizes and training time given a fixed compute budget.

\paragraph{Growing precisions: maintaining accuracy as width increases.}
How should precision change when model width and training duration
increase?  For $\tau\asymp M^a$, Proposition~\ref{app:prop-matched-compute} simplifies \eqref{eq:accuracy-bit-thresholds} and proves
$p_\bw\ge\frac{b-1}{2}\log_2M-O(1)$ and
$p_\fw+p_\bw\ge\frac{a-1}{2}\log_2M-O(1)$ to ensure risk tolerance in \eqref{eq:tolerance}.
The $O(1)$ terms only depend on the $\sigma$, $\eta$, $\vartheta$ and constants. Backward precision must grow at least logarithmically. Forward
precision can remain fixed if backward precision grows enough to meet the total-bit condition. Total mantissa length of order $\log M$ implies compute
$f\asymp M^{a+1}\log M$ which improves the decay exponent in the last row of Table~\ref{tab:compute-precision} by
\begin{equation}\label{r4:eq:growing-finite-risk}
 \mathscr R^{\rm avg}(T)
 \asymp(F/\log F)^{-(b-1)/(a+1)}.
\end{equation}
This scaling only has an additional $\rm{polylog}(F)$ factor comparing with optimal order $F^{-(b-1)/(a+1)}$ from exact SGD training.  Proposition~\ref{app:prop-matched-compute} proves \eqref{r4:eq:growing-finite-risk}. Hence, increasing precisions logarithmically as model size increases can
prevent SR from becoming the accuracy bottleneck as training
scales.

\begin{figure}[t]\centering
\includegraphics[width=.325\linewidth]{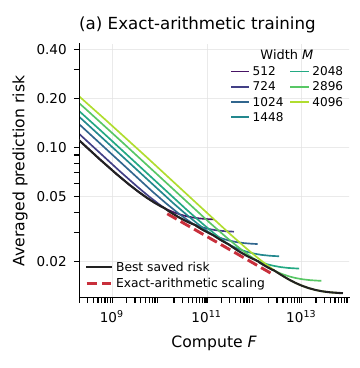}\hfill
\includegraphics[width=.325\linewidth]{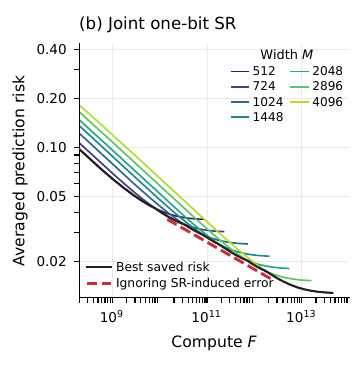}\hfill
\includegraphics[width=.325\linewidth]{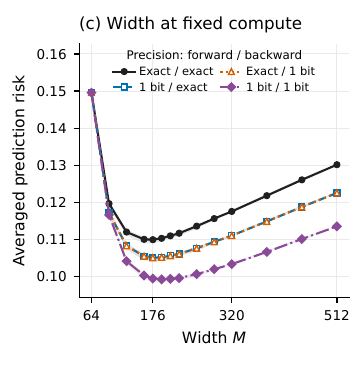}
\caption{\small Prediction risk of uniformly averaged outputs under a compute budget.
(a,b)~Exact-arithmetic and joint one-bit SR training ($p_\fw=p_\bw=1$) at several widths.
Black curves show the smallest prediction risks among tested widths and
training durations. Red dashed curves use the optimal-compute scaling in Table~\ref{tab:compute-precision} for exact-arithmetic training.
(c)~At a fixed budget $F$, joint SR permits a wider model and longer training
with lower prediction risk. We average over
three sketches. Appendix~\ref{app:exp-compute} gives details.}
\label{fig:frontier}
\end{figure}

\paragraph{When does lower precision improve risk at a fixed budget?}
For fixed width, time, and learning rate, SR cannot reduce
the expected risk of the uniformly averaged iterate. Its benefit
requires cheaper updates to permit a better training configuration.
Compare lower-cost ($L$) and higher-cost ($H$) precision choices and fix other training setup the same. Let $T_L,T_H$ be time horizons satisfying
the same budget in \eqref{eq:declared-compute}, and $\Delta_L,\Delta_H$ their averaged SR-induced risk
increases $\Delta^{\rm avg}(T)$.  Then the lower precision choice has smaller risk if and only if
\begin{equation}\label{eq:precision-time-balance}
 \Delta_L-\Delta_H
 <\mathscr R_0^{\rm avg}(T_H)-\mathscr R_0^{\rm avg}(T_L).
\end{equation}
Appendix~\ref{app:avg-compute-example} presents an example in which \eqref{eq:precision-time-balance} holds. This indicates that lower precision training can outperform exact-arithmetic training at a fixed compute budget by training longer or with a wider model.
At fixed width and training duration, lower precision adds rounding error
without using the computation it saves. At a fixed compute budget, those
savings can instead pay for more training updates or a wider model.
Lower precision can therefore reduce total prediction risk when the benefit
of the additional training or model capacity exceeds the rounding error.

\section{Experiments}
\label{sec:experiments}

In this section, we examine the compute tradeoffs in Section~\ref{sec:compute-allocation},
then study data reuse and neural-network computations beyond our theoretical analysis.
Consider $a=2$, $b=3/2$,
$\sigma^2=0.1$, $B=32$ and $d=4M$. In simulations we may train beyond $\tau_T=M^a$, so these comparisons do not follow all assumptions of Theorem~\ref{thm:scheduled-main}.
Compute follows \eqref{eq:declared-compute}. Appendix~\ref{app:experiments} shows the experimental details.

\paragraph{Width and training duration at fixed compute.}
Figure~\ref{fig:frontier} illustrates the joint choice of width, duration
and precision in Section~\ref{sec:compute-allocation} for uniformly
averaged outputs. Figure~\ref{fig:frontier}(a,b) compare prediction risks across widths.
Both red dashed curves use the exact-arithmetic scaling
$F^{-1/6}$ in Table~\ref{tab:compute-precision}. Figure~\ref{fig:frontier}(b) has better optimal scaling than the joint-SR scaling $F^{-1/8}$ in Table~\ref{tab:compute-precision},
since simulation here does not impose all conditions of Theorem~\ref{thm:scheduled-main}.
In Figure~\ref{fig:frontier}(c), joint one-bit SR permits a wider model and longer training,
reducing mean prediction risk by $9.7\%$ at the same compute budget among
the tested configurations. The gain from a wider model and longer training
outweighs the SR-induced error: precision should be selected together
with width and time by comparing full risks at a fixed compute budget.

\paragraph{Data reuse at fixed compute.}
Figure~\ref{fig:reuse-compute} asks whether SR updates remain useful
when training revisits existing examples during training. At fixed width and compute cap
$F=2^{27}$, with fresh data or either reused dataset, lower precision allows longer training  and wider model. The uniform average output then achieves about 5\% lower prediction risk than exact-arithmetic training. Figure~\ref{fig:reuse-compute}(b) shows lower mean prediction risk for low-precision training at every tested compute budget. Additional training offsets the SR-induced error, as in \eqref{eq:precision-time-balance}. The more data reused, the lower the prediction risk in Figure~\ref{fig:reuse-compute}(c). Thus the compute benefit can persist with data
reuse. We leave a data-reuse scaling law with SR for future work.

\begin{figure}[t]\centering
\includegraphics[width=.325\linewidth]{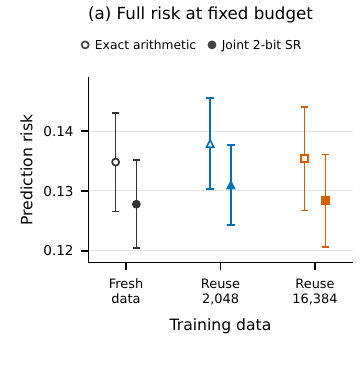}\hfill
\includegraphics[width=.325\linewidth]{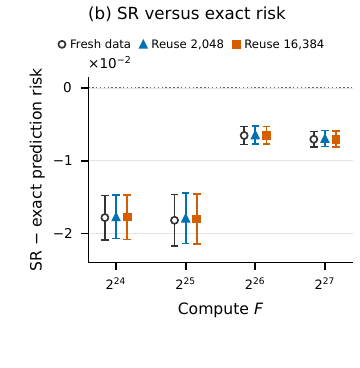}\hfill
\includegraphics[width=.325\linewidth]{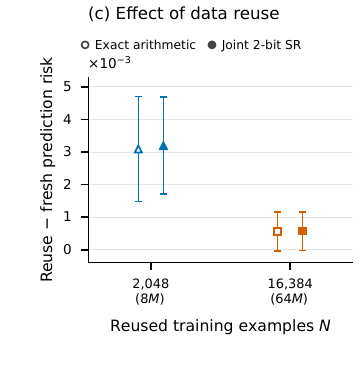}
\caption{\small Prediction risk under a fixedcompute budget with fresh and reused data.
Uniformly averaged outputs at $M=256$, using exact arithmetic or joint
two-bit feature SR. (a)~Full prediction risks at $F=2^{27}$.
(b)~The difference between SR and exact prediction risks across budgets; negative values indicate SR is beneficial. (c)~Reuse-minus-fresh risk differences at $F=2^{27}$ for different data sizes. For more details, see Appendix~\ref{app:exp-fixed-budget}.}
\label{fig:reuse-compute}
\end{figure}

\paragraph{Precision choices in neural networks.}
Figure~\ref{fig:nn} extends the location question in Section~\ref{sec:where}
to a six-layer transformer trained on 200M FineWeb-Edu tokens and
a CIFAR-10 MLP. The combined rounding configuration has higher transformer
validation loss than activation-only rounding in Figure~\ref{fig:nn}(a). Our theory only considered independent rounding of features in \eqref{eq:copies}. Sharing introduces dependence between rounding errors. Figure~\ref{fig:nn}(b) compares this independent rounding with shared rounding, which reuses the same rounded activation for the forward computation and the weight-gradient computation. Remarkably, it shows that sharing the rounded activation between forward and weight-gradient computations also changes performance.
At one bit, sharing lowers the validation losses relative to independent rounding for both models. Compared with training without SR, shared rounding gives lower loss
for MLP but higher loss for transformer. 
Figure~\ref{fig:nn}(c) separates rounding during training from rounding at
test time. The one-bit forward-trained MLP has test loss $1.624$ without SR during test-time
predictions and $1.672$ with forward-activation SR at test time. Precision comparisons should
therefore specify all rounded computations and whether activations are
shared, and compare models with SR disabled at test time before
assessing low-precision inference.

\begin{figure}[htp]\centering
\includegraphics[width=.325\linewidth]{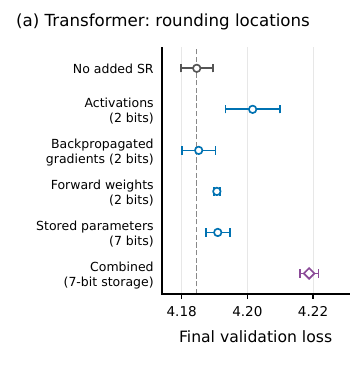}\hfill
\includegraphics[width=.325\linewidth]{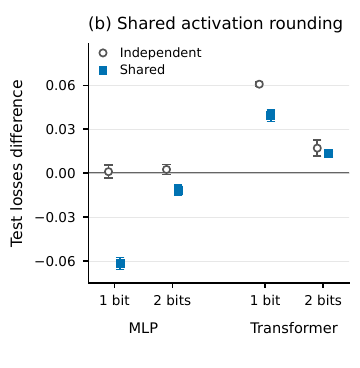}\hfill
\includegraphics[width=.325\linewidth]{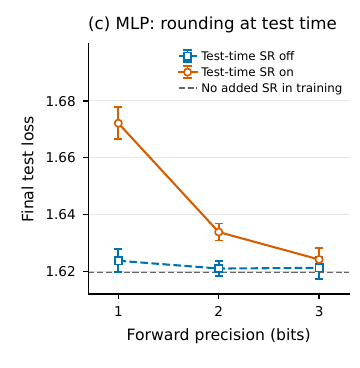}
\caption{\small Rounding locations, shared activations and test-time prediction in neural networks.
(a)~Final transformer validation loss; the combined case rounds both
activation computations, incoming gradients and forward weights to two
bits, with seven-bit parameter storage.
(b)~Shared SR reuses the same forward-rounded activation in the weight-gradient
computation. Points show final loss differences from same-seed
training with SR and without SR. Negative values indicate lower loss than this baseline.
(c)~Each trained MLP is tested twice with its parameters fixed: with
forward-activation SR disabled, and with SR enabled at its training precision.
Both tests use the same examples. Appendix~\ref{app:exp-nn} gives details and final performance values.}
\label{fig:nn}
\end{figure}

\FloatBarrier
\section{Conclusion}
\label{sec:discussion}

We derived scaling laws for stochastic rounding in random-feature regression.
Precision, model size and training time should be chosen together.
Under our compute model, lower precision can improve prediction at a fixed
budget when the benefits of a larger model or longer training outweigh the
added rounding error. The learning rates and the choice to average model
parameters also affect where and when higher precision is needed.
Extending the theory to data reuse, more quantization locations, feature learning regimes \citep{ba2022high}, and measuring savings on hardware, remain future work.
\label{sec:main-text-end}

\paragraph{Use of AI assistance.}
The authors used ChatGPT to develop proof candidates, especially for tracking constants in the proofs, and Claude and Codex to set up initial experiment codes and figures. The authors are responsible for the results, their stated scope and the final manuscript. 
\paragraph{Acknowledgements.}
ZW and MWM would like to acknowledge the NSF and the DARPA DIAL and DARPA AIQ programs for partial support of this work.

\bibliographystyle{iclr2027_conference}
\bibliography{references}

\appendix

\makeatletter
\setcounter{theorem}{0}
\@addtoreset{theorem}{section}
\renewcommand{\thetheorem}{\Alph{section}.\arabic{theorem}}
\def\theHtheorem{\Alph{section}.\arabic{theorem}}
\makeatother
\section{Related work}
\label{app:related}
This appendix reviews work on scaling laws, quantization and stochastic
rounding, and explains how our model and results relate to these studies.

\paragraph{Empirical scaling laws.}
Scaling laws try to analyze how loss changes as model size, sample size and compute increasing \citep{kaplan2020scaling,hoffmann2022training}, with extensions to
continual weather training \citep{subramanian2026weather}.
\citet{kumar2024precision,sun2025floating} use fitted laws to choose model
size, the number of data points and precision for a given compute budget. Related work studies
scaling in quantization-aware training \citep{chen2025scaling} and joint
quantization and sparsity \citep{frantar2025compression}.
\citet{kim2026mixed} extend fitted laws to different forward and backward
precisions and hardware-dependent compute costs.

\paragraph{Theory of scaling law.}
One of the solvable models for scaling law is the random feature model
\citep{mei2022rf,bach2023projections,lejeune2022sketched,cheng2024ridge,defilippis2024deterministic}.
Linear models with power-law random features (PLRF) have been applied to theoretically understand the scaling law of prediction risk
\citep{bahri2024explaining,maloney2022solvable,bordelon2024dynamical,bordelon2022learning,paquette2024phases,paquette2024homogenization,lin2024scaling}.
These results are also extended to the setting of data-reuse SGD \citep{lin2025reuse}, mini-batch control
\citep{chen2026minibatch} and general learning-rate schedules
\citep{bordelon2026optimal}. 
Power-law spectra have also been observed and analyzed in trained weights of networks
\citep{hodgkinson2025models,MM19_HTSR_ICML,MM20_SDM,MM20a_trends_NatComm}. 

\paragraph{Quantization in linear regression.}
\citet{zhang2025quantization} derive prediction-risk upper bounds for linear
regression with quantized data, labels, parameters, activations and gradients.
\citet{zhang2026precision} study scaling laws with model size and training data in
sketched linear regression, giving upper and lower bounds for their
quantization models. Both use constant learning rates and parameter averaging.
Their general upper bounds allow unbiased quantization, while their scaling results make additional assumptions on rounding-error
covariance. Our quantization from SR does not generally follow their multiplicative or additive quantization assumptions.
Our laws also cover last-iterate predictor, and general
learning-rate and precision schedules.

\paragraph{Low-precision training.}
QAT uses low-precision weights and activations
during training for efficient inference
\citep{jacob2018quantization,esser2020lsq,liu2024llmqat}.
Low-precision training methods include BinaryConnect
\citep{courbariaux2015binaryconnect}, quantized gradients
\citep{zhou2016dorefa}, SR training \citep{gupta2015limited}, and direct
low-precision weight updates \citep{zhao2024direct}. Theoretical works can be found by \citet{li2017training,desa2018halp}.
Practical LLMs use FP8, MXFP4, and FP4 in training
\citep{micikevicius2022fp8,deepseek2024v3,tseng2025mxfp4,castro2025quartet,chmiel2025fp4,panferov2026quartet2}.
\citet{zhang2017zipml} use independent rounding in \eqref{eq:copies} for low-precision training. 
Learned precision \citep{yu2022ldp} and learning-rate schedules \citep{wolfe2024schedules} for low-precision training studied how precision and optimization choices work together. 

\paragraph{Numerical analysis for SR.}
\citet{higham2019new,higham2020sharper,connolly2021sr,elarar2022sr,croci2022sr} analyze the bound of the rounding error for SR in numerical computations.
Other work also studies SR with few random bits \citep{fitzgibbon2025stochastic},
computational complexity \citep{drineas2024stochastic}, and its effects on
small singular values of random matrices \citep{boutsikas2024small,dexter2025stochastic,ma2026stochastic}.
We refer to \citet{elarar2026survey} as a comprehensive review of SR theory, hardware and applications.
\section{Stochastic rounding and sketch geometry}
\label{app:actual-scaling}\label{sec:rev3_actual_scaling}
This appendix develops the rounding identities and random-feature geometry
used to prove Theorem~\ref{thm:scheduled-main} in Appendix~\ref{cl:finite}.
Lemmas~\ref{arithmetic:lem:scalar}--\ref{arithmetic:lem:joint} bound Gaussian
rounding moments, and Proposition~\ref{arithmetic:prop:exact} derives the
training second-moment recursions. Proposition~\ref{as:prop-event} combines
the spectral, teacher and rounding-coordinate bounds into one
high-probability sketch event used in the subsequent analysis. 

\begingroup
\let\E\relax\newcommand{\E}{\mathbb E}
\let\tr\relax\newcommand{\tr}{\operatorname{tr}}
\let\diag\relax\newcommand{\diag}{\operatorname{diag}}
\let\norm\relax\newcommand{\norm}[1]{\left\|#1\right\|}
\theoremstyle{plain}
\subsection{Shared notation and second-moment comparison}
\label{app:model-details}
We define the oracle predictor and second moments used in the training
recursions of Proposition~\ref{arithmetic:prop:exact}.
Throughout this appendix, unless a sketch probability is stated,
expectations condition on $S$ and average over the teacher, training data
and rounding variables.
\paragraph{Oracle predictor and rounding geometry.}
For random sketch $S$, use $G$ from \eqref{eq:G} and define
\begin{equation}\label{app:eq:oracle-geometry}
\begin{gathered}
 G=\sum_{j=1}^M g_j\phi_j\phi_j^\top,\qquad
 v_*^S=L_S\theta_*,\quad L_S=G^{-1}S\Sigma_d, \quad
 V=\E[v_*^S(v_*^S)^\top\mid S]=L_SC_\theta L_S^\top.
\end{gathered}
\end{equation}
The eigenvectors $\phi_j$ are orthonormal and
$g_1\ge\cdots\ge g_M>0$. The matrix $V$ need not commute with $G$.
Write the oracle predictor as $v_*=v_*^S$ and $\Sigma=\Sigma_d$ for simplicity. Define
\begin{equation}\label{eq:rounding-geometry}
 \theta_G=\tr G,\qquad J=\frac{M}{\theta_G}\diag(G_{11},\ldots,G_{MM}).
\end{equation}
Here $\theta_G$ is total feature variance and $J$ records normalized
coordinate variances in the rounding basis. It need not commute with $G$.

\paragraph{Approximation error and residual noise.}
Define
\begin{equation}\label{eq:vstar}
\begin{gathered}
 \Sigma_\perp=\Sigma_d-\Sigma_dS^\top G^{-1}S\Sigma_d,\\
 A_S=\tfrac12\theta_*^\top\Sigma_\perp\theta_*,\qquad
 \bar A(S)=\tfrac12\tr(C_\theta\Sigma_\perp),\qquad
 s(S)=\sigma^2+2\bar A(S).
\end{gathered}
\end{equation}
Then $y=z^\top v_*^S+\widetilde\epsilon$, where
$\widetilde\epsilon=x^\top(\theta_*-S^\top v_*^S)+\epsilon$.
Conditional on $S,\theta_*$, this residual is centered Gaussian and
independent of $z$, because their cross-covariance is
$S\Sigma_d\theta_*-Gv_*^S=0$. Its variance is $\sigma^2+2A_S$;
averaging over the teacher gives $s(S)$. Write
$\bar A=\bar A(S)$ and $s=s(S)$.
The normal equation $Gv_*^S=S\Sigma_d\theta_*$ splits the risk in
\eqref{eq:risk} as
\begin{equation}\label{app:eq:risk-decomposition}
 \mathcal E_S(v,\theta_*)=A_S+\tfrac12\|v-v_*^S\|_G^2
\end{equation}
where $A_S$ is the approximation error and the second term is the
estimation error. Define
\begin{equation}\label{eq:estimation-risk}
 E_0^{\iout}=\mathscr R_0^{\iout}(T)-\bar A(S),\qquad
 \mathscr R^{\iout}(T)=\bar A(S)+E_0^{\iout}+\Delta^{\iout}(T).
\end{equation}

\paragraph{Output weights and second moments.}
For outputs in \eqref{eq:output-weights}, write
$v_T^{\iout}=\sum_{t=1}^T w_t^{\iout}v_t$, with
\begin{equation}\label{app:eq:output-weights}
 w_t^{\rm last}=\mathbf 1\{t=T\},\qquad
 w_t^{\rm avg}=1/T\quad(1\le t\le T).
\end{equation}
Both choices have nonnegative weights summing to one. Define
\begin{equation}\label{eq:moment-sources}
\begin{gathered}
 C_t=\E[(v_t-v_*)(v_t-v_*)^\top\mid S],\qquad
 W_t=\E[v_tv_t^\top\mid S],\\
 R_t=s+\tr(GC_t^0),\qquad U_t=\tr(JW_t^0),
\end{gathered}
\end{equation}
where $C_t^0,W_t^0$ are the corresponding second moments for
exact-arithmetic SGD. These are full second moments, not centered
covariances. The sources $R_t$ and $U_t$ measure its residual variance
and parameter magnitude in rounding coordinates, respectively. 

In addition to the second moments in \eqref{eq:moment-sources}, define
\begin{equation}\label{eq:moment-gap}
 X_t=C_t-C_t^0,\qquad x_t=\tr(GX_t),\qquad y_t=\tr(JX_t).
\end{equation}
Proposition~\ref{arithmetic:prop:exact} will show that
$X_t=W_t-W_t^0\succeq0$. Thus $X_t$ is a difference of second moments;
it is not the second moment of $v_t-v_t^0$.

\subsection{Gaussian SR moments and training recursions}
\label{app:binary-sr}
We prove the rounding-moment bounds in
Lemmas~\ref{arithmetic:lem:scalar}--\ref{arithmetic:lem:joint} and use them
to derive the recursions in Proposition~\ref{arithmetic:prop:exact}.
Use the binary grid, stochastic rounding rule and variance
identity in Section~\ref{sec:setup} and \eqref{eq:cell}.
At representable values put $\nu_p(x)=0$, and set $\nu_\infty=0$. 

With the precision scale \eqref{eq:precision}, binary rounding also obeys
\begin{equation}\label{arithmetic:eq:cell}
 \nu_p(-x)=\nu_p(x),\qquad\nu_p(x/2)=\nu_p(x)/4,
 \qquad |Q_p(x)|\le2|x|.
\end{equation}
\begin{lemma}[Gaussian scalar and matrix moments]\label{arithmetic:lem:scalar}
With $q_p$ defined in \eqref{eq:precision}, for $X\sim N(0,g)$, $g>0$,
and $p\in\mathbb N_0$, put
$m_p(g)=\E\nu_p(X)$ and $n_p(g)=\E[\nu_p(\sqrt g Z)Z^2]$,
where $Z\sim N(0,1)$. Then
\[
 q_pg/192\le m_p(g)\le q_pg/4,\qquad
 n_p(g)\ge2\log2\,m_p(g)\ge m_p(g),\qquad
 n_p(g)\le3q_pg/4.
\]
If $G\succ0$, $z\sim N(0,G)$ and $\gamma_i=G_{ii}$, then
\begin{equation}\label{arithmetic:eq:weighted}
 \frac{q_p\gamma_i}{192}G\preceq
 K_{p,i}:=\E[\nu_p(z_i)zz^\top]
 \preceq\frac34q_p\gamma_iG.
\end{equation}
\end{lemma}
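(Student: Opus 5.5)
The plan is to first reduce to the case $g=1$, then prove each scalar bound, and then obtain the matrix statement \eqref{arithmetic:eq:weighted} from the scalar ones by a Gaussian conditioning argument.

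\emph{Reduction to unit variance.} By \eqref{arithmetic:eq:cell}, $\nu_p(2x)=4\nu_p(x)$, so $m_p(4g)=4m_p(g)$ and $n_p(4g)=4n_p(g)$. It therefore suffices to treat $g\in[1,4)$, since the ratios $m_p(g)/g$ and $n_p(g)/g$ are invariant under $g\mapsto4g$.

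\emph{Scalar bounds.} The upper bounds follow directly from \eqref{eq:cell}: $\nu_p(x)\le\tfrac14q_px^2$. This gives $m_p(g)\le q_pg/4$ and $n_p(g)\le\tfrac14q_pg\,\E Z^4=\tfrac34q_pg$.

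For the lower bound on $m_p$, I would write $\nu_p(x)=(x-\ell)(u-x)$ on the binade $[2^e,2^{e+1})$. The cell width there is $2^{e-p}$. On the middle half of each cell, $(x-\ell)(u-x)\ge\tfrac{3}{16}(2^{e-p})^2\ge\tfrac3{64}q_px^2$. The key claim is that for a Gaussian with variance in $[1,4)$, the middle-half portions of cells carry at least a constant fraction of the mass in the region $|x|\in[1/2,2]$, say. I would argue this from the density being bounded above and below on that region. Cells longer than the density's variation scale need a direct comparison, which is fine since cell widths are at most $1$ when $p\ge0$. Tracking the constants should give the $1/192$.

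For $n_p\ge2\log2\,m_p$, I would use that $x\mapsto\nu_p(x)/x^2$ is invariant under $x\mapsto2x$. Write $n_p(g)=g^{-1}\E[\nu_p(X)X^2]$, condition on the binade-invariant quantity, and average over the scale. Concretely, the claim reduces to the statement that $\E[\phi(X)X^2]/g\ge 2\log 2\,\E\phi(X)$ for every function $\phi$ that is multiplicatively $2$-periodic and homogeneous of degree $2$. I would try to prove this by averaging over a log-uniform dilation of $g$, which is permitted since $m_p$ and $n_p$ scale exactly by $4$. The constant $2\log2$ should emerge from the integral $\int_1^2$ that appears in the log-scale average. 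This step is the one I expect to be hardest. If it fails, the weaker $n_p\ge m_p$ would still suffice downstream.

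\emph{Matrix bound.} Fix a unit vector $w$ and decompose $w^\top z=\alpha z_i+\xi$, where $\xi$ is independent of $z_i$, $\alpha=(Gw)_i/\gamma_i$, and $\mathrm{Var}\,\xi=w^\top Gw-\alpha^2\gamma_i$. Then
\[
w^\top K_{p,i}w=\alpha^2\E[\nu_p(z_i)z_i^2]+\mathrm{Var}(\xi)\,m_p(\gamma_i).
\]
Using $m_p\le n_p/\gamma_i\le 3q_p/4$ and $n_p(\gamma_i)\le \tfrac34q_p\gamma_i^2\cdot$(normalization), the quantity is at most $\tfrac34 q_p\gamma_i\,w^\top Gw$. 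Using $n_p(\gamma_i)\ge m_p(\gamma_i)\gamma_i$, i.e.\ $\E[\nu_p(z_i)z_i^2]\ge \gamma_i m_p(\gamma_i)$, it is at least $m_p(\gamma_i)\,w^\top Gw\ge \tfrac{q_p\gamma_i}{192}w^\top Gw$. This yields \eqref{arithmetic:eq:weighted}.
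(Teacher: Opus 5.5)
\textbf{Main gap.} The proposal does not prove $n_p(g)\ge 2\log 2\,m_p(g)$, and it does not prove the fallback $n_p\ge m_p$ either. Your matrix lower bound needs that fallback. Testing $K_{p,i}$ at $w=e_i$ gives $e_i^\top K_{p,i}e_i=\gamma_i n_p(\gamma_i)$ and $e_i^\top Ge_i=\gamma_i$. So $\frac{q_p\gamma_i}{192}G\preceq K_{p,i}$ already requires $n_p(\gamma_i)\ge q_p\gamma_i/192$.

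Your reformulation, stated for nonnegative $\phi$ with $\phi(2x)=4\phi(x)$, is correct. Averaging over log-uniform dilations of $g$ cannot prove it, though. Write $\nu_p(x)=x^2\psi(\log_2|x|)$ with $\psi$ $1$-periodic. The scale average of $m_p(g')/g'$ is $\bar\psi$ and that of $n_p(g')/g'$ is $3\bar\psi$. This says nothing about the ratio at a fixed $g$, or equivalently on each mantissa fiber $\{2^k y\}_{k\in\mathbb{Z}}$ separately. It also produces the constant $3$, not $2\log 2$.

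The missing idea is an exponential tilt followed by Jensen. Let $\mu$ have density $\nu_p(x)f_g(x)/m_p(g)$, where $f_g$ is the $N(0,g)$ density. Two facts combine:
\begin{itemize}
\item the exact binary scaling $m_p(g/4)=m_p(g)/4$;
\item the identity $f_{g/4}(x)=2f_g(x)e^{-3x^2/(2g)}$.
\end{itemize}
Together they give $\mathbb{E}_\mu e^{-3X^2/(2g)}=1/8$. Jensen then yields $\mathbb{E}_\mu[X^2/g]\ge\frac{2}{3}\log 8=2\log 2$. Hence $n_p(g)=m_p(g)\,\mathbb{E}_\mu[X^2/g]\ge 2\log 2\,m_p(g)$. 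This is the paper's argument; it uses only nonnegativity and the binary scaling of $\nu_p$.

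\textbf{Two smaller issues.} First, your middle-half argument for $m_p$ gives some positive constant, but not the stated $1/192$. For a locally constant density, the middle-half bound retains only $9/16$ of the exact cell integral. Completing it with the monotonicity comparison described next gives only $3q_pg/1024<q_pg/192$, and restricting to $|x|\in[1/2,2]$ loses more mass. The paper instead uses the following on every cell $[\ell,u]$ of width $\Delta$:
\begin{itemize}
\item the exact value $\int_\ell^u(x-\ell)(u-x)\,dx=\Delta^3/6$;
\item the chain $f_g(x)\ge f_g(u)\ge f_g((1+\epsilon)x)$ for $x\in[\ell,u]$, valid because $u\le(1+\epsilon)\ell$;
\item the bound $2^{2e}\ge x^2/4$.
\end{itemize}
Summing over cells then gives $\frac{q_p}{24}\int x^2 f_g((1+\epsilon)x)\,dx=\frac{q_pg}{24(1+\epsilon)^3}\ge\frac{q_pg}{192}$, with $\epsilon=2^{-p}\le 1$.

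Second, your matrix step has normalization slips. Since $\mathbb{E}[\nu_p(z_i)z_i^2]=\gamma_i n_p(\gamma_i)$, what you need is
\begin{itemize}
\item $\max\{m_p(\gamma_i),n_p(\gamma_i)\}\le\frac34 q_p\gamma_i$;
\item $\min\{m_p(\gamma_i),n_p(\gamma_i)\}=m_p(\gamma_i)$.
\end{itemize}
Statements such as $m_p\le n_p/\gamma_i$ or $n_p(\gamma_i)\ge\gamma_i m_p(\gamma_i)$ are dimensionally wrong. With that fixed, your directional regression computation is the same as the paper's whitening argument.
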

\begin{proof}
Write $X=\sqrt gZ$, let $\epsilon=2^{-p}\le1$, and denote the
$N(0,g)$ density by $f_g$.
Let $\ell<u$ be consecutive positive grid points, and choose $e\in\mathbb Z$
so that $2^e\le\ell<2^{e+1}$. Their rounding cell $[\ell,u]$ has width
$\Delta=u-\ell=\epsilon2^e$. By \eqref{eq:cell},
\[
 \int_\ell^u\nu_p(x)f_g(x)\,dx
 \ge f_g(u)\Delta^3/6
 \ge\frac{\epsilon^2}{6}
       \int_\ell^u 2^{2e} f_g((1+\epsilon)x)\,dx.
\]
Here $u\le(1+\epsilon)\ell$, and $f_g$ decreases on the
positive half-line. Summing over cells, using reflection and
$2^{2e}\ge x^2/4$ on each cell, gives
\[
 m_p(g)\ge\frac{q_p}{24}\int_\mathbb R
 x^2 f_g((1+\epsilon)x)\,dx
 =\frac{q_pg}{24(1+\epsilon)^3}\ge q_pg/192.
\]
Tonelli applies because the integrands are nonnegative. The upper bounds
on $m_p,n_p$ follow from \eqref{eq:cell} and $\E Z^4=3$.
For the lower bound on $n_p$, let $\mu$ have density
$\nu_p(x)f_g(x)/m_p(g)$. Binary scaling gives
$m_p(g/4)=m_p(g)/4$. Together with
$f_{g/4}(x)=2f_g(x)e^{-3x^2/(2g)}$, this yields
$\E_\mu e^{-3X^2/(2g)}=1/8$.
Jensen's inequality implies $\E_\mu(X^2/g)\ge2\log2$, or
$n_p(g)\ge2\log2\,m_p(g)$.

For the matrix comparison, Gaussian regression gives
$z=(Ge_i/\gamma_i)z_i+\xi$, where $\xi$ is independent of $z_i$ and has
covariance $G-Ge_ie_i^\top G/\gamma_i$. Hence
$K_{p,i}=m_p(\gamma_i)G+  \frac{n_p(\gamma_i)-m_p(\gamma_i)}{\gamma_i}Ge_ie_i^\top G$.
Whitening by $G^{-1/2}$ gives eigenvalues $m_p(\gamma_i)$ on the orthogonal
complement of the unit vector $G^{1/2}e_i/\sqrt{\gamma_i}$ and $n_p(\gamma_i)$
on that vector. The scalar bounds on these eigenvalues prove
\eqref{arithmetic:eq:weighted}.
\end{proof}

\begin{lemma}[Joint cell variance]\label{arithmetic:lem:joint}
For centered jointly Gaussian $(X,Y)$ with variances $g,h>0$, arbitrary
correlation in $[-1,1]$ and finite precisions $p,p'$, one has
\[
 c_\times q_pq_{p'}gh\le\E[\nu_p(X)\nu_{p'}(Y)]
 \le\tfrac3{16}q_pq_{p'}gh,
 \qquad
 c_\times=\tfrac38\left(\frac{\delta(1-\delta)}{64}\right)^2,
 \quad\delta=2^{-8}.
\]
Exact precisions or zero variances give zero on both sides.
\end{lemma}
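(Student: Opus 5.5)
The upper bound is a pointwise estimate; the lower bound comes from restricting to points that sit away from the grid and then showing that this restriction loses only a fixed fraction of $\E[X^2Y^2]$, whatever $g$, $h$ and the correlation are. If $p=\infty$ or $p'=\infty$, or if a variance is zero, then $\nu_p(X)\nu_{p'}(Y)\equiv0$ almost surely and there is nothing to prove. So assume both precisions are finite and $g,h>0$.

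\emph{Upper bound.} By \eqref{eq:cell}, $\nu_p(X)\nu_{p'}(Y)\le\tfrac1{16}q_pq_{p'}X^2Y^2$. For a centered Gaussian pair with covariance $\kappa$, Isserlis gives $\E[X^2Y^2]=gh+2\kappa^2\le3gh$. This yields the constant $\tfrac3{16}$.

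\emph{Lower bound: pointwise reduction.} Put $\epsilon=2^{-p}$. Take consecutive positive grid points $\ell<u$ with $2^e\le\ell<2^{e+1}$, so the cell width is $\Delta=\epsilon2^e$. Call a point $x\in[\ell,u]$ \emph{good} if $x-\ell\in[\delta\Delta,(1-\delta)\Delta]$, and extend this by reflection to $x<0$. For good $x$ we have $\nu_p(x)=(x-\ell)(u-x)\ge\delta(1-\delta)\Delta^2$. Since $|x|\le u\le 2^{e+2}$, also $\Delta^2\ge\epsilon^2x^2/16$. Writing $A_p$ for the good set,
\[
\nu_p(x)\ge \tfrac{\delta(1-\delta)}{16}\,q_p\,x^2\,\mathbf 1_{A_p}(x).
\]
The same holds for $p'$, with good set $A_{p'}$. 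Hence
\[
\E[\nu_p(X)\nu_{p'}(Y)]\ge\Bigl(\tfrac{\delta(1-\delta)}{16}\Bigr)^2q_pq_{p'}\,\E\bigl[X^2Y^2\mathbf 1_{A_p}(X)\mathbf 1_{A_{p'}}(Y)\bigr].
\]
It then suffices to show $\E[X^2Y^2\mathbf 1_{A_p}(X)\mathbf 1_{A_{p'}}(Y)]\ge c\,gh$ for an absolute $c$. The stated constant $c_\times$ has room for this, with $c$ of order $3/8$ after the slack between $16$ and $64$ is absorbed.

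\emph{Lower bound: removing the bad sets.} Write
\[
\E[X^2Y^2\mathbf 1_{A_p}(X)\mathbf 1_{A_{p'}}(Y)]\ge \E[X^2Y^2]-\E[X^2Y^2\mathbf 1_{A_p^c}(X)]-\E[X^2Y^2\mathbf 1_{A_{p'}^c}(Y)],
\]
and note $\E[X^2Y^2]\ge gh$. To handle the correlation, condition on $X$. Gaussian regression gives $\E[Y^2\mid X]=h(1-r^2)+r^2hX^2/g\le h(1+X^2/g)$, where $r$ is the correlation. Therefore
\[
\E[X^2Y^2\mathbf 1_{A_p^c}(X)]\le h\,\E\bigl[(X^2+X^4/g)\mathbf 1_{A_p^c}(X)\bigr],
\]
and symmetrically for $Y$. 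After the change of variables $X=\sqrt g Z$, everything reduces to a one-dimensional claim: for $k\in\{2,4\}$, the weighted Gaussian mass $\E[Z^k\mathbf 1_{A_p^c}(\sqrt g Z)]$ is at most a small absolute multiple of $\delta$, uniformly in $g$ and $p$. With $\delta=2^{-8}$ this makes the two subtracted terms together at most, say, $\tfrac58gh$, which leaves $\tfrac38gh$.

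\emph{Main obstacle: uniformity in $g$ and $p$.} The bad set is a union of intervals of half-width at most $\delta\Delta$ around each grid point $\xi$. Cells can be as wide as $|\xi|$ when $p=0$, so over a single cell the Gaussian weight can vary by an unbounded factor, and a direct density-ratio comparison fails. I would use monotonicity of $w_k(x)=x^kf_1(x)$ instead. On $x\ge\sqrt k$, the weight $w_k$ is decreasing. There, pair each bad interval around $\xi$ with the good stretch of length $(1-2\delta)\Delta_-$ lying immediately to its left, in the adjacent cell of width $\Delta_-\ge\Delta/2$. The weight on the good stretch dominates pointwise, so the bad mass is at most $\tfrac{4\delta}{1-2\delta}$ times that good mass. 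On $x\le\sqrt k$, where $w_k$ is increasing, pair to the right instead. A single cell may straddle the mode $\sqrt k$; its contribution is at most $\sup w_k$ times its bad length. Summing over all cells then gives a bad mass of at most $C\delta\,\E Z^k$ with an absolute $C$, and hence the required constant. I expect the careful bookkeeping of this pairing argument, and not the remaining algebra, to be the hardest step.
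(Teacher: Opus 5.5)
Your proof is correct, but it takes a different route from the paper's.

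\textbf{The paper's route.} The paper never keeps the weight $X^2Y^2$. It works on the event $E_X$ that $\sqrt g/4\le|X|\le4\sqrt g$ and that $|X|$ lies in the middle $[\delta,1-\delta]$ fraction of its cell.
\begin{itemize}
\item On $E_X$ the cell containing $|X|$ has width at least $2^{-p}\sqrt g/8$. So $\nu_p(X)\ge\frac{\delta(1-\delta)}{64}q_pg$ holds deterministically there.
\item Inside the cells meeting the window, the excluded edges have length at most $2\delta\cdot 8\sqrt g$. The folded density bound $4/(5\sqrt g)$ then gives probability at most $1/20$.
\item The two magnitude cuts cost at most $1/5$ and $1/16$, by the same density bound and Markov's inequality. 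Hence $\Pr(E_X)\ge 11/16$.
\item A union bound gives $\Pr(E_X\cap E_Y)\ge 3/8$, so the correlation never has to be examined.
\end{itemize}

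\textbf{Your route.} Because you keep $X^2Y^2\mathbf 1_{A_p}(X)\mathbf 1_{A_{p'}}(Y)$, you need two extra ingredients: the conditional bound $\mathbb{E}[Y^2\mid X]\le h(1+X^2/g)$, and a uniform estimate on the $z^2$- and $z^4$-weighted bad mass over the whole line. Your pairing argument does supply that estimate. Adjacent cells differ in width by at most a factor $2$, each good stretch is used only once, and the cells straddling the mode contribute $O(\delta\sqrt k\,\sup w_k)$. It also gives a much better constant than required. With your pointwise factor $(\delta(1-\delta)/16)^2$, you only need $c\ge 3/128$ to recover $c_\times$, not $3/8$.

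\textbf{A simpler way past your main obstacle.} You can bypass the pairing step with the paper's device. The weight $w_k$ is bounded, and the cells meeting $[0,R]$ have total length at most $2R$. So the bad $w_k$-mass is at most
\[
8\delta R\,\sup_z w_k(z)+\mathbb{E}\bigl[Z^k\mathbf 1_{\{|Z|>R\}}\bigr].
\]
Taking $R=4$ already leaves more than $gh/2$ after both subtracted terms, with no monotone pairing across the mode.

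\textbf{A minor tightening.} Since $|x|\le u\le 2^{e+1}$, you get $\Delta^2\ge\epsilon^2x^2/4$, so your factor $16$ can be $4$. Your weaker bound is still valid.
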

\begin{proof}
For $X\sim N(0,g)$, let $E_X$ be the event that
$\sqrt g/4\le|X|\le4\sqrt g$ and $|X|$ lies in the middle
$[\delta,1-\delta]$ fraction of its cell.
The folded Gaussian density is at most $4/(5\sqrt g)$.
The cells achieving this magnitude range have total width at most
$8\sqrt g$, so their excluded edges have probability at most
$(64/5)\delta=1/20$. The two magnitude exclusions have probabilities
at most $1/5$ and $1/16$, by the same density bound and Markov's
inequality. Hence $\Pr(E_X)\ge11/16$.
On $E_X$, the cell width satisfies
$\Delta\ge2^{-p}\sqrt g/8$, and \eqref{eq:cell} gives $\nu_p(X)\ge\delta(1-\delta)\Delta^2
 \ge\frac{\delta(1-\delta)}{64}q_pg.$
The analogous event for $Y$ intersects $E_X$ with probability at
least $3/8$, without any independence assumption. Multiplying the
variance bounds on this intersection proves the lower estimate.
For the upper estimate, use \eqref{eq:cell} and the Gaussian identity
$\E X^2Y^2=gh(1+2\operatorname{Corr}(X,Y)^2)\le3gh$.
Exact precisions and zero variances are immediate.
\end{proof}

For $z\sim N(0,G)$ and $r\in\{\fw,\bw\}$, define the diagonal
conditional rounding covariance
$D_{r,t}(z)=\diag(\nu_{p_{r,t}}(z_i))$ and the linear maps
\begin{align}
 \mathcal F_t(W)&=\E[zz^\top\tr(D_{\rm fw,t}(z)W)],\label{arithmetic:eq:maps}\\
 \mathcal B_t(C,s)&=\E[D_{\rm bw,t}(z)(z^\top Cz+s)],\quad
 \mathcal I_t(W)&=\E[D_{\rm bw,t}(z)\tr(D_{\rm fw,t}(z)W)].\nonumber
\end{align}
These maps are positive for $C,W\succeq0$, $s\ge0$.
With $J$ from \eqref{eq:rounding-geometry}, $U=\tr(JW)$ and
$R=s+\tr(GC)$, the preceding lemmas give 
\begin{equation}\label{arithmetic:eq:map-bounds}
 f_t^-UG\preceq\mathcal F_t(W)\preceq f_t^+UG,\quad
 \beta_t^-RJ\preceq\mathcal B_t(C,s)\preceq\beta_t^+RJ,\quad
 m_t^-UJ\preceq\mathcal I_t(W)\preceq m_t^+UJ,
\end{equation}
where the upper coefficients are
\begin{equation}\label{eq:rounding-source-coefficients}
 f_t^+=\frac{3\theta_G}{4M}q_{\fw,t},\qquad
 \beta_t^+=\frac{3\theta_G}{4M}q_{\bw,t},\qquad
 m_t^+=\frac{3\theta_G^2}{16M^2}q_{\fw,t}q_{\bw,t},
\end{equation}
and the lower coefficients are
\begin{align}
 f_t^-&=q_{\rm fw,t}\theta_G/(192M),\qquad
 \beta_t^-=q_{\rm bw,t}\theta_G/(192M), \quad 
 m_t^-=c_\times q_{\rm fw,t}q_{\rm bw,t}\theta_G^2/M^2.\label{arithmetic:eq:coefficients}
\end{align}
Indeed $\mathcal F_t(W)=\sum_iW_{ii}K_{p_{\rm fw,t},i}$, the $i$th diagonal
entry of $\mathcal B_t$ is $\tr(CK_{p_{\rm bw,t},i})+s m_{p_{\rm bw,t}}(G_{ii})$,
and that of $\mathcal I_t$ is
$\sum_kW_{kk}\E[\nu_{p_{\rm bw,t}}(z_i)\nu_{p_{\rm fw,t}}(z_k)]$.
Since $W\succeq0$, every $W_{kk}$ is nonnegative. Applying the preceding
moment bounds and using
$\sum_kG_{kk}W_{kk}=\theta_G U/M$ and $G_{ii}=\theta_G J_{ii}/M$
proves \eqref{arithmetic:eq:map-bounds}. 

\begin{proposition}[Actual moments and matched-reference comparison]
\label{arithmetic:prop:exact}\label{arithmetic:lem:onestep}
Consider deterministic precision
schedules and nonnegative learning rates in \eqref{eq:sgd}, with $h_t$ as in \eqref{eq:clock}. Write
\begin{equation}\label{arithmetic:eq:products}
 H_t=I-\eta_tG,\qquad
 P_{u\to t}=\prod_{v=u}^{t-1}H_v,\qquad P_t=P_{0\to t},
 \quad P_{t\to t}=I.
\end{equation}
The matrices $H_t$ commute, so each product $P_{u\to t}$ is symmetric.
With the moments \eqref{eq:moment-sources}, gap \eqref{eq:moment-gap} and
rounding maps \eqref{arithmetic:eq:maps}, all finite-time moments exist, and $C_0=V,$
\begin{align}
 C_{t+1}&=H_tC_tH_t+h_t\{(s+\tr(GC_t))G+GC_tG+
 \mathcal F_t(W_t)+\mathcal B_t(C_t,s)+\mathcal I_t(W_t)\},\label{arithmetic:eq:recursion}\\
 W_t&=C_t+V-P_tV-VP_t.\label{arithmetic:eq:W}
\end{align}
For exact-arithmetic SGD, the same recursion holds with all three
rounding maps set to zero. Moreover,
$X_t=W_t-W_t^0=C_t-C_t^0$ satisfies $X_0=0,$
\begin{equation}\label{arithmetic:eq:innovation}
 X_{t+1}=H_tX_tH_t+h_tQ_t,\quad
 Q_t=GX_tG+x_tG+\mathcal F_t(W_t)+\mathcal B_t(C_t,s)+\mathcal I_t(W_t),
\end{equation}
and $X_t,Q_t\succeq0$ at every finite time. 
\end{proposition}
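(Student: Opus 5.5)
The plan is to derive \eqref{arithmetic:eq:recursion} from a one-step conditional second-moment computation, using the Gaussian decomposition $y=z^\top v_*+\widetilde\epsilon$ from \eqref{eq:vstar}. The identity \eqref{arithmetic:eq:W} then follows from a first-moment recursion, and \eqref{arithmetic:eq:innovation} together with positivity follows by subtracting the exact-arithmetic recursion and inducting on $t$.

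\emph{Step 1: one-step conditional moments.} Fix $t$ and write $e_t=v_t-v_*$. Write each rounded copy as $a_{t\ell}=z_\ell+\alpha_\ell$ and $b_{t\ell}=z_\ell+\beta_\ell$. Conditional on $z_\ell$, the errors $\alpha_\ell,\beta_\ell$ are independent of each other, have mean zero, have independent coordinates, and have conditional covariances $D_{\fw,t}(z_\ell)$ and $D_{\bw,t}(z_\ell)$. They are also independent of $\widetilde\epsilon_\ell$ and of $v_t$, because the batch is fresh. The per-sample gradient is then
\[
 g_\ell=(z_\ell+\beta_\ell)\bigl(z_\ell^\top e_t+\alpha_\ell^\top v_t-\widetilde\epsilon_\ell\bigr).
\]
Conditional on $(S,\theta_*,v_t)$ it has mean $Ge_t$, so that
\[
 e_{t+1}=H_te_t-\rho_t\sum_\ell(g_\ell-Ge_t).
\]
Hence the conditional second moment of $e_{t+1}$ equals $H_te_te_t^\top H_t$ plus $h_t$ times the conditional covariance of a single $g_\ell$.

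To expand $\E[g_\ell g_\ell^\top]$, observe that every cross term containing a single factor of $\alpha$, $\beta$ or $\widetilde\epsilon$ vanishes, since each has conditional mean zero. Four groups of terms survive.
\begin{itemize}
 \item Isserlis gives $\E[zz^\top(z^\top e)^2]=(e^\top Ge)G+2Gee^\top G$. Subtracting the mean outer product $Gee^\top G$ leaves $(e^\top Ge)G+Gee^\top G$.
 \item The $\alpha$ term gives $\E[zz^\top\,v^\top D_\fw v]$, which is $\mathcal F_t(vv^\top)$.
 \item The $\beta\beta^\top$ terms give $\E[D_\bw(z)(z^\top ee^\top z+\widetilde\epsilon^2)]$ and $\E[D_\bw\tr(D_\fw vv^\top)]$.
 \item The noise term gives $\E[zz^\top\widetilde\epsilon^2]=(\sigma^2+2A_S)G$.
\end{itemize}

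\emph{Step 2: averaging over $v_t$ and the teacher.} All the maps above are linear in $e_te_t^\top$ and $v_tv_t^\top$. The factor $\sigma^2+2A_S$ depends on $\theta_*$ only and multiplies a deterministic matrix, either $G$ or $\E D_\bw(z)$. Averaging over $\theta_*$ therefore turns it into $s$, and the result is exactly \eqref{arithmetic:eq:recursion}, with $\mathcal B_t(C_t,s)$ collecting the $D_\bw$ terms.

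Finiteness of all moments follows by induction. The bound $|Q_p(x)|\le2|x|$ from \eqref{arithmetic:eq:cell} makes $v_{t+1}$ a polynomial in Gaussian variables with bounded rounding factors, so all its moments are finite.

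\emph{Step 3: the identity \eqref{arithmetic:eq:W}.} Unbiasedness of $Q_p$ gives $\E[e_{t+1}\mid\theta_*]=H_t\E[e_t\mid\theta_*]$. Since $e_0=-v_*$, this yields $\E[e_t\mid\theta_*]=-P_tv_*$. Expanding $W_t=\E[(e_t+v_*)(e_t+v_*)^\top]$ then gives \eqref{arithmetic:eq:W}. The mean recursion does not depend on the precisions, so the same formula holds for $W_t^0$, and consequently $W_t-W_t^0=C_t-C_t^0$.

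\emph{Step 4: the gap recursion and positivity.} Subtracting the exact recursion (all three rounding maps set to zero) from \eqref{arithmetic:eq:recursion}, the linear terms $\tr(GC_t)G+GC_tG$ contribute $x_tG+GX_tG$, and the rounding maps remain. This is \eqref{arithmetic:eq:innovation}, with $X_0=C_0-C_0^0=V-V=0$.

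Positivity is proved by induction on $t$. Suppose $X_t\succeq0$. Then $GX_tG\succeq0$ and $x_t\ge0$. The three maps are positive on PSD arguments, and $W_t$ and $C_t$ are PSD because they are second moments. Hence $Q_t\succeq0$, and therefore $X_{t+1}=H_tX_tH_t+h_tQ_t\succeq0$.

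The main obstacle is the bookkeeping in Step 1. One must verify the conditional independence of $\alpha_\ell,\beta_\ell,\widetilde\epsilon_\ell$ given $z_\ell$, and check that every mixed term vanishes, including the $\alpha$–$\widetilde\epsilon$ and $z$–$\beta$ cross terms. One must also check that averaging over the teacher can legitimately replace $\sigma^2+2A_S$ by $s$ even though $e_t$ itself depends on $\theta_*$. This holds because that factor multiplies only deterministic matrices and never $e_te_t^\top$.
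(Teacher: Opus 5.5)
Your proposal is correct and follows essentially the same route as the paper. Both arguments use a one-step conditional second-moment computation based on the residual decomposition $y=z^\top v_*+\widetilde\epsilon$ and the Gaussian fourth-moment identity, then average over the teacher, which is valid because $\widetilde s=\sigma^2+2A_S$ multiplies only the deterministic matrices $G$ and $\mathbb E D_{\bw,t}(z)$. Both obtain \eqref{arithmetic:eq:W} from the mean recursion $\mathbb E[e_t\mid S,\theta_*]=-P_tv_*$, and the gap recursion and its positivity by subtraction and induction from $X_0=0$. The one cosmetic difference is that you expand $\mathbb E[g_\ell g_\ell^\top]$ term by term. The paper instead first factors the conditional second moment as $(zz^\top+D_{\bw,t}(z))(r^2+v_t^\top D_{\fw,t}(z)v_t)$, using conditional independence of the two rounding copies.
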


\begin{proof}
The bound in \eqref{arithmetic:eq:cell} implies
\[
 \|v_{t+1}\|\le(1+\eta_t B^{-1}\textstyle\sum_\ell\|b_{t\ell}\|\|a_{t\ell}\|)
 \|v_t\|+\eta_t B^{-1}\textstyle\sum_\ell\|b_{t\ell}\||y_{t\ell}|.
\]
Here $a_{t\ell},b_{t\ell}$ are the rounded features from
\eqref{eq:copies}. Repeatedly applying this inequality bounds each
finite-time state by a polynomial in finitely many Gaussian magnitudes,
including the teacher. All its moments therefore exist.

Let $\mathscr F_t$ contain the teacher and the history through update
$t-1$, with $S$ still fixed. Given this history, put $e=v_t-v_*$ and use the prediction residual
$\widetilde\epsilon$ defined in Appendix~\ref{app:model-details}.
Conditional on $\mathscr F_t$, $z$ and $\widetilde\epsilon$ are
jointly centered Gaussian with cross-covariance
$S\Sigma\theta_*-Gv_*=0$, hence are independent. Write
$\widetilde s=\sigma^2+2A_S$ for the residual variance at this conditioning stage.
Write $a=z+\xi$, $b=z+\zeta$,
$r=z^\top e-\widetilde\epsilon$ and $\Gamma=b(r+\xi^\top v_t)$.
Conditional on $(\mathscr F_t,x,\epsilon)$, the rounding errors
$\xi,\zeta$ are centered and independent, with covariances
$D_{\rm fw,t}(z),D_{\rm bw,t}(z)$. Thus
\[
 \E\Gamma=zr,\qquad
 \E[\Gamma\Gamma^\top]=(zz^\top+D_{\rm bw,t}(z))
                (r^2+v_t^\top D_{\rm fw,t}(z)v_t).
\]
The Gaussian fourth-moment identity gives
$\E[zz^\top(z^\top e)^2]=(e^\top Ge)G+2Gee^\top G$.
The residual cross term vanishes, so averaging the last display gives
\[
 \E[\Gamma\mid\mathscr F_t]=Ge,\quad
 \E[\Gamma\Gamma^\top\mid\mathscr F_t]
 =(\widetilde s+e^\top Ge)G+2Gee^\top G+
 \mathcal F_t(v_tv_t^\top)+\mathcal B_t(ee^\top,\widetilde s)+\mathcal I_t(v_tv_t^\top).
\]
The batch samples are conditionally independent. Consequently,
$\E[e_{t+1}\mid\mathscr F_t]=H_te$, and the conditional second moment
equals $H_tee^\top H_t$ plus $h_t$ times the conditional covariance
of $\Gamma$. This covariance is the preceding second moment of
$\Gamma$ minus $Gee^\top G$.
Averaging over the teacher uses
$\E[\widetilde s\mid S]=s$ and linearity of the maps, proving
\eqref{arithmetic:eq:recursion}. Iterating the conditional mean gives
\begin{equation}\label{arithmetic:eq:two-time}
 \E[e_u e_w^\top\mid S]=P_{w\to u}C_w\quad(u\ge w),\qquad
 \E[v_t\mid S,\theta_*]=(I-P_t)v_*.
\end{equation}
\label{arithmetic:cor:twotime}
Since $e_0=-v_*$, the mean recursion also gives
$\E[e_tv_*^\top\mid S]=-P_tV$. Expanding
$v_tv_t^\top=(e_t+v_*)(e_t+v_*)^\top$ proves
\eqref{arithmetic:eq:W}; no commutation between $V$ and $P_t$ is needed.
Subtracting the exact recurrence proves \eqref{arithmetic:eq:innovation}.
Positivity of the maps and induction from zero prove $X_t,Q_t\succeq0$.
\end{proof}

To compare the rounded and exact-arithmetic runs, for $Z\succeq0$ define
the order-preserving affine maps
\begin{equation}\label{core:comparison-innovations}
 Q_t^\pm(Z)=GZG+[\tr(GZ)+f_t^\pm(U_t+\tr(JZ))]G
 +[\beta_t^\pm(R_t+\tr(GZ))+m_t^\pm(U_t+\tr(JZ))]J.
\end{equation}
Then~\eqref{arithmetic:eq:map-bounds} gives
$Q_t^-(X_t)\preceq Q_t\preceq Q_t^+(X_t)$.
Starting from $X_0^\pm=0$, let
$X_{t+1}^\pm=H_tX_t^\pm H_t+h_tQ_t^\pm(X_t^\pm)$.
The map bounds and induction in the positive semidefinite order give
\begin{equation}\label{arithmetic:eq:sandwich}
 0\preceq X_t^-\preceq X_t\preceq X_t^+,
 \qquad Q_t^-(X_t^-)\preceq Q_t\preceq Q_t^+(X_t^+).
\end{equation}

\subsection{One sketch event and constants}
\label{core:geometry}\label{sec:rev3_witness_port}
We now prove Proposition~\ref{as:prop-event}, constructing one event,
independent of the learning-rate and precision schedules, on which all
geometric comparisons hold.
Throughout this construction, $c,C>0$ may change from line to line and
depend only on the fixed population family in \eqref{eq:population}.
The numerical thresholds used later to define the admissible schedules
are recorded explicitly.

Split the population coordinates after $n_H=\lfloor\kappa_{\rm geo}M\rfloor$:
$S=(S_H,S_R)$, with matching diagonal blocks $\Sigma_H,\Sigma_R$ of $\Sigma$.
Choose the fixed constants
\begin{equation}\label{as:geometry-constants}
\begin{gathered}
 \kappa_{\rm geo}=\tfrac14\min(1,c_d-1),\quad
 \delta_{\rm geo}=\tfrac34(c_d-1),\\
 M_{\rm geo}=\lceil\kappa_{\rm geo}^{-1}\rceil,\quad c_H=\tfrac14,\\
 r_{\rm geo}=\tfrac12(\sqrt{1+\delta_{\rm geo}}-1),\quad p_{\rm geo}=\min(1/32,r_{\rm geo}^2/2),\\
 K_{0,\rm geo}=\sqrt{c_d+1}+2,\quad
 K_{R,\rm geo}=\sqrt{c_d+1}+1+r_{\rm geo}.
\end{gathered}
\end{equation}
Choose a fixed minimum width $M_{\rm w}$ satisfying
\begin{equation}\label{as:width-threshold}
 M_{\rm w}\ge M_{\rm geo}\ge4.
\end{equation}
For $M\ge M_{\rm geo}$ define event
\begin{equation}\label{as:gaussian-event}
 \mathcal E_M=\{\|S\|\le K_{0,\rm geo},\ s_{\min}(S_H)\ge c_H,
 \ r_{\rm geo}\le s_{\min}(S_R^\top)
       \le s_{\max}(S_R^\top)\le K_{R,\rm geo}\}.
\end{equation}

\begin{lemma}[Spectral and approximation geometry]
\label{gaussian:geo:headtail}\label{gaussian:geo:spectraltheorem}
\label{gaussian:geo:approxthm}
The event $\mathcal E_M$ satisfies
$\Pr(\mathcal E_M^c)\le6e^{-p_{\rm geo}M}$.
On it, with $\mathsf T=S_R\Sigma_RS_R^\top$,
\begin{equation}\label{gaussian:geo:spectralconstants}
 \begin{gathered}
 cM^{-a}I\preceq\mathsf T\preceq CM^{-a}I,\quad
 \|\Sigma_R\|\le CM^{-a},\quad
 cj^{-a}\le g_j\le Cj^{-a},\\
 c\le\theta_G\le C,\quad \|L_S\|\le C,\quad
 c_AM^{1-b}\le\bar A\le C_AM^{1-b},\qquad
 c_A=\tfrac12c_{\lambda\vartheta}(c_d-1)(c_d+1)^{-b}.
 \end{gathered}
\end{equation}
The lower approximation bound holds for every full-row-rank $S$.
Moreover $L_SS^\top=I$ and $G^{-1/2}S\Sigma^{1/2}$ is a coisometry.
\end{lemma}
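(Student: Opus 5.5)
We need to prove Lemma (Spectral and approximation geometry): the probability bound on $\mathcal E_M$, and, on that event, the bounds on $\mathsf T$, $\Sigma_R$, $g_j$, $\theta_G$, $\|L_S\|$, $\bar A$, together with the identity $L_SS^\top=I$ and the coisometry property.

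\textbf{Probability of $\mathcal E_M$.} Write $S=M^{-1/2}Z$ with $Z$ an $M\times d$ standard Gaussian matrix. The blocks are $S_H\in\mathbb R^{M\times n_H}$ with $n_H\le M/4$, and $S_R\in\mathbb R^{M\times(d-n_H)}$ with $d-n_H\ge (1+\delta_{\rm geo}+\dots)M$. Apply the Davidson--Szarek/Gordon tail bound $\Pr(s_{\max}>\sqrt m+\sqrt n+t)\le e^{-t^2/2}$, and the matching bound for $s_{\min}$, each after rescaling by $M^{-1/2}$.
\begin{itemize}
\item For $S$: $\|S\|\le 1+\sqrt{c_d+1}+1$ except with probability $e^{-M/2}$.
\item For $S_H$ (tall-thin, $M\times n_H$): $s_{\min}(S_H)\ge 1-\sqrt{1/4}-t\ge 1/4$ with $t=1/4$, failing with probability at most $e^{-M/32}$.
\item For $S_R^\top$: the ratio $(d-n_H)/M\ge 1+\delta_{\rm geo}$ gives $s_{\min}(S_R^\top)\ge\sqrt{1+\delta_{\rm geo}}-1-r_{\rm geo}=r_{\rm geo}$, and $s_{\max}\le\sqrt{c_d+1}+1+r_{\rm geo}$, each failing with probability at most $e^{-r_{\rm geo}^2M/2}$.
\end{itemize}
A union bound over at most six one-sided events gives $6e^{-p_{\rm geo}M}$. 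Before relying on this, I would check that $\delta_{\rm geo}$ matches $(d-n_H)/M-1$ via $c_d-\kappa_{\rm geo}\ge 1+\frac34(c_d-1)$.

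\textbf{Spectral bounds.} Since $\Sigma_R$ consists of coordinates $j>n_H\asymp M$, we get $\|\Sigma_R\|\le C_\lambda n_H^{-a}\le CM^{-a}$. For $\mathsf T$, sandwich between the two singular value bounds on $S_R^\top$ after splitting $\Sigma_R$ into its first $\asymp M$ tail coordinates (eigenvalues $\asymp M^{-a}$) and the rest. The lower bound needs a sub-block of $S_R$ with at least $(1+\delta)M$ columns whose $\lambda_j\gtrsim M^{-a}$; I expect the intended choice is all of $S_R$ together with $\lambda_j\ge c_\lambda d^{-a}\asymp M^{-a}$, since $d\le c_dM+1$. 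The upper bound is $\|S_R\|^2\|\Sigma_R\|$.

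For the eigenvalues, $G=S_H\Sigma_HS_H^\top+\mathsf T$. Upper bound: Courant--Fischer applied to $S\Sigma S^\top$, using the rank-$(j-1)$ truncation $\sum_{i<j}$ together with $\|S\|\le C$, gives $g_j\le C\lambda_j+CM^{-a}\le Cj^{-a}$. Lower bound: for $j\le n_H$, restrict to the span of the first $j$ columns of $S_H$ and use $s_{\min}(S_H)\ge c_H$, giving $g_j\ge c_H^2\lambda_j$. For $n_H<j\le M$, use $G\succeq\mathsf T\succeq cM^{-a}\ge cj^{-a}$. Then $\theta_G=\sum_j g_j\asymp 1$ because $a>1$.

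\textbf{$L_S$, approximation error, and the obstacle.} $L_SS^\top=G^{-1}S\Sigma S^\top=I$ is immediate. Also $(G^{-1/2}S\Sigma^{1/2})(G^{-1/2}S\Sigma^{1/2})^\top=I$, so this map is a coisometry. Hence $\Sigma^{1/2}-\Sigma^{1/2}S^\top G^{-1}S\Sigma$ is an orthogonal projection applied to $\Sigma^{1/2}$, and $\Sigma_\perp=\Sigma^{1/2}(I-\Pi)\Sigma^{1/2}$ with $\Pi$ a rank-$M$ projection.

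The lower bound on $\bar A$ holds for any full-row-rank $S$. Write $\bar A=\frac12\operatorname{tr}((I-\Pi)\Sigma^{1/2}C_\theta\Sigma^{1/2})$, where $\Sigma^{1/2}C_\theta\Sigma^{1/2}=\diag(\lambda_j\vartheta_j)$. Ky Fan's inequality then gives $\bar A\ge\frac12\sum_{j>M}\lambda_j\vartheta_j\ge\frac12c_{\lambda\vartheta}(d-M)d^{-b}$, which yields $c_A$ after using $d\le(c_d+1)M$ and $d-M\ge(c_d-1)M$.

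The upper bound on $\bar A$, together with $\|L_S\|\le C$, is the \textbf{main obstacle}. Bounding $\|L_S\|$ crudely through $\|G^{-1}\|\sim M^a$ fails. Instead I would use the head/tail block structure. Write $L_S=G^{-1}S_H\Sigma_H\oplus G^{-1}S_R\Sigma_R$ and bound the two blocks separately.
\begin{itemize}
\item Head block: first bound $G^{-1}S_H\Sigma_H$ by comparing with $G\succeq S_H\Sigma_HS_H^\top+cM^{-a}I$. Then use the push-through identity to get $\|(S_H\Sigma_HS_H^\top+\mu)^{-1}S_H\Sigma_H\|\le\|S_H^{+}\|\le C$ via $s_{\min}(S_H)\ge c_H$.
\item Tail block: bound $\|G^{-1}S_R\Sigma_R\|\le\|\mathsf T^{-1}\|\,\|S_R\|\,\|\Sigma_R\|\le C$, using $\|\mathsf T^{-1}\|\le CM^a$ and $\|\Sigma_R\|\le CM^{-a}$.
\end{itemize}
For the approximation error, $\Sigma_\perp$ is the minimal residual. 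Choose the comparison map $v=(S_H^\top)^{+}$ applied to head coordinates; this gives $\operatorname{tr}(C_\theta\Sigma_\perp)\le\operatorname{tr}(C_\theta\Sigma^{1/2}(I-\Pi_H)\Sigma^{1/2})$ plus terms that are controlled. The residual consists of tail contributions $\sum_{j>n_H}\lambda_j\vartheta_j\le CM^{1-b}$ and leakage from the tail through $S_R$. I would bound the leakage by $\|\Sigma_R\|\cdot\operatorname{tr}$-type quantities of order $M^{1-b}$, and I expect this leakage bound to need the most care.
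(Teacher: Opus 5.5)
\textbf{Agreement with the paper.} Several parts of your proposal are the paper's argument:
\begin{itemize}
\item the probability bound (singular-value tails for $\sqrt M S^\top$, $\sqrt M S_H$, $\sqrt M S_R^\top$, then a union bound over six one-sided events);
\item the bounds on $\Sigma_R$, $\mathsf T$, $g_j$ and $\theta_G$;
\item $L_SS^\top=I$ and the coisometry;
\item the lower bound on $\bar A$;
\item the tail block $\|G^{-1}S_R\Sigma_R\|\le\|\mathsf T^{-1}\|\|S_R\|\|\Sigma_R\|$.
\end{itemize}

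There are two small slips. First, for $n_H<j\le M$ you wrote $cM^{-a}\ge cj^{-a}$, which is backwards when $j<M$. What you need is $j\ge n_H+1>\kappa_{\rm geo}M$, so that $M^{-a}\ge\kappa_{\rm geo}^{a}j^{-a}$; this is why the paper invokes $j>\kappa_{\rm geo}M$. Second, Ky Fan gives the sum of the $d-M$ smallest entries of $\operatorname{diag}(\lambda_j\vartheta_j)$, which need not be the entries with $j>M$. This is harmless, because the uniform floor $c_{\lambda\vartheta}d^{-b}$ is all you use.

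\textbf{The head block of $L_S$.} The step ``bound $G^{-1}S_H\Sigma_H$ by comparing with $G\succeq S_H\Sigma_HS_H^\top+cM^{-a}I$'' does not work as stated. We have $\|A^{-1}X\|^2=\|X^\top A^{-2}X\|$, and $A\succeq B\succ0$ gives $A^{-1}\preceq B^{-1}$ but not $A^{-2}\preceq B^{-2}$. So the Loewner order on $G$ does not control $\|G^{-1}S_H\Sigma_H\|$.

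This failure is real, not just a missing justification. When $\Sigma_H^{-1}$ is negligible, $G^{-1}S_H\Sigma_H$ is essentially the oblique right inverse $\mathsf T^{-1}S_H(S_H^\top\mathsf T^{-1}S_H)^{-1}$ of $S_H^\top$. Its norm exceeds that of the minimal-norm right inverse $S_H(S_H^\top S_H)^{-1}$ whenever $\mathsf T$ is anisotropic.

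The fix is to run your push-through with $\mathsf T$ itself rather than with an isotropic lower bound for it. With $D_H=\Sigma_H^{-1}+S_H^\top\mathsf T^{-1}S_H$,
\[
 G^{-1}S_H\Sigma_H=(S_H\Sigma_HS_H^\top+\mathsf T)^{-1}S_H\Sigma_H=\mathsf T^{-1}S_HD_H^{-1},
\]
and $D_H\succeq c_H^2\|\mathsf T\|^{-1}I\succeq cM^{a}I$. Hence $\|G^{-1}S_H\Sigma_H\|\le\|\mathsf T^{-1}\|\|S_H\|\|D_H^{-1}\|\le C$, which is exactly the paper's identity. Equivalently, whiten by $\mathsf T^{1/2}$ and apply your bound $\|Y(Y^\top Y+N)^{-1}\|\le s_{\min}(Y)^{-1}$ to $Y=\mathsf T^{-1/2}S_H$. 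The price is the condition number of $\mathsf T$, which is bounded on $\mathcal E_M$.

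\textbf{The upper bound on $\bar A$.} Here you take a genuinely different route, and it works; the leakage term is routine rather than delicate. Since $\theta^\top\Sigma_\perp\theta=\min_v\|S^\top v-\theta\|_\Sigma^2$, inserting $v=S_H(S_H^\top S_H)^{-1}\theta_H$ leaves the residual $(0,\,S_R^\top S_H(S_H^\top S_H)^{-1}\theta_H-\theta_R)$. Averaging over the independent centered coordinates of $\theta_*$ gives
\[
 2\bar A\le\|\Sigma_R\|\,\|S_R\|^2c_H^{-2}\sum_{j\le n_H}\vartheta_j+\sum_{j>n_H}\lambda_j\vartheta_j .
\]
The one fact you did not state is $\vartheta_j=\lambda_j\vartheta_j/\lambda_j\le(C_{\lambda\vartheta}/c_\lambda)j^{a-b}$. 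With it the right side is at most $CM^{-a}n_H^{a-b+1}+Cn_H^{1-b}\le CM^{1-b}$.

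These are the same two sums the paper obtains. The paper instead uses Woodbury to get the matrix bound $P_{HH}=\Sigma_H^{-1/2}D_H^{-1}\Sigma_H^{-1/2}\preceq CM^{-a}\Sigma_H^{-1}$ on the head block of the residual projector, together with $P_{jj}\le1$ on the tail. The paper's route reuses $D_H$ from the $L_S$ bound and yields a teacher-independent matrix inequality. Yours needs no inversion identity, only the variational characterization of $\Sigma_\perp$ together with $s_{\min}(S_H)$, $\|S_R\|$ and $\|\Sigma_R\|$.
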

\begin{proof}
The split satisfies $n_H\asymp M$, $1\le n_H\le M/4$ and
$d-n_H\ge(1+\delta_{\rm geo})M$.
Apply \citet[Corollary~5.35]{vershynin2011nonasymptotic} to
$\sqrt M S^\top$, $\sqrt M S_H$ and $\sqrt M S_R^\top$, with deviations
$\sqrt M$, $\sqrt M/4$ and $r_{\rm geo}\sqrt M$.
The union bound gives the stated failure probability.
On $\mathcal E_M$, the population envelopes and tail singular-value
bounds give $\Sigma_R\asymp M^{-a}I$, $\mathsf T\asymp M^{-a}I$
and $G\succeq\mathsf T\succ0$.

On the orthogonal complement of the first $j-1$ columns of $S$,
$u^\top Gu\le Cj^{-a}\|u\|^2$. For $j\le n_H$, the first $j$ columns
have smallest singular value at least $c_H$, so their range satisfies
$u^\top Gu\ge cj^{-a}\|u\|^2$.
For $j>n_H$, use $G\succeq\mathsf T$ and $j>\kappa_{\rm geo}M$.
Min--max therefore gives $g_j\asymp j^{-a}$, and summation gives
$\theta_G\asymp1$.
Put $D_H=\Sigma_H^{-1}+S_H^\top\mathsf T^{-1}S_H\succeq cM^aI$.
The identity
$G^{-1}S_H\Sigma_H=\mathsf T^{-1}S_HD_H^{-1}$ bounds the head block
of $L_S$, while
$\|G^{-1}S_R\Sigma_R\|\le\|G^{-1}\|\|S_R\|\|\Sigma_R\|\le C$
bounds its tail block. Thus $\|L_S\|\le C$.
The right-inverse and coisometry identities follow directly from
$G=S\Sigma S^\top$.

Put $P=I-\Sigma^{1/2}S^\top G^{-1}S\Sigma^{1/2}$ and
$D_\theta=\diag(\lambda_j\vartheta_j)$.
For every full-row-rank $S$, $P$ is a rank-$(d-M)$ orthogonal projector,
so
$2\bar A=\tr(PD_\theta)\ge c_{\lambda\vartheta}d^{-b}(d-M)
\ge2c_AM^{1-b}$.
The Woodbury identity gives
\[
 P_{HH}=\Sigma_H^{-1/2}D_H^{-1}\Sigma_H^{-1/2}
 \preceq CM^{-a}\Sigma_H^{-1},
\]
where $P_{HH}$ is the head block of $P$.
Using $P_{jj}\le1$ on the tail and $1<b<a$,
\[
 2\bar A\le CM^{-a}\sum_{j\le n_H}j^{a-b}
       +C\sum_{j>n_H}j^{-b}\le CM^{1-b}.
\]
Fix one such upper constant $C_A$ uniformly over widths and sketches
on the event.
\end{proof}

\begin{corollary}[Deterministic effective-noise scale]\label{risk:noise-comparison}
On the event of Lemma~\ref{gaussian:geo:approxthm},
\begin{equation}\label{eq:noise-scale}
 \sigma^2+2c_A M^{1-b}\le s(S)\le\sigma^2+2C_A M^{1-b}.
\end{equation}
Use $s_M$ from \eqref{eq:clean-law-avg} and put
$c_s=\min(1,2c_A)$, $C_s=\max(1,2C_A)$. Then
\begin{equation}\label{risk:noise-comparison-bounds}
 c_ss_M\le s(S)\le C_ss_M,\qquad
 0\le s(S)/\sigma^2-1\le2C_AM^{1-b}/\sigma^2.
\end{equation}
The same constants compare
$s(S)+\tau^{-(b-1)/a}$ and $s_M+\tau^{-(b-1)/a}$ for every $\tau>0$.
\end{corollary}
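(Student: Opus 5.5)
This corollary follows directly from Lemma~\ref{gaussian:geo:approxthm}. I will substitute the bounds on $\bar A$ into the definition of $s(S)$ and then compare with $s_M$ constant by constant.

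First, recall from \eqref{eq:vstar} that $s(S)=\sigma^2+2\bar A(S)$. On $\mathcal E_M$, Lemma~\ref{gaussian:geo:approxthm} gives $c_AM^{1-b}\le\bar A\le C_AM^{1-b}$. Adding $\sigma^2$ to twice these bounds yields \eqref{eq:noise-scale} immediately. The lower bound in fact holds for every full-row-rank $S$, but that is not needed here.

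Second, for \eqref{risk:noise-comparison-bounds} I compare each side of \eqref{eq:noise-scale} with $s_M=\sigma^2+M^{1-b}$. Since $c_s\le1$ and $c_s\le 2c_A$, we get $c_s s_M\le\sigma^2+2c_AM^{1-b}\le s(S)$. Since $C_s\ge1$ and $C_s\ge 2C_A$, we get $s(S)\le\sigma^2+2C_AM^{1-b}\le C_s s_M$. Dividing \eqref{eq:noise-scale} by $\sigma^2>0$ and subtracting $1$ gives $0\le 2c_AM^{1-b}/\sigma^2\le s(S)/\sigma^2-1\le 2C_AM^{1-b}/\sigma^2$. This uses $c_A>0$, which holds because $c_d>1$.

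Third, for the final claim, fix $\tau>0$ and add $\tau^{-(b-1)/a}\ge0$ to both sides. Because $c_s\le1\le C_s$, we have $c_s\tau^{-(b-1)/a}\le\tau^{-(b-1)/a}\le C_s\tau^{-(b-1)/a}$. Adding this to the previous step gives $c_s(s_M+\tau^{-(b-1)/a})\le s(S)+\tau^{-(b-1)/a}\le C_s(s_M+\tau^{-(b-1)/a})$.

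No step is a real obstacle. The only point needing care is that the constants are uniform: $c_A$ is explicit, and $C_A$ is fixed uniformly in $M$ and $S$ on the event in the proof of Lemma~\ref{gaussian:geo:approxthm}. Hence $c_s$ and $C_s$ depend only on the population constants.
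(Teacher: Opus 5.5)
Your proof is correct and is the paper's argument. You add $\sigma^2$ to twice the approximation bounds of Lemma~\ref{gaussian:geo:approxthm}, then use $c_s\le\min(1,2c_A)$ and $C_s\ge\max(1,2C_A)$ to compare with $s_M$. Dividing by $\sigma^2$ gives the relative bound, and adding $\tau^{-(b-1)/a}$ with $c_s\le1\le C_s$ gives the final comparison.
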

\begin{proof}
Add $\sigma^2$ to twice the approximation bounds and use $c_s\le1\le C_s$.
Subtracting $\sigma^2$ and dividing by it proves the second inequality;
adding $\tau^{-(b-1)/a}$ preserves the first comparison.
\end{proof}

\begin{lemma}[Full-matrix teacher source]\label{source:thm:main}
Suppose $S$ has full row rank, $\|S\|\le K_{\rm sk}$, $\|L_S\|\le K_{\rm proj}$,
and $0<\omega<1$. If positive constants $\chi_-,\chi_+$ satisfy
$\chi_-\Sigma^{-\omega}\preceq C_\theta\preceq\chi_+\Sigma^{-\omega}$,
then
\begin{equation}\label{source:eq:main}
 \chi_-K_{\rm sk}^{-2(1-\omega)}G^{-\omega}\preceq V
 \preceq\chi_+K_{\rm proj}^{2(1-\omega)}G^{-\omega}.
\end{equation}
For \eqref{eq:population}, $\omega=\frac{a-b}{a}$ and
$\chi_-=c_{\lambda\vartheta}C_\lambda^{-b/a}$,
$\chi_+=C_{\lambda\vartheta}c_\lambda^{-b/a}$.
Thus $cG^{-(a-b)/a}\preceq V\preceq CG^{-(a-b)/a}$ on $\mathcal E_M$.
\end{lemma}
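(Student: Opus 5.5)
We prove the two-sided bound by writing $V$ as a congruence of $C_\theta$ and transporting the Löwner comparison through it. The operator-monotonicity facts needed are for the exponent $\omega$ and $1-\omega$, both in $(0,1)$.

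\textbf{Step 1: reduction to the whitened map.} Since $V=L_SC_\theta L_S^\top$ and congruence preserves Löwner order, the hypothesis gives
\[
\chi_-L_S\Sigma^{-\omega}L_S^\top\preceq V\preceq\chi_+L_S\Sigma^{-\omega}L_S^\top.
\]
So it suffices to compare $N:=L_S\Sigma^{-\omega}L_S^\top=G^{-1}S\Sigma^{1-\omega}S^\top G^{-1}$ with $G^{-\omega}$. Introduce the coisometry $K=G^{-1/2}S\Sigma^{1/2}$ from Lemma~\ref{gaussian:geo:approxthm}, which satisfies $KK^\top=I_M$. Let $\Pi=K^\top K$ be the associated orthogonal projector. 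Then
\[
G^{1/2}NG^{1/2}=K\Sigma^{-\omega}K^\top,
\qquad
G=S\Sigma S^\top=G^{1/2}K\Sigma K^\top G^{1/2},
\]
and the second identity gives $K\Sigma K^\top=I$. Conjugating the target inequalities by $G^{1/2}$, they become
\[
c\,G^{1-\omega}\preceq K\Sigma^{-\omega}K^\top\preceq C\,G^{1-\omega}.
\]

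\textbf{Step 2: lower bound.} The map $A\mapsto KAK^\top$ is unital and positive, since $KK^\top=I$. The function $x\mapsto x^{-\omega}$ is operator convex on $(0,\infty)$, so the Choi--Davis--Jensen inequality gives
\[
K\Sigma^{-\omega}K^\top\succeq (K\Sigma K^\top)^{-\omega}=I.
\]
Hence $N\succeq G^{-1}$. This is weaker than what we need, so we instead apply Jensen to a different pair. Write
\[
N=G^{-1}S\Sigma^{1-\omega}S^\top G^{-1}.
\]
Use the unital map $\Phi(A)=K'AK'^\top$ with $K'=(S\Sigma^{1-\omega}S^\top)^{-1/2}S\Sigma^{(1-\omega)/2}$, and apply it to the operator $\Sigma^{\omega}$. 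This gives
\[
(S\Sigma^{1-\omega}S^\top)^{-1/2}G(S\Sigma^{1-\omega}S^\top)^{-1/2}=\Phi(\Sigma^\omega).
\]
Since $x^{1/\omega}$ with $1/\omega>1$ is not operator monotone, this is delicate, and we use Löwner--Heinz instead. Set $G_\omega:=S\Sigma^{1-\omega}S^\top$. Jensen for the operator concave $x^{1-\omega}$ under the unital map $A\mapsto K_1AK_1^\top$ with $K_1=S/\|S\|$ fails to be unital. So we instead take the standard route: by Hansen's inequality,
\[
S\Sigma^{1-\omega}S^\top\preceq\|S\|^{2\omega}(S\Sigma S^\top)^{1-\omega},
\]
valid for contractions after scaling by $\|S\|$. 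This is exactly where $K_{\rm sk}^{2(1-\omega)}$-type constants enter, and it gives an upper bound on $G_\omega$. For the lower bound on $N$ we then use Cauchy--Schwarz in the form
\[
G^{-1}G_\omega G^{-1}\succeq (S\Sigma S^\top)^{-1}\bigl(G_\omega\bigr)\ldots,
\]
which is unclear. The cleanest route is to obtain the lower bound by duality from Step 3's technique applied to $C_\theta^{-1}$. Concretely, we want
\[
\bigl(\Sigma^{1/2}S^\top V^{-1}S\Sigma^{1/2}\bigr)\text{ restricted to range }\preceq\ldots
\]
so we use $V^{-1}$-based arguments with $S^\top$ and $\|S\|\le K_{\rm sk}$.

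\textbf{Step 3: upper bound.} Note that $L_SS^\top=I$, so $L_S$ is a left inverse of $S^\top$, and write
\[
L_S\Sigma^{-\omega}L_S^\top=(L_S\Sigma L_S^\top)^{?}.
\]
Apply Hansen's inequality to the contraction $T=L_S^\top/K_{\rm proj}$:
\[
T^\top\Sigma^{-\omega}T\;\text{vs}\;(T^\top\Sigma^{-1}T)^{\omega}.
\]
The function $x^\omega$ is operator monotone, which gives $T^\top A^\omega T\preceq(T^\top AT)^\omega$ for contractions $T$. Take $A=\Sigma^{-1}$. Using $L_S\Sigma^{-1}L_S^\top$, and observing that $L_S\Sigma=G^{-1}S\Sigma^2$ is unhelpful, we instead compute $L_S\Sigma L_S^\top=G^{-1}$, which is the key identity. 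So we write
\[
\Sigma^{-\omega}=\Sigma^{1-\omega}\Sigma^{-1}.
\]
By Hansen,
\[
T^\top\Sigma^{1-\omega}T\preceq (T^\top\Sigma T)^{1-\omega}=K_{\rm proj}^{-2(1-\omega)}G^{-(1-\omega)},
\]
and this is conjugated appropriately to produce the upper bound $\chi_+K_{\rm proj}^{2(1-\omega)}G^{-\omega}$.

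\textbf{Main obstacle.} The hardest part is arranging the exponent bookkeeping so that a single Hansen/Jensen step, using $L_S\Sigma L_S^\top=G^{-1}$ on one side and $S\Sigma S^\top=G$ on the other, lands exactly on $G^{-\omega}$ with the stated constants. I have not yet settled the lower-bound route in Step 2. The final specialization, by contrast, is routine: $\lambda_j\vartheta_j\asymp j^{-b}$ and $\lambda_j\asymp j^{-a}$ give $\vartheta_j\asymp\lambda_j^{-(a-b)/a}$ with the stated $\chi_\pm$.
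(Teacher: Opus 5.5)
There is a genuine gap. You do not prove the lower bound, as you acknowledge. The upper-bound chain in Step~3 also rests on incorrect identities. Since $L_S=G^{-1}S\Sigma$:
\begin{itemize}
\item $N=L_S\Sigma^{-\omega}L_S^\top=G^{-1}S\Sigma^{2-\omega}S^\top G^{-1}$, not $G^{-1}S\Sigma^{1-\omega}S^\top G^{-1}$.
\item For $K=G^{-1/2}S\Sigma^{1/2}$ the identity is $KK^\top=I$, whereas $K\Sigma K^\top=G^{-1/2}S\Sigma^2S^\top G^{-1/2}\neq I$.
\item The identity you need is $L_S\Sigma^{-1}L_S^\top=G^{-1}S\Sigma S^\top G^{-1}=G^{-1}$, whereas $L_S\Sigma L_S^\top=G^{-1}S\Sigma^3S^\top G^{-1}$.
\end{itemize}
Consequently the Choi--Davis--Jensen step is applied to the wrong matrix, and the equality $(T^\top\Sigma T)^{1-\omega}=K_{\rm proj}^{-2(1-\omega)}G^{-(1-\omega)}$ is false. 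The closing ``conjugated appropriately'' is empty. No congruence turns a bound on $T^\top\Sigma^{1-\omega}T$ into one on $T^\top\Sigma^{-\omega}T$, because the factorization $\Sigma^{-\omega}=\Sigma^{1-\omega}\Sigma^{-1}$ does not pass through $T^\top(\cdot)T$. Your Hansen bound $S\Sigma^{1-\omega}S^\top\preceq\|S\|^{2\omega}G^{1-\omega}$ in Step~2 is true but uses the wrong exponent. That it yields the constant $\|S\|^{2\omega}$ rather than $K_{\rm sk}^{2(1-\omega)}$ is a symptom of this.

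The repair uses only the exponent $\omega$ and the two identities $L_S\Sigma^{-1}L_S^\top=G^{-1}$ and $L_SS^\top=I$.
\begin{itemize}
\item \textbf{Upper bound.} Your first instinct in Step~3 was right. Apply Hansen's inequality $C^\top f(A)C\preceq f(C^\top AC)$ with $f(x)=x^\omega$, $A=\Sigma^{-1}$ and the contraction $C=L_S^\top/K_{\rm proj}$. A rectangular $C$ is fine: pad with zeros, using $f(0)=0$. This gives $K_{\rm proj}^{-2}N\preceq(K_{\rm proj}^{-2}G^{-1})^\omega$, i.e.\ $N\preceq K_{\rm proj}^{2(1-\omega)}G^{-\omega}$.
\item \textbf{Lower bound.} The missing idea is duality through the left inverse. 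Write $u^\top w=\langle\Sigma^{-\omega/2}L_S^\top u,\Sigma^{\omega/2}S^\top w\rangle$ and apply Cauchy--Schwarz with $P=S\Sigma^\omega S^\top\succ0$ and $w=P^{-1}u$; this gives $N\succeq P^{-1}$. Hansen with $f(x)=x^\omega$, $A=\Sigma$ and $C=S^\top/K_{\rm sk}$ gives $P\preceq K_{\rm sk}^{2(1-\omega)}G^\omega$. Inverting yields $N\succeq K_{\rm sk}^{-2(1-\omega)}G^{-\omega}$.
\end{itemize}

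The paper obtains the same two norm bounds, $\|G^{\omega/2}L_S\Sigma^{-\omega/2}\|\le K_{\rm proj}^{1-\omega}$ and $\|\Sigma^{\omega/2}S^\top G^{-\omega/2}\|\le K_{\rm sk}^{1-\omega}$, differently. It applies the three-lines theorem to analytic families interpolating between $z=0$ and the coisometry at $z=1$. It then uses the same ``product equals identity plus Cauchy--Schwarz'' step. The corrected Hansen argument is an equivalent real-variable route. Your specialization to the stated $\chi_\pm$ is fine.
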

\begin{proof}
Consider the analytic matrix families
$F(z)=G^{z/2}L_S\Sigma^{-z/2}$ and
$R(z)=\Sigma^{z/2}S^\top G^{-z/2}$ on $0\le\Re z\le1$.
Both are bounded on this strip.
The boundary bounds for $F$ are $K_{\rm proj}$ at $\Re z=0$ and one
at $\Re z=1$; those for $R$ are $K_{\rm sk}$ and one, respectively.
Indeed, imaginary powers are unitary, $F(1)=G^{-1/2}S\Sigma^{1/2}$
is the coisometry from Lemma~\ref{gaussian:geo:headtail}, and
$R(1)=F(1)^*$.
Apply the scalar three-lines theorem to matrix coefficients
\citep[Theorem~7.13]{tropp2022matrix}: it gives
$\|F(\omega)\|\le K_{\rm proj}^{1-\omega}$ and
$\|R(\omega)\|\le K_{\rm sk}^{1-\omega}$.
Since $F(\omega)R(\omega)=I$,
$\|u\|^2\le\|F(\omega)^*u\|\|R(\omega)u\|$ implies
$K_{\rm sk}^{-2(1-\omega)}I\preceq F(\omega)F(\omega)^*
\preceq K_{\rm proj}^{2(1-\omega)}I$.
Since
$F(\omega)F(\omega)^*=G^{\omega/2}L_S\Sigma^{-\omega}L_S^\top G^{\omega/2}$,
congruence by $G^{-\omega/2}$ and the assumed bounds on $C_\theta$
prove \eqref{source:eq:main}.
Finally $\vartheta_j\lambda_j^\omega
=(\lambda_j\vartheta_j)\lambda_j^{-b/a}\in[\chi_-,\chi_+]$
proves the model specialization without commuting $V$ and $G$.
\end{proof}

\begin{lemma}[Simultaneous rounding-coordinate geometry]
\label{haar:lem:measurable}\label{haar:thm:haar}\label{haar:thm:J}
Almost surely $G$ has simple positive spectrum. Its ordered orthonormal
frame $F(G)=(\phi_1,\ldots,\phi_M)$, with each column's first nonzero
coordinate positive and extended by $I$ off this set, is Borel.
For independent Rademacher signs $\epsilon_i$, also independent of $S$,
$O=F(G)\diag(\epsilon)$ is Haar,
independent of the spectrum and every rotation-invariant Borel sketch statistic.
For $O\in O(M)$ define the exact tests
\begin{equation}\label{haar:eq:Haarevent}
 \mathcal H_M=\bigcap_{i,j}
 \left\{\frac1{2M}\le\sum_kO_{ki}^2O_{kj}^2\le\frac{16}{M},\quad
                 \sum_kO_{ki}^2O_{kj}^4\le\frac{16}{M^2}\right\}.
\end{equation}
The event $\Omega_M^H=\{G\text{ simple positive},F(G)\in\mathcal H_M\}$
is Borel. For each $D>0$ it fails with probability at most $K_DM^{-D}$.
On it, for every positive spectrum,
\begin{equation}\label{haar:eq:Jbounds}
 1/2\le\phi_j^\top J\phi_j\le16,\qquad
 \phi_j^\top J^2\phi_j\le16\quad(1\le j\le M).
\end{equation}
\end{lemma}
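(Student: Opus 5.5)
The plan is to separate the statement into a measure-theoretic part (simple spectrum, Borel frame, Haar law and independence), a probabilistic part (the tail bound for $\mathcal H_M$), and a deterministic part (\eqref{haar:eq:Jbounds} on $\mathcal H_M$). The last part is pure algebra, so I do it first.

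\textbf{Deterministic part.} Write $\phi_j=Oe_j$ up to sign. Then $G_{kk}=\sum_l g_lO_{kl}^2$ and $\theta_G=\sum_lg_l$. Sign flips do not affect squares, so $F(G)$ and $O$ give the same quantities. Hence
\[
 \phi_j^\top J\phi_j=\frac{M}{\theta_G}\sum_kO_{kj}^2G_{kk}
 =M\sum_l\frac{g_l}{\theta_G}\sum_kO_{kj}^2O_{kl}^2 .
\]
This is a convex combination, with weights $g_l/\theta_G$, of $M\sum_kO_{kj}^2O_{kl}^2\in[1/2,16]$, which proves the first bound for every positive spectrum. For the second bound,
\[
 \phi_j^\top J^2\phi_j=M^2\sum_kO_{kj}^2\Bigl(\sum_l\tfrac{g_l}{\theta_G}O_{kl}^2\Bigr)^2 .
\]
Jensen's inequality with the same weights bounds the squared average by $\sum_l\frac{g_l}{\theta_G}O_{kl}^4$. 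Exchanging sums gives at most $\sum_l\frac{g_l}{\theta_G}M^2\sum_kO_{kj}^2O_{kl}^4\le16$.

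\textbf{Measure-theoretic part.}
\begin{itemize}
\item \emph{Simple spectrum.} The discriminant of the characteristic polynomial of $S\Sigma S^\top$ is a polynomial in the entries of $S$. It is not identically zero, as one checks at a suitable explicit $S$. Positivity follows from full row rank. Hence the bad set is a proper algebraic set and has Lebesgue measure zero.
\item \emph{Borel frame.} On the open set of matrices with simple spectrum, the ordered eigenprojections are continuous. Normalizing each column's first nonzero coordinate to be positive is a Borel selection.
\item \emph{Invariance.} For fixed $R\in O(M)$, $RS$ has the same law as $S$, and $G\mapsto RGR^\top$. Also $F(RGR^\top)=RF(G)\diag(\epsilon')$ for Borel signs $\epsilon'$. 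The independent Rademacher vector absorbs these signs, so $(O,\mathrm{spec}(G),\Phi(G))$ has the same law as $(RO,\mathrm{spec}(G),\Phi(G))$ for any rotation-invariant Borel statistic $\Phi$.
\item \emph{Haar law and independence.} The conditional law of $O$ given $(\mathrm{spec},\Phi)$ is therefore left-invariant. Uniqueness of Haar measure then identifies it as Haar, which gives both the Haar law and the independence.
\end{itemize}

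\textbf{Probabilistic part (main obstacle).} Since $O$ is Haar, it suffices to bound $\Pr(O\notin\mathcal H_M)$ for Haar $O$. There are $M^2$ index pairs, so a union bound requires each test to fail with probability $O(M^{-D-2})$.

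I would first use delocalization: with probability $1-O(M^{-D-3})$, $\max_{k,i}O_{ki}^2\le C\log M/M$. This follows from writing each column as $g/\|g\|$ with $g$ Gaussian.

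On this event, the functions $f_{ij}(O)=\sum_kO_{ki}^2O_{kj}^2$ and $\sum_kO_{ki}^2O_{kj}^4$ have small Lipschitz constants on $O(M)$ in Hilbert--Schmidt distance, of order $\sqrt{\log M}/M^{3/2}$ and $(\log M)^{3/2}/M^{5/2}$ respectively. Apply concentration of measure on $SO(M)$ (log-Sobolev constant of order $1/M$) to a Lipschitz extension, combined with the exact means: for $i\ne j$, $\E f_{ij}=M/(M(M+2))=1/(M+2)$, and for $i=j$, $\E f_{ii}=3/(M+2)$. The deviations are then $o(1/M)$ with the required polynomial probability, so the generous constants $1/(2M)$, $16/M$ and $16/M^2$ hold.

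If the Lipschitz-extension step is delicate, the fallback is high moments. Write the two columns as Gram--Schmidt outputs of Gaussian vectors, reduce to sums of independent terms $g_k^2h_k^2$ and $g_k^2h_k^4$ up to normalization errors, and apply Bernstein or Hanson--Wright bounds.

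Finally, intersect with the almost-sure simple-spectrum set to obtain $\Pr((\Omega_M^H)^c)\le K_DM^{-D}$.
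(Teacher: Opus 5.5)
Your deterministic part is the paper's argument: the convex combination $\phi_j^\top J\phi_j=M\sum_l(g_l/\theta_G)\sum_kO_{kj}^2O_{kl}^2$, then Jensen for $J^2$. Your measure-theoretic part also matches it: the discriminant checked at an explicit sketch, continuous spectral projectors, and $F(RGR^\top)=RF(G)\operatorname{diag}(\epsilon')$ with the Rademacher signs absorbing $\epsilon'$.

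One small fix there. State the invariance for statistics $\Phi(S)$ with $\Phi(RS)=\Phi(S)$, not $\Phi(G)$. A conjugation-invariant function of $G$ is only a function of the spectrum. The lemma is later used conditionally on $\mathcal E_M$, which involves $s_{\min}(S_H)$ and is not a function of $G$. Your argument covers this without change, because $(RS,\epsilon)$ has the law of $(S,\epsilon)$.

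The real difference is the tail bound for $\mathcal H_M$. The paper never uses concentration on the orthogonal group. It realizes any pair of columns by Gaussian Gram--Schmidt, $u=x/\|x\|$ and $v=(y-\gamma x)/\|y-\gamma x\|$. It then writes $M\sum_ku_k^2v_k^2$ and $M^2\sum_ku_k^2v_k^4$ as exact rational functions of the empirical moments $A_{rs}=M^{-1}\sum_kx_k^ry_k^s$ with $r+s\le6$; for a repeated column it uses $A_{40}/A_{20}^2$ and $A_{60}/A_{20}^3$. All 28 moments are put within $1/100$ of their Gaussian values by a $2r$-th moment bound and Markov. This is elementary and gives an explicit $K_D$.

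The paper's route is essentially your fallback. In that fallback, plain Bernstein does not apply to $g_k^2h_k^2$, which is not sub-exponential. Hanson--Wright conditionally on $h$ works, as do high moments as in the paper. Your primary route is a single global delocalization event, then log-Sobolev concentration on $SO(M)$ for a Lipschitz extension, compared with the exact Haar means $1/(M+2)$, $3/(M+2)$, $3/((M+2)(M+4))$ and $15/((M+2)(M+4))$. It needs heavier machinery but gives much stronger tails for the test functionals.

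There is one error, and it does not break the argument: your Lipschitz orders are too small by a factor of about $\sqrt M$. At a typical Haar point, the Riemannian gradient of $f_{ij}$ ($i\ne j$) already has norm about $2\sqrt2/M$. On the delocalized set, the direct estimate $|\sum_k(a_k^2-a_k'^2)b_k^2|\le2\|a-a'\|_2\max_kb_k^2$ gives $L\lesssim\log M/M$ for the quartic tests. The same reasoning gives $L\lesssim(\log M/M)^2$ for the sextic ones.

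With these constants, concentration gives deviations of order $L\sqrt{D\log M/M}$ at probability level $M^{-D-3}$. That is $(\log M)^{3/2}M^{-3/2}=o(M^{-1})$ for the quartic tests and $(\log M)^{5/2}M^{-5/2}=o(M^{-2})$ for the sextic ones. These still fit inside the gaps between the means and the thresholds $1/(2M)$, $16/M$ and $16/M^2$, with small $M$ absorbed into $K_D$. Finally, pass from Haar on $O(M)$ to $SO(M)$ using the invariance of the tests under column sign flips.
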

\begin{proof}
A leading $M$-column minor of $S$ and
the spectral discriminant of $G$ are nonzero polynomials, as shown by
$S=[\diag(\sqrt{j/\lambda_j})_{j\le M}\ 0]$.
Hence $G$ has simple positive spectrum almost surely.
On the simple positive set, the continuous spectral projector
$P_j(A)=\prod_{i\ne j}(A-g_i(A)I)/(g_j(A)-g_i(A))$
gives the Borel column $P_j(A)e_k/\sqrt{P_j(A)_{kk}}$, choosing its first
positive diagonal entry $k$. For each orthogonal $U$,
$F(UGU^\top)=UF(G)D_U(G)$ for diagonal signs $D_U$.
The Gaussian density is left-rotation invariant. Averaging the column
signs gives $(Z(S),UO)\overset d=(Z(S),O)$ for every invariant $Z$.
Integrating $U$ against Haar probability proves the Haar law and independence
\citep[Theorem~1.4]{meckes2019classical}.

In the Gaussian Gram--Schmidt construction
\citep[Section~1.2]{meckes2019classical}, distinct Haar columns are
$u=x/\|x\|$, $v=(y-\gamma x)/\|y-\gamma x\|$, for independent
standard Gaussian vectors $x,y$. Set
$A_{rs}=M^{-1}\sum_kx_k^ry_k^s$, $\gamma=A_{11}/A_{20}$ and
$b_\perp=A_{02}-A_{11}^2/A_{20}$.
Let $\mathcal P_M$ be the event that
$|A_{rs}-(\E Z^r)(\E Z^s)|\le1/100$ for every $r+s\le6$, $Z\sim N(0,1)$.
For any integer $r\ge1$ and independent copies of a centered $Y_1$
with $\E|Y_1|^{2r}<\infty$, expansion and H\"older give
\begin{equation}\label{haar:lem:moment}
 \E\left|M^{-1}\sum_kY_k\right|^{2r}
 \le(2r)^{2r}\E|Y_1|^{2r}M^{-r}.
\end{equation}
In the expansion, singleton indices vanish; at most $(2r)^{2r}M^r$
terms remain, each bounded by $\E|Y_1|^{2r}$ by H\"older.
Gaussian monomials have every finite moment. Markov and a union bound over
28 such monomials therefore give arbitrarily large polynomial decay and
\begin{equation}\label{haar:eq:KD}
 K_D=1+\sup_{n\ge2}n^{D+2}\Pr(\mathcal P_n^c)<\infty.
\end{equation}
On $\mathcal P_M$, $A_{20},A_{02}\in[.99,1.01]$,
$|\gamma|\le1/99$ and $b_\perp\in[98/99,1.01]$. The exact ratios are
\begin{align*}
 M\sum_ku_k^2v_k^2&=
 \frac{A_{22}-2\gamma A_{31}+\gamma^2A_{40}}{A_{20}b_\perp},\\
 M^2\sum_ku_k^2v_k^4&=
 \frac{A_{24}-4\gamma A_{33}+6\gamma^2A_{42}
                    -4\gamma^3A_{51}+\gamma^4A_{60}}{A_{20}b_\perp^2}.
\end{align*}
Substituting the bounds defining $\mathcal P_M$ shows that the first
ratio lies in $(1/2,2)$ and the second is below $4$.
For a repeated column, the ratios are
$A_{40}/A_{20}^2\in(1/2,4)$ and $A_{60}/A_{20}^3<16$.
Thus the corresponding pair tests hold. A union bound over $M^2$ pairs
with \eqref{haar:eq:KD} proves the failure bound; for $M=1$ the tests
hold identically. They are invariant under column signs, so the event is
Borel in the sketch alone.

With $\pi_i=g_i/\theta_G$,
$\phi_j^\top J\phi_j=M\sum_i\pi_i\sum_kO_{ki}^2O_{kj}^2$.
Jensen also gives
$\phi_j^\top J^2\phi_j\le M^2\sum_i\pi_i\sum_kO_{kj}^2O_{ki}^4$.
The pair tests prove \eqref{haar:eq:Jbounds}.
Because the eigenframe is independent of rotation-invariant sketch
statistics, the same probability bound holds conditional on any such
statistic, including the event $\mathcal E_M$.
\end{proof}

Define  
\begin{equation}\label{as:events}
 \begin{gathered}
 \Omega_M=\mathcal E_M\cap\Omega_M^H, \quad
 \mathcal J_M^\pm=\left\{(S,\epsilon):
 \begin{gathered}
 S\in\mathcal E_M,\quad G\text{ simple positive},\\
 F(G)\diag(\epsilon)\in\mathcal H_M,\quad\epsilon\in\{-1,1\}^M
 \end{gathered}\right\}.
 \end{gathered}
\end{equation}
All tests are invariant under column signs. Thus the consequences of
the Haar calculation hold for the actual eigenvectors of every fixed
$S\in\Omega_M$. The training proof introduces no additional random frame.

\begin{definition}[Fixed constants]
\label{as:def-witness}\label{wp:def:witnessclass}
For the fixed population family and $M_{\rm w}$, let $\mathcal W$ be the class of
positive tuples
$W=(c_g,C_g,C_\theta^{\rm tr},c_V,C_V,C_A^{\rm sch},c_J,C_J)$,
with $C_\theta^{\rm tr},C_J\ge1$, depending only on the fixed population
family. A tuple is valid when, for every $M\ge M_{\rm w}$
and every $(S,\epsilon)\in\mathcal J_M^\pm$,
\begin{equation}\label{as:U1}
 \begin{gathered}
 c_gj^{-a}\le g_j\le C_gj^{-a},\quad\theta_G\le C_\theta^{\rm tr},\\
 c_VG^{-(a-b)/a}\preceq V\preceq C_VG^{-(a-b)/a},\\
 0\le\bar A\le C_A^{\rm sch}M^{1-b},\quad
 c_J\le\phi_j^\top J\phi_j\le C_J,\quad
 \phi_j^\top J^2\phi_j\le C_J,\\
 \phi_j=F(G)\diag(\epsilon)e_j.
 \end{gathered}
\end{equation}
Fix one valid tuple $W\in\mathcal W$ before choosing the learning-rate
and precision schedules, and write
$\Theta=C_\theta^{\rm tr}$. The constants below refer to this chosen tuple.
\end{definition}
The preceding lemmas show that $\mathcal W$ is nonempty: combine their
spectral and teacher bounds with $c_J=1/2$, $C_J=16$, increasing the
upper constants to at least one where required. The geometric constant
$C_A$ and the selected tuple component $C_A^{\rm sch}$ need not be equal.

\begin{proposition}[Common sketch event]
\label{as:prop-event}\label{wp:prop:transport}
For every $M\ge M_{\rm w}$, $\Omega_M$ in \eqref{as:events} is independent of the
learning-rate and precision schedules. Its definition does not depend on
$D$, and every valid tuple satisfies \eqref{as:U1} on it.
For each fixed $D>0$,
\begin{equation}\label{as:event-probability}
 \Pr(\Omega_M^c)\le6e^{-p_{\rm geo}M}+K_DM^{-D}
 \le[K_D+6(D/(ep_{\rm geo}))^D]M^{-D}.
\end{equation}
\end{proposition}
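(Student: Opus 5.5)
The plan is to assemble the statement from the three preceding results: Lemma~\ref{gaussian:geo:headtail} for the Gaussian spectral event $\mathcal E_M$, Lemma~\ref{source:thm:main} for the teacher source, and Lemma~\ref{haar:thm:haar} for the rounding-coordinate event $\Omega_M^H$. No new probabilistic estimate should be needed.

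\textbf{Step 1: schedule independence.} $\Omega_M=\mathcal E_M\cap\Omega_M^H$ is defined only through $S$. It uses the singular-value tests in \eqref{as:gaussian-event}, the simplicity of the spectrum of $G$, and the membership $F(G)\in\mathcal H_M$. Every constant entering these tests ($\kappa_{\rm geo}$, $c_H$, $r_{\rm geo}$, $K_{0,\rm geo}$, $K_{R,\rm geo}$, and the numerical thresholds in \eqref{haar:eq:Haarevent}) depends only on the population family. Hence the event involves neither the learning-rate or precision schedules nor $D$; the parameter $D$ enters only the probability bound through $K_D$.

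\textbf{Step 2: validity of every tuple on $\Omega_M$.} Fix $S\in\Omega_M$ and choose any sign vector $\epsilon$. The tests in $\mathcal H_M$ are invariant under column signs, so $(S,\epsilon)\in\mathcal J_M^\pm$ for all $\epsilon$. Definition~\ref{as:def-witness} requires a valid tuple to satisfy \eqref{as:U1} on every point of $\mathcal J_M^\pm$ with $M\ge M_{\rm w}$. So the inequalities hold at $S$ for the actual eigenframe $\phi_j=F(G)\diag(\epsilon)e_j$. The quantities $g_j$, $\theta_G$, $V$ and $\bar A$ do not depend on $\epsilon$. The quantities $\phi_j^\top J\phi_j$ and $\phi_j^\top J^2\phi_j$ are invariant under the sign flip $\phi_j\mapsto-\phi_j$. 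Therefore the statement "every valid tuple satisfies \eqref{as:U1} on $\Omega_M$" is essentially definitional once this sign invariance is noted.

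\textbf{Step 3: probability bound.} By a union bound,
\[
\Pr(\Omega_M^c)\le \Pr(\mathcal E_M^c)+\Pr\big((\Omega_M^H)^c\big)\le 6e^{-p_{\rm geo}M}+K_DM^{-D}.
\]
The first term comes from Lemma~\ref{gaussian:geo:headtail} and the second from Lemma~\ref{haar:thm:haar}, the latter also covering the null set where $G$ fails to have simple positive spectrum. For the second inequality in \eqref{as:event-probability}, maximize $x\mapsto x^De^{-px}$ over $x>0$. The maximum is attained at $x=D/p$ with value $(D/(ep))^D$, which gives
\[
e^{-p_{\rm geo}M}\le\big(D/(ep_{\rm geo})\big)^DM^{-D}.
\]

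\textbf{Main obstacle.} The only delicate point is measurability and frame bookkeeping in Step~2. One must make sure that the Haar calculation, which was carried out for the randomized frame $F(G)\diag(\epsilon)$, transfers to the deterministic eigenvectors of a fixed $S$. This is handled by the sign invariance of the tests and by the Borel property established in Lemma~\ref{haar:thm:haar}.

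Nonemptiness of $\mathcal W$ was already noted after Definition~\ref{as:def-witness}, so it need not be reproved here.
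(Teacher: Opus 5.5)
Your proposal is correct and follows essentially the same route as the paper. Schedule- and $D$-independence come straight from the definition. Validity of every tuple on $\Omega_M$ follows from column-sign invariance placing $(S,\epsilon)$ in $\mathcal J_M^\pm$, which the paper records just before Definition~\ref{as:def-witness}. The probability bound is a union bound over Lemmas~\ref{gaussian:geo:headtail} and~\ref{haar:thm:haar} (no independence of the two events needed), followed by $\sup_{x>0}x^De^{-p_{\rm geo}x}=(D/(ep_{\rm geo}))^D$.
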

\begin{proof}
The Borel property follows from the preceding constructions. A union
bound combines their failure probabilities; independence of the two
events is unnecessary. The final inequality follows from
$\sup_{x>0}x^De^{-p_{\rm geo}x}=(D/(ep_{\rm geo}))^D$.
\end{proof}
Finally, the unconditional lower approximation estimate and the chosen
upper comparison give
\begin{equation}\label{as:approx-lower}
 c_AM^{1-b}\le\bar A\le C_A^{\rm sch}M^{1-b}.
\end{equation}
\endgroup

\section{General learning-rate and precision schedules}
\label{app:actual-schedules}\label{sec:rev3_schedules}
\label{sec:rev3_schedule_interface}\label{sec:rev3_schedule_closure}
\label{sec:rev3_native_bridge}
This appendix bounds the second moments that determine prediction risk
for general learning-rate and precision schedules.
Theorem~\ref{as:theorem-clean} gives risk identities and exact-arithmetic
moment bounds, and Lemma~\ref{sc:th:primary} controls the additional state
moments caused by rounding. Theorem~\ref{sc:th:output} and
Proposition~\ref{as:prop-native-lower} give matching bounds for the
SR-induced output risk, which Appendix~\ref{cl:finite} converts into
Theorem~\ref{thm:scheduled-main}. Appendix~\ref{app:forest} constructs the
admissibility thresholds used throughout.
In this appendix, condition on a sketch $S$ in the event of
Proposition~\ref{as:prop-event}.
Write $\Theta=C_\theta^{\rm tr}$ and $\tau_*=\tau_*^{\rm sch}$.
The symbols $c,C$ denote positive constants depending only on the fixed
model, the comparison tuple of Definition~\ref{as:def-witness}, 
$c_0$, the minimum cumulative learning rate $\tau_*$ and the positive noise variance.
They do not depend on width,
horizon, batch size or the learning-rate and precision schedules. 
Their construction is given in Appendix~\ref{app:forest}.

\begingroup
\emergencystretch=3em
\let\E\relax\newcommand{\E}{\mathbb E}
\let\tr\relax\newcommand{\tr}{\operatorname{tr}}
\let\Conv\relax\newcommand{\Conv}{\operatorname{Conv}}
\let\rem\relax\newcommand{\rem}{\operatorname{Rem}}
\let\Abar\relax\newcommand{\Abar}{\bar A}
\let\T\relax\newcommand{\T}{\mathsf T}
\let\N\relax\newcommand{\N}{\mathcal N}
\let\Worig\relax\newcommand{\Worig}{W}
\let\Lop\relax\newcommand{\Lop}{\mathsf L}
\theoremstyle{plain}
\subsection{Schedules, output responses and admissibility}\label{app:model}
We define the output responses and feedback bounds used in
Assumption~\ref{as:assumption-schedules}.
Use the model and independence assumptions of Section~\ref{sec:setup}
and the auxiliary matrices and moments in \eqref{app:eq:oracle-geometry},
\eqref{eq:rounding-geometry}, \eqref{eq:moment-sources} and
\eqref{eq:moment-gap}.
Define the exponents and propagation products along the eigenvectors of $G$ by
\begin{gather}
 \omega_V=(a-b)/a,\qquad \alpha=1/a,\qquad \nu=(a-b+1)/a,
 \label{eq:source-exponents}\\
 \pi_j(t,u)=\prod_{v=u}^{t-1}(1-\eta_vg_j),\qquad \pi_j(t,t)=1.
 \label{as:products}
\end{gather}
The parameter range $1<b<a$ gives $0<\omega_V<1$ and
$0<\alpha<\nu<1$.

\paragraph{Mean propagation and output responses.}
Use $H_u,P_{u\to t},P_t$ from \eqref{arithmetic:eq:products}.
Using the eigenpairs of \eqref{app:eq:oracle-geometry} and the output weights of
\eqref{app:eq:output-weights}, define
\begin{equation}\label{eq:schedule-filters}
\begin{aligned}
 d_{\iout,j}=\sum_{t=1}^Tw_t^{\iout}\pi_j(t,0),\qquad
 \ell_{\iout,j,u}=\sum_{t=u+1}^Tw_t^{\iout}\pi_j(t,u+1).
\end{aligned}
\end{equation}
Empty products equal one. An error introduced at update $u$ first enters
state $u+1$. For positive semidefinite $A$, its direct output response is
\begin{equation}\label{eq:output-response}
 \mathcal R_u^{\iout}(A)=\tfrac12\sum_{j=1}^M
             g_j\ell_{\iout,j,u}^2\phi_j^\top A\phi_j.
\end{equation}
This is the prediction variance produced by covariance $A$ under the
subsequent mean updates and output averaging. Stochastic feedback is
accounted for separately below. Write $P_{\rm out}=\sum_{t=1}^T w_t^{\rm out}P_t$ and
$L_{T,u}=\sum_{t=u+1}^T w_t^{\rm out}P_{u+1\to t}$. Equivalently,
\begin{equation}\label{as:output-filter}
 \mathcal R_u^{\rm out}(A)=\tfrac12\tr(GL_{T,u}AL_{T,u}^{\T}).
\end{equation}
The response is nonnegative for $A\succeq0$. Consider $\iout\in\{\mathrm{last},\mathrm{avg}\}$.
Define the bias and SGD-variance functionals by
\begin{equation}\label{eq:scheduled-clean-functionals}
 \mathcal B^{\iout}=\tfrac12\sum_{j=1}^M g_j^{b/a}d_{\iout,j}^2,
 \qquad
 \mathcal V^{\iout}=\sum_{u<T}h_uR_u\mathcal R_u^{\iout}(G),
\end{equation}
where $h_u,R_u$ are given in \eqref{eq:clock} and
\eqref{eq:moment-sources}. On the sketch event,
Definition~\ref{as:def-witness} compares $\mathcal B^{\iout}$ with
the exact mean bias
$\frac12\sum_jg_jd_{\iout,j}^2\phi_j^\top V\phi_j$.

\paragraph{Rounding sources and covariance comparisons.}
Use the lower and upper rounding coefficients
$f_u^\epsilon,\beta_u^\epsilon,m_u^\epsilon$ of
\eqref{eq:rounding-source-coefficients} and \eqref{arithmetic:eq:coefficients}.
For brevity, write $c_f=c_b=3/4$ and $c_m=3/16$ for coefficients in \eqref{eq:rounding-source-coefficients}. The exact-arithmetic comparison uses the
same width, learning rates, batch size and output. No commutation of
$G,V,J$ is assumed.
For $\epsilon\in\{-,+\}$,
write $x_t^\epsilon=\tr(GX_t^\epsilon)$ and
$y_t^\epsilon=\tr(JX_t^\epsilon)$, where $X_t^\epsilon$ is the comparison
second-moment gap from Appendix~\ref{app:actual-scaling}.
Write $Q_u^\epsilon=Q_u^\epsilon(X_u^\epsilon)$, with the affine map defined in
\eqref{core:comparison-innovations}. Its direct source is
\begin{equation}\label{as:comparison}
 D_u^\epsilon=f_u^\epsilon U_uG+(\beta_u^\epsilon R_u+m_u^\epsilon U_u)J.
\end{equation}
For the exact-arithmetic recursion, write
\begin{equation}\label{as:innovations}
 N_u=R_uG+GC_u^0G.
\end{equation}
A missing comparison superscript means plus. These definitions also apply
at zero learning rates and at exact precision, since they involve no
division by $h_u$ or a rounding coefficient.
For $u<t$, define the propagation kernels in a $G$-eigenbasis:
\begin{gather}
 \mathcal A_u(Z)=H_uZH_u+h_uGZG,\qquad
 a_{ij,u}=(1-\eta_ug_i)(1-\eta_ug_j)+h_ug_ig_j,\notag\\
 D_{ij}(t,u)=\prod_{v=u+1}^{t-1}a_{ij,v},\quad
 d_j(t,u)=D_{jj}(t,u),\quad\ell_{t,u}=\tau_t-\tau_{u+1},\notag\\
 S_\ell^{\mathcal A}(t,u)=\sum_jg_j^\ell d_j(t,u),\quad
 S_\ell^H(t,u)=\sum_jg_j^\ell\pi_j(t,u+1)^2,\notag\\
 K_{PQ}^{\mathcal A}(t,u)=\tr(P\mathcal A_{t-1}\cdots\mathcal A_{u+1}(Q)),
 \notag\\
 K_{PQ}^H(t,u)=\tr(PP_{u+1\to t}QP_{u+1\to t}),\qquad P,Q\in\{G,J\}.\label{si:eq:Akernels}
\end{gather}
Empty products of scalar factors and of linear maps are, respectively,
one and the identity map. Put
$A_u^\epsilon=x_u^\epsilon+f_u^\epsilon(U_u+y_u^\epsilon)$ and
$B_u^\epsilon=\beta_u^\epsilon(R_u+x_u^\epsilon)+m_u^\epsilon(U_u+y_u^\epsilon)$.
Thus the comparison recursion is
$X_{u+1}^\epsilon=\mathcal A_u(X_u^\epsilon)+h_u(A_u^\epsilon G+B_u^\epsilon J)$. 
The two sources coupling different rounding locations are
\begin{equation}\label{eq:scheduled-conversion-sources}
 Z_t^{b,\epsilon}=\sum_{u<t}h_u\beta_u^\epsilon R_uK_{JJ}^H(t,u),\qquad
 Y_t^{f,\epsilon}=\sum_{u<t}h_uf_u^\epsilon U_uS_2^H(t,u).
\end{equation}
The first measures the parameter second moment contributed by earlier
backward rounding; the second measures the residual second moment
contributed by earlier forward rounding. Multiplying them by a later
forward or backward rounding coefficient produces the two temporal
interaction terms. All sums are strictly causal: a source introduced at
update $u$ first enters state $u+1$.

\paragraph{Risk contributions from rounding.}
Define $\Psi_\epsilon^{\rm out}=\sum_{u<T}h_u\mathcal R_u^{\rm out}(D_u^\epsilon),$
\begin{align}
 \Conv_{b\to f}^{\epsilon,\rm out}
   &=\sum_{t<T}h_tf_t^\epsilon Z_t^{b,\epsilon}\mathcal R_t^{\rm out}(G),\quad
 \Conv_{f\to b}^{\epsilon,\rm out}
   =\sum_{t<T}h_t\beta_t^\epsilon Y_t^{f,\epsilon}\mathcal R_t^{\rm out}(J),
 \label{as:output-sources}\\
 \N^{\rm out}&=\Psi_+^{\rm out}+\Conv_{b\to f}^{+,\rm out}
                              +\Conv_{f\to b}^{+,\rm out}.
 \label{as:native}
\end{align}
The direct term $\Psi_\epsilon$ contains forward rounding, backward
rounding and their same-update interaction through $D_u^\epsilon$.
The other two terms account for interactions across updates.
The sum $\N^{\rm out}$ is a covariance comparison functional for the
SR-induced error, rather than an exact variance identity.
A missing comparison superscript means the upper coefficient; all terms
are nonnegative and strictly causal. For positive cumulative learning
rates, define the residual comparison
\begin{equation}\label{as:eq:residual-scale}
 r_t=s+\tau_t^{-(b-1)/a},\qquad \tau_t>0, \quad t\neq 0.
\end{equation}
The event $\Omega_M$ and width threshold $M_{\rm w}$ are defined in
Proposition~\ref{as:prop-event}. On this event, the fixed tuple of
Definition~\ref{as:def-witness} implies \eqref{as:U1} and the
approximation bounds \eqref{as:approx-lower}.
Appendix~\ref{app:forest} fixes $\tau_*,\rho_{\rm sch},\kappa_{\rm sch}$
from this tuple, $c_0$ and the model, before any schedule is selected.
Write $\bar\eta=c_0/c_g$, $\rho_{\rm src}=1/(4\Theta)$ and
$\kappa=\kappa_{\rm sch}$.

\paragraph{Feedback quantities.}
For any finite schedule at a full-row-rank sketch define, even at zero clock,
\begin{gather}
 \Gamma_{\rm stab}(T)=\sum_{u<T}h_uq_{{\rm fw},u}q_{{\rm bw},u}/M,\notag\\
 \Gamma_{f\to b}(T)=\max_{t<T:q_{{\rm bw},t}>0}
 \frac{\sum_{u<t}h_uq_{{\rm fw},u}U_uS_2^H(t,u)}{MR_t+q_{{\rm fw},t}U_t},
 \label{as:allclock-gains}
\end{gather}with $\max\varnothing=0.$
For a time horizon with $\tau_T\ge\tau_*$, use $r_T\neq 0$ from \eqref{as:eq:residual-scale} and define
\begin{gather}
 \Gamma_\times(T)=\max_{t<T}
 \frac{q_{{\rm fw},t}}{Mr_T}\sum_{u<t}h_uq_{{\rm bw},u}R_u,\notag\\
 \Gamma_{\rm conv}(T)=\max(\Gamma_\times(T),\Gamma_{f\to b}(T)).
 \label{as:window-gains}
\end{gather}
All maxima and source sums are finite and all denominators are positive.

\begin{assumption}[Admissible learning-rate and precision schedules]
\label{as:assumption-schedules}\label{as:def-class}
The learning-rate and precision schedules are deterministic and chosen
before all random draws. For the horizon $T$, they satisfy the
rate and clock conditions \eqref{eq:scheduled-class} for $0\le u<T$
(with $\tau_*=\tau_*^{\rm sch}$), together with
\begin{equation}\label{as:class}
 \Gamma_{\rm stab}(T)\le\kappa,\qquad
 \Gamma_{\rm conv}(T)\le\kappa.
\end{equation}
Such a schedule and time horizon are called admissible. 
\end{assumption}

\paragraph{Consequences of admissibility.}
Choose $\rho_{\rm sch}\le\rho_{\rm src}$. On $\Omega_M$,
$g_1\ge c_g$ and $\theta_G\le\Theta$, so admissibility gives
\begin{equation}\label{as:inheritance}
 \eta_u\le\bar\eta,\qquad
 \rho_u\le\rho_{\rm sch}\le\rho_{\rm src},\qquad
 \rho_u\theta_G\le\tfrac14.
\end{equation}
For $M^a\ge\tau_*$, sufficiently many small, equal exact-arithmetic
steps give an admissible schedule with total learning rate $\tau_*$.
Unrolling the comparison recursion gives
\begin{equation}\label{si:eq:volcomp}
 \binom{x_t^\epsilon}{y_t^\epsilon}
 =\sum_{u<t}h_u
 \begin{pmatrix}K_{GG}^{\mathcal A}&K_{GJ}^{\mathcal A}\\
 K_{JG}^{\mathcal A}&K_{JJ}^{\mathcal A}\end{pmatrix}(t,u)
 \binom{A_u^\epsilon}{B_u^\epsilon}.
\end{equation}
All kernels and sources are nonnegative.
\subsection{Spectral propagation and causal compositions}
Lemma~\ref{si:lem:products} compares propagation and spectral sums with
scalar decay functions used in the subsequent risk bounds.
Use the propagation products, kernels and spectral sums in
\eqref{as:products} and \eqref{si:eq:Akernels}. For $v\ge0$, write
$\varphi_0(v)=M$ and $\varphi_i(v)=(1+v)^{\alpha-i}$ for $i=1,2$.
Set $\Lambda_0=(1-c_0)^{-1}$, and abbreviate
$d_j^H=\pi_j(t,u+1)^2$ and $d_j^{\mathcal A}=d_j(t,u)$.
For $u<\min(v,w)$ and $i=1,2$, define the paired sums
\begin{equation}\label{sc:eq:pairdef}
 \Lambda_i^{\rm pair}(u;v,w)=\sum_jg_j^i\pi_j(v,u+1)\pi_j(w,u+1),\qquad
 d=\tau_v+\tau_w-2\tau_{u+1}.
\end{equation}
Let $a_j(x)$ denote either $e^{-g_jx}$ or a product of two propagation
factors along the same eigenvector, whose intervals have total cumulative learning rate $x$.
Empty sums and maxima are zero.

\begin{lemma}[Propagation, spectral sums and causal composition]
\label{si:lem:products}\label{si:prop:modal}\label{si:th:spectral}
\label{si:lem:telescoping}\label{si:prop:composition}
\label{cl:spectral}\label{sc:lem:pairspectral}
Assume \eqref{as:U1} and $0\le\eta_ug_1\le c_0$.
The constants below depend only on the fixed model,
comparison bounds and $c_0$.
\begin{enumerate}[label=(\roman*),leftmargin=*]
\item \emph{Products and kernel geometry.}
For every cumulative-time interval,
\begin{equation}\label{sc:eq:productnew}
 e^{-\Lambda_0g_j(\tau_t-\tau_u)}\le\pi_j(t,u)
 \le e^{-g_j(\tau_t-\tau_u)},\qquad
 1-\pi_j(t,0)\le\min(1,g_j\tau_t).
\end{equation}
For $\mathsf X\in\{H,\mathcal A\}$, suppressing $(t,u)$,
\begin{gather}
 K_{GG}^{\mathsf X}=S_2^{\mathsf X},\qquad
 K_{GJ}^{\mathsf X}=K_{JG}^{\mathsf X}
 =\sum_jg_jJ_{jj}d_j^{\mathsf X},\notag\\
 c_JS_1^{\mathsf X}\le K_{GJ}^{\mathsf X}\le C_JS_1^{\mathsf X},\qquad
 c_J^2S_0^{\mathsf X}\le K_{JJ}^{\mathsf X}\le C_JS_0^{\mathsf X}.
 \label{si:eq:modalkernels}
\end{gather}

\item \emph{Spectral comparisons.}
There are $0<c_P\le1\le C_P$ such that
\begin{gather}
 c_P\varphi_i(\ell_{t,u})\le S_i^{\mathsf X}(t,u)
       \le C_P\varphi_i(\ell_{t,u}),\label{si:eq:explicitsp}\\
 K_{PQ}^H\le K_{PQ}^{\mathcal A}\le C_3K_{PQ}^H
       \quad(P,Q\in\{G,J\}),\label{si:eq:C3}
\end{gather} 
for $0\le\ell_{t,u}\le M^a$ and $i=0,1,2$.
The upper profile bounds hold for all nonnegative gaps.
For $r\in\{1,2,b/a,2-\omega_V\}$, there are
$0<\ell_r\le u_r<\infty$ with
\begin{equation}\label{cl:heat}
 \ell_r(1+x)^{\alpha-r}\le\sum_jg_j^r a_j(x)
                         \le u_r(1+x)^{\alpha-r}.
\end{equation}
Its lower bound requires $x\le2M^a$ and its upper bound is global.
Set
\begin{equation}\label{sc:eq:KS}
 K_S=\max\{2^{2-\alpha}C_P,c_P^{-1}\}.
\end{equation}
Then, for $i=1,2$,
\begin{equation}\label{sc:eq:pairsandwich}
 K_S^{-1}\varphi_i(d)\le\Lambda_i^{\rm pair}(u;v,w)
                           \le K_S\varphi_i(d)
\end{equation}
where the lower bound requires $d\le2M^a$.
In particular, for $\ell_{t,u}\le M^a$,
\begin{equation}\label{cl:JJ}
 c_J^2c_PM\le K_{JJ}^H(t,u)\le C_JM.
\end{equation}

\item \emph{Telescoping and weighted sums.}
At every finite cumulative time,
\begin{gather}
 \sum_{u<t}\eta_ug_j\pi_j(t,u+1)=1-\pi_j(t,0),\qquad
 \sum_{u<t}\eta_ug_j\pi_j(t,u+1)^2\le1-\pi_j(t,0),
 \label{si:eq:Htel}\\
 \sum_{u<t}h_uS_2^H(t,u)\le\theta_G\max_{u<t}\rho_u.
 \label{si:eq:forwardtail}
\end{gather}

\item \emph{Causal sums and compositions.}
Let $D=\ell_{t,u}\le M^a$ and
$\mathsf X,\mathsf Y,\mathsf Z\in\{H,\mathcal A\}$.
For $(i,j,m)\in\{(2,2,2),(2,1,1),(1,2,1)\}$,
\begin{gather}
 \sum_{u<v<t}\eta_vS_1^{\mathsf X}(t,v)\le C_KM,\qquad
 \sum_{u<v<t}\eta_vS_1^{\mathsf X}(v,u)\le C_KM,
 \label{si:eq:schedexp-forwardmass}\\
 \sum_{u<v<t}\eta_vS_2^{\mathsf X}(t,v)\le C_K, \quad \sum_{u<v<t}\eta_vS_i^{\mathsf X}(t,v)S_j^{\mathsf Y}(v,u)
 \le C_KS_m^{\mathsf Z}(t,u),
 \label{si:eq:CKexports}\\
 \sum_{u<v<t}\eta_vS_1^{\mathsf X}(t,v)S_1^{\mathsf Y}(v,u)
 \le C_K\min\{MS_1^{\mathsf Z},S_0^{\mathsf Z}\}(t,u).
 \label{si:eq:schedexp-min11}
\end{gather}
The fixed-terminal one-kernel bounds also hold over the full history
$v<t$ when $\tau_t\le M^a$.
Replacing $\eta_v$ by $h_v$ multiplies each bound by the maximum
$\rho_v$ over its summation indices.
\end{enumerate}
\end{lemma}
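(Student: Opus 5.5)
The plan is to reduce every assertion to scalar estimates along the eigenvectors $\phi_j$ of $G$, using only $c_gj^{-a}\le g_j\le C_gj^{-a}$ and the $J$-diagonal bounds in \eqref{as:U1}, together with the step bound $\eta_ug_j\le\eta_ug_1\le c_0$. For (i), I would use the elementary inequalities $e^{-\Lambda_0x}\le1-x\le e^{-x}$ for $0\le x\le c_0$, with $\Lambda_0=(1-c_0)^{-1}$ (because $\log(1-x)\ge-x/(1-x)$). Applying them to each factor $1-\eta_vg_j$ and multiplying gives \eqref{sc:eq:productnew}. The bound $1-\pi_j(t,0)\le\min(1,g_j\tau_t)$ follows from $\pi_j\in[0,1]$ and $1-e^{-y}\le y$.

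For the kernel identities, I would note that $H_u$ and the map $Z\mapsto GZG$ are both diagonal in the $G$-eigenbasis, so $\mathcal A_{t-1}\cdots\mathcal A_{u+1}$ acts entrywise by $D_{ij}(t,u)$. Taking traces against $G$ and $J$ keeps only diagonal entries, which gives $K_{PQ}^{\mathsf X}$ as sums of $g_j^{\cdot}J_{jj}^{\cdot}d_j^{\mathsf X}$. For $K_{JJ}$ the entry is $\sum_{ij}J_{ij}^2D_{ij}$, and I would bound it as follows:
\begin{itemize}
\item Lower bound: keep only the diagonal terms and use $J_{jj}\ge c_J$.
\item Upper bound: use $D_{ij}\le\sqrt{d_id_j}$ (a Cauchy--Schwarz/AM--GM estimate on each factor $a_{ij,v}$) and $\sum_iJ_{ij}^2=(J^2)_{jj}\le C_J$.
\end{itemize}

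For (ii), the core step is the spectral-sum comparison \eqref{cl:heat}: $\sum_jj^{-ar}a_j(x)\asymp(1+x)^{\alpha-r}$ for $x\le2M^a$. I would split the index range at $j_*\asymp(1+x)^{1/a}$. The head contributes $\asymp j_*^{1-ar}$, since $ar>1$ for the listed $r$. The tail is exponentially small, and for the lower bound $j_*\le M$ is guaranteed by $x\le2M^a$. The profiles $\varphi_i$ with $\varphi_0=M$ (the case $i=0$, where every term is $\asymp1$) then yield \eqref{si:eq:explicitsp} for $\mathsf X=H$. For $\mathsf X=\mathcal A$, I would show $d_j\asymp\pi_j(t,u+1)^2$ up to a constant: $a_{jj,v}=(1-\eta_vg_j)^2+h_vg_j^2$, and $\prod_v(1+h_vg_j^2/(1-\eta_vg_j)^2)\le\exp(C\rho_{\max}g_j\ell)$, which is absorbed after slightly changing the exponential rate, using $\rho\theta_G\le1/4$. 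This gives \eqref{si:eq:C3}. The paired sums \eqref{sc:eq:pairsandwich} are the special case $a_j$ equal to a product of two factors with total time $d$; the $2^{2-\alpha}$ in $K_S$ comes from comparing $\pi_j(v,u+1)\pi_j(w,u+1)$ with the exponential at time $d$. Finally, \eqref{cl:JJ} follows from \eqref{si:eq:modalkernels} with $i=0$.

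For (iii), the identity in \eqref{si:eq:Htel} telescopes, since $\eta_ug_j\pi_j(t,u+1)=\pi_j(t,u+1)-\pi_j(t,u)$. The squared version follows from $\pi_j^2\le\pi_j$. For \eqref{si:eq:forwardtail}, I would write $h_u=\rho_u\eta_u$, sum the squared telescoping bound over $j$ with weight $g_j$, and use $\sum_jg_j=\theta_G$.

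For (iv), I would insert the profiles from (ii) and compare the sums over $v$ with integrals in the clock $\tau$, using $d\tau=\eta_v$ and monotonicity of the profiles. The single-kernel bounds reduce to the integrals
\[
\int_0^D(1+s)^{\alpha-1}\,ds\lesssim D^{\alpha}\le M, \qquad \int_0^\infty(1+s)^{\alpha-2}\,ds<\infty .
\]
The convolutions reduce to the Beta-type estimate $\int_0^D(1+D-s)^{\alpha-i}(1+s)^{\alpha-j}\,ds\lesssim(1+D)^{\alpha-m}$, obtained by splitting at $D/2$; this uses that one exponent is below $-1$. For \eqref{si:eq:schedexp-min11}, both exponents equal $\alpha-1>-1$, so the integral is $\lesssim(1+D)^{2\alpha-1}$. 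This is at most $M(1+D)^{\alpha-1}$ because $(1+D)^{\alpha}\lesssim M$, and at most $\lesssim M$. Replacing $\eta_v$ by $h_v=\rho_v\eta_v$ just extracts $\max\rho_v$.

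I expect the main obstacle to be \eqref{si:eq:C3} and its use for mixed $\mathsf X,\mathsf Y,\mathsf Z$. The noise term $h_vg_j^2$ must be controlled uniformly in the schedule, and in the off-diagonal $J$-kernels one must avoid any commutation between $J$ and $G$. I would handle this first, by the per-entry bound $D_{ij}\le C\sqrt{\pi_i^2\pi_j^2}$ combined with the $C_J$ row-sum bound.
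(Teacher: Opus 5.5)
Your outline follows the paper's proof closely: diagonal action of $\mathcal A_{t-1}\cdots\mathcal A_{u+1}$ in the $G$-eigenbasis, Cauchy--Schwarz $D_{ij}\le\sqrt{d_id_j}$ with the row-sum bound $(J^2)_{jj}\le C_J$, spectral splits at $j_*\asymp(1+x)^{1/a}$, telescoping, and comparison of the causal sums with convolution integrals. The genuine gap is in the step you single out as the main obstacle, comparing $\mathcal A$ with $H$.

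\textbf{The $\mathcal A$-versus-$H$ comparison.} The claim $d_j\asymp\pi_j(t,u+1)^2$ up to a constant is false uniformly in the gap, and so is your planned entrywise bound $D_{ij}\le C\pi_i\pi_j$. Indeed $\log\bigl(d_j/\pi_j(t,u+1)^2\bigr)=\sum_v\log\bigl(1+h_vg_j^2/(1-\eta_vg_j)^2\bigr)$ is of order $\rho g_j^2\ell_{t,u}$. At constant $\rho$, $j=1$ and $\ell_{t,u}\asymp M^a$, this is of order $M^a$.

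Your absorption step has two further problems. It invokes $\rho\theta_G\le\tfrac14$, which is not a hypothesis of this lemma: only \eqref{as:U1} and $\eta_ug_1\le c_0$ are assumed, and $\rho$ is arbitrary, as the last sentence of the statement indicates. Even granting it, the factor $(1-\eta_vg_j)^{-2}\le\Lambda_0^2$ only yields $d_j\le e^{-(2-\Lambda_0^2/4)g_j\ell_{t,u}}$. That bound does not decay once $c_0\ge1-1/(2\sqrt2)$.

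\textbf{The fix.} The missing observation is $h_v=\eta_v^2/B\le\eta_v^2$. With $x=\eta_vg_j\in[0,c_0]$ this gives $(1-x)^2\le a_{jj,v}=1-2x+(1+B^{-1})x^2\le1-2(1-c_0)x\le e^{-2(1-c_0)x}$. Hence $e^{-2\Lambda_0g_j\ell_{t,u}}\le d_j(t,u)\le e^{-2(1-c_0)g_j\ell_{t,u}}$ for every $\rho$ and every $c_0<1$. The same spectral-sum estimates then give \eqref{si:eq:explicitsp} for $\mathcal A$, since the rates only change constants.

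\eqref{si:eq:C3} must then be proved at the level of traces, not entries. Via \eqref{si:eq:modalkernels}, $K^{\mathcal A}_{PQ}\lesssim S_i^{\mathcal A}(t,u)\le C_P\varphi_i(\ell_{t,u})\le(C_P/c_P)S_i^H(t,u)\lesssim K^H_{PQ}$. Both sums are dominated by indices with $g_j\ell_{t,u}\lesssim1$, where $d_j$ and $\pi_j^2$ are both of order one. This chain uses the lower profile bound for $H$, which is why \eqref{si:eq:C3} is restricted to $\ell_{t,u}\le M^a$. An entrywise bound would have given it for all gaps, where it actually fails.

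\textbf{Smaller slips.}
\begin{itemize}
\item $1-\pi_j(t,0)\le g_j\tau_t$ with constant one needs $1-\prod_v(1-x_v)\le\sum_vx_v$. The exponential lower bound only yields $\Lambda_0g_j\tau_t$.
\item In the spectral split your head/tail roles are reversed. The undamped indices $j>j_*$ contribute $\sum_{j>j_*}j^{-ar}\asymp j_*^{1-ar}$. The damped ones are bounded by $j_*\sup_{y>0}y^re^{-yx}\lesssim x^{\alpha-r}$, which is the same order, not exponentially small.
\item For $i=0$, not every term is order one; only the last half of the indices is, and that suffices when $\ell_{t,u}\le M^a$.
\item In (iv), each $\ell_{t,v}$ is the left endpoint of its clock cell and the profiles decrease, so monotonicity alone bounds the sums from below. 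The upper bounds need the step bound $\eta_v\le c_0/c_g$, which gives $\varphi_i(\ell_{t,v})\le(1+c_0/c_g)^{i-\alpha}\varphi_i(s)$ across each cell.
\end{itemize}
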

\begin{proof}
In eigenbasis of $G$, diagonal action gives
\begin{equation}\label{si:eq:kernelentry}
\begin{aligned}
 \mathcal A_{t-1}\cdots\mathcal A_{u+1}(Q)
 &=\sum_{i,j}Q_{ij}D_{ij}(t,u)\phi_i\phi_j^{\T}, \quad
 K_{PQ}^{\mathcal A}=\sum_{i,j}P_{ij}Q_{ij}D_{ij}.
\end{aligned}
\end{equation}
For mean propagation, replace $D_{ij}$ by
$\pi_i(t,u+1)\pi_j(t,u+1)$.
The vectors $(1-\eta_vg_i,\sqrt{h_v}g_i)$ give, by
Cauchy--Schwarz,
\begin{equation}\label{si:eq:productbounds}
 0\le\pi_i\pi_j\le D_{ij}\le\sqrt{d_id_j}\le(d_i+d_j)/2.
\end{equation}
For $x=\eta_vg_j\in[0,c_0]$,
$e^{-\Lambda_0x}\le1-x\le e^{-x}$ and
$(1-x)^2\le a_{jj,v}=1-2x+(1+B^{-1})x^2\le e^{-2(1-c_0)x}$.
Multiplication and $1-\prod_v(1-x_v)\le\sum_vx_v$ give
\eqref{sc:eq:productnew} and
\begin{equation}\label{si:eq:damping}
 e^{-2\Lambda_0g_j\ell_{t,u}}\le d_j(t,u)
       \le e^{-2(1-c_0)g_j\ell_{t,u}}.
\end{equation}
The first kernel identities follow by taking traces. For $JJ$, use
\[
 c_J^2\sum_jd_j\le\sum_jJ_{jj}^2d_j
 \le\sum_{i,j}J_{ij}^2D_{ij}
 \le\sum_i(J^2)_{ii}d_i\le C_J\sum_id_i.
\]
This argument also applies to mean propagation.

All required powers satisfy $0<r\le2$ and $ar>1$.
For $x\ge1$, bound
$g_j^re^{-xg_j}\le\min\{x^{-r},C_g^rj^{-ar}\}$ and compare the
decreasing sum with its integral. For $x\le1$, sum $C_g^rj^{-ar}$.
These calculations give the global upper order $(1+x)^{\alpha-r}$.
For a lower bound with $x\le2M^a$, set
$k=\max\{1,(x/2)^\alpha\}\le M$.
The interval $[k/2,k]$ contains at least $k/4$ integer indices (use the first index
if $k<2$), so
\[
 \sum_jg_j^re^{-\Lambda_0xg_j}
 \ge\frac{c_g^re^{-2^{a+1}C_g\Lambda_0}}4
                  (1+x/2)^{\alpha-r}.
\]
The last half of the eigenvalue indices gives the zero-order lower coefficient
$\frac12e^{-2^{a+1}C_g\Lambda_0}$.
The product bounds now give \eqref{cl:heat} and
\eqref{si:eq:explicitsp}, with the common constants $c_P,C_P$.
Together with the bounds in the eigenbasis of $G$, the factors
$1,C_J/c_J,C_J/c_J^2$ give
$C_3=(C_P/c_P)\max(1,C_J/c_J,C_J/c_J^2)$ and \eqref{cl:JJ}.
A paired product lies between $e^{-\Lambda_0g_jd}$ and $e^{-g_jd}$.
Apply the same sum and integral estimates to these exponential sums
at clock $d/2$, and use
$\varphi_i(d)\le\varphi_i(d/2)\le2^{i-\alpha}\varphi_i(d)$.
This proves \eqref{sc:eq:pairsandwich} with \eqref{sc:eq:KS}.

The identity
$\eta_ug_j\pi_j(t,u+1)=\pi_j(t,u+1)-\pi_j(t,u)$
gives \eqref{si:eq:Htel}, since $0<\pi_j\le1$.
Multiply its inequality by $g_j$, sum over eigenvector indices and use
$h_u\le(\max_{v<t}\rho_v)\eta_u$ to obtain
\eqref{si:eq:forwardtail}.

Put $A=1+\bar\eta$ and choose the fixed coefficients $F_1=A/\alpha, F_2=A^{2-\alpha}/(1-\alpha), 
 C_{\rm sc}=8A^2\max(F_1,F_2),$ and 
\begin{gather}
 C_K=\max\{C_PF_2,2^\alpha C_PF_1,
                    2^\alpha C_P^2C_{\rm sc}/c_P\}.
 \label{si:eq:cellconstants}
\end{gather}
On $[b,b+\eta]$,
$\varphi_i(b)\le A^{i-\alpha}\varphi_i(s)$.
Integrating over disjoint intervals indicates
\begin{equation}\label{si:eq:F12}
 \sum_v\eta_v\varphi_1(b_v)\le F_1(1+D)^\alpha,\qquad
 \sum_v\eta_v\varphi_2(b_v)\le F_2.
\end{equation}
For a composition, let $a_v=\tau_t-\tau_{v+1}$ and
$b_v=\tau_v-\tau_{u+1}$. Then
$a_v+b_v+\eta_v=D$, hence
$1+\max(a_v,b_v)\ge(1+D)/(2A)$.
When $a_v\ge b_v$, use
$\varphi_i(a_v)\le(2A)^{i-\alpha}\varphi_i(D)$ and apply
\eqref{si:eq:F12} to the other factor; interchange the roles otherwise.
The resulting bounds are
\begin{equation}\label{si:eq:scalconv}
 \sum_{u<v<t}\eta_v\varphi_i(a_v)\varphi_j(b_v)
 \le C_{\rm sc}
 \begin{cases}
 \varphi_2(D),&(i,j)=(2,2),\\
 \varphi_1(D),&(i,j)=(2,1),(1,2),\\
 (1+D)^{2\alpha-1},&(i,j)=(1,1).
 \end{cases}
\end{equation}
The mixed cases use $2\alpha-2\le\alpha-1$.
Substitute the common spectral profiles. For the last case, use
$(1+D)^\alpha\le2^\alpha M$ and $c_PM\le S_0$ to obtain both
alternatives in \eqref{si:eq:schedexp-min11}.
The same cell integrals give the one-kernel bounds and their
full-history versions. Finally $h_v=\rho_v\eta_v$ gives the weighted
bounds, including zero-rate steps.
\end{proof}

\subsection{Exact-arithmetic risk and source moments}
We use standard second-moment methods for linear SGD
\citep{wu2022last,lin2024scaling}. In the eigenbasis of $G$, the diagonal entries
of the exact-arithmetic recursion in Proposition~\ref{arithmetic:prop:exact}
obey the recurrence of \citet[Theorem~1]{bordelon2026optimal}.
Theorem~\ref{as:theorem-clean} expresses prediction risk through propagated
contributions from the initial error, minibatch sampling and rounding.
It also bounds the exact-arithmetic moments that enter the subsequent
rounding-error analysis.

\Needspace{8\baselineskip}
\begin{theorem}[Risk identities and exact-arithmetic source bounds]
\label{as:theorem-clean}\label{as:lemma-identities}\label{si:th:I1}
\label{si:prop:cleandecomp}\label{si:lem:maxprinciple}
\label{si:lem:unimodal}\label{si:prop:rawsources}\label{si:th:S1}
For \eqref{eq:output-weights}, the following hold.
\begin{enumerate}
\item \emph{Risk identities.}
At a fixed full-row-rank sketch, every finite deterministic schedule satisfies
\begin{gather}
 E_0^{\rm out}=\tfrac12\tr(GP_{\rm out}VP_{\rm out}^{\T})
                   +\sum_{u<T}h_u\mathcal R_u^{\rm out}(N_u),
 \label{as:clean-identity}\\
 \Delta^{\rm out}=\sum_{u<T}h_u\mathcal R_u^{\rm out}(Q_u)\ge0.
 \label{as:actual-identity}
\end{gather}
The first identity separates the squared conditional mean error from
the variance caused by random SGD updates, with the teacher averaged afterward. $E_0^{\rm out}$ is defined by \eqref{eq:estimation-risk}.

\item \emph{Risk comparison.}
If $c_VG^{-\omega_V}\preceq V\preceq C_VG^{-\omega_V}$, then
\begin{equation}\label{as:clean-coefficientwise}
 c_V\mathcal B^{\rm out}+\mathcal V^{\rm out}\le E_0^{\rm out}
                   \le C_V\mathcal B^{\rm out}+2\mathcal V^{\rm out}.
\end{equation}

\item \emph{Source bounds.}
Under \eqref{as:U1}, $0\le\eta_ug_1\le c_0<1$ and
$\rho_u\le\rho_{\rm src}$, exact-arithmetic moments satisfy
\begin{gather}
 s\le R_u\le K_s^{\rm ref}s,\qquad
 K_s^{\rm ref}=C_R^{\rm raw}(1+\sigma^{-2}),
 \label{eq:reference-history-factor}\\
 U_t\le C_U^{\rm raw}(1+\tau_t)^\nu,
 \label{si:eq:Uupper}\\
 U_t\ge c_U\tau_t^\nu\qquad(1\le\tau_t\le M^a),
 \label{si:eq:Ulower}\\
 U_u\le K_UU_t\qquad(0\le u\le t,\ \tau_t\le M^a).
 \label{sc:eq:alltimeU}
\end{gather}
The constants are independent of width, time horizon, batch size and schedules. 
\end{enumerate}
\end{theorem}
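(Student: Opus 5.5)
The three parts are independent, and I would take them in order.

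\emph{Risk identities.} By \eqref{app:eq:risk-decomposition}, $\mathscr R^{\rm out}(T)=\bar A+\tfrac12\E\|v_T^{\rm out}-v_*\|_G^2$. Since the weights sum to one, $v_T^{\rm out}-v_*=\sum_tw_t^{\rm out}e_t$.

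I would expand the second moment with the two-time formula \eqref{arithmetic:eq:two-time}, $\E[e_ue_w^\top]=P_{w\to u}C_w$ for $u\ge w$. The next step is to unroll $C_t$ through \eqref{arithmetic:eq:recursion}. Write $C_{u+1}=H_uC_uH_u+h_u\,\mathrm{Cov}_u$, where $\mathrm{Cov}_u$ is the conditional covariance of the minibatch innovation. Each innovation introduced at update $u$ is a martingale increment. Therefore it contributes to $\sum_tw_te_t$ the linear image $L_{T,u}\xi_u$, and these increments are uncorrelated across $u$.

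The mean part is $\sum_tw_tP_t e_0=-P_{\rm out}v_*$. Averaging $v_*v_*^\top$ gives $\tfrac12\tr(GP_{\rm out}VP_{\rm out}^\top)$. The innovation covariance is $h_u(N_u+\text{rounding maps})$ in the rounded run and $h_uN_u$ in the exact run, with $N_u$ as in \eqref{as:innovations}. This yields \eqref{as:clean-identity} in the form $\sum_uh_u\mathcal R_u^{\rm out}(N_u)$ via \eqref{as:output-filter}.

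For \eqref{as:actual-identity}, I would subtract the two identities. The mean terms cancel, because $P_{\rm out}$ and $V$ do not depend on rounding. The innovation difference is exactly $Q_u$ from \eqref{arithmetic:eq:innovation}, after rewriting $h_u(N_u^{\rm actual}-N_u^0)$ as $h_u(GX_uG+x_uG)$ plus the rounding maps. Nonnegativity follows from $Q_u\succeq0$ in Proposition~\ref{arithmetic:prop:exact}. The point to check here is that $\mathcal R_u$ only needs the one-step conditional covariance, so that the feedback terms $GX_uG+x_uG$ are included inside $Q_u$ rather than being propagated again.

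\emph{Risk comparison.} In the $G$-eigenbasis, $P_{\rm out}$ is diagonal with entries $d_{{\rm out},j}$. Congruence with the hypothesis $c_VG^{-\omega_V}\preceq V\preceq C_VG^{-\omega_V}$ bounds the bias term between $c_V\mathcal B^{\rm out}$ and $C_V\mathcal B^{\rm out}$, using $g_j\cdot g_j^{-\omega_V}=g_j^{b/a}$.

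For the variance, I would write $N_u=R_uG+GC_u^0G$. The $R_uG$ part gives exactly $\mathcal V^{\rm out}$, which supplies the lower bound. For the upper bound I need $\mathcal R_u(GC_u^0G)\le R_u\mathcal R_u(G)$. Since $GC_u^0G$ and $G$ are both $\succeq0$, it suffices to show $GC_u^0G\preceq\tr(GC_u^0)G$. This holds because $G^{1/2}C_u^0G^{1/2}\preceq\tr(G^{1/2}C_u^0G^{1/2})I$, and $\tr(GC_u^0)\le R_u$. That gives the factor $2$.

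\emph{Source bounds.} $R_u\ge s$ is immediate. For the upper bound on $R_u$, I would use the diagonal exact recursion, which is the recurrence of \citet{bordelon2026optimal}. Writing $c_j=\phi_j^\top C_u^0\phi_j$, it reads $c_j(u+1)\le(1-\eta g_j)^2c_j+h_ug_j(R_u+g_jc_j)$. Its mean part is bounded by $\phi_j^\top V\phi_j$. The noise part sums to $\tr(GC)\le\tr(GV)+\max_{u'}R_{u'}\sum h_ug_j^2\pi^2$, and the sum is at most $\rho\theta_G\le1/4$ by \eqref{si:eq:forwardtail} and \eqref{as:inheritance}. A max-principle bootstrap then gives $R\le\tfrac43(s+C)$. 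Since $s\ge\sigma^2$, this is at most $C(1+\sigma^{-2})s$.

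For $U_t$, I would use \eqref{arithmetic:eq:W} and Lemma~\ref{haar:lem:measurable}, under which $\tr(JA)\asymp\sum_j\phi_j^\top A\phi_j$ for diagonal-dominated parts. The mean component is $(I-P_t)V(I-P_t)$, with entries of order $g_j^{-\omega_V}\min(1,g_j\tau_t)^2$. Summing with $g_j\asymp j^{-a}$ gives order $(1+\tau_t)^{\nu}$, with a matching lower bound when $1\le\tau_t\le M^a$ via \eqref{cl:heat}-type lower sums. The noise part $\tr(JC_t^0-\text{mean})\lesssim\sum_u h_uR_u S_0$-type terms must be shown to be of lower order; I would use Lemma~\ref{si:lem:products}(iv). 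The monotonicity \eqref{sc:eq:alltimeU} then follows because the upper and lower profiles are comparable and $(1+\tau)^\nu$ is increasing.

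I expect the main obstacle to be controlling the off-diagonal entries of $V$ in the $G$-basis, together with the noise contribution to $U_t$. For the former I would use the operator bounds of Lemma~\ref{source:thm:main} via congruence by $(I-P_t)$, so that no commutation of $V$ and $G$ is needed.
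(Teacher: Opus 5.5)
Your treatment of the risk identities follows the paper: martingale increments $e_{u+1}-H_ue_u$ are propagated by $L_{T,u}$, and the state-dependent terms $GX_uG+x_uG$ are kept inside $Q_u$. The risk comparison via $GCG\preceq\operatorname{tr}(GC)G$, and the max-principle bound on $R_u$, also follow the paper. There is one harmless slip in the bound on $R_u$. The $GC_u^0G$ source contributes as much as $R_uG$, so the bootstrap absorbs $\tfrac12\max R$ rather than $\tfrac14\max R$, and you get $R\le2(s+C_{\rm bias})$.

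The genuine gaps are in the estimates for $U_t$. The first concerns the noise contribution to \eqref{si:eq:Uupper}. The fluctuation part of $W_t^0$ is $\sum_{u<t}h_uP_{u+1\to t}N_uP_{u+1\to t}$. Using $N_u\preceq2R_uG$, its $J$-trace is at most $2\sum_{u<t}h_uR_u\sum_jg_j\pi_j(t,u+1)^2\,\phi_j^\top J\phi_j$. This is a $K^H_{JG}\asymp S_1^H$ kernel, not an $S_0$ one. Neither your ``$S_0$-type'' bound nor Lemma~\ref{si:lem:products}(iv) delivers what is needed:
\begin{itemize}
\item with $S_0^H\asymp M$ you get order $\rho\tau_tM$;
\item the stated mass bound $\sum_{u<t}\eta_uS_1^H(t,u)\le C_KM$ gives order $\rho M$.
\end{itemize}
Neither is uniform in width, and neither is $O((1+\tau_t)^\nu)$ when $\tau_t=O(1)$. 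The missing step is the per-mode telescoping \eqref{si:eq:Htel}:
\[
\sum_{u<t}h_ug_j\pi_j(t,u+1)^2\le\rho_{\rm src}(1-\pi_j(t,0))\le\rho_{\rm src}\min(1,g_j\tau_t),
\]
followed by $\sum_j\min(1,g_j\tau_t)\lesssim(1+\tau_t)^{\alpha}$. Combined with $R_u\le C_R^{\rm raw}(1+s)$ and $s\le\sigma^2+C_{\bar A}$, this bounds the noise part by a width-free multiple of $(1+\tau_t)^\alpha\le(1+\tau_t)^\nu$.

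The second gap is that \eqref{sc:eq:alltimeU} does not follow from ``comparable profiles'' at small times. For $\tau_t\ge1$ your argument works:
\[
U_u\le C_U^{\rm raw}(1+\tau_t)^\nu\le2^\nu C_U^{\rm raw}\tau_t^\nu\le(2^\nu C_U^{\rm raw}/c_U)U_t.
\]
For $0<\tau_t<1$ there is no lower profile. Indeed \eqref{si:eq:Ulower} is only asserted for $\tau_t\ge1$, and in fact $U_t\asymp\tau_t^2\ll\tau_t^\nu$ there. Meanwhile the upper profile $(1+\tau_u)^\nu$ stays of order one, so $U_u/U_t$ is uncontrolled. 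Arbitrarily small early rates are allowed, so this regime must be handled with a separate quadratic-in-time sandwich. For $\tau_t\le1$ and $u\le t$:
\begin{itemize}
\item the bound $1-\pi_j(u,0)\le g_j\tau_u$ controls the mean part by $C_VC_J\sum_jg_j^{2-\omega_V}\tau_t^2$;
\item the bound $\sum_{v<u}h_v\le\tau_u^2$, together with $\operatorname{tr}(JP_{v+1\to u}GP_{v+1\to u})\le C_J\Theta$, controls the noise part by $2C_R^{\rm raw}(1+\sigma^2+C_{\bar A})C_J\Theta\tau_t^2$;
\item the leading eigenvector of the mean part gives $U_t\ge c_Vc_Jc_g^{2-\omega_V}e^{-2C_g}\tau_t^2$.
\end{itemize}
At $\tau_t=0$ both sides vanish. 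Together with the $\tau_t\ge1$ case, this yields the constant $K_U$.
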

\begin{proof}
Let $e_u=v_u-v_*$ and $\mathscr F_u$ contain the sketch, teacher and
history before update $u$. Proposition~\ref{arithmetic:prop:exact} shows that $\zeta_{u+1}=e_{u+1}-H_ue_u$ and $\E[\zeta_{u+1}\mid\mathscr F_u]=0.$
These zero-mean update fluctuations are orthogonal across times and to $v_*$.
Then, $v^{\rm out}-v_*=-P_{\rm out}v_*+\sum_{u<T}L_{T,u}\zeta_{u+1}$.
Their second moments are $h_uN_u$ in exact arithmetic and
$h_u(N_u+Q_u)$ with SR.
Orthogonality proves \eqref{as:clean-identity}--\eqref{as:actual-identity}.
The state recursion and \eqref{arithmetic:eq:W} also indicates
\begin{gather}
 C_t^0=P_tVP_t+\sum_{u<t}h_uP_{u+1\to t}N_uP_{u+1\to t},\notag\\
 W_t^0=(I-P_t)V(I-P_t)+\sum_{u<t}h_uP_{u+1\to t}N_uP_{u+1\to t}.
 \label{si:eq:cleandecomp}
\end{gather}
For $C\succeq0$, $GCG\preceq\tr(GC)G$, so
\begin{equation}\label{as:N-sandwich}
 R_uG\preceq N_u\preceq2R_uG.
\end{equation}
The positive output response and teacher sandwich prove
\eqref{as:clean-coefficientwise}.

Since $\tr(GP_tVP_t)\le\tr(GV)\le C_{\rm bias}$,
\eqref{si:eq:cleandecomp} and \eqref{as:N-sandwich} give
$R_t\le s+C_{\rm bias}+2\sum_{u<t}h_uR_uS_2^H(t,u)$.
Taking the finite maximum and applying \eqref{si:eq:Htel} with
$\rho_{\rm src}\theta_G\le1/4$ absorbs half that maximum:
\begin{equation}\label{si:eq:Rglobal}
 s\le R_t\le2(s+C_{\rm bias})\le C_R^{\rm raw}(1+s).
\end{equation}
Now $s\ge\sigma^2>0$ leads to \eqref{eq:reference-history-factor}.

Let $U_t^{\rm det},U_t^{\rm noise}$ be the $J$-traces of the mean and
fluctuation parts in \eqref{si:eq:cleandecomp}. Then
\[
 U_t^{\rm det}\le C_VC_J\sum_jg_j^{-\omega_V}\min(1,\tau_tg_j)^2,
 \qquad
 U_t^{\rm noise}\le2C_JC_R^{\rm raw}(1+s)
       \sum_{j,u<t}h_ug_j\pi_j(t,u+1)^2.
\]
Telescoping bounds the inner time sum by
$\rho_{\rm src}\min(1,\tau_tg_j)$.
For $Z=K_0(1+\tau_t)$, split the following sums at $Z^\alpha$:
\begin{gather}\label{si:eq:powersources}
 \sum_{j\ge1}j^{a-b}\min(1,Zj^{-a})^2
 \le\frac{2a+2b-1}{a+b-1}Z^\nu,\quad
 \sum_{j\ge1}\min(1,Zj^{-a})
 \le\frac{2a-1}{a-1}Z^\alpha.\notag
\end{gather}
Use the integral plus maximum for the first sum and the integral for
the second. With $s\le\sigma^2+C_{\bar A}$ and $\alpha<\nu$, these
give \eqref{si:eq:Uupper}.
For $1\le\tau_t\le M^a$, sum the mean contribution over indices
$j\le n=\lfloor\tau_t^\alpha\rfloor$.
They satisfy $1-\pi_j(t,0)\ge(1-e^{-1})\min(1,c_g)$, and
$\sum_{j\le n}j^{a-b}\ge
2^{-(a-b+1)}\tau_t^\nu/(a-b+1)$.
The lower teacher, spectral and eigenbasis bounds on $J$ imply \eqref{si:eq:Ulower}.

For the early trajectory, set
\begin{gather}
 R_* = C_R^{\rm raw}(1+\sigma^2+C_{\bar A}),\qquad
 D_V=C_g^{2-\omega_V}\frac{a+b}{a+b-1},\qquad
 u_* =c_Vc_Jc_g^{2-\omega_V}e^{-2C_g},\notag\\
 K_{\rm pre}=\frac{C_J(C_VD_V+2R_*\Theta)}{u_*},\qquad
 K_U=\max\{1,2^\nu C_U^{\rm raw}/c_U,K_{\rm pre}\}.
 \label{sc:eq:KU}
\end{gather}
For $0<\tau_t\le1$, $1-\pi_j(u,0)\le g_j\tau_u$ and
$\sum_{v<u}h_v\le\tau_u^2$ yield
$U_u\le C_J(C_VD_V+2R_*\Theta)\tau_t^2$ for $u\le t$.
The contribution from the leading eigenvector gives $U_t\ge u_*\tau_t^2$.
Together with \eqref{si:eq:Uupper}--\eqref{si:eq:Ulower} for
$\tau_t\ge1$ and $U_u=U_t=0$ at $\tau_t=0$, this proves
\eqref{sc:eq:alltimeU}.

Finally, $s\le r_t\le(1+\sigma^{-2})s$ for $\tau_t\ge1$.
For $\tau_*\le\tau_T\le M^a$, these bounds imply $c_{\rm src}r_t\le R_t\le C_{\rm src}r_t $ for $ \tau_t\ge\tau_*,$ and 
\begin{gather}
 c_{\rm src}r_T\le R_u\le C_{\rm src}(1+s)\quad(0\le u\le T),
 \label{si:eq:S1exports}\\
 U_u\le C_{\rm src}U_t\quad(u\le t,\ \tau_t\ge\tau_*),\quad
 U_t\le C_{\rm src}(1+\tau_t)R_t\quad(0\le t\le T).\notag
\end{gather}
The constants $c_{\rm src},C_{\rm src}$ are defined in \eqref{as:F2}.
\end{proof}

\subsection{Bounds for rounding-induced second moments}
Rounding changes the residual and parameter second moments, which
in turn determine the variance of later rounding errors.
The quantities $x_t^+$ and $y_t^+$ bound these increases, respectively.
Lemma~\ref{sc:th:primary} controls these increases using the
exact-arithmetic moments and an explicit interaction term.

Use $k_2=S_2^{\mathcal A}$, $k_1=C_JS_1^{\mathcal A}$,
$k_0=C_JS_0^{\mathcal A}$ and the causal operators
$(\Lop_i z)_t=\sum_{u<t}h_uk_i(t,u)z_u$ for $z\ge0$.
Operator products act from right to left.
In this subsection suppress the upper-comparison superscript: write
$x=x^+$, $y=y^+$ and $f=f^+$, $\beta=\beta^+$, $m=m^+$.
Thus $m=f\beta/3$, including when a coefficient is zero.
With $B^0=\beta R+mU$, define the source parts  
\begin{gather}
 X_f=\Lop_2(fU),\quad X_B=\Lop_1B^0,\quad
 Y_f=\Lop_1(fU),\quad Y_b=\Lop_0(\beta R),\quad Y_m=\Lop_0(mU),
 \label{sc:eq:sourcesplit}\\
 \Psi_x=X_f+X_B,\qquad \Psi_y=Y_f+Y_b+Y_m.
 \label{sc:eq:profiles}
\end{gather}
We use the bounds of Theorem~\ref{as:theorem-clean} and the 
constant $C_K$ from Lemma~\ref{si:lem:products}. Consider
\begin{gather}
 c_R=\min(c_{\rm src},1),\quad
 C_U^{\rm fb}=\max\{C_{\rm src},C_{\rm src}(1+\tau_*)^\nu,1\},\notag\\
 C_{m\beta}=c_m\Theta/(c_bc_R\sigma^2),\quad
 C_{fr}=c_f\Theta/\sigma^2,\quad C_{yA}=C_Jc_f\Theta C_U^{\rm fb}C_K,\notag\\
 C_8=(C_Jc_fc_b\Theta^2+C_{yA}+C_Jc_m\Theta^2C_U^{\rm fb})(1+C_{fr}),\notag\\
 \bar C_0=c_f\max(\Theta,c_b/c_m),\quad
 C_\Theta=2C_Jc_f\Theta C_K/c_P+C_{yA}\max(1,C_{m\beta}),\notag\\
 C'_\Theta=C_Jc_m\Theta^2/c_P,\qquad
 C_7=c_b\Theta C_JC_K/c_P.\label{as:F4}
\end{gather}
These coefficients depend only on the fixed model, comparison bounds and
minimum cumulative learning rate $\tau_*$, before the rate and feedback thresholds are chosen.

Assume Assumption~\ref{as:assumption-schedules}. Then all cumulative-time
gaps are at most $M^a$, and Theorem~\ref{as:theorem-clean} gives
$R_u\ge c_Rr_T$ throughout the history, with
$c_R=\min(c_{\rm src},1)$ as in \eqref{as:F4}.
Here sequence products, such as $fU$ or $\beta R$, are evaluated
pointwise in time. The kernel comparison \eqref{si:eq:volcomp} gives
\begin{gather}
 \mathsf D=\Lop_2+\Lop_1\beta,\quad
 \mathsf E=\Lop_2f+\Lop_1m,\quad
 \mathsf F=\Lop_1+\Lop_0\beta,\quad
 \mathsf H=\Lop_1f+\Lop_0m,
 \label{sc:eq:fourblocks}\\
 \Xi=\mathsf E\Psi_y,
 \label{sc:eq:Xi}\\
 x^+\le\Psi_x+\mathsf Dx^++\mathsf Ey^+,\qquad
 y^+\le\Psi_y+\mathsf Fx^++\mathsf Hy^+.
 \label{sc:eq:scalarineq}
\end{gather}

\begin{lemma}[Bounds for rounding-induced second moments]\label{sc:th:primary}
On the common event, every admissible deterministic schedule satisfies,
simultaneously at every $0\le t\le T$,
\begin{equation}\label{sc:eq:primaryconclusion}
 y_t^+\le4\Psi_y(t),\qquad x_t^+\le4\Psi_x(t)+8\Xi(t).
\end{equation}
\end{lemma}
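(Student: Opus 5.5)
We solve the coupled causal inequalities \eqref{sc:eq:scalarineq} by iterating them and showing that the feedback operators $\mathsf D,\mathsf E,\mathsf F,\mathsf H$ are contractions in suitably weighted sup-norms, with the relevant contraction factors controlled by the admissibility constant $\kappa$ of Assumption~\ref{as:assumption-schedules}. All sequences are nonnegative, and all operators are strictly causal with nonnegative kernels. We can therefore argue by strong induction on $t$, using a bootstrap. Suppose $y_u^+\le4\Psi_y(u)$ and $x_u^+\le4\Psi_x(u)+8\Xi(u)$ for all $u<t$. Substitute these into the right-hand sides of \eqref{sc:eq:scalarineq} at time $t$. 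Since $x_0^+=y_0^+=0$, the base case is trivial. The goal is then to show
\[
 \mathsf D(4\Psi_x+8\Xi)+\mathsf E(4\Psi_y)\le 3\Psi_x+7\Xi,\qquad
 \mathsf F(4\Psi_x+8\Xi)+\mathsf H(4\Psi_y)\le3\Psi_y .
\]
The term $\mathsf E\Psi_y=\Xi$ appears exactly. This is why $\Xi$ is kept as a separate term and not absorbed into $\Psi_x$.

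The key step is a set of \emph{one-step gain} bounds of the form $\mathsf O\Psi_\bullet\le\epsilon\Psi_\bullet$ (or $\le\epsilon\Psi_\bullet+\Xi$), with $\epsilon\le1/8$ once $\kappa$ and $\rho_{\rm sch}$ are small. We treat each block separately.

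For $\mathsf D=\Lop_2+\Lop_1\beta$, the part $\Lop_2$ is a contraction by \eqref{si:eq:CKexports} weighted by $h_v\le\rho_v\eta_v$. This gives $\Lop_2\Lop_i\le C_K\rho_{\rm sch}\Lop_i$, and also $\Lop_2 X_f\le C_K\rho_{\rm sch}X_f$ via the composition rule $(2,2,2)$ and $(2,1,1)$. The part $\Lop_1\beta$ applied to $X_f=\Lop_2(fU)$ composes into a $\Lop_1$-type kernel. We compare it with $X_B$, whose source $\beta R$ is bounded below by $c_R r_T\beta$. The gain is then $\Gamma_{f\to b}$ or $\Gamma_\times$, which is at most $\kappa$.

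For $\mathsf F=\Lop_1+\Lop_0\beta$ applied to $\Psi_x$, we use \eqref{si:eq:schedexp-min11} to convert $\Lop_1\Lop_1$ into $\Lop_0$ at cost $C_K\rho_{\rm sch}$, so it is dominated by $Y_b$ and $Y_m$. The term $\Lop_0\beta\Lop_2(fU)$ is the cross-update interaction and is controlled by $\Gamma_{f\to b}$. For $\mathsf H\Psi_y$, the terms $\Lop_0 m\Lop_0$ and $\Lop_1 f\Lop_0$ are controlled by $\Gamma_{\rm stab}$, using $K_{JJ}\asymp M$ from \eqref{cl:JJ}: each step contributes $h_u q_{\fw}q_{\bw}/M$ up to the constants $C'_\Theta,C_\Theta$.

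The terms in $8\Xi$ under $\mathsf D$ and $\mathsf F$ are reduced the same way, since $\Xi=\mathsf E\Psi_y$ is itself a composition. Throughout, the bounds \eqref{sc:eq:alltimeU} and $R_u\ge c_Rr_T$ let us pull the sources $U_u$ and $R_u$ out at the later time, up to $C_U^{\rm fb}$. This is where the constants in \eqref{as:F4} enter.

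Once all gains are at most $1/8$, summing gives the displayed inequalities, and the induction closes. No growth estimate in $T$ is needed, because the kernels are summable in the cumulative clock.

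The main obstacle is the cross-location composition $\Lop_1\beta\,\Lop_2(fU)$, and its mirror $\Lop_0\beta\,\Lop_1(fU)$. Here a forward-rounding source must be compared with a backward-rounding profile that has a different source scale. The normalization in $\Gamma_{f\to b}$, with denominator $MR_t+q_{\fw,t}U_t$, is designed for exactly this comparison. I would first verify it on the direct sources, and only then propagate it through the second iterate.
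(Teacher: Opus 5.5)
Your skeleton is the paper's: strong induction on the causal system \eqref{sc:eq:scalarineq}, one-step gain bounds on the source profiles, $\Xi=\mathsf E\Psi_y$ kept as a separate term, and $\Gamma_{f\to b}$ used to compare $\beta_tX_f(t)$ with $B^0_t=\beta_tR_t+m_tU_t$. The gap is at the step you flag as the main obstacle. You dispose of it by saying the $8\Xi$ terms under $\mathsf D$ and $\mathsf F$ are ``reduced the same way''. They are not.

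The problematic pieces are $\mathsf L_1(\beta\,\mathsf L_2(f\Psi_y))$ in $\mathsf D\Xi$ and $\mathsf L_0(\beta\,\mathsf L_2(f\Psi_y))$ in $\mathsf F\Xi$. The gain $\Gamma_{f\to b}$ is normalized for the source $fU$. But $\Psi_y$ contains $Y_b=\mathsf L_0(\beta R)\asymp\sum_{u<t}h_uq_{{\rm bw},u}R_u$, which is not dominated by $U_t$: at constant rates, $Y_b/U\asymp\rho q_{\rm bw}s\,\tau_t^{(b-1)/a}$ for $\tau_t\ge1$. A plain composition estimate does not help either. It must bound the intermediate $\beta_v$ by $c_b\Theta/M$, which loses the factor $q_{\rm bw}$, and it leaves $\rho M^{-1}\mathsf L_1(f\Psi_y)$. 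That term is dominated neither by $\Xi$ (since $S_1/M\gg S_2$ once the clock gap is much larger than $M$) nor by $X_B$ (whose source carries $q_{\rm bw}$).

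The missing ingredient is the pointwise local-source bound
\[
 f_t\Psi_y(t)\le\lambda\,(f_tU_t+r_T),\qquad
 \lambda=C_8\max(\rho_{\rm sch},\Gamma_{\rm stab},\Gamma_\times),
\]
whose backward part is $f_tY_b(t)\le C_Jc_fc_b\Theta^2\,\Gamma_\times r_T$. This is where $\Gamma_\times$ (earlier backward rounding feeding later forward rounding) enters; $\mathsf L_1(\beta X_f)$ needs only $\Gamma_{f\to b}$. The local bound gives $\mathsf L_2(f\Psi_y)\le\lambda X_f+\lambda C_K\rho_{\rm sch}r_T$. Bounding the product $\beta_t\,\mathsf L_2(f\Psi_y)(t)$ pointwise, using $\Gamma_{f\to b}$ and $R_t\ge c_Rr_T$, then yields $\beta_t\,\mathsf L_2(f\Psi_y)(t)\le\lambda(\gamma+e)B^0_t$, with $\gamma\propto\Gamma_{f\to b}$ and $e=C_K\rho_{\rm sch}/c_R$. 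Only at that point do $\mathsf D\Xi\le(\epsilon+\zeta)\Xi+\lambda(\gamma+e)X_B$ and the analogous bound for $\mathsf F\Xi$ close the induction.

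There are also two smaller errors and one omission.
\begin{itemize}
\item The term $\mathsf L_1(f\,\mathsf L_0 z)$ in $\mathsf H\Psi_y$ is not a $\Gamma_{\rm stab}$ gain. For $z=\beta R$ it involves cross-time products $q_{{\rm fw},v}q_{{\rm bw},u}$ with $u<v$, so it is nonzero when the two locations round at disjoint times, where $\Gamma_{\rm stab}=0$. It is a $\rho_{\rm sch}$ gain, obtained from $f\le c_f\Theta/M$, the $S_1$ mass bound and $S_0\ge c_PM$. Only $\mathsf L_0(m\,\mathsf L_0 z)$ is controlled by $\Gamma_{\rm stab}$.
\item $\mathsf L_0(mY_f)$ is not handled by pulling $U$ forward. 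The bound $Y_f(t)\lesssim\rho_{\rm sch}\max(U_t,1)$ is not $\lesssim U_t$ early in training. The paper absorbs the floor via $m_t\le C_{m\beta}\beta_tR_t$, comparing with $Y_b$ rather than $Y_m$.
\item You omit the terms $\mathsf L_1(\beta\,\mathsf L_1z)$ and $\mathsf L_0(\beta\,\mathsf L_1 z)$. These follow from $\beta\le c_b\Theta/M$ and the $\min\{MS_1,S_0\}$ composition bound.
\end{itemize}
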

\begin{proof}
Let us consider $w_t=\max(U_t,1)$ and define
\begin{gather}
 A_\times=C_Jc_fc_b\Theta^2,\quad A_m=C_Jc_m\Theta^2C_U^{\rm fb},\quad
 \delta=\max(\rho_{\rm sch},\Gamma_{\rm stab},\Gamma_\times),\notag\\
 \lambda=C_8\delta,\qquad
 \gamma=C_3\bar C_0\Gamma_{f\to b},\label{sc:eq:localconstants}\\
 \epsilon_0=C_K\rho_{\rm sch},\quad\epsilon=C_JC_K\rho_{\rm sch},\quad
 \zeta=C_7\rho_{\rm sch},\quad e=\epsilon_0/c_R,\quad
 \epsilon_H=C_\Theta\rho_{\rm sch}+C'_\Theta\Gamma_{\rm stab}. \label{sc:eq:smallgains}
\end{gather}
The fixed admissibility thresholds are chosen so that
\begin{gather}
 \lambda\le\tfrac12,\quad\gamma\le\tfrac1{128},\quad
 \epsilon_0\le\epsilon\le\tfrac1{64},\quad\zeta\le\tfrac1{64},\quad
 e\le\tfrac1{128},\quad\epsilon_H\le\tfrac18.
 \label{sc:eq:numericalgains}
\end{gather}
The exact-arithmetic moment bounds, kernel masses in Lemma~\ref{si:lem:products}
and the definitions of the feedback gains give
\begin{gather}\label{sc:eq:allhistoryw}
 U_u\le C_U^{\rm fb}w_t\ (u\le t),\qquad
 Y_f(t)\le C_{yA}\rho_{\rm sch}w_t,\qquad
 Y_m(t)\le A_m\Gamma_{\rm stab}w_t,\\
 f_tY_b(t)\le A_\times\Gamma_\times r_T.
 \label{sc:eq:backforward}
\end{gather}
Using $f_tw_t\le f_tU_t+C_{fr}r_T$ and the definition of $C_8$ gives
the first bound below. For the second, the spectral comparison and
forward-to-backward gain give
$\beta_tX_f(t)\le C_3\Gamma_{f\to b}
[c_f\theta_G\beta_tR_t+(c_fc_b/c_m)m_tU_t]\le\gamma B_t^0$;
at $\beta_t=0$ it follows from nonnegativity. Thus
\begin{equation}\label{sc:eq:localsources}
 f_t\Psi_y(t)\le\lambda(f_tU_t+r_T),\qquad
 \beta_tX_f(t)\le\gamma B_t^0.
\end{equation} 
Reordering the causal sums and applying Lemma~\ref{si:lem:products} gives,
for every $z\ge0$,
\begin{gather}
 \Lop_2\Lop_2z\le\epsilon_0\Lop_2z,\quad
 \Lop_2\Lop_1z\le\epsilon_0\Lop_1z,\quad
 \Lop_1\Lop_2z\le\epsilon_0\Lop_1z,\quad
 \Lop_1\Lop_1z\le\epsilon\Lop_0z,
 \label{sc:eq:blockcomps}\\
 \Lop_1(\beta\Lop_1z)\le\zeta\Lop_1z,\qquad
 \Lop_0(\beta\Lop_1z)\le\zeta\Lop_0z.
 \label{sc:eq:betacomps}
\end{gather}
Use \eqref{si:eq:CKexports} and \eqref{si:eq:schedexp-min11} for the first line.
For the second, use
$\beta_v\le c_b\Theta/M$, the $MS_1$ composition bound and the
$S_1$ mass bound; $S_0\ge c_PM$ gives the $\Lop_0$ comparison.
The same bounds yield
$\Lop_1(f\Psi_y)\le(C_Jc_f\Theta C_K/c_P)\rho_{\rm sch}\Psi_y$ and
$\Lop_0(m(Y_b+Y_m))\le C'_\Theta\Gamma_{\rm stab}(Y_b+Y_m)$.
Moreover $m_t\le C_{m\beta}\beta_tR_t$ gives
$m_tY_f(t)\le C_{yA}\rho_{\rm sch}\max(1,C_{m\beta})(m_tU_t+\beta_tR_t)$.
Applying $\Lop_0$ proves
\begin{equation}\label{sc:eq:Hgain}
 \mathsf H\Psi_y\le\epsilon_H\Psi_y.
\end{equation} 
Apply these compositions and \eqref{sc:eq:localsources} to the source
profiles to obtain
\begin{gather}
 \mathsf D\Psi_x\le(\epsilon+\zeta+\gamma)\Psi_x,\qquad
 \mathsf F\Psi_x\le(\epsilon+\zeta+\gamma)\Psi_y.
 \label{sc:eq:sourceblockbounds}
\end{gather}
Let $\Xi=E_f+E_m$ with $E_f=\Lop_2(f\Psi_y)$ and
$E_m=\Lop_1(m\Psi_y)$.
The local source bound and $S_2$ give
$E_f\le\lambda X_f+\lambda\epsilon_0r_T$ and
$\beta E_f\le\lambda(\gamma+e)B^0$, since $R_t\ge c_Rr_T$.
With \eqref{sc:eq:blockcomps}--\eqref{sc:eq:Hgain}, these imply
\begin{gather}
 \mathsf D\Xi\le(\epsilon+\zeta)\Xi+\lambda(\gamma+e)X_B,\qquad
 \mathsf F\Xi\le[(\epsilon+\zeta)\epsilon_H+\lambda(\gamma+e)]\Psi_y.
 \label{sc:eq:crossblockbounds}
\end{gather}
All profiles vanish initially and depend only on earlier indices.
Substituting the induction bounds into \eqref{sc:eq:scalarineq} and using
\eqref{sc:eq:numericalgains} gives $x_t^+\le\tfrac{39}{32}\Psi_x(t)+\tfrac{17}{4}\Xi(t)
       \le4\Psi_x(t)+8\Xi(t)$ and  $y_t^+\le\tfrac74\Psi_y(t)\le4\Psi_y(t).$
\end{proof}

\subsection{From state moments to output prediction risk}\label{sc:sec:outputassembly}
Squared output filters pair two state indices. Their total cumulative-time
gap can therefore reach $2M^a$, even though $\tau_T\le M^a$.
Lemma~\ref{sc:lem:afterfilter} propagates the state estimates through these
filters, preparing the output-risk bound in Theorem~\ref{sc:th:output}.
\begin{lemma}[Finite output-filter compositions]
\label{sc:lem:paircells}\label{sc:lem:afterfilter}
Suppose the sketch satisfies \eqref{as:U1}, and the schedule satisfies
$0\le\eta_tg_1\le c_0<1$, $\tau_T\le M^a$ and
$\rho_t\le\rho_{\rm sch}$, with the notation of \eqref{eq:clock}.
There is a constant $K_R>0$, depending only on the fixed model and comparison
bounds and $c_0$, such that for either output in
\eqref{app:eq:output-weights} and every $u<T$, the kernels in
\eqref{si:eq:Akernels} and responses in \eqref{eq:output-response} satisfy
\begin{gather}
 \sum_{u<t<T}h_tS_2^H(t,u)\mathcal R_t^{\rm out}(G)
       \le K_R\rho_{\rm sch}\mathcal R_u^{\rm out}(G),\label{sc:eq:after22}\\
 \sum_{u<t<T}h_tS_1^H(t,u)\mathcal R_t^{\rm out}(G)
       \le K_R\rho_{\rm sch}\mathcal R_u^{\rm out}(J),\label{sc:eq:after12}\\
 M^{-1}\sum_{u<t<T}h_tS_1^H(t,u)\mathcal R_t^{\rm out}(J)
       \le K_R\rho_{\rm sch}\mathcal R_u^{\rm out}(J).\label{sc:eq:after11}
\end{gather}
The same constant applies to all such schedules and both outputs.
\end{lemma}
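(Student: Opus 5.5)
We expand everything in the eigenbasis of $G$ and reduce the three inequalities to scalar causal-convolution estimates of the type in Lemma~\ref{si:lem:products}(iv). By \eqref{as:output-filter} and \eqref{eq:schedule-filters}, $\mathcal R_t^{\rm out}(A)=\tfrac12\sum_j g_j\ell_{\iout,j,t}^2\phi_j^\top A\phi_j$. For $A=G$ this equals $\tfrac12\sum_j g_j^2\ell_{\iout,j,t}^2$. For $A=J$ the factor $\phi_j^\top J\phi_j$ lies in $[c_J,C_J]$ by \eqref{as:U1}, so $\mathcal R_t^{\rm out}(J)\asymp\tfrac12\sum_jg_j\ell_{\iout,j,t}^2$. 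Writing $\ell_{\iout,j,t}^2=\sum_{v,w>t}w_vw_w\pi_j(v,t+1)\pi_j(w,t+1)$, each response becomes a weighted double sum over later state pairs $(v,w)$ of the paired spectral sums $\Lambda_i^{\rm pair}(t;v,w)$ with $i=2$ or $1$. By \eqref{sc:eq:pairsandwich}, each such sum is comparable to $\varphi_i(\tau_v+\tau_w-2\tau_{t+1})$. The lower bound needs the pair gap to be at most $2M^a$, which holds because $\tau_T\le M^a$. The same reduction applied at index $u$ gives the right-hand sides, namely $\sum_{v,w}w_vw_w\varphi_i(\tau_v+\tau_w-2\tau_{u+1})$ with $i=2$ for $\mathcal R_u(G)$ and $i=1$ for $\mathcal R_u(J)$.

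Next we bound the left sides. Since $S_k^H(t,u)\le C_P\varphi_k(\ell_{t,u})$ for $k=1,2$, with $\varphi_k(x)=(1+x)^{\alpha-k}$, after interchanging the finite sums each inequality reduces, for fixed $(v,w)$, to a one-dimensional causal sum over $u<t<\min(v,w)$:
\[
 \sum_{u<t<\min(v,w)}h_t\,\varphi_k(\tau_t-\tau_{u+1})\,\varphi_i(\tau_v+\tau_w-2\tau_{t+1}).
\]
We factor $h_t=\rho_t\eta_t\le\rho_{\rm sch}\eta_t$ and treat the remaining sum as a Riemann sum in cumulative time, exactly as in \eqref{si:eq:F12}--\eqref{si:eq:scalconv}. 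Set $a_t=\tau_t-\tau_{u+1}$ and $b_t=\tau_v+\tau_w-2\tau_{t+1}$. Then $2a_t+b_t+2\eta_t=d_u:=\tau_v+\tau_w-2\tau_{u+1}$, so $\max(2a_t,b_t)\gtrsim d_u$, and we split into the same two cases as in \eqref{si:eq:scalconv}.

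The three cases go as follows.
\begin{itemize}
\item For \eqref{sc:eq:after22} the exponents are $(k,i)=(2,2)$. Each factor is integrable in cumulative time, since $\varphi_2$ has exponent $\alpha-2<-1$, so the sum is at most $C\varphi_2(d_u)$.
\item For \eqref{sc:eq:after12} the exponents are $(2,2)$ on the left, but the target is $\varphi_1(d_u)$. Since $\varphi_2\le\varphi_1$, this is immediate from the first case.
\item For \eqref{sc:eq:after11} the exponents are $(1,1)$, and the prefactor $M^{-1}$ is essential. The $(1,1)$ convolution gives $(1+d_u)^{2\alpha-1}=(1+d_u)^{\alpha}\varphi_1(d_u)$. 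Because $d_u\le 2M^a$, we have $(1+d_u)^\alpha\le CM$, and the $M^{-1}$ cancels this factor, as in the last case of \eqref{si:eq:schedexp-min11}.
\end{itemize}
Summing back over $(v,w)$ with the nonnegative weights $w_vw_w$ and using the lower pair bound at $u$ yields all three inequalities. The constant $K_R$ collects $K_S^2$, $C_P$, $C_{\rm sc}$, $C_J/c_J$, and the factor $2^{\alpha}$ from comparing $(1+d_u)^\alpha$ with $M$. None of these depends on the schedule or on the output.

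The main obstacle is that after squaring the output filter there are two terminal indices, and the inner time sum is cut off at $\min(v,w)$ while its profile depends on $\tau_v+\tau_w$. We must check that the convolution estimate holds uniformly with the doubled gap $d_u$ up to $2M^a$. This is where the factor $2$ in $b_t$ and the modified constant $A=1+\bar\eta$ enter. A secondary point is that the pair lower bound \eqref{sc:eq:pairsandwich} requires $d\le2M^a$, which is guaranteed by $\tau_T\le M^a$. Zero-rate steps contribute nothing, so they cause no difficulty.
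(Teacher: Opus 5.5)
Your framework matches the paper's proof: eigenbasis pair sums $\Lambda_i^{\rm pair}$, the bound $S_i^H\le C_P\varphi_i$, the pair sandwich \eqref{sc:eq:pairsandwich} with its $2M^a$ window, $h_t\le\rho_{\rm sch}\eta_t$, a split at the larger argument, and $(1+d)^\alpha\le 3^\alpha M$ for the third bound. Your treatments of \eqref{sc:eq:after22} and \eqref{sc:eq:after11} are fine.

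The gap is in \eqref{sc:eq:after12}. Its left side carries the kernel $S_1^H(t,u)$, not $S_2^H(t,u)$. So the exponent pair is $(k,i)=(1,2)$, not $(2,2)$. Your step ``immediate from the first case since $\varphi_2\le\varphi_1$'' therefore bounds the wrong quantity. By \eqref{si:eq:explicitsp}, $S_1^H(t,u)/S_2^H(t,u)\asymp 1+\tau_t-\tau_{u+1}$, which is unbounded. Hence the sum with $S_1^H$ is not controlled by the sum with $S_2^H$. Weakening the target from $\varphi_2(d_u)$ to $\varphi_1(d_u)$ on the right does not compensate for the larger kernel on the left. As written, \eqref{sc:eq:after12} is not proved.

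What you need is the mixed convolution, in your notation $a_t=\tau_t-\tau_{u+1}$, $b_t=\tau_v+\tau_w-2\tau_{t+1}$:
\[
 \sum_{u<t<\min(v,w)}\eta_t\,\varphi_1(a_t)\,\varphi_2(b_t)\le C\,\varphi_1(d_u).
\]
It follows from the same split.
\begin{itemize}
\item Where $a_t$ dominates (up to the fixed factor relating $1+d_u$ to $1+\max(a_t,b_t)$), use $\varphi_1(a_t)\lesssim\varphi_1(d_u)$. The $\varphi_2$-mass over the reversed cells of length $2\eta_t$ is at most the constant $F_2^{\rm pair}$.
\item Where $b_t$ dominates, use $\varphi_2(b_t)\lesssim\varphi_2(d_u)$ and $\sum\eta_t\varphi_1(a_t)\le F_1(1+d_u)^\alpha$. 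This gives $(1+d_u)^{2\alpha-2}\le(1+d_u)^{\alpha-1}=\varphi_1(d_u)$, because $\alpha=1/a<1$.
\end{itemize}
Combining this with the pair lower bound at $u$ (valid since $d_u\le 2\tau_T\le 2M^a$) and $\phi_j^\top J\phi_j\ge c_J$ yields \eqref{sc:eq:after12}. This is exactly the $(i,j,m)=(1,2,1)$ case in the paper's proof, and it is the only place where the inequality $2\alpha-2\le\alpha-1$ is needed.
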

\begin{proof}
Use the paired bounds of
Lemma~\ref{sc:lem:pairspectral}, keeping its exact $K_S$ and $2M^a$ window. For $u<t<n=\min(v,w)$ put $b_t=\tau_t-\tau_{u+1}$ and
$a_t=\tau_v+\tau_w-2\tau_{t+1}$, and
$d=\tau_v+\tau_w-2\tau_{u+1}$. Then
\begin{equation}\label{sc:eq:pairshift}
 a_t+2b_t+2\eta_t=d.
\end{equation}
The cells $[b_t,b_t+\eta_t]$ tile $[0,\tau_n-\tau_{u+1}]$;
$[a_t,a_t+2\eta_t]$ tile a subset of $[0,d]$ in reverse.
The $b_t$ sums obey \eqref{si:eq:F12}. Integrating on the reversed
cells of length $2\eta_t$ gives
$\sum\eta_t\varphi_1(a_t)\le F_1^{\rm pair}(1+d)^\alpha$ and
$\sum\eta_t\varphi_2(a_t)\le F_2^{\rm pair}$, where
$A_2=1+2\bar\eta$, $H_{\rm pair}=3(1+\bar\eta)$ and
\begin{gather}
 F_1^{\rm pair}=A_2/(2\alpha),\qquad
 F_2^{\rm pair}=A_2^{2-\alpha}/[2(1-\alpha)],\notag\\
 F_{\max}=\max(F_1,F_2,F_1^{\rm pair},F_2^{\rm pair}),\qquad
 C_{\rm pair}=2H_{\rm pair}^2F_{\max}.
 \label{sc:eq:paircellconstants}
\end{gather}
By \eqref{sc:eq:pairshift},
$1+d\le1+3\max(a_t,b_t)+2\bar\eta\le H_{\rm pair}[1+\max(a_t,b_t)]$.
Split at $a_t\ge b_t$.
Summing the smaller-argument profile yields coefficients
$H_{\rm pair}^{2-\alpha}(F_2+F_2^{\rm pair})$,
$H_{\rm pair}^{2-\alpha}F_1+H_{\rm pair}^{1-\alpha}F_2^{\rm pair}$,
and $H_{\rm pair}^{1-\alpha}(F_1+F_1^{\rm pair})$, respectively,
all bounded by $C_{\rm pair}$; here $2\alpha-2\le\alpha-1$.
Since $(1+d)^\alpha/M\le3^\alpha$ on the pair window,
\begin{gather}
 \sum\eta_t\varphi_2(b_t)\varphi_2(a_t)\le C_{\rm pair}\varphi_2(d),\quad 
 \sum\eta_t\varphi_1(b_t)\varphi_2(a_t)\le C_{\rm pair}\varphi_1(d),\notag\\
 M^{-1}\sum\eta_t\varphi_1(b_t)\varphi_1(a_t)
                         \le3^\alpha C_{\rm pair}\varphi_1(d).
 \label{sc:eq:pairscalarcomps}
\end{gather}

Substitute $S_i^H\le C_P\varphi_i$, the paired bounds at $a_t,d$,
and $h_t\le\rho_{\rm sch}\eta_t$:
\[
 \sum_{u<t<\min(v,w)}h_tS_i^H(t,u)\Lambda_j^{\rm pair}(t;v,w)
 \le K_{\rm pair}\rho_{\rm sch}\Lambda_m^{\rm pair}(u;v,w),
\]
for $(i,j,m)=(2,2,2),(1,2,1),(1,1,1)$, with an extra $M^{-1}$ on
 the last left side, where
\begin{equation}\label{sc:eq:KR}
 K_{\rm pair}=3^\alpha C_PK_S^2C_{\rm pair},\qquad
 K_R=K_{\rm pair}\max(1,1/c_J,C_J/c_J).
\end{equation}
Define $\Lambda_J^{\rm pair}=\sum_jg_jJ_{jj}\pi_j(v,u+1)\pi_j(w,u+1)$.
The bounds in the eigenbasis of $G$ give $c_J\Lambda_1^{\rm pair}\le\Lambda_J^{\rm pair}\le C_J\Lambda_1^{\rm pair}$, and
\[
 \mathcal R_u^{\rm out}(G)=\tfrac12\sum_{v,w>u}w_v^{\rm out}w_w^{\rm out}\Lambda_2^{\rm pair},
 \qquad
 \mathcal R_u^{\rm out}(J)=\tfrac12\sum_{v,w>u}w_v^{\rm out}w_w^{\rm out}\Lambda_J^{\rm pair}.
\]
Multiply by $w_v^{\rm out}w_w^{\rm out}/2$ and reorder.
The factors $1,1/c_J,C_J/c_J$ give $K_R$, also for empty sums and zero cells.
\end{proof}

\paragraph{The SR-induced risk and its remainder.}
The direct and temporal-interaction contributions are defined in
\eqref{as:output-sources}; their sum is $\N^{\rm out}$ in
\eqref{as:native}. The risk increase $\Delta^{\rm out}$ is defined in
\eqref{eq:penalty}. For the comparison traces $x_t^+,y_t^+$ of
Appendix~\ref{app:model}, define
\begin{equation}\label{sc:eq:remainder}
 \rem^{\rm out}=\sum_{t<T}h_t[(2x_t^++f_ty_t^+)\mathcal R_t^{\rm out}(G)
 +(\beta_tx_t^++m_ty_t^+)\mathcal R_t^{\rm out}(J)].
\end{equation}
Here and below, $f,\beta,m$ are the upper coefficients in
\eqref{eq:rounding-source-coefficients}.

The following theorem bounds prediction risk using the state-moment
estimates. Under admissibility, repeated rounding interactions contribute
at most a fixed multiple of the leading terms.
Proposition~\ref{as:prop-native-lower} then supplies the matching lower
bound for the full rounding expression.
\begin{theorem}[Bounds for the SR-induced output risk]\label{sc:th:output}
On $\Omega_M$, under Assumption~\ref{as:assumption-schedules}, there is
a fixed constant $C_{13}>0$ such that, for either output in
\eqref{app:eq:output-weights},
\begin{align}
 \Psi_-^{\rm out}&\le\Delta^{\rm out}\le(1+C_{13})\N^{\rm out},
 \label{sc:eq:penaltytarget}\\
 \rem^{\rm out}&\le C_{13}[\Psi_+^{\rm out}+\Conv_{b\to f}^{\rm out}].
 \label{sc:eq:laststrong}
\end{align}
The bound on $\Gamma_{f\to b}$ in \eqref{as:class}, defined in
\eqref{as:allclock-gains}, controls the forward-to-backward interaction
by a fixed multiple of the direct contribution $\Psi_+^{\rm out}$.
\end{theorem}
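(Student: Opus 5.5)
The plan starts from the risk identity \eqref{as:actual-identity}, $\Delta^{\rm out}=\sum_{u<T}h_u\mathcal R_u^{\rm out}(Q_u)$, together with the sandwich \eqref{arithmetic:eq:sandwich}, $Q_u^-(X_u^-)\preceq Q_u\preceq Q_u^+(X_u^+)$. Because $\mathcal R_u^{\rm out}$ is linear and order preserving on positive semidefinite matrices, both sides of \eqref{sc:eq:penaltytarget} reduce to bounding $\sum_u h_u\mathcal R_u^{\rm out}(Q_u^\pm(X_u^\pm))$.

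\emph{Lower bound.} Since $X_u^-\succeq0$, dropping every term of \eqref{core:comparison-innovations} that involves $X_u^-$ leaves $Q_u^-(X_u^-)\succeq f_u^-U_uG+(\beta_u^-R_u+m_u^-U_u)J=D_u^-$. Summing gives $\Delta^{\rm out}\ge\Psi_-^{\rm out}$ directly.

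\emph{Upper bound.} Expand the upper comparison as
\[
Q_u^+(X_u^+)=D_u^++GX_u^+G+[x_u^++f_uy_u^+]G+[\beta_ux_u^++m_uy_u^+]J .
\]
Using $GX G\preceq\tr(GX)G$, the non-direct part is at most $(2x_u^++f_uy_u^+)G+(\beta_ux_u^++m_uy_u^+)J$. Hence
\[
\Delta^{\rm out}\le\Psi_+^{\rm out}+\rem^{\rm out}.
\]
It then suffices to prove \eqref{sc:eq:laststrong}, because $\Psi_+^{\rm out}+\Conv_{b\to f}^{\rm out}\le\N^{\rm out}$ gives the constant $1+C_{13}$. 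To bound $\rem^{\rm out}$, insert Lemma~\ref{sc:th:primary}: $y_t^+\le4\Psi_y(t)$ and $x_t^+\le4\Psi_x(t)+8\Xi(t)$. Then unfold each source profile, $X_f,X_B,Y_f,Y_b,Y_m,\Xi$, as causal sums $\sum_{u<t}h_u k_i(t,u)(\cdot)_u$. Interchanging the sums over $u<t<T$ produces inner sums of exactly the forms in Lemma~\ref{sc:lem:afterfilter}. I will handle the pieces as follows.
\begin{itemize}
\item $X_f$ inside $x_t\mathcal R_t(G)$: apply \eqref{sc:eq:after22} to get $K_R\rho_{\rm sch}\sum_u h_uf_uU_u\mathcal R_u(G)$, which is part of $\Psi_+$.
\item $X_B$ inside $x_t\mathcal R_t(G)$: apply \eqref{sc:eq:after12} to get $\sum_uh_uB_u^0\mathcal R_u(J)$, also part of $\Psi_+$.
\item $Y_f$ terms weighted by $f_t$ or $m_t$: use $f_t\le c_f\Theta/M$ and \eqref{sc:eq:after11}.
\item $\beta_tx_t\mathcal R_t(J)$: use $\beta_t\le c_b\Theta/M$ and \eqref{sc:eq:after11}.
\item $\Xi$: its extra factor $f$ or $m$ is absorbed through \eqref{sc:eq:localsources} into $\lambda(f_tU_t+r_T)$, and $r_T\lesssim R_t$ by \eqref{si:eq:S1exports}.
\item $Y_m$: its extra factor $m$ is absorbed through $\Gamma_{\rm stab}$.
\end{itemize}
Each of these pieces is then a small multiple of a direct term in $\Psi_+^{\rm out}$.

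The genuinely non-direct pieces come from pairing a backward source with a later forward coefficient, namely $f_tY_b(t)\mathcal R_t(G)$. Since $Y_b=\Lop_0(\beta R)$ and $k_0=C_JS_0^{\mathcal A}\le C\,K_{JJ}^H$ by \eqref{si:eq:C3} and \eqref{cl:JJ}, this piece is bounded by a constant times $\Conv_{b\to f}^{\rm out}$. This is why that term survives in \eqref{sc:eq:laststrong}.

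The forward-to-backward piece $\beta_tX_f(t)\mathcal R_t(J)$ is where the main care is needed. A direct bound would produce $\Conv_{f\to b}$. Instead, use the second inequality of \eqref{sc:eq:localsources}, $\beta_tX_f(t)\le\gamma B_t^0$, with $\gamma\propto\Gamma_{f\to b}\le\kappa$, to bound it by $\gamma$ times the direct $J$ part of $\Psi_+$. This is the final remark of the theorem.

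I expect the main obstacle to be the bookkeeping that every composed kernel lands on the correct response, $\mathcal R(G)$ versus $\mathcal R(J)$, with the correct power of $M$. Paired output filters have gaps up to $2M^a$, so the state kernels $\mathcal A$ must first be replaced by $H$ via \eqref{si:eq:C3} before Lemma~\ref{sc:lem:afterfilter} applies. Collecting the finitely many constants $K_R,C_3,C_J,\Theta,\kappa,\rho_{\rm sch}$ then defines $C_{13}$.
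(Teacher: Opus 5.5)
\textbf{There is a genuine gap.} Your overall architecture matches the paper's. The following steps are fine:
\begin{itemize}
\item the identity \eqref{as:actual-identity} with the sandwich \eqref{arithmetic:eq:sandwich}, and the lower bound from $Q_u^-\succeq D_u^-$;
\item the reduction $\Delta^{\rm out}\le\Psi_+^{\rm out}+\operatorname{Rem}^{\rm out}$ and the insertion of Lemma~\ref{sc:th:primary};
\item the output-filter compositions for $X_f$, $X_B$ and $\beta X_B$;
\item the bound $Y_b\le K_bZ^{b,+}$ producing $\operatorname{Conv}_{b\to f}$, and $\beta_tX_f(t)\le\gamma B_t^0$ for the forward-to-backward piece.
\end{itemize}
The gap lies in the $\Psi_y$-weighted terms and in $\Xi$. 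There the tools you name do not yield bounds by $\Psi_+^{\rm out}+\operatorname{Conv}_{b\to f}^{\rm out}$ with a constant that is uniform in the precisions. Below, write $\langle z,Q\rangle_{\rm out}=\sum_{t<T}h_tz_t\mathcal R_t^{\rm out}(Q)$, and set $P_f=\langle fU,G\rangle_{\rm out}$, $P_b=\langle\beta R,J\rangle_{\rm out}$, $P_m=\langle mU,J\rangle_{\rm out}$.

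\textbf{The $Y_f$ terms.} Take $f_tY_f(t)\mathcal R_t^{\rm out}(G)$, and likewise $m_tY_f(t)\mathcal R_t^{\rm out}(J)$. You bound the outer coefficient by $c_f\Theta/M$ and swap sums through Lemma~\ref{sc:lem:afterfilter}. This turns $\mathcal R_t^{\rm out}(G)$ into $\mathcal R_u^{\rm out}(J)$ and leaves at best $C\rho_{\rm sch}M^{-1}\sum_uh_uf_uU_u\mathcal R_u^{\rm out}(J)$.

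That quantity is not a multiple of $P_f$. The ratio $\mathcal R_u^{\rm out}(J)/(M\mathcal R_u^{\rm out}(G))$ behaves like $(1+\tau_T-\tau_{u+1})/M$. Already for the uniform average at constant rates, the ratio of the two sums is of order $\tau/M$, which can reach $M^{a-1}$. For $m_tY_f$, comparing with $P_m$ would require $q_{\bw}$ to be bounded below.

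The missing idea is a \emph{pointwise} bound on the parameter sources:
\begin{itemize}
\item use $U_u\le K_UU_t$ from \eqref{sc:eq:alltimeU}, $f_u\le c_f\Theta/M$, and the full-history mass $\sum_{u<t}h_uS_1^{\mathcal A}(t,u)\le C_K\rho_{\rm sch}M$;
\item these give $Y_f(t)\lesssim\rho_{\rm sch}U_t$, and similarly $Y_m(t)\lesssim\kappa U_t$, hence $\Psi_y(t)\le Y_b(t)+E_UU_t$;
\item then $\langle f\Psi_y,G\rangle_{\rm out}+\langle m\Psi_y,J\rangle_{\rm out}\le E_U(P_f+P_m)+K_b\operatorname{Conv}^{\rm out}_{b\to f}+\chi_bP_b$ for fixed constants $E_U,K_b,\chi_b$.
\end{itemize}
The last piece uses $m_tY_b(t)\le\chi_b\beta_tR_t$, which follows from $f_tY_b\le A_\times\kappa r_T$ and $R_t\ge c_Rr_T$. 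Your list does not treat the term $m_tY_b(t)\mathcal R_t^{\rm out}(J)$ at all.

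\textbf{The $\Xi$ term in the $G$-response.} For $\langle\Xi,G\rangle_{\rm out}$, the bound \eqref{sc:eq:localsources} gives $\mathsf L_2(f\Psi_y)\le\lambda X_f+\lambda r_T\mathsf L_2\mathbf 1$. The second piece contributes about $\lambda\rho_{\rm sch}r_T\sum_uh_u\mathcal R_u^{\rm out}(G)$. This has the size of the exact-SGD variance and carries no rounding factor, because $\lambda\ge C_8\rho_{\rm sch}$ even when every gain vanishes.

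With $q_{\bw}\equiv0$ and $q_{\fw}\to0$, you get $\mathcal N^{\rm out}=P_f\to0$ while this piece does not shrink. So no fixed $C_{13}$ comes out of this route. The $r_T$ in \eqref{sc:eq:localsources} is harmless inside Lemma~\ref{sc:th:primary}, and in your $\langle\beta\Xi,J\rangle_{\rm out}$ estimate, only because there it multiplies $\beta_t$ and becomes $\beta_tR_t$.

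In the $G$-response you need the finer decomposition $\Xi\le\mathsf L_2(fY_b)+\mathsf L_1(mY_b)+E_U(X_f+X_m)$. Through \eqref{sc:eq:bracketcomps}, its pieces land on $\operatorname{Conv}^{\rm out}_{b\to f}$, $P_b$, $P_f$ and $P_m$ respectively. This is how the paper closes the argument.
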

\begin{proof}
Use the source profiles in \eqref{sc:eq:sourcesplit}--\eqref{sc:eq:profiles}
and the causal operators defined immediately before them, with $\Xi$ from
\eqref{sc:eq:Xi}. Write $Z_t^b=Z_t^{b,+}$ and
$Y_t^{f,H}=Y_t^{f,+}$ from \eqref{eq:scheduled-conversion-sources}.
For this proof set
\begin{gather}
 E_U=K_U[C_Jc_f\Theta C_K\rho_{\rm sch}+C_Jc_m\Theta^2\kappa],\quad
 K_b=C_JC_3/c_J^2,\notag\\
 \chi_b=\frac{A_\times\kappa}{3c_R},\quad
 \chi_f=\frac{A_\times\kappa C_K\rho_{\rm sch}}{c_R},\quad
 D_{\rm out}=C_3K_R\rho_{\rm sch},\label{sc:eq:outconstants}
\end{gather}
where $A_\times$ is defined in \eqref{sc:eq:localconstants}.
Abbreviate
$\langle z,Q\rangle_{\rm out}=\sum_{t<T}h_tz_t\mathcal R_t^{\rm out}(Q)$
and
\[
 P_f=\langle fU,G\rangle_{\rm out},\quad
 P_b=\langle\beta R,J\rangle_{\rm out},\quad
 P_m=\langle mU,J\rangle_{\rm out},\quad
 P=P_f+P_b+P_m=\Psi_+^{\rm out}.
\]
The  parameter comparison \eqref{sc:eq:alltimeU} bounds
$Y_f+Y_m$ by $E_UU$. The kernel comparison in
Lemma~\ref{si:lem:products} gives
$C_JS_0^{\mathcal A}\le(C_J/c_J^2)K_{JJ}^{\mathcal A}\le K_bK_{JJ}^H$.
Together with \eqref{sc:eq:backforward}, $m=f\beta/3$ and
$R_t\ge c_Rr_T$, these yield
\begin{gather}
 \Psi_y(t)\le Y_b(t)+E_UU_t,\quad
 Y_b(t)\le K_bZ_t^b,\quad
 f_tY_b(t)\le A_\times\kappa r_T,\quad
 m_tY_b(t)\le\chi_b\beta_tR_t.\label{sc:eq:outsourcepoint}
\end{gather}
Define $X_m=\Lop_1(mU)$,
$E_f^b=\Lop_2(fY_b)$ and $E_m^b=\Lop_1(mY_b)$.
The definition of $\Xi$ and the first bound above imply
\begin{gather}
 \Xi\le E_f^b+E_m^b+E_U(X_f+X_m),\qquad
 \beta_tE_f^b(t)\le\chi_f\beta_tR_t.\label{sc:eq:outXidecomp}
\end{gather}
For the second inequality, use
$E_f^b\le A_\times\kappa r_TC_K\rho_{\rm sch}$ and $R_t\ge c_Rr_T$.

Lemma~\ref{sc:lem:afterfilter} and the kernel comparison in
Lemma~\ref{si:lem:products} give, for every $z\ge0$,
\begin{gather}
 \langle\Lop_2z,G\rangle_{\rm out}\le D_{\rm out}\langle z,G\rangle_{\rm out},
 \quad
 \langle\Lop_1z,G\rangle_{\rm out}\le C_JD_{\rm out}\langle z,J\rangle_{\rm out},
 \label{sc:eq:bracketcomps}\\
 \langle\beta\Lop_1z,J\rangle_{\rm out}
 \le c_b\Theta C_JD_{\rm out}\langle z,J\rangle_{\rm out}.\notag
\end{gather}
The last inequality also uses $\beta_t\le c_b\Theta/M$.
Applying these estimates to the source profiles gives, 
\begin{align*}
 \langle\Psi_x,G\rangle_{\rm out}
   +\langle\beta\Psi_x,J\rangle_{\rm out}
   &\le C(P+\Conv_{f\to b}^{\rm out}),\\
 \langle f\Psi_y,G\rangle_{\rm out}
   +\langle m\Psi_y,J\rangle_{\rm out}
   &\le C(P+\Conv_{b\to f}^{\rm out}),\\
 \langle\Xi,G\rangle_{\rm out}
   +\langle\beta\Xi,J\rangle_{\rm out}
   &\le C(P+\Conv_{b\to f}^{\rm out}+\Conv_{f\to b}^{\rm out}),
\end{align*}for a fixed $C$.
The first uses $X_f\le C_3Y^{f,H}$; the second uses
\eqref{sc:eq:outsourcepoint}. For the third, apply
\eqref{sc:eq:bracketcomps} to the decomposition
\eqref{sc:eq:outXidecomp}, using its pointwise bound for
$\langle\beta E_f^b,J\rangle_{\rm out}\le\chi_fP_b$.
Substitute into \eqref{sc:eq:remainder} and use the bounds
$x^+\le4\Psi_x+8\Xi$, $y^+\le4\Psi_y$ of
Lemma~\ref{sc:th:primary}. This gives a fixed $C_*^{\rm out}\ge1$ with
\begin{equation}\label{sc:eq:explicitremcoeff}
 \rem^{\rm out}\le C_*^{\rm out}
 [P+\Conv_{b\to f}^{\rm out}+\Conv_{f\to b}^{\rm out}].
\end{equation}
Substituting the gain definition \eqref{as:allclock-gains} and the
rounding coefficients gives
\[
 \beta_tY_t^{f,H}\le\Gamma_{f\to b}
 [c_f\theta_G\beta_tR_t+(c_fc_b/c_m)m_tU_t]
 \le\bar C_0\Gamma_{f\to b}B_t^0,
\]
where $B^0=\beta R+mU$ as in \eqref{sc:eq:sourcesplit}, and
$\bar C_0$ is defined in \eqref{as:F4}.
This holds also at $\beta_t=0$. Multiply by
$h_t\mathcal R_t^{\rm out}(J)$ and sum, using
$B_t^0J\preceq D_t^+$ and $\Gamma_{f\to b}\le\kappa$:
\begin{equation}\label{sc:eq:lastabsorb}
 \Conv_{f\to b}^{\rm out}\le\bar C_0\kappa\Psi_+^{\rm out}.
\end{equation}
Taking $C_*^{\rm out}$ large enough for both outputs, define
\begin{equation}\label{sc:eq:C13}
 C_{13}=(1+\bar C_0\kappa)C_*^{\rm out}.
\end{equation}
Then \eqref{sc:eq:explicitremcoeff} proves \eqref{sc:eq:laststrong} and
\begin{equation}\label{sc:eq:remtarget}
 \rem^{\rm out}\le C_{13}\N^{\rm out}.
\end{equation}
All coefficients are fixed independently of the schedule and output;
$C_{13}$ is not used to choose admissibility thresholds.

Since $GX_t^+G\preceq x_t^+G$, the upper source in
\eqref{core:comparison-innovations} is bounded by $D_t^+$ plus the
four terms in \eqref{sc:eq:remainder}. The covariance sandwich
\eqref{arithmetic:eq:sandwich} and output identity
\eqref{as:actual-identity} therefore give
$\Delta^{\rm out}\le\Psi_+^{\rm out}+\rem^{\rm out}$.
The same positivity argument gives $\Delta^{\rm out}\ge\Psi_-^{\rm out}$,
since $Q_t^-\succeq D_t^-$.
Equation~\eqref{sc:eq:remtarget} now proves \eqref{sc:eq:penaltytarget}.
\end{proof}

\begin{proposition}[Matching lower bound for SR-induced error]
\label{as:prop-native-lower}
For every full-row-rank $S$, finite deterministic schedule and either
output in \eqref{eq:output-weights},
\begin{equation}\label{as:strong-native-lower}
 \Delta^{\rm out}\ge\Psi_-^{\rm out}+
              \Conv_{b\to f}^{-,\rm out}+\Conv_{f\to b}^{-,\rm out}
              \ge c_{\rm native}\N^{\rm out},
\end{equation}
where $r_f=r_b=1/144,r_m=(16/3)c_\times,$ and
\begin{gather}
 c_{\rm native}=\min(r_f,r_b,r_m,r_fr_b)=r_m>0.
 \label{as:c-native}
\end{gather}
\end{proposition}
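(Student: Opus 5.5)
The plan is to work directly from the exact output identity \eqref{as:actual-identity}, $\Delta^{\rm out}=\sum_{u<T}h_u\mathcal R_u^{\rm out}(Q_u)$, together with the lower half of the covariance sandwich \eqref{arithmetic:eq:sandwich}. The response $\mathcal R_u^{\rm out}$ is linear and order preserving on positive semidefinite matrices, so any lower bound $Q_u\succeq\widetilde Q_u\succeq0$ passes to $\Delta^{\rm out}$. No admissibility, spectral or Haar-geometry assumptions are needed: every step uses only positivity of the maps, the moment lower bounds of Lemmas~\ref{arithmetic:lem:scalar}--\ref{arithmetic:lem:joint}, and full row rank of $S$ (so that $G\succ0$ and the recursions of Proposition~\ref{arithmetic:prop:exact} hold). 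This matches the generality claimed in the statement.

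\emph{Step 1 (one-level expansion).} Starting from $Q_t\succeq Q_t^-(X_t^-)$ in \eqref{core:comparison-innovations}, I will discard the nonnegative pieces $GX_t^-G$, $\tr(GX_t^-)G$ and $m_t^-\tr(JX_t^-)J$, and keep the rest:
\[
Q_t\succeq D_t^-+f_t^-y_t^-G+\beta_t^-x_t^-J .
\]
Here $D_t^-$ is the direct source in \eqref{as:comparison}. Summing $h_t\mathcal R_t^{\rm out}(D_t^-)$ over $t$ gives exactly $\Psi_-^{\rm out}$.

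\emph{Step 2 (lower bounds on the traces).} I unroll $X_{u+1}^-=\mathcal A_u(X_u^-)+h_u(A_u^-G+B_u^-J)$ as in \eqref{si:eq:volcomp}. All kernels and sources are nonnegative, so I may keep only the direct pieces $B_u^-\ge\beta_u^-R_u$ and $A_u^-\ge f_u^-U_u$. This gives
\[
y_t^-\ge\sum_{u<t}h_u\beta_u^-R_uK_{JJ}^{\mathcal A}(t,u),\qquad
x_t^-\ge\sum_{u<t}h_uf_u^-U_uK_{GG}^{\mathcal A}(t,u).
\]
I then replace $\mathcal A$-kernels by mean-propagation $H$-kernels using the lower inequality $K^H_{PQ}\le K^{\mathcal A}_{PQ}$. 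That inequality follows from the entrywise bound $\pi_i\pi_j\le D_{ij}$ in \eqref{si:eq:productbounds} and needs only $h_v\ge0$, not the spectral bounds of \eqref{as:U1}. For the $JJ$ kernel the same entrywise comparison is applied to $\sum_{ij}J_{ij}^2D_{ij}$. Since $K_{GG}^H=S_2^H$, the two bounds become $y_t^-\ge Z_t^{b,-}$ and $x_t^-\ge Y_t^{f,-}$. Inserting these into Step~1 and summing with $h_t\mathcal R_t^{\rm out}$ produces $\Conv_{b\to f}^{-,\rm out}$ and $\Conv_{f\to b}^{-,\rm out}$, which is the first inequality.

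\emph{Step 3 (coefficient ratios).} Each lower coefficient is a fixed multiple of the upper one:
\[
f_t^-=r_ff_t^+,\qquad \beta_t^-=r_b\beta_t^+,\qquad r_f=r_b=\tfrac{1/192}{3/4}=\tfrac1{144},
\]
and $m_t^-=(c_\times/(3/16))\,m_t^+=r_mm_t^+$. Because $D_t^\pm$ is linear in $(f,\beta,m)$ with nonnegative matrix coefficients, $\Psi_-^{\rm out}\ge\min(r_f,r_b,r_m)\Psi_+^{\rm out}$. Each interaction term is a product of one forward and one backward coefficient, so $\Conv^{-}\ge r_fr_b\Conv^{+}$. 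Adding the three pieces gives $c_{\rm native}\N^{\rm out}$ with $c_{\rm native}=\min(r_f,r_b,r_m,r_fr_b)$.

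It remains to check that this minimum is $r_m$. With $\delta=2^{-8}$, $c_\times\approx\tfrac38\,2^{-28}$, so $r_m\approx2^{-27}$, far below $r_fr_b=144^{-2}\approx2^{-14.3}$.

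The only delicate point is Step~2: one must confirm that the $H$-versus-$\mathcal A$ lower comparison is used entrywise, so it holds for every full-row-rank sketch without the geometry event. One must also make sure the causal indexing (a source at $u$ first enters state $u+1$) matches the definitions \eqref{eq:scheduled-conversion-sources} exactly. Everything else is bookkeeping with positive maps.
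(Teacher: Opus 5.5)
Your argument follows the paper's proof. You drop the nonnegative feedback pieces of $Q_t^-(X_t^-)$, unroll only the separate sources $\beta_u^-R_uJ$ and $f_u^-U_uG$ to get $y_t^-\ge Z_t^{b,-}$ and $x_t^-\ge Y_t^{f,-}$, apply the positive output responses, and compare coefficients with $r_f=r_b=1/144$ and $r_m=16c_\times/3\approx2^{-27}<r_fr_b$. Steps 1 and 3 are fine as written.

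One justification in Step 2 fails in the generality you claim. The entrywise bound $\pi_i\pi_j\le D_{ij}$ of \eqref{si:eq:productbounds} is proved under the cap $0\le\eta_ug_1\le c_0$. With only $h_v\ge0$ it can fail, because $(1-\eta_vg_i)(1-\eta_vg_j)$ may be negative. For example, two steps with $\eta g_i=2$ and $\eta g_j=\epsilon$ give $D_{ij}=(1-\epsilon-2\epsilon/B)^2<(1-\epsilon)^2=\pi_i\pi_j$. So the off-diagonal terms of $\sum_{ij}J_{ij}^2D_{ij}$ cannot be compared termwise for uncapped schedules, which the statement (``every finite deterministic schedule'') includes.

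The comparison $K_{PQ}^H\le K_{PQ}^{\mathcal A}$ itself is still true, but you should prove it in the positive semidefinite order rather than entrywise. For $Z\succeq0$, $\mathcal A_v(Z)=H_vZH_v+h_vGZG\succeq H_vZH_v$, and $\mathcal A_v$ preserves positivity while congruence by $H_v$ preserves order. By induction, $\mathcal A_{t-1}\cdots\mathcal A_{u+1}(Q)\succeq P_{u+1\to t}QP_{u+1\to t}$ for every $Q\succeq0$. Taking traces against $G,J\succeq0$ then gives the kernel inequality with no cap.

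Equivalently, discard $h_uGX_u^-G$ along with the other feedback terms before unrolling. Then $X_{u+1}^-\succeq H_uX_u^-H_u+h_u\beta_u^-R_uJ$ (respectively $+h_uf_u^-U_uG$) propagates directly through $P_{u+1\to t}$; this is the natural reading of the paper's ``unroll the separate sources''. With this replacement your proof holds for every full-row-rank sketch and every nonnegative schedule.
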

\begin{proof}
Unroll the separate sources $\beta_u^-R_uJ$ and $f_u^-U_uG$ to get
$y_t^-\ge Z_t^{b,-}$ and $x_t^-\ge Y_t^{f,-}$. Distinct terms give
$Q_t^-\succeq D_t^-+f_t^-Z_t^{b,-}G+\beta_t^-Y_t^{f,-}J$.
Apply the positive response, sum and use $Q_t\succeq Q_t^-$.
The coefficient ratios in \eqref{arithmetic:eq:coefficients} are
$r_f,r_b,r_m$, so $\Psi_-\ge\min(r_f,r_b,r_m)\Psi_+$ and each
conversion has ratio $r_fr_b$. Lemma~\ref{arithmetic:lem:joint}
induces $r_m<r_fr_b$.
\end{proof}

\paragraph{Corollary: covariance and full-risk comparisons.}
\label{as:covariance-comparison}
 Suppose $S\in\Omega_M$, Assumption~\ref{as:assumption-schedules} holds,  
\begin{equation}\label{as:main-width}
 M\ge M_{\rm w},\qquad M^a\ge\tau_*.
\end{equation}
For either output in \eqref{app:eq:output-weights}, the SR-induced risk
satisfies
\begin{equation}\label{as:native-law}
 c_{\rm native}\N^{\rm out}\le\Delta^{\rm out}
 \le(1+C_{13})\N^{\rm out}.
\end{equation}
With $\mathcal B^{\rm out},\mathcal V^{\rm out}$ from
\eqref{eq:scheduled-clean-functionals} and $\N^{\rm out}$ from
\eqref{as:native}, the full prediction risk satisfies
\begin{align}
 \mathscr R^{\rm out}&\ge c_AM^{1-b}+c_V\mathcal B^{\rm out}+\mathcal V^{\rm out}
                                      +c_{\rm native}\N^{\rm out},\notag\\
 \mathscr R^{\rm out}&\le C_A^{\rm sch}M^{1-b}+C_V\mathcal B^{\rm out}+2\mathcal V^{\rm out}
                                      +(1+C_{13})\N^{\rm out}.
 \label{as:full-componentwise}
\end{align}
In particular, common lower and upper constants are
\begin{gather}
 c_{\rm full}=\min\{c_A,c_V,1,c_{\rm native}\}>0,\qquad
 C_{\rm full}=\max\{C_A^{\rm sch},C_V,2,1+C_{13}\}<\infty.
 \label{as:full-constants}
\end{gather}

\begin{proof}
Theorem~\ref{sc:th:output} and
Proposition~\ref{as:prop-native-lower} give \eqref{as:native-law}.
Add the exact-arithmetic comparison of Theorem~\ref{as:theorem-clean}
and the approximation bounds \eqref{as:approx-lower}, using the risk
decomposition \eqref{eq:estimation-risk}, to obtain
\eqref{as:full-componentwise}. Taking the minimum and maximum of its
coefficients gives \eqref{as:full-constants}. The exact-arithmetic part
also gives $\mathscr R_0^{\rm out}\asymp
M^{1-b}+\mathcal B^{\rm out}+\mathcal V^{\rm out}$.
All bounds hold deterministically on $\Omega_M$ for every admissible
schedule chosen before the random draws; admissibility is checked at
the realized sketch. The event's probability guarantee is for each
width separately, as in Proposition~\ref{as:prop-event}.
\end{proof}

\paragraph{Replacing the exact-arithmetic residual by its noise scale.}
\label{as:cor-optional-noise}
On $\Omega_M$ under the cap and source rate, define
$\mathcal V^{\rm out,(s)}=\tfrac12\sum_{u<T}h_us\sum_jg_j^2\ell_{{\rm out},j,u}^2$.
Multiplying \eqref{eq:reference-history-factor} by each nonnegative
squared-filter term gives
$\mathcal V^{\rm out,(s)}\le\mathcal V^{\rm out}\le
K_s^{\rm ref}\mathcal V^{\rm out,(s)}$ for any finite schedule and
prescribed output. Thus \eqref{as:full-componentwise} allows
this replacement, with unchanged lower constant and $\N$, and upper
constant $\max\{C_A^{\rm sch},C_V,2K_s^{\rm ref},1+C_{13}\}$.
The factor $K_s^{\rm ref}$ is distinct from the applications' $K_s$.

\subsection{Construction of the admissibility thresholds}\label{app:forest}
Fix the population family, its positive noise variance $\sigma^2$, a valid
comparison tuple $\Worig$ from Definition~\ref{as:def-witness}, and any
 $c_0\in(0,1)$. Recall $\Theta=C_\theta^{\rm tr}$ and the exponents
$\alpha,\nu,\omega_V$ from \eqref{eq:source-exponents}.
The choices have a definite order: first choose the cumulative-time
threshold $\tau_*$ below, then compute the auxiliary bounds, and finally
define the feedback threshold $\kappa_{\rm sch}$ and rate ceiling
$\rho_{\rm sch}$ in \eqref{as:F5}. This order ensures that the
thresholds depend only on fixed model quantities, with no circular dependence on the schedules to be compared.

\paragraph{Cumulative optimization time threshold.}
Set $\bar\eta=c_0/c_g$, $C_{\bar A}=2C_A^{\rm sch}$ and
$\Lambda_0=(1-c_0)^{-1}$. Choose and fix
$\tau_*=\tau_*^{\rm sch}=\tau_{*,\rm sch}$ satisfying
\begin{equation}\label{as:F0}
 \tau_*\ge32(1+\bar\eta+c_g^{-1}),\qquad
 c'=2(1-c_0),\quad C'=2/(1-c_0).
\end{equation}
The chosen $\tau_*$ may exceed this lower bound. It is used throughout the
scheduled theorem and its constant-learning-rate corollaries. The
coefficients $C_{\rm fb}$ and $C_U^{\rm fb}$ below depend on this choice,
so the rate and feedback thresholds are computed afterwards.

\paragraph{Bounds for propagation and spectral sums.}
The following coefficients give the spectral bounds and the comparison
factor $C_3$ in Lemma~\ref{si:lem:products}. For $\ell=1,2$ define
\begin{gather}
 C_{U,\ell}=2^\ell\left(1+C_g^\ell\frac a{a-1}\right),\quad
 c_{L,\ell}(\Lambda)=\tfrac14\Lambda^{-\alpha}c_g^\ell
                         e^{-2^{a+1}C_g\Lambda},\notag\\
 c_{L,0}(\Lambda)=\tfrac12e^{-2^{a+1}C_g\Lambda},\quad
 L_0=c_{L,0}(\Lambda_0),\quad U_0^{\rm sp}=1,\notag\\
 L_\ell=c_{L,\ell}(\Lambda_0)(1-c_0)^{\ell-\alpha},\quad
 U_\ell^{\rm sp}=C_{U,\ell}(1-c_0)^{\alpha-\ell},\notag\\
 c_P=\min(1,L_0,L_1,L_2),\quad
 C_P=\max(1,U_0^{\rm sp},U_1^{\rm sp},U_2^{\rm sp}),\quad
 C_3=\frac{C_P}{c_P}\max(1,C_J/c_J,C_J/c_J^2).
 \label{as:F1}
\end{gather}
The superscript ``sp'' distinguishes fixed spectral constants from the
exact-arithmetic source $U_t$.
\paragraph{Bounds for the exact-arithmetic case.}
The next coefficients bound the residual and parameter histories in
Theorem~\ref{as:theorem-clean}. The additional coefficient $C_{\rm fb}$
bounds the forward-to-backward gain for constant learning rates and precisions in
\eqref{ca:constant-gains}.
\begin{gather}
 K_0=\max(1,C_g/(1-c_0)),\quad
 c_L^{\rm src}=(1-e^{-1})^2\min(1,c_g)^2 C_g^{-\omega_V}
                           \frac{2^{-(a-b+1)}}{a-b+1},\notag\\
 C_{\rm bias}=C_VC_g^{b/a}\frac b{b-1},\quad
 \rho_{\rm src}=\frac1{4\Theta},\quad C_R^{\rm raw}=2\max(1,C_{\rm bias}),\notag\\
 \begin{aligned}
 C_U^{\rm raw}={}&C_VC_Jc_g^{-\omega_V}\frac{2a+2b-1}{a+b-1}K_0^\nu +2C_JC_R^{\rm raw}(1+\sigma^2+C_{\bar A})\rho_{\rm src}
                          \frac{2a-1}{a-1}K_0^\alpha,
 \end{aligned}\notag\\
 c_U=c_Vc_Jc_L^{\rm src},\quad
 c_{\rm src}=\min\{\sigma^2/(1+\sigma^2),c_U\},\notag\\
 C_{\rm src}=\max\{C_R^{\rm raw}(1+\sigma^{-2}),
 C_R^{\rm raw}(1+\sigma^2+C_{\bar A}),C_U^{\rm raw},
 2^\nu C_U^{\rm raw}/c_U,C_U^{\rm raw}/\sigma^2\},\label{as:F2}\\
 c_R^{\rm glob}=c_{\rm src},\quad C_R^{\rm glob}=C_{\rm src}(1+\sigma^{-2}),\quad
 C_{\rm fb}=\max\{C_{\rm src},C_{\rm src}(1+\tau_*)^\nu/(c_{\rm src}\sigma^2)\}.
 \label{as:F3}
\end{gather}
$K_0$ is distinct from the sketch norm bound $K_{0,\rm geo}$ in
\eqref{as:geometry-constants}. The exact-arithmetic moment bounds are
established under $\eta_u/B\le\rho_{\rm src}$; the same constants apply
under the smaller ceiling $\rho_{\rm sch}$ chosen below.

\paragraph{Bounds for rounding-induced errors.}
The coefficients for Lemma~\ref{sc:th:primary} are defined in
\eqref{as:F4}. They depend on the fixed source and
kernel bounds and $\tau_*$, independently of the thresholds below. 
With the preceding coefficients fixed, define
\begin{gather}
 \rho_{\rm fb}=\min\{\rho_{\rm src},(64C_JC_K)^{-1},(64C_7)^{-1},
                  (16C_\Theta)^{-1},(2C_8)^{-1},c_R/(128C_K)\},\notag\\
 \kappa_{\rm sch}=\min\{(128C_3\bar C_0)^{-1},(16C'_\Theta)^{-1},
                       (2C_8)^{-1},c_R/(8c_fc_b\Theta^2)\},\notag\\
 \rho_{\rm sch}=\min\{\rho_{\rm fb},\kappa_{\rm sch}/(C_{\rm fb}\Theta)\}.
 \label{as:F5}
\end{gather}
Together with the imposed feedback bounds, these minima give
\eqref{sc:eq:numericalgains}, as required by Lemma~\ref{sc:th:primary};
the two summands of $\epsilon_H$ are each at most $1/16$.
Assumption~\ref{as:assumption-schedules} requires each of the three
feedback gains to be at most $\kappa_{\rm sch}$. The construction does
not derive those conditions from the learning-rate cap alone.
The final minimum also ensures
$C_{\rm fb}\Theta\rho_{\rm sch}\le\kappa_{\rm sch}$, as used in the
constant-learning-rate gain bound \eqref{ca:constant-gains}.
Write $\kappa=\kappa_{\rm sch}$ only as an abbreviation. These are
sufficient thresholds, with no claim of sharpness. All denominators are
positive. The resulting constants are finite
and independent of width, horizon, batch, learning-rate and precision
schedules, and the realized sketch.

\endgroup

\section{Proofs of the neural and compute scaling laws}
\label{app:scaling-proofs}

This appendix proves the scaling laws stated in Section~\ref{sec:dynamics}.
Appendix~\ref{cl:finite} proves Theorem~\ref{thm:scheduled-main} by converting
the covariance bounds of Appendix~\ref{app:actual-schedules} into explicit
scalar expressions. Corollaries~\ref{ca:constant-reference}
and~\ref{ca:constant-corollary} give the constant-learning-rate laws.
Proposition~\ref{app:prop-fixed-bit-compute} proves the compute orders in
Table~\ref{tab:compute-precision};
Propositions~\ref{r4:prop-variable-rho-average}
and~\ref{r4:prop-variable-rho-last} allow learning rate and batch size to
vary across budgets and treat the two outputs. Let 
\begin{equation}\label{app:eq:full-functional}
 \mathcal L_S^{\iout}=M^{1-b}+\mathcal B^{\iout}
                         +\mathcal V^{\iout}+\mathcal N^{\iout},
 \qquad c_F=c_{\rm full},\quad C_F=C_{\rm full},
\end{equation}
where the terms are defined in \eqref{eq:scheduled-clean-functionals}, \eqref{as:native}, and \eqref{as:full-constants}.

\begingroup
\let\Conv\relax\newcommand{\Conv}{\operatorname{Conv}}
\emergencystretch=3em
\subsection{Scalar comparisons and proof of Theorem~\ref{thm:scheduled-main}}
\label{cl:finite}

We convert Appendix~\ref{app:actual-schedules}'s covariance comparisons
into the scalar law of Section~\ref{sec:general-schedules}. Recall Assumption~\ref{as:assumption-schedules} and exponents $\alpha,\nu,\omega_V$ from
\eqref{eq:source-exponents}.

\paragraph{Effective noise and exact-arithmetic residual bounds.}
The approximation bound and $M\ge1$ give
\begin{equation}\label{eq:noise-normalization}
 K_s=1+2C_A^{\rm sch}/\sigma^2,\qquad
 \sigma^2\le s(S)\le K_s\sigma^2.
\end{equation}
This normalization differs from $K_s^{\rm ref}$ in
\eqref{eq:reference-history-factor}, which bounds the exact-arithmetic
residual throughout training.
Equation~\eqref{si:eq:Rglobal} supplies the fixed choice
\begin{equation}\label{cd:Csigma}
 C_\sigma=2(1+C_{\rm bias}/\sigma^2),\qquad
 s\le R_t\le C_\sigma s.
\end{equation}
Thus $\sigma^2\le R_t\le C_\sigma K_s\sigma^2$ throughout training.

\paragraph{Decomposing the scalar rounding scale.}
Reuse $K_r^{\iout},\widehat B^{\iout},\widehat V^{\iout}$ and
$\widehat U_t$ from \eqref{s3:eq:scalar-responses_last}--\eqref{s3:eq:scalar-source}.
For a deterministic training configuration $\pi$, write $\Phi^{\iout}(\pi)$
for the scalar objective defined below. For a fixed output, suppress
its superscript on $\widehat B,\widehat V$ and define the auxiliary
quantities needed to separate the two orders of interaction:
\begin{equation}\label{cl:clock-sources}
\begin{aligned}
 k_r(t,u)&=[1+2(\tau_t-\tau_{u+1})]^{\alpha-r}
                      \quad(u<t,\ r=1,2),\\
 \widehat Z_t&=\sigma^2\sum_{u<t}h_uq_{\bw,u},\qquad
 \widehat Y_t=\sum_{u<t}h_u\frac{q_{\fw,u}}M\widehat U_u k_2(t,u).
\end{aligned}
\end{equation}
Let $\widehat\Psi$ denote the first sum in
\eqref{s3:eq:scalar-rounding}. The two terms in its second sum are
\begin{equation}\label{cl:scalar-penalty}
\begin{aligned}
 \widehat C_{bf}&=\tfrac12\sum_{t<T}h_t\frac{q_{\fw,t}}M
                                    \widehat Z_tK_2^{\iout}(t),\quad
 \widehat C_{fb}=\tfrac12\sum_{t<T}h_t\frac{q_{\bw,t}}M
                                    \widehat Y_tK_1^{\iout}(t),\\
 \widehat N^{\iout}&=\widehat\Psi+\widehat C_{bf}+\widehat C_{fb}.
\end{aligned}
\end{equation}
This is an exact decomposition of the scalar expression, with no new
approximation. For the optimization arguments below, write
\begin{equation}\label{cl:Phi}
 \Phi^{\iout}=M^{1-b}+\widehat B^{\iout}
                        +\widehat V^{\iout}+\widehat N^{\iout}.
\end{equation}
We also suppress the superscript on $\Phi$ when the output is fixed.

The following theorem replaces the matrix expressions from
Appendix~\ref{app:actual-schedules} by explicit scalar sums.
The comparison is uniform over admissible configurations, so minimizing
the scalar objective controls prediction risk within a fixed factor.
\begin{theorem}[Actual-risk comparison with a finite discrete objective]
\label{cl:objective}
Under Assumption~\ref{as:assumption-schedules}, with both width conditions
and the fixed $C_\sigma$ in \eqref{cd:Csigma}, for $\iout\in\{\mathrm{last},\mathrm{avg}\}$, we have
\begin{equation}\label{cl:actual-Phi}
 a_\Phi\Phi^{\rm out}(\pi)\le\mathscr R^{\rm out}(\pi)
                             \le A_\Phi\Phi^{\rm out}(\pi),
\end{equation}
for fixed $0<a_\Phi\le A_\Phi<\infty$ depending only on the fixed model,
the comparison bounds in Definition~\ref{as:def-witness}, 
$c_0$, $\tau_*$, $C_\sigma$ and
$K_s$,
uniformly in $S,M,T,B,F$, precision choices and learning rates.
\end{theorem}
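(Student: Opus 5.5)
The plan is to start from the componentwise full-risk sandwich \eqref{as:full-componentwise}. It already gives $\mathscr R^{\rm out}\asymp M^{1-b}+\mathcal B^{\rm out}+\mathcal V^{\rm out}+\N^{\rm out}$ with constants $c_{\rm full},C_{\rm full}$, uniformly over admissible schedules on $\Omega_M$. It therefore suffices to prove three term-by-term comparisons with the scalar quantities:
\[
\mathcal B^{\rm out}\asymp\widehat B^{\rm out},\qquad
\mathcal V^{\rm out}\asymp\widehat V^{\rm out},\qquad
\N^{\rm out}\asymp\widehat N^{\rm out}.
\]
Each comparison must hold with constants depending only on the fixed model, the tuple of Definition~\ref{as:def-witness}, $c_0$, $\tau_*$, $C_\sigma$ and $K_s$. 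Taking the minimum and maximum of the resulting constants then yields $a_\Phi$ and $A_\Phi$.

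\textbf{Bias and variance.} For the bias, expand $d_{\iout,j}^2=\sum_{v,w}w_vw_w\pi_j(v,0)\pi_j(w,0)$. Each paired product is a product of propagation factors with total clock $\tau_v+\tau_w$. Applying \eqref{cl:heat} with $r=b/a$ gives
\[
\sum_jg_j^{b/a}\pi_j(v,0)\pi_j(w,0)\asymp(1+\tau_v+\tau_w)^{-(b-1)/a}.
\]
The lower bound is valid because $\tau_v+\tau_w\le2M^a$. This recovers $\widehat B^{\rm last}$ and $\widehat B^{\rm avg}$.

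For the variance, first replace $R_u$ by $s$ using the remark after \eqref{as:full-componentwise} (factor $K_s^{\rm ref}$), and then $s$ by $\sigma^2$ via \eqref{eq:noise-normalization}. Next write $\mathcal R_u^{\rm out}(G)=\tfrac12\sum_{v,w}w_vw_w\Lambda_2^{\rm pair}(u;v,w)$ and apply \eqref{sc:eq:pairsandwich}. This gives $\mathcal R_u^{\rm out}(G)\asymp K_2^{\rm out}(u)$, and likewise $\mathcal R_u^{\rm out}(J)\asymp K_1^{\rm out}(u)$ using $c_J\Lambda_1^{\rm pair}\le\Lambda_J^{\rm pair}\le C_J\Lambda_1^{\rm pair}$. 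For the last iterate the pair is $v=w=T$, which reduces to $[1+2(\tau_T-\tau_{u+1})]^{\alpha-r}$.

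\textbf{Rounding terms.} Substitute into $\Psi_+$ the coefficients $f_u,\beta_u,m_u$ from \eqref{eq:rounding-source-coefficients}, using $\theta_G\asymp1$. This requires two source comparisons.

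First, $R_u\asymp\sigma^2$, by \eqref{cd:Csigma}. Note that the scalar law uses $\sigma^2$, not $r_T$, in the backward term. The $M^{1-b}$ part of $s$ is absorbed because $\sigma^2\le s\le K_s\sigma^2$.

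Second, $U_u\asymp\widehat U_u$. This is the step I expect to be the main obstacle. From \eqref{si:eq:cleandecomp}, $U_u$ splits into a mean part and a noise part. The mean part is a $J$-trace of $(I-P_u)V(I-P_u)$, and by \eqref{as:U1}, Lemma~\ref{source:thm:main} and the spectral sums it is comparable to $\sum_jg_j^{1-\omega_V}\min(1,g_j\tau_u)^2$. This quantity is $\asymp\tau_u^2/(1+\tau_u)^{(a+b-1)/a}$ uniformly in $\tau_u\le M^a$, interpolating the small-time behavior $\tau^2$ with the large-time behavior $\tau^\nu$. The noise part is $\sum_{v<u}h_vR_v K_{JG}^H(u,v)$. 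By \eqref{si:eq:modalkernels} and \eqref{si:eq:explicitsp} it is comparable to $\sigma^2\sum_v h_v[1+2(\tau_u-\tau_{v+1})]^{\alpha-1}$. The upper bounds are largely contained in the proof of Theorem~\ref{as:theorem-clean}; the lower bounds need the $j\le\tau^\alpha$ index-counting argument repeated at small $\tau$ as well. Here I would verify carefully that no hidden dependence on $\tau_*$ or $B$ enters.

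\textbf{Interaction terms.} For the conversions, compare $Z_t^{b}$ with $\widehat Z_t$ using $K_{JJ}^H\asymp M$ from \eqref{cl:JJ} together with $\beta\propto q_{\bw}/M$. Compare $Y_t^{f}$ with $\widehat Y_t$ using $S_2^H(t,u)\asymp k_2(t,u)$. Matching the prefactors shows that $\Conv_{b\to f}$ and $\Conv_{f\to b}$ are, respectively, comparable to $\widehat C_{bf}$ and $\widehat C_{fb}$. Summing the three pieces gives $\N^{\rm out}\asymp\widehat N^{\rm out}$, and combining with the bias and variance comparisons completes the proof.
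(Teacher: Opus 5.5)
Your proposal is correct and follows the paper's proof essentially step for step. Both reduce through \eqref{as:full-componentwise} to term-by-term comparisons, evaluate the paired output filters with \eqref{cl:heat} (equivalently \eqref{sc:eq:pairsandwich}), prove $U_t\asymp\widehat U_t$ separately for the mean and noise parts, and obtain $Z_t^{b,+}\asymp\widehat Z_t$ and $Y_t^{f,+}\asymp\widehat Y_t$ from \eqref{cl:JJ} and the $S_2^H$ profile.

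One slip needs fixing. With $V\asymp G^{-\omega_V}$, the mean part of $U_t$ is comparable to $\sum_jg_j^{-\omega_V}\min(1,g_j\tau_t)^2$, not $\sum_jg_j^{1-\omega_V}\min(1,g_j\tau_t)^2$. The latter stays bounded and cannot produce $\tau^{\nu}$. With the correct weight your index splitting works. The paper gets the same two-sided bound for all $\tau$ at once by writing $\sum_jg_j^{-\omega_V}(1-e^{-g_j\tau})^2=\int_0^\tau\int_0^\tau\sum_jg_j^{2-\omega_V}e^{-g_j(v+w)}\,dv\,dw$ and applying \eqref{cl:heat} with $r=2-\omega_V$.
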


\paragraph{Interpretation of the scalar quantities.}
The objective $\Phi$ separates approximation error, bias, minibatch
variance and SR-induced error. The scale $\widehat U_t$ estimates the
exact-arithmetic parameter moment $U_t$, the source of direct
forward-rounding error. Its two terms describe learning the target and
fluctuations from minibatch noise.
Within $\widehat N$, $\widehat\Psi$ contains direct forward and backward
rounding and their interaction. Each contribution is
weighted by $h_u$ and its output response $K_1$ or $K_2$.
The term $\widehat C_{bf}$ captures earlier backward rounding changing
the parameter moment entering later forward rounding;
$\widehat C_{fb}$ captures earlier forward rounding changing the residual
variance entering later backward rounding. These interactions can occur even
with independent, unbiased rounding because later error variances
depend on the current state. All joint terms vanish if either rounding
location uses exact arithmetic throughout training.

\begin{proof}
We compare each nonnegative term separately. The exact-arithmetic parameter
source is the only term requiring both its deterministic and stochastic
parts to be estimated before the scalar responses can be substituted.
Put $r_*=2-\omega_V$,
$D_\nu=2^{2-\nu}\max(1,3^\nu/\nu)$, and define
\begin{align}
 u_-&=\min\{c_Vc_J\ell_{r_*}2^{\nu-2},\ c_J\ell_1\},\notag\\
 u_+&=\max\{C_VC_Ju_{r_*}(1-c_0)^{-2}D_\nu,
                    2C_\sigma K_s C_Ju_1\},\quad R_+=C_\sigma K_s.
 \label{cl:Uhat-constants}
\end{align}
We first prove
$u_-\widehat U_t\le U_t\le u_+\widehat U_t$ for every state, including
zero clock. For the mean part of \eqref{si:eq:cleandecomp}, the product
bounds give
$1-e^{-g_j\tau_t}\le1-\pi_j(t,0)
\le(1-c_0)^{-1}(1-e^{-g_j\tau_t})$. Moreover,
\[
 \sum_jg_j^{-\omega_V}(1-e^{-g_j\tau})^2
 =\int_0^\tau\int_0^\tau\sum_jg_j^{2-\omega_V}e^{-g_j(v+w)}\,dv\,dw.
\]
The teacher-source sandwich can be traced against
$(I-P_t)J(I-P_t)$ without assuming that $J$ and $V$ commute with $G$.
Lemma~\ref{cl:spectral} applies to the displayed integral since
$v+w\le2M^a$. The integral of
$(1+v+w)^{\nu-2}$ lies between $2^{\nu-2}\tau^2(1+\tau)^{\nu-2}$ and
$D_\nu\tau^2(1+\tau)^{\nu-2}$: use $v+w\le2\tau$ for the lower; for the
upper use integrand $\le1$ at $\tau\le1$ and
$\int_0^{2\tau}z(1+z)^{\nu-2}dz\le(1+2\tau)^\nu/\nu$ otherwise.
These bounds give the first entries in the minimum and maximum defining
$u_-$ and $u_+$. For the stochastic part, use
$\sigma^2G\preceq N_u\preceq2R_+\sigma^2G$ in
\eqref{si:eq:cleandecomp}, then trace against $J$. The bounds
$c_J\le J_{jj}\le C_J$ and \eqref{cl:heat} at
$x=2(\tau_t-\tau_{u+1})$ give the second entries. In particular, the
noise contribution contains $\sum_{u<t}h_uk_1(t,u)$ even before clock
one. At zero clock both the mean and noise parts vanish.

Expand each output filter into its pair of state indices. The weights
are nonnegative for both outputs under consideration, so the spectral
comparison can be applied to every pair and summed.  
\[
 \tfrac12\ell_2K_2^{\rm out}(u)\le\mathcal R_u^{\rm out}(G)
       \le\tfrac12u_2K_2^{\rm out}(u),\qquad
 \tfrac12c_J\ell_1K_1^{\rm out}(u)\le\mathcal R_u^{\rm out}(J)
       \le\tfrac12C_Ju_1K_1^{\rm out}(u).
\]
All paired clocks lie in $[0,2M^a]$. The same argument applied to the
bias gives
\begin{equation}\label{cl:reference-components}
 \ell_{b/a}\widehat B\le\mathcal B^{\rm out}
       \le u_{b/a}\widehat B,\qquad
 \ell_2\widehat V\le\mathcal V^{\rm out}
       \le R_+u_2\widehat V.
\end{equation}
The second comparison also uses
$\sigma^2\le R_u\le R_+\sigma^2$.
The bounds $\theta_G\in[c_g,\Theta]$ and
\eqref{eq:rounding-source-coefficients} imply
\[
 f_u^+\asymp q_{\fw,u}/M,\qquad
 \beta_u^+\asymp q_{\bw,u}/M,\qquad
 m_u^+\asymp q_{\fw,u}q_{\bw,u}/M^2.
\]
Together with the preceding source and output comparisons, these
bounds identify the three terms of $\widehat\Psi$.
For the ordered terms, estimate the inner sources before the outer response.
Equation~\eqref{cl:JJ} gives $Z_t^{b,+}\asymp\widehat Z_t$:
the factor $M$ in $K_{JJ}^H$ cancels the $1/M$ in $\beta_u^+$.
Similarly, $U_u\asymp\widehat U_u$ and \eqref{cl:heat} give
$Y_t^{f,+}\asymp\widehat Y_t$.
Multiplying each by its outer rounding coefficient and output response
gives $\widehat C_{bf}$ and $\widehat C_{fb}$, respectively, with the
strict time ordering unchanged. Hence fixed coefficients
$l_N,u_N$ satisfy
\begin{equation}\label{cl:scalar-components}
 l_N\widehat N^{\rm out}\le\mathcal N^{\rm out}
                       \le u_N\widehat N^{\rm out}.
\end{equation}
Choose $l_\Phi>0$ below all the component lower coefficients and
$u_\Phi<\infty$ above all the upper coefficients, including the
approximation coefficient one. Then
$l_\Phi\Phi\le\mathcal L_S^{\rm out}\le u_\Phi\Phi$ under the cap,
rate and clock conditions. The comparisons in this step use no gain
condition.
 
On the full admissible class, \eqref{as:full-componentwise} bounds
actual risk above and below by fixed multiples of
$\mathcal L_S^{\rm out}$. This proves \eqref{cl:actual-Phi}, with
$a_\Phi=c_Fl_\Phi$, $A_\Phi=C_Fu_\Phi$ and $Q_\Phi=A_\Phi/a_\Phi$.
\end{proof}

\begin{proof}[Proof of Theorem~\ref{thm:scheduled-main}]
\label{s3:proof:scalar}
Work on the sketch event of Proposition~\ref{as:prop-event} under
the full admissible class. For exact-arithmetic SGD,
combine the estimation-risk bounds \eqref{as:clean-coefficientwise}
with the approximation bounds \eqref{as:approx-lower}, using
\eqref{eq:estimation-risk}.
Substitute \eqref{cl:reference-components}. The constants
\[
 c_0^{\rm ref}=\min\{c_A,c_V\ell_{b/a},\ell_2\},\qquad
 C_0^{\rm ref}=\max\{C_A^{\rm sch},C_Vu_{b/a},2R_+u_2\}
\]
then bound $\mathscr R_0^{\rm out}$ by the corresponding multiples of
$M^{1-b}+\widehat B+\widehat V$.
For the rounding error, the
comparison \eqref{as:native-law} and
\eqref{cl:scalar-components} imply
\[
 c_{\rm native}l_N\widehat N^{\rm out}
 \le\Delta^{\rm out}
 \le(1+C_{13})u_N\widehat N^{\rm out}.
\]
The decomposition \eqref{cl:scalar-penalty} is exactly
\eqref{s3:eq:scalar-rounding} for each output. These are the first two comparisons in
\eqref{eq:scheduled-law}. The decomposition of full risk follows separately from \eqref{eq:estimation-risk}.

\end{proof}

\endgroup 
\begingroup
\emergencystretch=3em
\let\Conv\relax\newcommand{\Conv}{\operatorname{Conv}}
\subsection{Constant-learning-rate scaling laws}
\label{app:actual-schedule-cases}\label{sec:rev3_schedule_cases}
We evaluate the general-schedule expressions at constant learning rates
and precisions, obtaining Corollaries~\ref{ca:constant-reference}
and~\ref{ca:constant-corollary}.
Fix $\eta>0$, $\eta g_1\le c_0$, $\rho=\eta/B\le\rho_{\rm sch}$
and $\tau=\eta T\in[\tau_*,M^a]$. Then $T,\tau\ge32$ by \eqref{as:F0}. Define
\begin{equation}\label{ca:ca:dout}
 d_{\rm last}=1,\qquad d_{\rm avg}=\tau^{1/a-1}.
\end{equation}
The scalar $d_{\rm out}$ describes the output dependence of the variance
scales below. Here $c,C>0$ have the fixed dependencies of
Appendix~\ref{app:actual-schedules} and are uniform in
$M,T,B,\eta$, precisions and output.

\begin{lemma} 
\label{ca:ca:filterslemma}\label{ca:ca:saturation}\label{ca:ca:weighted}
\label{ca:ca:relaylemma}\label{ca:ca:components}
With $r_j=1-\eta g_j$, coefficients $d_{{\rm out},j}$ and $\ell_{{\rm out},j,u}$ in
\eqref{eq:schedule-filters} are
\begin{gather}
 d_{{\rm last},j}=r_j^T,\qquad
 d_{{\rm avg},j}=\frac{r_j(1-r_j^T)}{\tau g_j},\notag\\
 \ell_{{\rm last},j,u}=r_j^{T-1-u},\qquad
 \ell_{{\rm avg},j,u}=\frac{1-r_j^{T-u}}{\tau g_j}.
 \label{ca:ca:geometric}
\end{gather}
In particular $\mathcal B^{\rm out}\asymp\tau^{-(b-1)/a}$.
For constant precisions, write $\Psi_{+,F},\Psi_{+,B},\Psi_{+,M}$ for
the three direct terms of \eqref{as:output-sources}.
Each component lies between fixed positive multiples of its listed scale.
For $\mathcal V^{\rm out}$, $\Psi_{+,B}^{\rm out}$ and
$\Conv_{b\to f}^{\rm out}$, the upper bound is $CK_s^{\rm ref}$ times
that scale, using $s\le R_u\le K_s^{\rm ref}s$ from
\eqref{eq:reference-history-factor}.
\begin{center}
\begin{tabular}{@{}llll@{}}
\toprule Component & Scale & Component & Scale\\\midrule
$\Psi_{+,F}^{\rm out}$ & $\rho q_{\fw}\tau^\nu d_{\rm out}/M$ &
$\mathcal V^{\rm out}$ & $\rho s d_{\rm out}$\\
$\Psi_{+,B}^{\rm out}$ & $\rho q_{\bw}s\tau^{1/a}/M$ &
$\Conv_{b\to f}^{\rm out}$ & $\rho^2q_{\fw}q_{\bw}s\tau d_{\rm out}/M$\\
$\Psi_{+,M}^{\rm out}$ & $\rho q_{\fw}q_{\bw}\tau^{\nu+1/a}/M^2$ &
$\Conv_{f\to b}^{\rm out}$ & $\rho^2q_{\fw}q_{\bw}\tau^{\nu+1/a}/M^2$\\
\bottomrule
\end{tabular}
\end{center}
\end{lemma}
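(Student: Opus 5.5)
The plan is to evaluate every quantity by direct substitution of the constant schedule into the general objects of Appendix~\ref{app:actual-schedules}, and then to replace spectral sums by the power-law profiles of Lemma~\ref{si:lem:products}.

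\textbf{Step 1: closed forms.} With $\eta_u=\eta$ we have $\pi_j(t,u)=r_j^{t-u}$. The last-iterate filters are immediate from \eqref{eq:schedule-filters}. For the average, the geometric sums are $\sum_{t=1}^T r_j^t/T=r_j(1-r_j^T)/(Tg_j\eta)$ and $\sum_{t=u+1}^T r_j^{t-u-1}/T=(1-r_j^{T-u})/(\tau g_j)$, which give \eqref{ca:ca:geometric}.

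\textbf{Step 2: bias.} For the last iterate, $\mathcal B=\frac12\sum_j g_j^{b/a}r_j^{2T}$. By \eqref{sc:eq:productnew} this lies between exponential sums at clock of order $\tau$, and \eqref{cl:heat} with $r=b/a$ gives order $(1+\tau)^{-(b-1)/a}$. The lower bound is valid because $2\tau\le2M^a$.

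For the average, the squared filter is bounded above by $\min(1,(\tau g_j)^{-2})$. It is also bounded below by a constant times the same quantity, using $1-r_j^T\ge1-e^{-g_j\tau}$ and $r_j\ge1-c_0$. Splitting the sum at $g_j\asymp\tau^{-1}$, i.e.\ at $j\asymp\tau^{1/a}$, gives $\sum_{j\le\tau^{1/a}}j^{-b}$ tail contributions of order $\tau^{-(b-1)/a}$ from both sides. The condition $\tau\ge\tau_*$ removes small-$\tau$ issues.

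\textbf{Step 3: output responses.} These are the common ingredient in every table entry:
\[
\mathcal R_u(G)\asymp\sum_j g_j^2\ell_{j,u}^2,\qquad \mathcal R_u(J)\asymp\sum_j g_j\ell_{j,u}^2,
\]
with the second comparison using $c_J\le J_{jj}\le C_J$. Let $m=\tau-\eta(u+1)$ be the remaining time.

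For the last iterate, \eqref{cl:heat} gives $\mathcal R_u(G)\asymp(1+m)^{\alpha-2}$ and $\mathcal R_u(J)\asymp(1+m)^{\alpha-1}$.

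For the average, the squared filter is $\asymp\min(m^2,g_j^{-2})/\tau^2$. This gives $\mathcal R_u(G)\asymp\tau^{-2}m^2(1+m)^{\alpha-2}\cdot$ (a constant) and $\mathcal R_u(J)\asymp\tau^{-2}m^2(1+m)^{\alpha-1}$. I will verify these by splitting at $g_j\asymp 1/m$, exactly as in Step 2.

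\textbf{Step 4: each table entry as a one-dimensional Riemann sum.} Each entry is $h\sum_u(\text{source}_u)\cdot(\text{response}_u)$ with $h=\rho\eta$. Since $\eta\le\bar\eta$, sums over $u$ can be replaced by integrals $\rho\int_0^\tau\cdot\,dm$, and the cell comparison \eqref{si:eq:F12} controls the error from doing so. The sources are:
\begin{itemize}
\item $f\asymp q_\fw/M$, $\beta\asymp q_\bw/M$ and $m\asymp q_\fw q_\bw/M^2$, from \eqref{eq:rounding-source-coefficients}--\eqref{arithmetic:eq:coefficients};
\item $R_u\asymp s$, by \eqref{eq:reference-history-factor}, which is also where the factor $K_s^{\rm ref}$ in the upper bounds enters;
\item $U_u\asymp(1+\tau_u)^\nu$, by Theorem~\ref{as:theorem-clean}, with the $\widehat U$ comparison from the proof of Theorem~\ref{cl:objective} used at small clock.
\end{itemize}

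\textbf{Step 5: integrating.} Each entry reduces to an integral of the form $\int_0^\tau (1+\tau-m)^\nu\,\text{resp}(m)\,dm$.

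For the last iterate:
\begin{itemize}
\item The $G$-response is integrable in $m$ (since $\alpha-2<-1$), so the mass concentrates at $m=O(1)$. This gives $\Psi_F\asymp\rho q_\fw\tau^\nu/M$ and $\mathcal V\asymp\rho s$.
\item The $J$-response integrates to $\tau^{\alpha}$, which gives $\Psi_B\asymp\rho q_\bw s\tau^{1/a}/M$ and $\Psi_M\asymp\rho q_\fw q_\bw\tau^{\nu+1/a}/M^2$. Here $U$ is of order $\tau^\nu$ over a constant fraction of the range.
\end{itemize}

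For the average, the $G$-response integral is $\tau^{-2}\int_0^\tau m^2(1+m)^{\alpha-2}dm\asymp\tau^{\alpha-1}=d_{\rm avg}$. With $U$ of order $\tau^\nu$ on $[0,\tau/2]$, this gives the listed scale. The $J$-response integral is $\tau^{-2}\tau^{\alpha+1}\cdot\tau=\tau^{\alpha}$ after including the length factor, matching $d$-independence of $\Psi_B$.

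\textbf{Step 6: conversion terms.} The inner sources are:
\begin{itemize}
\item $Z_t^{b}\asymp\rho q_\bw s\,\tau_t$, by \eqref{cl:JJ}, where the factor $M$ in $K_{JJ}^H$ cancels the $1/M$ in $\beta$. Pairing it with the $G$-response gives $\rho^2q_\fw q_\bw s\tau d_{\rm out}/M$.
\item $Y_t^{f}\asymp\rho q_\fw\tau_t^{\nu}/M$, since the $S_2$ kernel is integrable. Pairing it with $\beta\,\mathcal R(J)$ gives $\rho^2q_\fw q_\bw\tau^{\nu+1/a}/M^2$.
\end{itemize}

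\textbf{Main obstacle.} I expect the averaged-output responses in Step 3 to be the hardest part. The lower bounds need the paired-sum estimates \eqref{sc:eq:pairsandwich} on the $2M^a$ window. They also need care near $m\to0$ and at $u$ with small $\tau_u$, where $U_u$ may be much smaller than $\tau^\nu$. To handle this, I would restrict every lower bound to $u\le T/2$ and $m\ge\tau/4$ and use monotonicity there.
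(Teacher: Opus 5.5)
Your proposal is correct and follows essentially the paper's route. It uses the geometric closed forms, splits the spectral sums at $k=\tau^{1/a}$, obtains the causal sources $Z^b_t\asymp\rho q_{\rm bw}s\tau_t$ (via $K_{JJ}^H\asymp M$) and $Y^f_t$ (via telescoping the $S_2$ mass), and takes lower bounds on a constant-fraction window of updates where $U_u\gtrsim\tau^\nu$. The paper packages your Step 4--5 integration as the weighted law \eqref{ca:ca:weightedlaw}, using the window $[T/2,3T/4)$ in general and $[3T/4,T)$ for last-iterate $G$-responses, exactly as in your Step 5.

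Two cosmetic fixes. First, the averaged filter has $T-u$ remaining steps, so in Step 3 you should use $m+\eta$ rather than $m$; otherwise the response at $u=T-1$ wrongly vanishes. Second, in the main-obstacle paragraph the window that excludes small $\tau_u$ should read $u\ge T/4$ (or $u\ge T/2$ with $m\ge\tau/4$), not $u\le T/2$.
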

\begin{proof}
Put $k=\tau^{1/a}\le M$. Geometric summation gives
\eqref{ca:ca:geometric}; \eqref{sc:eq:productnew} gives
$1-r_j^n\asymp\min(1,n\eta g_j)$ and two-sided exponential bounds for
$r_j^n$. Hence $d_{{\rm avg},j}\asymp\min(1,(\tau g_j)^{-1})$.
Split the bias sum at $k$, using $(\tau g_j)^2r_j^{2T}\le1$
for the last iterate. Indices $j\in[k/2,k]$ give the lower bound
(use $j=1$ if $k<2$):
\begin{equation}\label{ca:ca:biaslaw}
 \mathcal B^{\rm out}\le C\left[
 \tau^{-2}\sum_{j\le k}j^{2a-b}+\sum_{j>k}j^{-b}\right]
 \le Ck^{1-b},\qquad \mathcal B^{\rm out}\ge ck^{1-b}.
\end{equation}
The same split gives
\begin{equation}\label{ca:ca:saturationlaw}
 \sum_{j\le M}g_j^{\ell-2}\min(1,\tau g_j)^2
 \asymp\tau^{2-\ell+1/a},\qquad \ell=1,2.
\end{equation}
The head and tail are bounded by $Ck^{a(2-\ell)+1}$ and
$C\tau^2k^{1-a\ell}$; the indices near $k$ give the lower bound,
including $k=M$.
Define $K_{\ell,\rm out}=\eta\sum_{u<T}\sum_jg_j^\ell
\ell_{{\rm out},j,u}^2$. Direct summation gives
\begin{gather}
 K_{\ell,\rm last}=\sum_jg_j^{\ell-1}
                \frac{1-r_j^{2T}}{2-\eta g_j}, \quad
 K_{\ell,\rm avg}=\frac{\eta}{\tau^2}
     \sum_{n=1}^T\sum_jg_j^{\ell-2}(1-r_j^n)^2.
 \label{ca:ca:lastmass}
\end{gather}
Use the two intervals
$F=[\lceil T/2\rceil,\lceil3T/4\rceil)$ and
$L_T=[\lceil3T/4\rceil,T)$, each containing at least $T/8$ updates.
The split at $k$ gives $K_{2,\rm last}\le\Theta$ and
$K_{1,\rm last}\le Ck$. The spectral lower bound
\eqref{si:eq:explicitsp} on $F$ gives
$\eta\sum_{u\in F}S_1^H(T,u)\ge ck$.
The first eigenvector gives
$\eta\sum_{u\in L_T}S_2^H(T,u)\ge c$, since
$\eta\lfloor T/4\rfloor\ge\tau/8\ge4$.
For the average, \eqref{ca:ca:saturationlaw} divided by $\tau$ gives
the full upper bounds and matching lower bounds on $F$.
For source weights $0\le w_u\le A_w\tau^\xi$ throughout training and
$w_u\ge a_w\tau^\xi$ on $L_T$ for last/$G$, or on $F$ otherwise,
\eqref{eq:output-response} and $c_J\le J_{jj}\le C_J$ imply
\begin{equation}\label{ca:ca:weightedlaw}
 \sum_{u<T}h_uw_u\mathcal R_u^{\rm out}(G)\asymp\rho\tau^\xi d_{\rm out},
 \qquad
 \sum_{u<T}h_uw_u\mathcal R_u^{\rm out}(J)\asymp\rho\tau^\xi k.
\end{equation}
Here $\xi\ge0$ and $a_w,A_w>0$ are fixed, and the comparison constants
may depend on $a_w,A_w$. A common nonnegative prefactor may multiply
both weight bounds and conclusions.
Use the residual bound \eqref{eq:reference-history-factor} and parameter
bounds \eqref{si:eq:Uupper}--\eqref{si:eq:Ulower}:
$U_u\le C\tau^\nu$ throughout training and $U_u\ge c\tau^\nu$
when $u\ge\lceil T/4\rceil$.
Write $f=f^+$, $\beta=\beta^+$ and $m=m^+$.
The causal sources in \eqref{eq:scheduled-conversion-sources} satisfy
\begin{gather}
 c\rho\beta Ms\tau_t\le Z_t^{b,+}
                 \le CK_s^{\rm ref}\rho\beta Ms\tau_t,\notag\\
 0\le Y_t^{f,+}\le C\rho f\tau^\nu,\qquad
 Y_t^{f,+}\ge c\rho f\tau^\nu\quad(t\ge\lceil T/4\rceil).
 \label{ca:ca:Ybounds}
\end{gather}
For $Z^{b,+}$, sum the residual bounds against $K_{JJ}^H\asymp M$
from \eqref{cl:JJ}. For $Y^{f,+}$, telescoping bounds its kernel mass
by $\eta\sum_{u<t}S_2^H(t,u)\le\Theta$.
For its lower bound, sum over the last quarter of updates before $t$.
There $\tau_u\ge3\tau_t/4\ge3\tau/16$, so $U_u\ge c\tau^\nu$;
the same sum using the first eigenvector at horizon $t$ is bounded below since
$\tau_t\ge\tau/4\ge8$.
Finally apply \eqref{ca:ca:weightedlaw} to $R$ (variance and backward),
$U$ (forward and same-update), $Z^{b,+}$ (backward-to-forward), and
$Y^{f,+}$ (forward-to-backward), multiplying by the coefficients in
\eqref{as:output-sources}. Using \eqref{eq:rounding-source-coefficients}
and $c_g\le\theta_G\le\Theta$ gives the table. Under Assumption~\ref{as:assumption-schedules},
\eqref{as:native-law} gives the comparison with actual SR-induced error.
\end{proof}

\begin{corollary}[Constant-learning-rate exact-arithmetic risk]
\label{ca:constant-reference}
Assume the model and comparison bounds of Definition~\ref{as:def-witness},
$M\ge M_{\rm w}$, $M^a\ge\tau_*$, $S\in\Omega_M$ and integer $B\ge1$.
If $\eta>0$, $\eta g_1\le c_0$, $\rho=\eta/B\le\rho_{\rm sch}$ and
$\tau=\eta T\in[\tau_*,M^a]$, both prescribed outputs satisfy
\begin{equation}\label{ca:constant-reference-laws}
\begin{aligned}
 c\bigl(\tau^{-(b-1)/a}+\rho s\bigr)
 &\le E_0^{\rm last}
 \le CK_s^{\rm ref}\bigl(\tau^{-(b-1)/a}+\rho s\bigr),\\
 c\bigl(\tau^{-(b-1)/a}+\rho s\tau^{1/a-1}\bigr)
 &\le E_0^{\rm avg}
 \le CK_s^{\rm ref}\bigl(\tau^{-(b-1)/a}+\rho s\tau^{1/a-1}\bigr).
\end{aligned}
\end{equation}
Adding $\bar A$ gives full exact-arithmetic prediction risk.
The exact-arithmetic residual bound is \eqref{eq:reference-history-factor}, distinct from
the sketch-noise normalization $K_s$. No rounding-gain condition is needed.
\end{corollary}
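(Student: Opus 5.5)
The plan is to read Corollary~\ref{ca:constant-reference} off the exact-arithmetic risk comparison in Theorem~\ref{as:theorem-clean} together with the constant-rate component scales in Lemma~\ref{ca:ca:filterslemma}. No rounding enters, so the admissibility gains in Assumption~\ref{as:assumption-schedules} play no role. Only the cap, rate and clock conditions are needed, and these are exactly the hypotheses $\eta g_1\le c_0$, $\rho\le\rho_{\rm sch}\le\rho_{\rm src}$ and $\tau\in[\tau_*,M^a]$.

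\emph{Sandwich.} On $S\in\Omega_M$, Lemma~\ref{source:thm:main} gives $c_VG^{-\omega_V}\preceq V\preceq C_VG^{-\omega_V}$. The risk comparison \eqref{as:clean-coefficientwise} then yields
\[
 c_V\mathcal B^{\rm out}+\mathcal V^{\rm out}\le E_0^{\rm out}\le C_V\mathcal B^{\rm out}+2\mathcal V^{\rm out}.
\]

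\emph{Bias.} Lemma~\ref{ca:ca:filterslemma} states $\mathcal B^{\rm out}\asymp\tau^{-(b-1)/a}$ for both outputs. This follows from the geometric formulas \eqref{ca:ca:geometric} and the split of the spectral sum at $k=\tau^{1/a}\le M$ in \eqref{ca:ca:biaslaw}.

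\emph{Variance.} The same lemma gives $\mathcal V^{\rm out}$ a lower bound of order $c\rho s\,d_{\rm out}$ and an upper bound of order $CK_s^{\rm ref}\rho s\,d_{\rm out}$, with $d_{\rm last}=1$ and $d_{\rm avg}=\tau^{1/a-1}$. The extra factor $K_s^{\rm ref}$ in the upper bound comes from $s\le R_u\le K_s^{\rm ref}s$ in \eqref{eq:reference-history-factor}. This is the source of the asymmetric constant in the statement.

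\emph{Assembly.} Combining the three steps gives \eqref{ca:constant-reference-laws}, using $K_s^{\rm ref}\ge1$ to absorb $C_V$ and the factor~$2$. Adding $\bar A$ through \eqref{eq:estimation-risk} gives the full exact-arithmetic risk.

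\emph{Main obstacle.} The delicate step is the variance scale, in particular the lower bound for the average iterate. Here I would use \eqref{ca:ca:weightedlaw} with constant weight $w_u=R_u$, so $\xi=0$ and $R_u\ge s$. That requires the identity $K_{2,\rm avg}=\frac{\eta}{\tau^2}\sum_{n\le T}\sum_j(1-r_j^n)^2$ and a lower bound of order $\tau^{1/a-1}$. The bound comes from \eqref{ca:ca:saturationlaw} with $\ell=2$, restricted to the middle window $F$ of updates, using $\tau\ge\tau_*\ge32$ so that $F$ has cumulative time of order $\tau$. For the last iterate the lower bound of order $\rho s$ comes from the first eigenvector over the final quarter of updates, as in the proof of that lemma.
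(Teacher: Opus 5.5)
Your proposal is correct and follows the paper's proof. The paper likewise combines the bias--variance sandwich \eqref{as:clean-coefficientwise} of Theorem~\ref{as:theorem-clean} with the constant-rate scales for $\mathcal B^{\rm out}$ and $\mathcal V^{\rm out}$ from Lemma~\ref{ca:ca:filterslemma}, with $K_s^{\rm ref}$ entering only through the variance upper bound, and then substitutes $d_{\rm last}=1$ and $d_{\rm avg}=\tau^{1/a-1}$; your sketch of the variance lower bounds (the middle update window for the average, the top eigenvector over the final quarter for the last iterate) is exactly how that lemma is proved.
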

\begin{proof}
Theorem~\ref{as:theorem-clean} bounds exact-arithmetic by its bias and
variance functionals. Lemma~\ref{ca:ca:components} evaluates both,
with $K_s^{\rm ref}\ge1$ explicit in the upper.
Substituting $d_{\rm last}=1$ and $d_{\rm avg}=\tau^{1/a-1}$
completes the proof.
\end{proof}

\begin{corollary}[Constant-learning-rate scaling law]
\label{ca:constant-corollary}
Under the hypotheses of Corollary~\ref{ca:constant-reference}, take
constant allowed $q_{\fw},q_{\bw}$, including exact arithmetic at either
location. Under Assumption~\ref{as:assumption-schedules}, the SR-induced
errors satisfy \eqref{eq:law-last} and \eqref{eq:law-avg}; adding
\eqref{eq:clean-law-last} or \eqref{eq:clean-law-avg} gives full risk.
Within the same cap, rate and clock domain, with $C_R^{\rm glob}$ from
\eqref{as:F3},
\begin{equation}\label{ca:ca:Ainclusions}
 \{\Gamma_{\rm stab}\le\kappa/\max(1,C_R^{\rm glob})\}
 \subseteq\{\text{admissible schedules}\}
 \subseteq\{\Gamma_{\rm stab}\le\kappa\}.
\end{equation}
Thus \eqref{s3:eq:constant-gain} is sufficient with
\begin{equation}\label{app:eq:constant-sufficient-gain}
 \kappa_{\rm c}=\frac{\kappa_{\rm sch}}{\max(1,C_R^{\rm glob})}.
\end{equation}
\end{corollary}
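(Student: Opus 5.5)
The plan is to read the corollary off the schedule-level machinery already in place. First I settle the admissibility inclusions \eqref{ca:ca:Ainclusions}, so that Assumption~\ref{as:assumption-schedules} can be invoked. Then I apply the two-sided comparison \eqref{as:native-law}, $\Delta^{\rm out}\asymp\mathcal N^{\rm out}=\Psi_+^{\rm out}+\Conv_{b\to f}^{\rm out}+\Conv_{f\to b}^{\rm out}$. Finally I substitute the six component scales of Lemma~\ref{ca:ca:components}, with $d_{\rm last}=1$ and $d_{\rm avg}=\tau^{1/a-1}$, and check that the resulting sum is comparable to the three-term expressions \eqref{eq:law-last} and \eqref{eq:law-avg}.

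\emph{Admissibility.} The inclusion of the admissible class in $\{\Gamma_{\rm stab}\le\kappa\}$ is immediate from \eqref{as:class}. For the reverse inclusion, at constant rates and precisions $\Gamma_{\rm stab}=\rho q_\fw q_\bw\tau/M$. For $\Gamma_\times$, I bound $R_u\le C_R^{\rm glob}r_T$ along the history, combining \eqref{si:eq:S1exports} with $s\le r_T$. Summing $\sum_{u<t}h_u\le\rho\tau$ then gives $\Gamma_\times\le C_R^{\rm glob}\Gamma_{\rm stab}$, which is at most $\kappa$ on the smaller set.

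For $\Gamma_{f\to b}$, I bound the numerator by $\rho q_\fw\max_{u<t}U_u\cdot\Theta$ using the telescoping mass bound \eqref{si:eq:forwardtail}. I then split into two cases. When $\tau_t\ge\tau_*$, I use $U_u\le C_{\rm src}U_t$ and divide by the $q_\fw U_t$ part of the denominator. When $\tau_t<\tau_*$, I use $U_u\le C_{\rm src}(1+\tau_*)^\nu$ together with $MR_t\ge\sigma^2c_{\rm src}$ (since $M\ge1$), noting that $q_\fw\le1$. Both cases give $\Gamma_{f\to b}\le C_{\rm fb}\Theta\rho$. This is at most $\kappa_{\rm sch}$ by the last minimum in \eqref{as:F5}, so no condition beyond \eqref{s3:eq:constant-gain} is needed. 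This also yields \eqref{app:eq:constant-sufficient-gain}.

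\emph{Matching scales.} The forward direct terms already match: $\tau^\nu d_{\rm last}=\tau^{(a-b+1)/a}$ and $\tau^\nu d_{\rm avg}=\tau^{(2-b)/a}$. For the backward and $\Conv_{b\to f}$ terms, the lemma uses $s$, whereas the law uses $s_M+\tau^{-(b-1)/a}$. By Corollary~\ref{risk:noise-comparison}, $s\asymp s_M$. Moreover, $\tau\ge\tau_*$ gives $\tau^{-(b-1)/a}\le\tau_*^{-(b-1)/a}\le C\sigma^{-2}s$, so $s\asymp s_M+\tau^{-(b-1)/a}$ with constants depending on the fixed $\sigma^2$. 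The upper factors $K_s^{\rm ref}$ are fixed as well.

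\emph{Absorbing the surplus interaction terms.} This is the main bookkeeping step; in each case two of the six components must be absorbed into the three-term law. For the last iterate, I absorb $\Psi_{+,M}$ and $\Conv_{f\to b}=\rho\Psi_{+,M}$ into the forward term. The ratio of $\Psi_{+,M}$ to the forward term is $q_\bw\tau^{1/a}/M\le q_\bw\le1$, because $\tau\le M^a$, and $\rho\le\rho_{\rm sch}$ is bounded; $\Conv_{b\to f}$ is then exactly the joint term of \eqref{eq:law-last}. For the average, $\Psi_{+,M}$ is the joint term of \eqref{eq:law-avg}, and $\Conv_{f\to b}=\rho\Psi_{+,M}\le\rho_{\rm sch}\Psi_{+,M}$. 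The term $\Conv_{b\to f}=\rho^2q_\fw q_\bw s\tau^{1/a}/M$ equals $\rho q_\fw$ times the backward term, hence is at most a constant times it. Lower bounds are trivial, since each retained term appears in $\mathcal N^{\rm out}$ and \eqref{as:native-law} supplies the lower constant $c_{\rm native}$. Adding Corollary~\ref{ca:constant-reference} gives full risk. I expect the $\Gamma_{f\to b}$ estimate at early clocks, where $U_t$ can be tiny, to be the only delicate point; there I rely on $MR_t\ge M\sigma^2$ in the denominator rather than on $U_t$.
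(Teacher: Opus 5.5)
Your proposal is correct and follows essentially the same route as the paper's proof. It uses the same bound $\Gamma_\times\le C_R^{\rm glob}\Gamma_{\rm stab}$ from the residual bounds, the same telescoping estimate split at $\tau_t=\tau_*$ giving $\Gamma_{f\to b}\le C_{\rm fb}\Theta\rho\le\kappa_{\rm sch}$, and the same absorptions of the surplus interaction terms: the same-update joint term into the forward term via $q_\bw\tau^{1/a}/M\le1$ for the last iterate, and $\operatorname{Conv}_{b\to f}$ into the backward term via $\rho q_\fw\le\rho_{\rm sch}$ for the average.
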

\begin{proof}
For constant rates and precisions, the gain definitions
\eqref{as:allclock-gains}--\eqref{as:window-gains} give
\begin{gather}
 \Gamma_{\rm stab}=\rho q_{\fw}q_{\bw}\tau/M,\qquad
 \Gamma_\times=\frac{q_{\fw}q_{\bw}\eta^2}
                         {BM(s+\tau^{-(b-1)/a})}
                         \sum_{u<T-1}R_u,\notag\\
 \tfrac12c_R^{\rm glob}\Gamma_{\rm stab}\le\Gamma_\times
 \le C_R^{\rm glob}\Gamma_{\rm stab},\qquad
 \Gamma_{f\to b}\le C_{\rm fb}\rho\theta_G\le\kappa.
 \label{ca:constant-gains}
\end{gather}
The residual prefix sum is largest at $T-1$; the residual bounds and
$(T-1)/T\in[1/2,1)$ give the comparison of $\Gamma_\times$ with
$\Gamma_{\rm stab}$.
For $\Gamma_{f\to b}$, the maximum is empty if $q_\bw=0$, and its
numerator is zero at $t=0$. Otherwise write its numerator and denominator
as $N_t,D_t$. Telescoping gives
$N_t\le\rho q_\fw\theta_G\max_{u<t}U_u$.
For $\tau_t\ge\tau_*$, \eqref{si:eq:S1exports} gives
$N_t/D_t\le C_{\rm src}\rho\theta_G$; for $\tau_t<\tau_*$,
use $U_u\le C_{\rm src}(1+\tau_*)^\nu$ and $D_t\ge\sigma^2$.
The constant $C_{\rm fb}$ in \eqref{as:F3} covers both cases, giving
\begin{equation}\label{si:port:fb}
 \Gamma_{f\to b}(T)\le C_{\rm fb}\rho\theta_G.
\end{equation}
Since $\rho_{\rm sch}\le\kappa/(C_{\rm fb}\Theta)$, this gain is at
most $\kappa$. The bounds for $\Gamma_\times$ then prove
\eqref{ca:ca:Ainclusions} and the stated sufficient threshold.

Apply \eqref{as:native-law} to Lemma~\ref{ca:ca:components}.
The forward-to-backward term is $\rho$ times the joint scale.
For the last iterate, the joint term is bounded by the forward
term using $q_\bw\tau^{1/a}/M\le1$.
For the uniform average, the backward-to-forward term is bounded by
the backward term using $\rho q_\fw\le\rho_{\rm sch}$.
These multiplicative comparisons include zero rounding coefficients.
Substituting \eqref{ca:ca:dout} yields the SR laws with $s(S)$.
Use $s\le s+\tau^{-(b-1)/a}\le(1+\sigma^{-2})s$ and
Corollary~\ref{risk:noise-comparison} to obtain their stated forms with
$s_M$. Adding the exact-arithmetic and approximation bounds completes
the full-risk comparison.
\end{proof}

\endgroup 
\subsection{Compute-optimal risk with fixed precisions}
\label{app:fixed-precision-frontier}

Proposition~\ref{app:prop-fixed-bit-compute} proves the compute orders in
Table~\ref{tab:compute-precision} by balancing width and training duration.
Use the cost
\eqref{eq:declared-compute} and  
Corollary~\ref{ca:constant-corollary}. Precision is fixed within each run
and across budgets. Exact precision is allowed, with $4^{-\infty}=0$.
For a fixed precision pair write $W=c(p_\fw)+c(p_\bw)>0$; then
$f=(W/\rho)M\tau$, where $\tau=\eta T$ and $\rho=\eta/B$.

\begin{proposition}[Fixed-precision compute orders]
\label{app:prop-fixed-bit-compute}
Fix $B\in\mathbb N$, $0<\eta\le\min\{c_0/C_g,B\rho_{\rm sch}\}$ and
an allowed precision pair independently of $F$.
For all sufficiently large $F$, the optimal uniformly averaged risk
in the full constant-rate class with $f\le F$ has the order in
Table~\ref{tab:compute-precision}. 
\end{proposition}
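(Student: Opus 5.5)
Since $\eta,B$ (hence $\rho$) and the precision pair are fixed, I would reduce the statement to a two-variable optimization over $(M,\tau)$. The cost constraint is $f=(W/\rho)M\tau\le F$, i.e.\ $M\tau\le \rho F/W\asymp F$. Throughout I would work in the class where Corollary~\ref{ca:constant-corollary} applies: $\tau\in[\tau_*,M^a]$, and, when both precisions are finite, the gain condition $\Gamma_{\rm stab}=\rho q_\fw q_\bw\tau/M\le\kappa$. On this class the full averaged risk is, up to fixed constants,
\[
\Phi(M,\tau)=M^{1-b}+\tau^{-(b-1)/a}+\rho s_M\tau^{-(a-1)/a}+\Delta^{\rm avg},
\]
with $\Delta^{\rm avg}$ given by \eqref{eq:law-avg}. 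Since $\rho,\sigma^2$ are fixed, $s_M\asymp1$. The term $\tau^{-(a-1)/a}$ is dominated by $\tau^{-(b-1)/a}$ because $b<a$. So $\mathscr R^{\rm avg}\asymp M^{1-b}+\tau^{-(b-1)/a}+\Delta^{\rm avg}$. For each row of Table~\ref{tab:compute-precision} I would prove two things: a lower bound $\min\Phi\gtrsim$ the listed rate over all feasible $(M,\tau)$, and an upper bound obtained by exhibiting the listed $(M,\tau)$ and checking feasibility.

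\textbf{Row-by-row simplification of $\Delta^{\rm avg}$.}
\begin{itemize}
\item \emph{Exact/exact:} $\Delta=0$. Minimizing $M^{1-b}+\tau^{-(b-1)/a}$ under $M\tau\le F$ and $\tau\le M^a$ gives the balance $\tau=M^a$, hence $M=F^{1/(a+1)}$. For the lower bound: if $M\le F^{1/(a+1)}$ then $M^{1-b}$ is already large enough; otherwise $\tau\le F/M\le F^{a/(a+1)}$, so $\tau^{-(b-1)/a}$ is large enough.
\item \emph{Finite/exact:} $\Delta\asymp \rho q_\fw\tau^{(2-b)/a}/M$. At $\tau\le M^a$ this is at most $M^{1-b}$, so it only adds a term of existing order. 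The lower bound is inherited from the exact row, since $\Delta\ge0$.
\item \emph{Exact/finite:} $\Delta\asymp \rho q_\bw\tau^{1/a}/M$, using $s_M+\tau^{-(b-1)/a}\asymp1$. I would balance $M^{1-b}$, $\tau^{-(b-1)/a}$ and $\tau^{1/a}/M$. Balancing the first and third gives $\tau^{1/a}=M^{2-b}$; this is worse than $\tau=M^{a/b}$, so I would instead equate $\tau^{-(b-1)/a}$ with $\tau^{1/a}/M$, which yields $M=\tau^{b/a}$. Then $M\tau=F$ gives $\tau=F^{a/(a+b)}$, $M=F^{b/(a+b)}$, and risk $F^{-(b-1)/(a+b)}$. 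I would check that $M^{1-b}$ is smaller here (it equals $F^{-b(b-1)/(a+b)}$) and that $\tau\le M^a$. For the lower bound I would use $\tau^{-(b-1)/a}+\tau^{1/a}/M\ge c\,M^{-(b-1)/b}$, obtained by minimizing over $\tau$. Then, with $M\tau\le F$: either $\tau\ge F^{a/(a+b)}$, which forces $M\le F^{b/(a+b)}$, or $\tau$ is small and $\tau^{-(b-1)/a}$ dominates.
\item \emph{Finite/finite:} The joint term $q_\fw q_\bw\tau^{(a-b+2)/a}/M^2$ and the gain constraint $\tau\lesssim M$ both enter. With $\tau\le\kappa M/(\rho q_\fw q_\bw)$, the bound $\tau^{-(b-1)/a}$ together with $M\tau\le F$ forces $\tau\lesssim F^{1/2}$, so the risk is at least $F^{-(b-1)/(2a)}$. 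For the upper bound I would take $\tau=cM$ with small $c$ and $M=F^{1/2}$. Then $\Delta\lesssim M^{1-b}\tau^{(2-b)/a}/\ldots$ should be checked term by term: the backward term gives $\tau^{1/a}/M=M^{1/a-1}$, and since $1/a-1\le-(b-1)/a$ iff $a\ge b$, it is dominated. The joint term is similarly dominated.
\end{itemize}

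\textbf{Main obstacle.} The main obstacle is the lower-bound side in the finite/finite row. That argument relies on the admissibility constraint \eqref{s3:eq:constant-gain}, i.e.\ on the fact that the optimization is restricted to the admissible class ("full constant-rate class"). I would make this explicit by using the inclusion \eqref{ca:ca:Ainclusions}: every admissible schedule has $\Gamma_{\rm stab}\le\kappa$, which bounds $\tau\le\kappa M/(\rho q_\fw q_\bw)$. One remaining point is that if $\tau_*>M^a$ the class is empty for small $M$. For large $F$ this is harmless, because the chosen $M$ grows with $F$. All constants depend only on $\rho,\sigma,q$'s and the model, which are fixed, so the orders hold as $F\to\infty$.
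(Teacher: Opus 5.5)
Your proposal is correct and follows essentially the paper's route. The lower bounds combine the bias $\tau^{-(b-1)/a}$ and the budget $M\tau\lesssim F$ with, row by row, the clock window (you use the approximation error $M^{1-b}$ instead, which works equally well), the direct backward term, or the necessary gain bound $\Gamma_{\rm stab}\le\kappa$. The upper bounds use the allocations $M\asymp\tau^{r/a}$ with $r=1,b,a$, checked term by term.

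One small slip: for $b>2$ the forward term $\tau^{(2-b)/a}/M$ is not at most $M^{1-b}$ for every $\tau\le M^a$. The universal bound $\tau^{(2-b)/a}/M\le\tau^{-(b-1)/a}$, which follows from $\tau^{1/a}\le M$ and is what the paper uses, gives what you need. At your allocation $\tau\asymp M^a$ the term is $\asymp M^{1-b}$ in any case.
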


\begin{proof}
Put $\beta=(b-1)/a$. The bias lower bound is a fixed multiple of
$\tau^{-\beta}$. If $q_\bw=0$, the window $\tau\le M^a$, budget and rate ceiling give
\begin{equation}\label{app:eq:exact-compute-clock-lower}
 F\ge(W/\rho)\tau^{1+1/a}
       \ge(W/\rho_{\rm sch})\tau^{1+1/a},
\end{equation}
proving the first lower bound.
For backward-only rounding, the constant-rate law gives
\begin{equation}\label{app:eq:backward-compute-lower}
 \mathscr R^{\rm avg}\gtrsim
 \tau^{-\beta}+\frac{\rho q_\bw\sigma^2\tau^{1/a}}M
 \gtrsim\tau^{-\beta}+\frac{Wq_\bw\sigma^2}{F}\tau^{1+1/a}.
\end{equation}
Splitting at $\tau=F^{a/(a+b)}$ proves the second lower bound.
When both scales are positive, the necessary stability-gain condition is
$\rho q_\fw q_\bw\tau/M\le\kappa_{\rm sch}$.
Thus
\begin{equation}\label{app:eq:joint-compute-clock-lower}
 F\ge(Wq_\fw q_\bw/\kappa_{\rm sch})\tau^2,
\end{equation}
which yields the third lower bound. 

For the three upper bounds, respectively set $r=1,b,a$. Choose a fixed
large $A\ge1$, also satisfying
$A\ge\rho q_\fw q_\bw/\kappa_{\rm c}$ in the third case, and take
\[
 T_F=\left\lfloor dF^{a/(a+r)}\right\rfloor,\qquad
 \tau_F=\eta T_F,\qquad M_F=\left\lceil A\tau_F^{r/a}\right\rceil.
\]
A sufficiently small fixed $d>0$ makes $f\le F$ eventually. Both width
thresholds and the minimum cumulative learning rate are met; the cap and rate ceiling were fixed above,
and $M_F\ge\tau_F^{1/a}$ verifies the clock window. The mixed gain is
zero in the first two cases and at most $\kappa_{\rm c}$ in the third,
so \eqref{s3:eq:constant-gain} verifies all feedback bounds.

It remains to check risk. Approximation error, bias and exact-SGD
variance are $O(\tau_F^{-\beta})$, since $r\ge1$ and $a>b$.
Forward rounding contributes at most $O(\tau_F^{-\beta})$.
Backward rounding is absent in the first case, is
$O(\tau_F^{-\beta})$ in the second, and is
$O(\tau_F^{(1-a)/a})=O(\tau_F^{-\beta})$ in the third.
The third case's joint term is
$O(\tau_F^{(2-a-b)/a})=O(\tau_F^{-\beta})$.
Corollary~\ref{ca:constant-corollary} therefore gives all upper bounds,
with $\tau_F\asymp F^{a/(a+r)}$.
\end{proof}
 
\subsection{Compute-optimal scaling with optimized learning rate and batch size}
\label{r4:sec:compute-consequences}

Proposition~\ref{app:prop-fixed-bit-compute} fixes the learning rate and
batch size across budgets. We now optimize both as functions of $F$,
while keeping them constant within each run. Uniform averaging has
Table~\ref{tab:compute-precision}'s risk exponents; the last-iterate
result also accounts for reducing the SGD-noise contribution.

Use the same fixed model 
with widths satisfying \eqref{as:main-width}.
Fix one allowed precision pair independently of $F$, and write
$\beta_0=(b-1)/a$ and $W_p=c(p_\fw)+c(p_\bw)>0$.
A configuration $\pi$ specifies integers $M,T,B\ge1$ and a constant
rate $\eta>0$, all chosen before the random draws.
Let $\mathcal C_p(F)$ contain those with nonempty $\Omega_M$, both width
conditions, the full constant-schedule assumptions of
Corollary~\ref{ca:constant-corollary} for every $S\in\Omega_M$, and
$f=BTMW_p=(W_p/\rho)M\tau\le F$.
All three feedback bounds in Assumption~\ref{as:assumption-schedules}
remain required.

Below $c,C,F_0>0$ depend only on the fixed model, comparison bounds,
admissibility thresholds and precision pair, including $K_s,K_s^{\rm ref}$.
They are uniform in $M,T,B,\eta,S$, but need not remain uniform as a
positive rounding scale tends to zero. Statements hold for $F\ge F_0$.

\begin{proposition}[Averaged compute-optimal risk]
\label{r4:prop-variable-rho-average}
Let $\gamma_{\rm avg}$ be the risk exponent in the corresponding row of
Table~\ref{tab:compute-precision}. Every $\pi\in\mathcal C_p(F)$ satisfies
$\mathscr R_S^{\rm avg}(\pi)\ge cF^{-\gamma_{\rm avg}}$ on
$\Omega_{M(\pi)}$. Conversely, a deterministic configuration
$\pi_F\in\mathcal C_p(F)$ attains
$\mathscr R_S^{\rm avg}(\pi_F)\le CF^{-\gamma_{\rm avg}}$ uniformly on
its event. Thus optimizing the constant learning rate and batch
size leaves Table~\ref{tab:compute-precision}'s optimal risk orders unchanged.
\end{proposition}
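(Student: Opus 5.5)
The plan is to prove the two halves separately. The upper bound comes from reusing the fixed-$(\eta,B)$ construction. The lower bound comes from rerunning the three lower-bound arguments of Proposition~\ref{app:prop-fixed-bit-compute}, making sure that every step is uniform in $\rho=\eta/B$. The key identity throughout is $f=(W_p/\rho)M\tau\le F$, which I will use in the form $\rho/M\ge W_p\tau/F$.

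\emph{Upper bound.} Fix once and for all some admissible pair $(\eta_0,B_0)$ with $\eta_0\le\min\{c_0/C_g,B_0\rho_{\rm sch}\}$, and take $\pi_F=(M_F,T_F,B_0,\eta_0)$ from the proof of Proposition~\ref{app:prop-fixed-bit-compute}. That proof already checks the budget, both width conditions, the clock window $\tau_F\le M_F^a$, and the feedback bounds via \eqref{s3:eq:constant-gain}, so $\pi_F\in\mathcal C_p(F)$. The bound $\mathscr R^{\rm avg}_S(\pi_F)\le CF^{-\gamma_{\rm avg}}$ on $\Omega_{M_F}$ then follows directly from Corollary~\ref{ca:constant-corollary}. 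Enlarging the class to variable $(\eta,B)$ can only lower the optimum, so nothing new is needed here.

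\emph{Lower bound.} Take any $\pi\in\mathcal C_p(F)$ and $S\in\Omega_M$. Corollary~\ref{ca:constant-corollary}, whose lower constants are uniform in $M,T,B,\eta$, gives
\[
\mathscr R^{\rm avg}\ge c\Bigl(\tau^{-\beta_0}+\frac{\rho q_\bw\sigma^2\tau^{1/a}}{M}\Bigr),
\]
where I use $s_M\ge\sigma^2$. I then split into the four rows of Table~\ref{tab:compute-precision}.
\begin{enumerate}
\item \emph{Backward exact (first two rows).} The clock window $M\ge\tau^{1/a}$ and the rate ceiling $\rho\le\rho_{\rm sch}$ give $F\ge(W_p/\rho_{\rm sch})\tau^{1+1/a}$. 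Hence $\tau\le CF^{a/(a+1)}$ and $\mathscr R^{\rm avg}\ge cF^{-(b-1)/(a+1)}$.
\item \emph{Backward finite, forward exact.} Substituting $\rho/M\ge W_p\tau/F$ removes $\rho$ entirely:
\[
\mathscr R^{\rm avg}\ge c\bigl(\tau^{-\beta_0}+W_pq_\bw\sigma^2\tau^{1+1/a}/F\bigr).
\]
Minimizing over $\tau>0$ balances the two terms at $\tau\asymp F^{a/(a+b)}$, because $1+1/a+\beta_0=(a+b)/a$. This gives $cF^{-(b-1)/(a+b)}$, with $c$ depending on $q_\bw$; this dependence is allowed by the stated dependencies.
\item \emph{Both finite.} Admissibility forces $\Gamma_{\rm stab}=\rho q_\fw q_\bw\tau/M\le\kappa_{\rm sch}$, by the right inclusion of \eqref{ca:ca:Ainclusions}. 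Combined with $\rho/M\ge W_p\tau/F$ this gives $F\ge W_pq_\fw q_\bw\tau^2/\kappa_{\rm sch}$. Hence $\tau\le CF^{1/2}$, and the bias term alone yields $cF^{-(b-1)/(2a)}$.
\end{enumerate}

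\emph{Main obstacle.} The main thing to check is that no lower-bound constant secretly depends on $\rho$ or $B$. This includes the bias lower bound $\mathcal B^{\rm avg}\gtrsim\tau^{-\beta_0}$ and the backward lower bound $\Psi_{-}$ from Proposition~\ref{as:prop-native-lower}. Lemma~\ref{ca:ca:components} asserts exactly this uniformity under $\eta g_1\le c_0$ and $\rho\le\rho_{\rm sch}$, so the lower bounds hold for every member of $\mathcal C_p(F)$.

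A related point is the stability case. There I must use admissibility as a necessary condition, and the inclusion \eqref{ca:ca:Ainclusions} provides precisely that. The lower bound therefore cannot be evaded by pushing $\rho$ small, since that only raises the cost per unit of $\tau$, or by pushing $M$ large, since that only raises the cost directly.
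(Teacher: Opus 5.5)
Your proposal is correct and follows the paper's proof. For the lower bound, you rerun the three inequalities of Proposition~\ref{app:prop-fixed-bit-compute}, uniformly in the actual $\rho$: rate ceiling plus clock window when $q_\bw=0$, the cost identity $\rho\ge W_pM\tau/F$ for backward-only rounding, and the necessary stability gain for joint rounding. For the upper bound, you reuse that proposition's fixed-rate integer attainer; the paper takes $B=1$ and a small fixed $\eta_0$, and also notes that $f\ge W_pM$ leaves only finitely many feasible widths, which does not affect your bounds.
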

\begin{proof}
The lower-bound argument of
Proposition~\ref{app:prop-fixed-bit-compute} is uniform in the actual
admissible $\rho$. For $q_\bw=0$,
\eqref{app:eq:exact-compute-clock-lower} uses only its fixed ceiling.
For backward-only rounding, \eqref{app:eq:backward-compute-lower}
eliminates $\rho$ using the cost identity; for joint rounding,
\eqref{app:eq:joint-compute-clock-lower} eliminates it using the
necessary stability gain. Together with the common bias lower bound,
these same inequalities prove all three lower orders even when the
constant rate and batch vary with $F$.

For the uppers, fix $B=1$ and any sufficiently small fixed
$\eta_0\le\min\{c_0/C_g,\rho_{\rm sch}\}$. The integer policies of
Proposition~\ref{app:prop-fixed-bit-compute} belong to $\mathcal C_p(F)$
and attain precisely the three orders in
Table~\ref{tab:compute-precision}. Finally, $f\ge W_pM$ leaves
only finitely many budget-feasible widths. The pointwise lower and
uniform attainer prove the infimum statement without a joint-width
event or any coupling of sketches across widths.
\end{proof}

\begin{proposition}[Last-iterate compute orders with variable constant rate]
\label{r4:prop-variable-rho-last}
For the same fixed precisions and class $\mathcal C_p(F)$, the preceding
pointwise lower, uniform integer-attainment and conditional-infimum
 conclusions hold for the last iterate with exponent
\begin{equation}\label{r4:eq:variable-rho-last-exponent}
 \gamma_{\rm last}=\begin{cases}
 (b-1)/(a+b),&q_\fw=0,\\
 (b-1)/(2a),&q_\fw>0.
 \end{cases}
\end{equation}
Both exact and backward-only arithmetic have the first exponent;
forward-only and joint rounding have the second. Constants have the
same fixed-dependence and nonuniformity convention as above.
This statement optimizes constant-within-run rates and batches, not
arbitrary learning-rate schedules or iterate weights.
\end{proposition}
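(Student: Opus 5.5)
Following the proofs of Propositions~\ref{app:prop-fixed-bit-compute} and~\ref{r4:prop-variable-rho-average}, I will use the constant-rate last-iterate law \eqref{eq:clean-law-last}--\eqref{eq:law-last}, valid on $\Omega_M$ for every $\pi\in\mathcal C_p(F)$ by Corollary~\ref{ca:constant-corollary}. I will also use the cost identity $f=(W_p/\rho)M\tau\le F$. The key difference from the averaged case is the SGD-noise floor $\rho s_M\ge\rho\sigma^2$: at fixed $\rho$ the last iterate cannot vanish, so $\rho$ has to shrink with $F$. That makes $\rho$ a genuine third variable, and it enters the cost as $1/\rho$.

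\textbf{Lower bounds.} With $\beta_0=(b-1)/a$, every admissible configuration satisfies $\mathscr R^{\rm last}\gtrsim\tau^{-\beta_0}+\rho\sigma^2$. The cost gives $\rho\ge W_pM\tau/F$, and the clock window gives $M\ge\tau^{1/a}$. Hence
\[\mathscr R^{\rm last}\gtrsim\tau^{-\beta_0}+\sigma^2W_p\tau^{1+1/a}/F .\]
Balancing the two terms at $\tau\asymp F^{a/(a+b)}$ gives $F^{-(b-1)/(a+b)}$. This bound holds in all cases, so it covers $q_\fw=0$. When $q_\fw>0$, I will add the forward term of \eqref{eq:law-last}, namely $\rho q_\fw\tau^{\nu}/M$ with $\nu=(a-b+1)/a$. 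Substituting $\rho\ge W_pM\tau/F$ cancels $M$ and gives
\[\frac{\rho q_\fw\tau^\nu}{M}\gtrsim\frac{q_\fw W_p\tau^{1+\nu}}{F}.\]
Against $\tau^{-\beta_0}$, the balance is $\tau^{1+\nu+\beta_0}=\tau^{2}$, so $\tau\asymp F^{1/2}$ and the risk is $F^{-(b-1)/(2a)}$. This bound is smaller than the first one, hence the binding one when $q_\fw>0$. The only nonconstant factor in either bound is $q_\fw$, which is fixed, so both bounds are uniform in $M,T,B,\eta,S$.

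\textbf{Attainers.} I take $B=1$ and integers $\eta,T,M$ chosen so that $\tau$ and $\rho$ follow the balancing exponents.
\begin{itemize}
\item For $q_\fw=0$: $\tau_F\asymp F^{a/(a+b)}$, $M_F\asymp\tau_F^{b/a}$, $\rho_F\asymp\tau_F^{-\beta_0}$. Then $M^{1-b}\asymp\tau^{-\beta_0}$ and $f\asymp M\tau/\rho\asymp F$.
\item For $q_\fw>0$: $\tau_F\asymp F^{1/2}$, $M_F\asymp\tau_F$, $\rho_F\asymp\tau^{-\beta_0}$. Then $f\asymp\tau^{2+\beta_0}$.
\end{itemize}
The second case needs a correction. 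The cost $f\asymp\tau^{2+\beta_0}$ only gives $F^{-(b-1)/(2a+b-1)}$, so the choices $M_F$ and $\rho_F$ must be rebalanced. The fix is to take $M_F=\tau_F^{a\mu}$ for an intermediate exponent $\mu$ with $M\ge\tau^{1/a}$, then optimize $\mu$, $\rho$ and $\tau$ jointly. The constraints are:
\begin{itemize}
\item $M^{1-b}$, $\rho\sigma^2$, and $\rho q_\fw\tau^\nu/M$ are each at most $\tau^{-\beta_0}$;
\item $M\tau/\rho\lesssim F$.
\end{itemize}

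In either case I then check the remaining terms. The backward and joint terms in \eqref{eq:law-last} must be lower order, using $\rho\to0$ and $q_\bw\tau^{1/a}/M\le1$. The gain condition $\rho q_\fw q_\bw\tau/M\le\kappa_{\rm c}$ must hold, and so must the domain conditions $\tau_F\ge\tau_*$, $\eta g_1\le c_0$ and $\rho\le\rho_{\rm sch}$, which all hold for large $F$.

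\textbf{Main obstacle.} The main obstacle is this exponent bookkeeping in the forward-rounding case. Matching the lower bound $F^{-(b-1)/(2a)}$ with an explicit integer attainer requires solving a small linear program in the exponents of $(M,\tau,\rho)$. I will first try the vertex where the bias, the noise floor $\rho\sigma^2$, and the forward term are all equal. If the cost at that vertex exceeds $F$, I will check whether the lower bound can be sharpened by also using $M\ge\tau^{1/a}$ and $M^{1-b}$ in the forward term, and then match the stated exponent there. Finally, as in Proposition~\ref{r4:prop-variable-rho-average}, the infimum statement follows from the pointwise lower bound together with the uniform attainer, using that only finitely many widths are budget-feasible.
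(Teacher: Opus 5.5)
Your lower-bound argument is the paper's. You use $\mathscr R^{\rm last}\gtrsim\tau^{-\beta_0}+\rho\sigma^2$ with $\rho\ge W_pM\tau/F\ge W_p\tau^{1+1/a}/F$, plus the forward term for $q_\fw>0$ (with $\nu=1-\beta_0$), split at $F^{a/(a+b)}$ and $F^{1/2}$. One wording fix: $F^{-(b-1)/(2a)}$ is the \emph{larger} of the two lower bounds, which is why it binds. The gap is in the attainment half, which the statement also asserts.

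\textbf{The $q_\fw=0$ attainer has the wrong width.} With $M_F\asymp\tau_F^{b/a}$ and $\rho_F\asymp\tau_F^{-\beta_0}$, the cost is
\[
f\asymp M\tau/\rho\asymp\tau_F^{(a+2b-1)/a}\asymp F^{(a+2b-1)/(a+b)}\gg F,
\]
and $M_F^{1-b}\asymp\tau_F^{-b\beta_0}$ rather than $\tau_F^{-\beta_0}$. The two consequences you state hold only for $M_F\asymp\tau_F^{1/a}$, i.e.\ $\tau_F\asymp M_F^a$, which is the paper's choice. The ratio $\tau\asymp M^{a/b}$ comes from the fixed-$\rho$ averaged backward-only row of Table~\ref{tab:compute-precision}; it does not apply here.

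\textbf{No attainer is given for $q_\fw>0$.} Your fallback of sharpening the lower bound is unnecessary, because the linear program you set up solves immediately. Write $M=\tau^m$ and $\rho=\tau^{-k}$. The requirements are:
\begin{itemize}
\item $m\ge1/a$, from the clock window and the approximation term;
\item $k\ge\beta_0$, from the noise floor;
\item $m+k\ge1$, from the forward term $\rho q_\fw\tau^{1-\beta_0}/M\lesssim\tau^{-\beta_0}$.
\end{itemize}
The cost is $\tau^{1+m+k}$. Since $1/a+\beta_0=b/a<1$, only the forward constraint binds. Cost $\tau^2$ is therefore achievable, and $\tau\asymp F^{1/2}$ matches your lower bound.

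The paper takes the endpoint $m=1/a$: $B_F=1$, $\eta_F=\rho_F=dM_F^{1-a}$, $T_F=\lfloor M_F^a/\eta_F\rfloor$, and $M_F\asymp F^{1/(2a)}$. Then the noise floor, backward term and joint term are all $O(M_F^{1-a})$. So the forward term, not $\rho\sigma^2$, is what fixes $\rho$. Your ``all three equal'' vertex, $m=\nu$ and $k=\beta_0$, also lies on $m+k=1$ and would work. There the joint term is of order $\tau^{-\beta_0}$, not lower order.

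\textbf{The stability gain does not vanish.} On the whole segment $m+k=1$, the gain $\rho q_\fw q_\bw\tau/M$ is of order one. So it cannot be verified by ``$\rho\to0$'' as your plan proposes. The construction needs an explicit small prefactor. In the paper this is $d\le\tfrac12\kappa_{\rm c}/\max(1,q_\fw q_\bw)$, together with the caps $\tfrac12\min\{1,\rho_{\rm sch},c_0/C_g\}$. This gives $\Gamma_{\rm stab}\le dq_\fw q_\bw\le\kappa_{\rm c}$. The same $d$ enters $M_F=\lfloor(dF/(2W_p))^{1/(a+r)}\rfloor$, which keeps $f\le F/2$.
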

\begin{proof}
The exact-arithmetic SGD risk lower bound and capacity/cost inequalities give
\begin{equation}\label{r4:eq:constant-last-noise-lower}
 \mathscr R^{\rm last}\ge
 c\{\tau^{-\beta_0}+\rho\sigma^2\},\qquad
 \rho\ge\frac{W_pM\tau}{F}
       \ge\frac{W_p}{F}\tau^{1+1/a}.
\end{equation}
Splitting at $\tau=F^{a/(a+b)}$ proves the first lower order.
When $q_\fw>0$, the direct forward term also gives
\begin{equation}\label{r4:eq:constant-last-forward-lower}
 \mathscr R^{\rm last}\ge
 c\tau^{-\beta_0}
 +c'\frac{\rho q_\fw}{M}\tau^{1-\beta_0}
 \ge c\tau^{-\beta_0}
 +c'\frac{W_pq_\fw}{F}\tau^{2-\beta_0}.
\end{equation}
Here $0<\beta_0<1$, since $1<b<a$. Splitting at $\tau=F^{1/2}$
proves the second lower order, including the forward-only case in
which the mixed gain vanishes.

We give one integer construction for both upper orders. Set $r=b$
if $q_\fw=0$ and $r=a$ otherwise. With $\kappa_{\rm c}$ from
\eqref{app:eq:constant-sufficient-gain}, choose a fixed $d>0$ satisfying
\begin{equation}\label{r4:eq:last-attainer-slack}
 d\le\tfrac12\min\left\{1,\rho_{\rm sch},c_0/C_g,
                \kappa_{\rm c}/\max(1,q_\fw q_\bw)\right\}.
\end{equation}
Before any random draws, choose
\begin{equation}\label{r4:eq:constant-last-integer-attainer}
 M_F=\left\lfloor\left(\frac{dF}{2W_p}\right)^{1/(a+r)}\right\rfloor,
 \quad B_F=1,\quad \eta_F=dM_F^{1-r},\quad
 T_F=\left\lfloor\frac{M_F^a}{\eta_F}\right\rfloor.
\end{equation}
For sufficiently large $F$, $M_F$ exceeds the width conditions and
has a nonempty event, and
$M_F^a/2\le\tau_F=\eta_FT_F\le M_F^a$ verifies the lower and upper bounds on cumulative learning rate.
The event bound $g_1\le C_g$ and \eqref{r4:eq:last-attainer-slack}
verify the step and rate caps. If $q_\fw=0$, all mixed terms vanish.
Otherwise $r=a$ and $\Gamma_{\rm stab} \le\kappa_{\rm c} $ where $\Gamma_{\rm stab}= \rho_Fq_\fw q_\bw\tau_F/M_F.$
The verifications in \eqref{ca:ca:Ainclusions} hold for every
$S\in\Omega_{M_F}$. Moreover, $f_F=W_pM_FT_F\le\frac{W_p}{d}M_F^{a+r}\le F/2.$

At matched clock, approximation and bias have order $M_F^{1-b}$;
$\sigma^2\le s(S)\le K_s\sigma^2$ and
$s(S)+\tau_F^{-\beta_0}\le C$ uniformly.
For $r=b$ the exact-arithmetic minibatch-noise and backward-rounding terms are
$O(\rho_F)=O(M_F^{1-b})$, and forward/joint errors are zero.
For $r=a$ the forward error is
$O(\rho_F q_\fw M_F^{a-b})=O(M_F^{1-b})$;
the exact-arithmetic minibatch-noise and backward-rounding terms are $O(M_F^{1-a})$.
The last-iterate joint error is
$O(\rho_F^2q_\fw q_\bw M_F^{a-1})=O(M_F^{1-a})$.
Because $a>b$, all terms are $O(M_F^{1-b})$.
Corollary~\ref{ca:constant-corollary} therefore gives
$\mathscr R_S^{\rm last}(\pi_F)\le CM_F^{1-b}
\asymp F^{-(b-1)/(a+r)}$ uniformly on the event.
The conditional-infimum statement follows as in the preceding proof.
\end{proof}

\section{Implications for precision and training design}
\label{app:rules}

This appendix derives the design consequences discussed in
Section~\ref{sec:answers}. Proposition~\ref{prop:regime} compares rounding
locations and outputs; Propositions~\ref{t3:prop-timing}--\ref{app:prop-joint-phase-allocation}
address precision allocation across training; and
Corollary~\ref{cor:bits-full} gives bit requirements for a prescribed risk
tolerance. Lemma~\ref{prop:lr-full} and Theorem~\ref{c2:arc-main} develop
learning-rate schedules, while Proposition~\ref{app:prop-matched-compute}
and Theorem~\ref{new:global-compute} establish the compute consequences of
growing finite precisions and allowing exact arithmetic.
Only the last iterate and uniform average are considered with
Assumption~\ref{as:assumption-schedules}.  
Use the constant-schedule notation of
Section~\ref{sec:constant-schedule}. Corollary~\ref{risk:noise-comparison}
justifies replacing sketch-dependent noise by its deterministic scale,
including when the bias term is added. 

\subsection{Where: rounding location and iterate averaging}
\label{t3:sub-regime}

Proposition~\ref{prop:regime} compares the two outputs and identifies which
feature computation is more sensitive to rounding.
Fix common $M,\eta,B,T$ and $q=4^{-p}>0$, $p\in\mathbb N_0$, such
that both rounding choices $(q_\fw,q_\bw)=(q,0)$ and $(0,q)$ satisfy
Corollary~\ref{ca:constant-corollary}.
\begin{proposition}[Comparisons with rounding at one location]
\label{prop:regime}\label{cor:floor}
The two outputs satisfy
\begin{equation}\label{app:eq:output-comparison}
 \frac{\Delta^{\rm last}(q,0)}{\Delta^{\rm avg}(q,0)}
       \asymp\tau^{(a-1)/a},\qquad
 \Delta^{\rm last}(0,q)\asymp\Delta^{\rm avg}(0,q).
\end{equation}
If $c_\tau M^a\le\tau\le M^a$ for fixed $c_\tau\in(0,1]$,
\eqref{eq:location-ratio} holds. Constants in comparisons at this
matched clock may also depend on $c_\tau$.
For backward-only rounding,
\begin{equation}\label{app:eq:backward-relative}
 \frac{\Delta^{\rm last}(0,q_\bw)}{\mathscr R_0^{\rm last}(T)}
 \lesssim q_\bw\frac{\tau^{1/a}}M\le q_\bw.
\end{equation}
At a fixed-factor matched clock, the averaged counterpart is
\begin{equation}\label{app:eq:backward-average-relative}
 \frac{\Delta^{\rm avg}(0,q_\bw)}{\mathscr R_0^{\rm avg}(T)}
 \asymp\frac{\rho q_\bw(s_M+M^{1-b})}
                 {M^{1-b}+\rho s_M M^{1-a}}.
\end{equation}
\end{proposition}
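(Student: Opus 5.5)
The plan is to read everything off the constant-learning-rate laws of Corollary~\ref{ca:constant-corollary}, i.e.\ \eqref{eq:law-last}--\eqref{eq:law-avg} together with \eqref{eq:clean-law-last}--\eqref{eq:clean-law-avg}. Both single-location choices are admissible by assumption, so each $\Delta$ is comparable to its scalar expression with constants uniform in $M,\tau,\rho$.

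Setting $(q_\fw,q_\bw)=(q,0)$ kills the backward and joint terms. This leaves $\Delta^{\rm last}(q,0)\asymp \rho q\tau^{\nu}/M$ with $\nu=(a-b+1)/a$, and $\Delta^{\rm avg}(q,0)\asymp\rho q\tau^{(2-b)/a}/M$. Their ratio is $\tau^{(a-b+1-2+b)/a}=\tau^{(a-1)/a}$, which is the first part of \eqref{app:eq:output-comparison}. Setting $(0,q)$ leaves only the backward term, which is literally identical in the two laws. That gives the second part.

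For \eqref{eq:location-ratio} I would substitute $\tau\asymp M^a$, using $c_\tau M^a\le\tau\le M^a$ so that $\tau^\gamma\asymp M^{a\gamma}$ with constants depending on $c_\tau$. Then:
\begin{itemize}
\item $\Delta^{\rm last}(q,0)\asymp\rho q M^{a-b}$.
\item $\Delta^{\rm avg}(q,0)\asymp\rho qM^{1-b}$.
\item $\Delta^{\cdot}(0,q)\asymp\rho q(s_M+\tau^{-(b-1)/a})\tau^{1/a}/M\asymp\rho q(s_M+M^{1-b})\asymp\rho q(\sigma^2+M^{1-b})$.
\end{itemize}
Dividing gives both ratios.

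For \eqref{app:eq:backward-relative} the bound $\mathscr R_0^{\rm last}\gtrsim\rho s_M+\tau^{-(b-1)/a}$ from \eqref{eq:clean-law-last} gives
\[
\Delta^{\rm last}(0,q_\bw)/\mathscr R_0^{\rm last}\lesssim q_\bw\tau^{1/a}/M\cdot\frac{\rho(s_M+\tau^{-(b-1)/a})}{\rho s_M+\tau^{-(b-1)/a}}.
\]
The fraction needs a small argument, which I expect to be the only delicate point. When $\rho\le1$, the fraction is at most $1$ directly. In general I would use $\rho\le\rho_{\rm sch}$, a fixed constant, to bound it by $\max(1,\rho_{\rm sch})$. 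Finally, $\tau\le M^a$ gives $\tau^{1/a}/M\le1$.

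For \eqref{app:eq:backward-average-relative} I would plug $\tau\asymp M^a$ into \eqref{eq:clean-law-avg}, which gives $\mathscr R_0^{\rm avg}\asymp M^{1-b}+M^{-(b-1)}+\rho s_M M^{1-a}$. Dividing the backward-term estimate above by this yields the stated expression.
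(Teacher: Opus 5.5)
Your proposal is correct and follows the paper's proof essentially step for step. You specialize the constant-rate laws \eqref{eq:law-last}--\eqref{eq:law-avg} and \eqref{eq:clean-law-last}--\eqref{eq:clean-law-avg} to a single rounded location, divide them, and substitute the matched clock $c_\tau M^a\le\tau\le M^a$; you then bound the quotient $\rho(s_M+\tau^{-(b-1)/a})/(\tau^{-(b-1)/a}+\rho s_M)$ by $\max\{1,\rho\}\le\max\{1,\rho_{\rm sch}\}$, exactly as the paper does.
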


\begin{proof}\label{app:proof-regime}
 Set the unused rounding scale to zero in \eqref{eq:law-last} and
\eqref{eq:law-avg}. Dividing the forward-only laws across the two outputs
gives the first comparison in \eqref{app:eq:output-comparison}; the
backward-only expressions coincide up to fixed factors.
Dividing the two location laws and then substituting $c_\tau M^a\le\tau\le M^a$ and
$s_M+M^{1-b}\asymp\sigma^2+M^{1-b}$ proves
\eqref{eq:location-ratio}.

For \eqref{app:eq:backward-relative}, the exact-arithmetic SGD risk lower bound in
Corollary~\ref{ca:constant-reference} is a fixed multiple of
$\tau^{-(b-1)/a}+\rho s_M$. The quotient
\[
 \frac{\rho(s_M+\tau^{-(b-1)/a})}
      {\tau^{-(b-1)/a}+\rho s_M}
 \le\max\{1,\rho\}\le\max\{1,\rho_{\rm sch}\}
\]
proves the first inequality, and $\tau\le M^a$ proves the second.
Finally, adding the approximation term to
\eqref{ca:constant-reference-laws} and evaluating at the matched clock
gives the denominator in \eqref{app:eq:backward-average-relative}.
\end{proof}

\subsection{When: allocating precision across updates}
\label{t3:sub-timing}

We first quantify sensitivity to rounding at each update in
Proposition~\ref{t3:prop-timing}, then allocate a precision budget in
Propositions~\ref{prop:when-full} and~\ref{app:prop-joint-phase-allocation}.
Fix the width, batch, horizon, deterministic learning rates and output.
Keep one rounding location exact throughout training and allocate
finite precisions at the other. Use the coefficients and scalar objective
in \eqref{eq:single-schedule-objective}; for nonconstant learning rates,
replace $\eta^2/B$ there by $h_u$. Thus the allocation uses the
exact-arithmetic source and output responses already defined in
Section~\ref{sec:general-schedules}.

\begin{proposition}[Single-location sensitivity and timing]
\label{t3:prop-timing}\label{app:cor-single-scheduled}
For either output, \eqref{eq:single-schedule-objective} holds with the
replacement above.
The coefficients are nonnegative and independent of the allocated
precisions. At constant positive learning rate, the backward
coefficients $A_{\bw,u}$ are nondecreasing in $u$ for the last iterate
and nonincreasing in $u$ for the uniform average. Thus the real-bit
optimizer of Proposition~\ref{prop:when-full} has the corresponding
ordering of backward precisions.
\end{proposition}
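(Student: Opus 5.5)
The plan has three parts: specialize Theorem~\ref{thm:scheduled-main} to a single rounded location, use independence of the coefficients from the precisions, and then verify the monotonicity of the backward coefficients directly from their definitions.

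\textbf{Reduction to a single location.} Set $q_{\fw,u}\equiv0$ (backward rounding only) or $q_{\bw,u}\equiv0$ (forward rounding only) in \eqref{s3:eq:scalar-rounding}. Every term carrying a product $q_{\fw}q_{\bw}$ then vanishes. This kills the same-update joint term and both temporal interactions, i.e.\ the double sum over $u<v$, which is exactly $\widehat C_{bf}+\widehat C_{fb}$ in \eqref{cl:scalar-penalty}. What remains is
\[
\tfrac12\sum_{u<T}h_u\,q_{\fw,u}\widehat U_uK_2^{\iout}(u)/M \quad\text{or}\quad \tfrac12\sum_{u<T}h_u\,\sigma^2q_{\bw,u}K_1^{\iout}(u)/M.
\]
This is \eqref{eq:single-schedule-objective} with $\eta^2/B$ replaced by $h_u$. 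The relation $\Delta^{\iout}\asymp\widehat N^{\iout}$ is then the conclusion of Theorem~\ref{thm:scheduled-main}. Its admissibility conditions simplify here: $\Gamma_{\rm stab}=0$ and $\Gamma_\times=0$ because they contain $q_\fw q_\bw$, and $\Gamma_{f\to b}=0$ when $q_\fw\equiv0$. In the forward-only case the backward max is empty, so $\Gamma_{f\to b}=0$ as well. Hence only \eqref{eq:scheduled-class} is needed.

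\textbf{Independence from the precisions.} Nonnegativity of the coefficients is immediate. The quantities $h_u$, $\widehat U_u$ and $K_r^{\iout}(u)$ are built only from the learning-rate schedule, $\sigma^2$, $a$ and $b$. In particular $\widehat U_u$ in \eqref{s3:eq:scalar-source} is an exact-arithmetic quantity, so none of them depends on the $p_{r,u}$.

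\textbf{Monotonicity at constant rate.} Take $\eta_u=\eta>0$, so $\tau_t=\eta t$ and $A_{\bw,u}\propto K_1^{\iout}(u)$.

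For the last iterate, $K_1^{\rm last}(u)=[1+2\eta(T-u-1)]^{1/a-1}$. Since $1/a-1<0$ and the base decreases as $u$ increases, $K_1^{\rm last}(u)$ is nondecreasing in $u$.

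For the average,
\[
K_1^{\rm avg}(u)=T^{-2}\sum_{v,w=u+1}^T\bigl[1+\eta(v+w-2u-2)\bigr]^{1/a-1}.
\]
Reindex with $i=v-u-1$ and $j=w-u-1$, both ranging over $0\le i,j\le T-u-1$. This gives
\[
K_1^{\rm avg}(u)=T^{-2}\sum_{i,j=0}^{T-u-1}\bigl[1+\eta(i+j)\bigr]^{1/a-1},
\]
a sum of positive terms over an index set that shrinks as $u$ grows. Hence it is nonincreasing in $u$. The reindexing is the one step that needs care, since without it the $u$-dependence is hidden in both the summand and the summation range.

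\textbf{Ordering of the optimizer.} By \eqref{eq:lagrange}, $p_{\bw,u}=\max\{0,\log_4(A_{\bw,u}/\lambda)\}$ is a nondecreasing function of $A_{\bw,u}$. It therefore inherits the ordering of the coefficients: nondecreasing backward precision for the last iterate and nonincreasing for the uniform average. This step relies only on the form of the optimizer stated in Proposition~\ref{prop:when-full}.

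The main potential obstacle is verifying that the feedback gains vanish identically rather than merely being small. For the forward-only case this uses $\max\varnothing=0$ in \eqref{as:allclock-gains}.
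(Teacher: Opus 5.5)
Your proposal is correct and follows essentially the same route as the paper. Like the paper, you zero out one precision scale in \eqref{s3:eq:scalar-rounding} and invoke Theorem~\ref{thm:scheduled-main}; you read off the monotonicity of $K_1^{\rm last}$ from the negative exponent, reindex $K_1^{\rm avg}$ by $i=v-u-1$, $j=w-u-1$ so that increasing $u$ only removes nonnegative terms, and pass the ordering through the monotone map in \eqref{eq:lagrange}. Your explicit check that $\Gamma_{\rm stab}$, $\Gamma_\times$ and $\Gamma_{f\to b}$ vanish identically (using $\max\varnothing=0$ in the forward-only case) is a useful detail; the paper only records it in passing after Proposition~\ref{prop:when-full}.
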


\begin{proof}
Setting one precision scale to zero throughout
\eqref{s3:eq:scalar-rounding} removes every joint term and the direct
term at that location. The remaining expression is
\eqref{eq:single-schedule-objective}; apply
Theorem~\ref{thm:scheduled-main} for its risk comparison.
For the last iterate, the exponent $1/a-1<0$ in
\eqref{s3:eq:scalar-responses_last} makes $K_1^{\rm last}(u)$ increase
as the remaining cumulative rate decreases.
For the uniform average, reindex its double sum by
$i=v-u-1$ and $j=w-u-1$. At constant rate it is
\[
 K_1^{\rm avg}(u)=\frac1{T^2}
 \sum_{i,j=0}^{T-u-1}[1+\eta(i+j)]^{1/a-1}.
\]
Increasing $u$ removes nonnegative terms. The prefactor
$h_u\sigma^2/(2M)$ is constant, proving the two coefficient orderings.
The optimizer is a nondecreasing function of each coefficient, which
proves its precision ordering. Forward coefficients also contain the
changing parameter source $\widehat U_u$, so this argument does not
assert the same monotonicity for them.
\end{proof}

\subsubsection{A shared precision budget}
\label{app:sub-allocation}
With one exact computation and one finite-precision computation, the
cost \eqref{eq:declared-compute} is equivalent to
\[
 \sum_{u<T}p_{r,u}\le P,\qquad
 P=\frac{(p_{\rm ref}+p_0)F}{BM}-T(p_{\rm ref}+2p_0).
\]
The fixed term charges the exact computation and the sign/exponent
costs. Feasibility requires $P\ge0$. Zero mantissa bits mean $q=1$;
they do not mean exact arithmetic. Exact arithmetic has $q=0$ and is
charged separately by the compute definition.

Write $\widehat N(p)=\sum_iA_i4^{-p_i}$ for
\eqref{eq:single-schedule-objective}, with $A_i\ge0$ and budget $P>0$.
Let $n$ count the positive coefficients and, when $n>0$, set
$\operatorname{GM}(A)=(\prod_{i:A_i>0}A_i)^{1/n}$.

\begin{proposition}[Optimal allocation for an additive SR-error objective]
\label{prop:when-full}\label{prop:when}\label{prop:lagrange}
\label{app:prop-integer-allocation}\label{app:lem-objective-transfer}
If $n>0$, the unique real minimizer under $p_i\ge0$ and
$\sum_i p_i\le P$ is \eqref{eq:lagrange}, with $i$ replacing $(r,u)$
and $\lambda>0$ determined by $\sum_i p_i=P$.
If every coefficient is positive and every optimal $p_i>0$, this gives
\begin{equation}\label{eq:lagrange-full}
 p_i=\frac Pn+\log_4\frac{A_i}{\operatorname{GM}(A)},\qquad
 \frac{\sum_iA_i4^{-p_i}}{\sum_iA_i4^{-P/n}}
 =\frac{\operatorname{GM}(A)}{n^{-1}\sum_iA_i}.
\end{equation}
For integer mantissas, use budget $\lfloor P\rfloor$ and assign each
successive bit where it decreases $\widehat N$ most; this is optimal
when all nonnegative integers are allowed. Flooring a feasible real
allocation increases $\widehat N$ by a factor of at most four.

On a nonempty admissible family with uniform bounds
$c_\Delta\widehat N\le\Delta\le C_\Delta\widehat N$,
$0<c_\Delta\le C_\Delta$, an exact scalar minimizer achieves SR-induced
error within a factor $C_\Delta/c_\Delta$ of the minimum in that family.
The same factor bounds full risk when exact-arithmetic risk is common.
\end{proposition}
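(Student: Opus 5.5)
The plan is to treat the real-bit problem as a separable convex program and read off its KKT conditions, then handle the integer version by a greedy exchange argument, and finally transfer the scalar result to actual risk using the two-sided comparison.

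\emph{Real minimizer.} Indices with $A_i=0$ contribute nothing, and any bit spent on them could be moved to a positive coefficient, which strictly decreases $\widehat N$. So such indices get $p_i=0$ at every minimizer.

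On the $n$ positive indices, each map $p\mapsto A_i4^{-p}$ is strictly convex and strictly decreasing. Hence the objective is strictly convex, the feasible set is compact and convex, and the minimizer exists and is unique. Because every term strictly decreases, the budget constraint must be active: $\sum_ip_i=P$.

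The KKT conditions read $A_i\ln4\,4^{-p_i}=\mu-\xi_i$, with multipliers $\xi_i\ge0$ and $\xi_ip_i=0$. Setting $\lambda=\mu/\ln 4$, this gives $4^{-p_i}A_i=\lambda$ whenever $p_i>0$, and $A_i\le\lambda$ whenever $p_i=0$. That is exactly the water-filling formula \eqref{eq:lagrange}.

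Uniqueness of $\lambda$ follows because $\lambda\mapsto\sum_i\max\{0,\log_4(A_i/\lambda)\}$ is continuous and strictly decreasing on $(0,\max_iA_i]$, running from $+\infty$ down to $0$.

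When all $p_i>0$, summing $\log_4(A_i/\lambda)$ over $i$ gives $\log_4\lambda=\log_4\operatorname{GM}(A)-P/n$. This yields the displayed $p_i$. At that allocation every term equals $\lambda$, so $\widehat N=n\operatorname{GM}(A)4^{-P/n}$. Comparing with the uniform allocation, whose value is $4^{-P/n}\sum_iA_i$, gives the ratio $\operatorname{GM}(A)/\bigl(n^{-1}\sum_iA_i\bigr)$.

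\emph{Integer allocation.} I would write $\widehat N$ as a sum of separable decreasing convex sequences. Giving one more bit to index $i$ that currently holds $k$ bits decreases $\widehat N$ by $\tfrac34A_i4^{-k}$. These marginal gains are nonincreasing in $k$.

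The standard exchange argument then applies: for separable concave utilities under a cardinality budget, greedily taking the largest marginal gain is optimal. Concretely, the value of any integer allocation with $\lfloor P\rfloor$ bits equals $\sum_iA_i$ minus a sum of $\lfloor P\rfloor$ marginal gains. Those gains can be chosen from the multiset of all marginals, respecting the within-index prefix order, which is automatic because the gains are monotone. The greedy rule picks the $\lfloor P\rfloor$ largest, so it is optimal.

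For the flooring claim, flooring gives $4^{-\lfloor p_i\rfloor}\le4\cdot4^{-p_i}$ termwise, and the floored allocation stays feasible.

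\emph{Transfer to risk.} Let $p^\star$ minimize $\widehat N$, and let $p'$ be any other allocation in the admissible family. Then
\[
\Delta(p^\star)\le C_\Delta\widehat N(p^\star)\le C_\Delta\widehat N(p')\le (C_\Delta/c_\Delta)\Delta(p').
\]
If the exact-arithmetic risk $R_0$ is common to all allocations, then $R_0+\Delta(p^\star)\le (C_\Delta/c_\Delta)(R_0+\Delta(p'))$, because $C_\Delta/c_\Delta\ge1$.

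I expect no real obstacle. The only delicate point is rigorously justifying the greedy optimality, that is, the exchange argument showing that an optimal integer allocation takes a prefix of each index's gains. Monotonicity of the marginals settles this.
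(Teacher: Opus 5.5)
Your proposal is correct and follows essentially the same route as the paper: strict convexity and monotonicity force the budget to bind, the KKT conditions give the water-filling rule with a unique multiplier, summing logarithms gives the closed form, the diminishing marginal gains $\tfrac34 A_i4^{-k}$ justify the greedy integer allocation, $4^{-\lfloor p_i\rfloor}\le 4\cdot 4^{-p_i}$ handles flooring, and the two-sided sandwich transfers the scalar optimum to $\Delta$ and to full risk. The only implicit point is that integer allocations using fewer than $\lfloor P\rfloor$ bits are also dominated; this is immediate because all marginal gains are nonnegative.
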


\begin{proof}\label{app:proof-when}
On positive-weight coordinates the objective is strictly convex and
strictly decreasing. Consequently its minimizer exhausts the budget.
The optimality conditions give
$A_i4^{-p_i}=\lambda$ when $p_i>0$ and $A_i\le\lambda$ when $p_i=0$,
which yields \eqref{eq:lagrange}.
The sum $\sum_{i:A_i>0}[\log_4(A_i/\lambda)]_+$ decreases continuously from
infinity to zero and is strictly decreasing when positive, proving
uniqueness of the multiplier and the allocation. Zero-weight
coordinates receive no bits. If the lower constraints are inactive,
summing the logarithms yields \eqref{eq:lagrange-full}, with minimum
$n\operatorname{GM}(A)4^{-P/n}$.

The gain from increasing an integer mantissa from $k$ to $k+1$ is
$3A_i4^{-(k+1)}$. These gains decrease along each coordinate. Selecting
the largest remaining gain at each step therefore selects the largest
$P$ gains overall, with all predecessor gains already selected; this
proves integer optimality. Since $4^{-\lfloor p_i\rfloor}\le4\,4^{-p_i}$,
flooring increases the objective by at most four.

Finally, for any feasible comparator $p$ and an objective minimizer
$p_*$,
$\Delta(p_*)\le C_\Delta\widehat N(p_*)\le C_\Delta\widehat N(p)
\le(C_\Delta/c_\Delta)\Delta(p)$.
Taking the infimum proves the claim. Adding the common nonnegative
exact-arithmetic risk preserves the same factor.
The boundary case $P=0$ forces zero allocation; if every coefficient
vanishes, zero allocation is optimal.
\end{proof}

This proposition optimizes the scalar comparison expression, rather
than identifying an exact minimizer of actual binary-SR risk. With one
location exact, all joint feedback gains vanish. Remaining rate, width
and clock assumptions still apply. Restricted precision menus require
a separate feasible search rather than unrestricted integer greedy
allocation.

\subsubsection{Joint precision allocation across training phases}
\label{app:joint-phase-allocation}

We minimize the joint SR-error objective over predetermined training
phases and a finite precision menu, subject to the compute budget and a
sufficient feedback test. The guarantee compares schedules within this
same feasible family.

Fix eligible width $M$, batch $B$, horizon $T$ and deterministic learning
rates, and use either the last iterate or the uniformly averaged iterate.
Require the pointwise cap,
rate ceiling and clock conditions of
Assumption~\ref{as:assumption-schedules} throughout $\Omega_M$, with the
same fixed comparison bounds and both width conditions. All quantities below are chosen
before the random draws. Use $\widehat U$, $K_r^{\iout}$ and $h_u$
from Section~\ref{sec:dynamics}, and the direct coefficients
$A_{r,u}$ in \eqref{eq:single-schedule-objective} (with $\eta^2/B$ replaced by $h_u$). Define
\begin{equation}
 C_{uv}=\begin{cases}
 \dfrac{h_u\widehat U_u}{2M^2}K_1^{\iout}(u),&u=v,\\[2pt]
 \dfrac{h_uh_v\widehat U_u}{2M^2}
 [1+2(\tau_v-\tau_{u+1})]^{1/a-2}K_1^{\iout}(v),&u<v,\\[2pt]
 \dfrac{\sigma^2h_uh_v}{2M}K_2^{\iout}(u),&v<u.
 \end{cases}
 \label{app:eq:joint-coefficients}
\end{equation}
The row index of $C$ is the forward-rounding update and its column index
is the backward-rounding update. In particular, $u<v$ represents earlier
forward and later backward rounding; $v<u$ represents the reverse order.
All coefficients are nonnegative and independent of the precision choices.
In this appendix, $p$ denotes the pair of precision schedules
$\{p_{\fw,u},p_{\bw,u}\}_{u<T}$. Collecting
\eqref{s3:eq:scalar-rounding} gives the exact scalar identity with $ 4^{-\infty}=0$:
\begin{equation}\label{eq:joint-schedule-objective}
\begin{aligned}
 \widehat N^{\iout}(p)
 ={}&\sum_{u<T}\left[A_{\fw,u}4^{-p_{\fw,u}}
                 +A_{\bw,u}4^{-p_{\bw,u}}\right] +\sum_{u,v<T}C_{uv}4^{-(p_{\fw,u}+p_{\bw,v})}.
\end{aligned}
\end{equation}

\paragraph{A deterministic sufficient feasibility certificate.}
Use $R_+$ and $u_+$ from \eqref{cl:Uhat-constants}, and let $u_2$ be the upper
spectral coefficient in \eqref{cl:heat}. They are fixed
uniform constants on the sketch event. Use the deterministic
$\Gamma_{\rm stab}(T)$ of \eqref{as:allclock-gains}. For a precision
schedule $p$, define the two additional upper bounds
\begin{align}
 \overline\Gamma_{f\to b}(p)
   &=\frac{u_+u_2}{M\sigma^2}
       \max_{t<T:q_{\bw,t}>0}
       \sum_{u<t}h_uq_{\fw,u}\widehat U_u
             [1+2(\tau_t-\tau_{u+1})]^{1/a-2},\notag\\
 \overline\Gamma_\times(p)
   &=\frac{R_+\sigma^2}{M[\sigma^2+\tau_T^{-(b-1)/a}]}
       \max_{t<T}q_{\fw,t}\sum_{u<t}h_uq_{\bw,u}.
 \label{app:eq:joint-safe-gains}
\end{align}
The maximum over an empty set is zero. The certificate requires
$\Gamma_{\rm stab}(T)$ and both displayed upper bounds to be at most
$\kappa_{\rm sch}$. It is sufficient,
not necessary, for the original feedback conditions. Keeping features exact
at either rounding location throughout training makes all three quantities zero.

Fix a partition $\{I_j\}_{j=1}^J$ of the update indices into nonempty
predetermined phases and a finite menu
$\mathcal M\subset\mathbb N_0\cup\{\infty\}$, allowing exact arithmetic
only when the menu contains it. Let $\mathcal P_J^{\rm safe}(F)$ be
the assignments $p_{r,u}=p_{r,j}$ for $u\in I_j$, $p_{r,j}\in\mathcal M$,
which obey the literal charge \eqref{eq:declared-compute} and the
certificate above, including the stability bound and both conversion
upper bounds in \eqref{app:eq:joint-safe-gains}. For finite precision at both rounding locations,
the charge is equivalent to
\begin{equation}\label{app:eq:joint-bit-budget}
 \sum_{j=1}^J|I_j|(p_{\fw,j}+p_{\bw,j})\le
 \frac{(p_{\rm ref}+p_0)F}{BM}-2Tp_0.
\end{equation}
For configurations combining exact arithmetic and finite precision, use
\eqref{eq:declared-compute}.
Assume $\mathcal P_J^{\rm safe}(F)\ne\varnothing$.

\begin{proposition}[Certified finite-family precision design]
\label{app:prop-joint-phase-allocation}
For the fixed data above, \eqref{eq:joint-schedule-objective} holds
identically for the scalar objective. It can be evaluated on phases by
aggregating $A_{r,j}=\sum_{u\in I_j}A_{r,u}$ and
$C_{jk}^{\rm phase}=\sum_{u\in I_j,v\in I_k}C_{uv}$.
Exact enumeration returns a minimizer $p_*$ of $\widehat N^{\iout}$ on
$\mathcal P_J^{\rm safe}(F)$ after examining at most
$|\mathcal M|^{2J}$ assignments. Every returned schedule belongs to
the full admissible class throughout $\Omega_M$.
Let $0<c_\Delta\le C_\Delta$ be the uniform constants for
$c_\Delta\widehat N^{\iout}\le\Delta^{\iout}\le C_\Delta\widehat N^{\iout}$ in
Theorem~\ref{thm:scheduled-main}. Then, for every $S\in\Omega_M$,
\begin{align}
 \Delta_S^{\iout}(p_*)&\le\frac{C_\Delta}{c_\Delta}
                  \min_{p\in\mathcal P_J^{\rm safe}(F)}
                            \Delta_S^{\iout}(p),\notag\\
 \mathscr R_S^{\iout}(p_*)&\le\frac{C_\Delta}{c_\Delta}
                  \min_{p\in\mathcal P_J^{\rm safe}(F)}
                            \mathscr R_S^{\iout}(p).
 \label{app:eq:joint-phase-risk-transfer}
\end{align}
\end{proposition}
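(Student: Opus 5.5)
The proof has four parts: verify the scalar identity, aggregate it over phases, certify admissibility of the feasible family, and transfer optimality of the scalar objective to the actual risk. The transfer follows Proposition~\ref{prop:when-full}.

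\textbf{Step 1: the identity \eqref{eq:joint-schedule-objective}.} Expand \eqref{s3:eq:scalar-rounding} term by term.
\begin{itemize}
\item The first sum gives $A_{\fw,u}q_{\fw,u}$, $A_{\bw,u}q_{\bw,u}$, and the same-update joint term $\frac{h_u\widehat U_u}{2M^2}K_1^{\iout}(u)q_{\fw,u}q_{\bw,u}$, which is $C_{uu}4^{-(p_{\fw,u}+p_{\bw,u})}$.
\item In the double sum, the term $\sigma^2 q_{\bw,u}q_{\fw,v}K_2^{\iout}(v)$ with $u<v$ is earlier backward and later forward rounding. After relabeling the forward index as the row, it is the $v<u$ case of \eqref{app:eq:joint-coefficients}.
\item The term $q_{\fw,u}q_{\bw,v}\widehat U_u[\cdots]^{1/a-2}K_1^{\iout}(v)$ is the $u<v$ case.
\end{itemize}
Thus the identity is pure bookkeeping, with $4^{-\infty}=0$ covering exact locations. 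On phases the precisions are constant, so linearity gives $\sum_{u\in I_j}A_{r,u}4^{-p_{r,j}}=A_{r,j}4^{-p_{r,j}}$, and similarly for the $C^{\rm phase}_{jk}$.

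\textbf{Step 2: enumeration.} Enumeration over the $2J$ menu choices is finite and has at most $|\mathcal M|^{2J}$ assignments. Checking membership in $\mathcal P_J^{\rm safe}(F)$ requires only deterministic quantities: the charge and the three certificate values. Since the family is nonempty, a minimizer $p_*$ exists.

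\textbf{Step 3: admissibility of the certificate (main obstacle).} We must show that the certificate implies $\Gamma_{f\to b}(T)\le\kappa_{\rm sch}$ and $\Gamma_\times(T)\le\kappa_{\rm sch}$ for every $S\in\Omega_M$. Both gains depend on the realized sketch through $U_u,R_t$ and $S_2^H$, so each must be dominated by its barred version.

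\emph{Bounding $\Gamma_{f\to b}$.} Bound the denominator below by $MR_t\ge M s\ge M\sigma^2$, dropping the nonnegative $q_{\fw,t}U_t$. Bound the numerator above using $U_u\le u_+\widehat U_u$ from the proof of Theorem~\ref{cl:objective} and $S_2^H(t,u)\le u_2[1+2(\tau_t-\tau_{u+1})]^{1/a-2}$ from \eqref{cl:heat} at $x=2\ell_{t,u}$. This is the global upper bound, since $S_2^H$ is a sum of $g_j^2\pi_j^2$ and $\pi_j^2\le e^{-2g_j\ell}$. The result is exactly $\overline\Gamma_{f\to b}(p)$.

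\emph{Bounding $\Gamma_\times$.} Use $R_u\le R_+\sigma^2$ from \eqref{cd:Csigma}, with $R_+=C_\sigma K_s$, and $r_T\ge\sigma^2+\tau_T^{-(b-1)/a}$ since $s\ge\sigma^2$. This yields $\overline\Gamma_\times(p)$.

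Together with $\Gamma_{\rm stab}\le\kappa_{\rm sch}$, which is already deterministic, and the assumed cap, rate and clock conditions, every element of $\mathcal P_J^{\rm safe}(F)$ satisfies Assumption~\ref{as:assumption-schedules} on all of $\Omega_M$. The step I expect to need care is confirming that the constants $u_+,u_2,R_+$ are valid uniformly on the event at all clocks, including $\tau_t<\tau_*$. I would check this against \eqref{cl:Uhat-constants}, where the bound $U_t\le u_+\widehat U_t$ was proven for every state including zero clock.

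\textbf{Step 4: transfer to actual risk.} Theorem~\ref{thm:scheduled-main} now applies to every $p$ in the family with uniform constants $c_\Delta,C_\Delta$. For any comparator $p$,
\[
\Delta_S(p_*)\le C_\Delta\widehat N(p_*)\le C_\Delta\widehat N(p)\le (C_\Delta/c_\Delta)\Delta_S(p).
\]
For full risk, note that $\mathscr R_0^{\iout}$ does not depend on precisions: width, learning rates, batch and horizon are fixed. Adding this common nonnegative term preserves the factor, since $C_\Delta/c_\Delta\ge1$. Taking the minimum over the finite family gives \eqref{app:eq:joint-phase-risk-transfer}.
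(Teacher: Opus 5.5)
Your proposal is correct and follows the paper's proof essentially step for step. You use the same term-by-term bookkeeping for \eqref{eq:joint-schedule-objective}, the same domination $\Gamma_{f\to b}\le\overline\Gamma_{f\to b}$ and $\Gamma_\times\le\overline\Gamma_\times$ (via $MR_t+q_{\fw,t}U_t\ge M\sigma^2$, $U_u\le u_+\widehat U_u$, the global upper bound in \eqref{cl:heat}, $R_u\le R_+\sigma^2$ and $r_T\ge\sigma^2+\tau_T^{-(b-1)/a}$), and the same transfer argument of Proposition~\ref{prop:when-full} with the common exact-arithmetic risk. Your concern about uniformity at small clocks resolves as the paper notes: these bounds concern exact-arithmetic moments and use only the rate, clock and spectral hypotheses, so verifying the certificate never presupposes the feedback conditions.
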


\begin{proof}
Collecting the direct and ordered interaction terms in
\eqref{s3:eq:scalar-rounding} gives
\eqref{eq:joint-schedule-objective}. Summing their coefficients over
the phase indices gives the stated aggregation.

The exact-arithmetic bounds \eqref{cd:Csigma},
\eqref{eq:noise-normalization}, the parameter comparison established
using \eqref{cl:Uhat-constants} in the proof of
Theorem~\ref{cl:objective}, and \eqref{cl:heat} use only the rate,
clock and spectral hypotheses. Substituting these bounds into
\eqref{as:allclock-gains}--\eqref{as:window-gains}, with
$MR_t+q_{\fw,t}U_t\ge M\sigma^2$ and
$r_T\ge\sigma^2+\tau_T^{-(b-1)/a}$, gives
$\Gamma_{f\to b}\le\overline\Gamma_{f\to b}$ and
$\Gamma_\times\le\overline\Gamma_\times$.
Thus the certificate implies full admissibility throughout $\Omega_M$
without assuming the feedback conditions being verified.
The feasible family is nonempty and has at most $|\mathcal M|^{2J}$
members, so enumeration finds a scalar-objective minimizer.
Apply Proposition~\ref{prop:when-full} with
the uniform constants of Theorem~\ref{thm:scheduled-main}.
The exact-arithmetic risk is common because width, batch, horizon,
rates and output are fixed, proving both bounds in
\eqref{app:eq:joint-phase-risk-transfer}.
\end{proof}

\subsection{How many bits: controlling the risk increase}
\label{t3:sub-bits}

Corollary~\ref{cor:bits-full} proves \eqref{eq:accuracy-bit-thresholds}
for the relative-risk requirement \eqref{eq:tolerance}, using
\eqref{eq:clean-law-last}--\eqref{eq:law-avg}.
Fix either output under the hypotheses of
Corollary~\ref{ca:constant-corollary}, and suppress its superscript.
Let $D$ be the sum on the right of \eqref{eq:clean-law-last} or
\eqref{eq:clean-law-avg}, including approximation error.
Let $A_\fw,A_\bw,A_\times$ be the coefficients of
$q_\fw,q_\bw,q_\fw q_\bw$ in \eqref{eq:law-last} or
\eqref{eq:law-avg}, and write
$N=A_\fw q_\fw+A_\bw q_\bw+A_\times q_\fw q_\bw$.
Then
\begin{equation}\label{app:eq:reference-rounding-bounds}
 c_{\rm R}D\le\mathscr R_0(T)\le C_{\rm R}D,
 \qquad c_\Delta N\le\Delta(T)\le C_\Delta N.
\end{equation}
The constants have the fixed dependencies and uniformity of
Corollaries~\ref{ca:constant-reference}--\ref{ca:constant-corollary}.
At $c_\tau M^a\le\tau\le M^a$ for fixed $c_\tau\in(0,1]$,
Table~\ref{tab:where} gives equivalent scales.

\begin{corollary}[Constant-learning-rate precision thresholds]
\label{cor:bits-full}\label{cor:bits}\label{app:prop-interaction}
\label{t3:cor-bits}
For $\vartheta>0$ and at least one rounded location,
\eqref{eq:accuracy-bit-thresholds} is sufficient for
\eqref{eq:tolerance} with $C=nC_\Delta/c_{\rm R}$ and necessary with
$C=c_\Delta/C_{\rm R}$.
Here $n=1$ when one location is rounded and $n=3$ when both are rounded. 
\end{corollary}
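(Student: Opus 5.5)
The plan is to work entirely from the two-sided comparisons in \eqref{app:eq:reference-rounding-bounds}, which turn the tolerance condition \eqref{eq:tolerance} into an inequality between the explicit scales $N$ and $D$. Sufficiency and necessity follow by applying the upper bound of one comparison against the lower bound of the other, in opposite directions.

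\textbf{Sufficiency.} By \eqref{app:eq:reference-rounding-bounds}, the tolerance \eqref{eq:tolerance} holds whenever
\[
 C_\Delta N\le\vartheta c_{\rm R}D ,
\]
since then $\Delta(T)\le C_\Delta N\le\vartheta c_{\rm R}D\le\vartheta\mathscr R_0(T)$. Split $N$ into its $n$ nonzero summands, where $n=1$ if only one location is rounded and $n=3$ if both are. It then suffices that each summand satisfies
\[
 A_\bullet\,(\cdot)\le\frac{\vartheta c_{\rm R}D}{nC_\Delta}.
\]
Since $q_\fw=4^{-p_\fw}$, $q_\bw=4^{-p_\bw}$ and $q_\fw q_\bw=4^{-(p_\fw+p_\bw)}$, each summand condition reads $4^{-p}\le\vartheta D/(CA_\bullet)$ with $C=nC_\Delta/c_{\rm R}$. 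This is exactly the corresponding logarithmic threshold in \eqref{eq:accuracy-bit-thresholds}. The positive part $[\,\cdot\,]_+$ is harmless, because bits are nonnegative and when the logarithm is negative the condition holds automatically, as $4^{-p}\le1$. A summand with zero coefficient, or an exact location with $q=0$, imposes nothing. For one rounded location only the relevant single threshold is needed, which is why $n=1$ in that case.

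\textbf{Necessity.} If \eqref{eq:tolerance} holds, then
\[
 c_\Delta N\le\Delta(T)\le\vartheta\mathscr R_0(T)\le\vartheta C_{\rm R}D .
\]
Each of the three summands is nonnegative, so each is at most $N$, giving $A_\bullet4^{-p}\le\vartheta C_{\rm R}D/c_\Delta$. Taking $\log_4$ yields $p\ge\log_4\bigl(c_\Delta A_\bullet/(C_{\rm R}\vartheta D)\bigr)$, and since $p\ge0$ this gives the positive-part form with $C=c_\Delta/C_{\rm R}$. Here $n$ does not appear, because dropping summands only decreases $N$.

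\textbf{Main obstacle.} The only delicate step is justifying that \eqref{app:eq:reference-rounding-bounds} applies at the precisions under consideration. The corollary is stated under the hypotheses of Corollary~\ref{ca:constant-corollary}, so admissibility (in particular \eqref{s3:eq:constant-gain}) is assumed for the pair $(p_\fw,p_\bw)$ in question. With that assumption both directions are direct consequences, and everything else is routine algebra. I would also check that the coefficients $A$ read off from \eqref{eq:law-last}--\eqref{eq:law-avg} absorb the absolute constants in the displayed laws consistently with the constants $c_\Delta$ and $C_\Delta$.
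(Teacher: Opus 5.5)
Your proposal is correct and follows the paper's own proof: both use the two-sided comparisons \eqref{app:eq:reference-rounding-bounds} to reduce \eqref{eq:tolerance} to the test $C_\Delta N\le\vartheta c_{\rm R}D$ for sufficiency, imposed term-by-term with each active term at most a $1/n$ share, and to $c_\Delta N\le\vartheta C_{\rm R}D$ for necessity, applied to each nonnegative term. Taking base-four logarithms then gives $C=nC_\Delta/c_{\rm R}$ and $C=c_\Delta/C_{\rm R}$. The paper also notes your remaining point, that admissibility must be verified at the chosen (upward-rounded) mantissas for these comparisons to apply.
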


\begin{proof}\label{app:proof-bits}
Corollaries~\ref{ca:constant-reference}--\ref{ca:constant-corollary},
the approximation bounds on $\Omega_M$, and the uniform noise
replacement prove \eqref{app:eq:reference-rounding-bounds}.
At $c_\tau M^a\le\tau\le M^a$, each clock power differs from its
$\tau=M^a$ value by fixed factors depending on $c_\tau$. This proves
the same comparisons for Table~\ref{tab:where},
including its exact-arithmetic SGD risk row.
Write $C_-=c_\Delta/C_{\rm R}$ and $C_+=C_\Delta/c_{\rm R}$.
The central joint test is
\begin{equation}\label{app:eq:component-budget}
 C_+N\le\vartheta D\quad\Longrightarrow\quad
 \Delta\le\vartheta\mathscr R_0,
 \qquad
 \Delta\le\vartheta\mathscr R_0\quad\Longrightarrow\quad
 C_-N\le\vartheta D.
\end{equation}
The sufficient bound allows arbitrary tradeoffs among the nonnegative
terms. Bounding each active term by
$\vartheta D/(nC_+)$ is one conservative way to enforce it; it is not
required that the contributions be equal. Conversely, the necessary
bound on the sum requires every active term to be at most
$\vartheta D/C_-$. Substitute $q_\fw=4^{-p_\fw}$ and $q_\bw=4^{-p_\bw}$ in the
active bounds and take base-four logarithms. Nonnegative mantissas give
the positive parts in \eqref{eq:accuracy-bit-thresholds}, with
$C=nC_+=nC_\Delta/c_{\rm R}$ for sufficiency and
$C=C_-=c_\Delta/C_{\rm R}$ for necessity.
For sufficiency, round real bounds upward to allowed integer mantissas
and verify admissibility. If both computations are exact, $N=\Delta=0$.
\end{proof}

\paragraph{A necessary backward precision that grows with width.}
For the uniform average at $\tau\asymp M^a$, fix positive
$\rho,\sigma^2,\vartheta$ and keep forward features exact.
Then $D^{\rm avg}\asymp M^{1-b}$ and
$A_\bw\asymp\rho\sigma^2$. The lower bound for $\Delta$ and upper
bound for $\mathscr R_0$, rather than the sufficient tests alone, imply
\[
 q_\bw\le C\frac{\vartheta M^{1-b}}{\rho\sigma^2},\qquad
 p_\bw\ge\left[(b-1)\log_4M+
            \log_4\frac{c\rho\sigma^2}{\vartheta}\right]_+.
\]
This is the necessary condition stated in
Section~\ref{sec:howmuch}. With both locations rounded, the joint term
also gives a condition on $p_\fw+p_\bw$; separate one-location accuracy
tests do not control that additional term.

\paragraph{Consistency versus preserving the exact-arithmetic rate.}
Fix positive $\rho,\sigma^2$ and let $M,\tau\to\infty$ in the full
constant class. The window and $a>b$ give
$\mathscr R_0^{\rm avg}\asymp\tau^{-(b-1)/a}$, so the general-clock
relative comparison is
\begin{equation}\label{s3:eq:average-relative}
\begin{aligned}
 \frac{\Delta^{\rm avg}}{\mathscr R_0^{\rm avg}}
 \asymp{}&\frac{\rho q_\fw\tau^{1/a}}M
 +\frac{\rho q_\bw(s_M+\tau^{-(b-1)/a})\tau^{b/a}}M
 +\frac{\rho q_\fw q_\bw\tau^{1+1/a}}{M^2}.
\end{aligned}
\end{equation}
The first term is bounded by the rate ceiling. The third is bounded by
a fixed multiple of the stability gain, because $\tau^{1/a}\le M$.
Their absolute contributions vanish with the exact-arithmetic risk.
Since $s_M+\tau^{-(b-1)/a}\to\sigma^2>0$, the remaining term yields
\begin{equation}\label{app:eq:average-consistency-precision}
\begin{aligned}
 \mathscr R^{\rm avg}\to0
 &\quad\Longleftrightarrow\quad q_\bw\tau^{1/a}/M\to0,\\
 \Delta^{\rm avg}=O(\mathscr R_0^{\rm avg})
 &\quad\Longleftrightarrow\quad q_\bw\tau^{b/a}/M=O(1).
\end{aligned}
\end{equation}
Nonnegativity gives necessity and the preceding bounds give sufficiency.
These distinctions explain Section~\ref{sec:constant-schedule}'s
learning-curve discussion.

\paragraph{Sufficient width tests for a desired training duration.}
For large $\tau$ and fixed positive noise, one sufficient construction
bounds each of the three terms in $C_\Delta N$ by
$\vartheta c_{\rm R}\tau^{-(b-1)/a}/3$. This is a conservative way to
enforce the joint budget \eqref{app:eq:component-budget}. It gives
\begin{equation}\label{app:eq:accuracy-first-width-tests}
\begin{aligned}
 M&\ge\tau^{1/a},&
 M&\ge\frac{3C_\Delta\rho q_\fw}{\vartheta c_{\rm R}}\tau^{1/a},\\
 M&\ge\frac{3C_\Delta\rho q_\bw}{\vartheta c_{\rm R}}
       (s_M+\tau^{-\frac{b-1}{a}})\tau^{b/a},&
 M^2&\ge\frac{3C_\Delta\rho q_\fw q_\bw}{\vartheta c_{\rm R}}
           \tau^{1+1/a},\\
 M&\ge\frac{\rho q_\fw q_\bw}{\kappa_{\rm c}}\tau.
\end{aligned}
\end{equation}
The first is the clock window, the next three follow by substituting
the coefficients of \eqref{eq:law-avg} into the component budgets,
and the last verifies the sufficient feedback test. Together with the
remaining width, minimum cumulative learning rate and rate conditions they ensure the desired
relative accuracy. They are sufficient construction rules, not all
necessary width restrictions. Deleting an exact location deletes its
associated terms. These are the complete checks behind the leading
width scales in \eqref{r4:eq:compute-width-requirements}.
The last condition is simply
\eqref{s3:eq:constant-gain} rearranged as a lower bound on $M$.

\subsection{Learning-rate design with stochastic rounding}
\label{t3:sub-lr}

Lemma~\ref{prop:lr-full} solves a continuous relaxation at fixed total
learning rate. Theorem~\ref{c2:arc-main} then chooses that total and
constructs discrete SGD rates with a fixed-factor risk guarantee. The
inverse-square-root calculation follows
\citet[Appendix~B.1]{bordelon2026optimal}. No continuous-time limit of SGD is taken.

Fix $\tau,h,T>0$ with $\tau\le hT$, and let
$K:[0,\tau]\to(0,\infty)$ be continuous. For measurable rates
$0\le\eta\le h$ with $\int_0^T\eta=\tau$, set
$\chi(t)=\int_t^T\eta(s)\,ds$. Write $H(x)=\int_0^x\sqrt{K(v)}\,dv$ and consider
\begin{equation}\label{app:eq:lr-objective}
 \mathcal V[\eta]=\int_0^T\eta(t)^2K(\chi(t))\,dt.
\end{equation}

\begin{lemma}[Optimal learning rate at fixed total rate]
\label{prop:lr-full}\label{prop:lr}\label{t3:lem-lr-scalar}
For $T>\tau/h$, the unique minimizer is $\eta(t)=a(\chi(t))$, where
\begin{equation}\label{eq:kernel}
 a(\chi)=\min\{h,\sqrt{\lambda/K(\chi)}\},\qquad
 \int_0^\tau\frac{d\chi}{a(\chi)}=T.
\end{equation}
The time constraint determines $\lambda>0$ uniquely. At $T=\tau/h$,
the only feasible rate is $h$ almost everywhere.
The uncapped optimum is feasible exactly when
$H(\tau)\le hT\sqrt{\min_{[0,\tau]}K}$; then
\begin{equation}\label{t3:eq-lr-arc}
 a(\chi)=\frac{H(\tau)}{T\sqrt{K(\chi)}},\qquad
 \frac{H(\chi(t))}{H(\tau)}=1-\frac tT,\qquad
 \min\mathcal V=\frac{H(\tau)^2}{T}.
\end{equation}
Every feasible profile satisfies
\begin{equation}\label{t3:eq-lr-m}
 \mathcal V[\eta]\ge H(\tau)^2/T.
\end{equation}
If $K$ is strictly decreasing and $T>\tau/h$, the optimal rate starts
at the cap if needed and then decreases.
\end{lemma}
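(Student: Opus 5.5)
The plan is to reparametrize the problem by the remaining cumulative rate $\chi$ instead of the clock $t$. The problem then becomes a separable convex problem that can be solved pointwise with one multiplier. The lower bound \eqref{t3:eq-lr-m} will come separately, from Cauchy--Schwarz.

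\emph{Lower bound and uncapped case.} For any feasible $\eta$, the function $\chi$ is absolutely continuous and nonincreasing, with $\chi'=-\eta$ a.e., $\chi(0)=\tau$ and $\chi(T)=0$. The change of variables for absolutely continuous monotone maps gives
\[
 \int_0^T\eta(t)\sqrt{K(\chi(t))}\,dt=\int_0^\tau\sqrt{K(v)}\,dv=H(\tau).
\]
Cauchy--Schwarz on $[0,T]$ then gives $H(\tau)^2\le T\,\mathcal V[\eta]$, which is \eqref{t3:eq-lr-m}. Equality holds iff $\eta\sqrt{K(\chi)}$ is a.e.\ constant, and that constant must be $H(\tau)/T$. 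The resulting ODE $\chi'=-H(\tau)/(T\sqrt{K(\chi)})$ integrates to $H(\chi(t))=H(\tau)(1-t/T)$, so its solution is unique. It respects the cap exactly when $H(\tau)/(T\sqrt{K})\le h$ on $[0,\tau]$, i.e.\ when $H(\tau)\le hT\sqrt{\min K}$. This proves \eqref{t3:eq-lr-arc}, and shows that in this regime \eqref{eq:kernel} holds with $\lambda=H(\tau)^2/T^2$.

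\emph{General (capped) case.} First, an optimal profile has no intervals where $\eta=0$. On such an interval $\chi$ is constant, so the interval costs nothing but uses up time. When $T>\tau/h$, moving that time onto an interval where $\eta>0$ and lowering the rate there strictly decreases the cost, because $K>0$ and the cost is quadratic in $\eta$. On the set $\{\eta>0\}$, write $s(\chi)=dt/d\chi=1/\eta\ge1/h$. The problem becomes
\[
 \text{minimize } \int_0^\tau \frac{K(\chi)}{s(\chi)}\,d\chi \quad\text{subject to}\quad \int_0^\tau s=T,\ \ s\ge 1/h.
\]
The integrand $K/s$ is strictly convex in $s$. Adding $\lambda\int s$ and minimizing pointwise gives $s=\max\{1/h,\sqrt{K/\lambda}\}$, that is $\eta=a(\chi)$ as in \eqref{eq:kernel}. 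Write $\Phi(\lambda)=\int_0^\tau\max\{1/h,\sqrt{K/\lambda}\}\,d\chi$. This map is continuous and nonincreasing, tends to $\infty$ as $\lambda\to0$, equals $\tau/h$ for large $\lambda$, and is strictly decreasing wherever it exceeds $\tau/h$. Hence $T>\tau/h$ determines $\lambda$ uniquely. Sufficiency follows from weak duality: for every feasible $s$,
\[
 \int\frac Ks\ge\int\Bigl(\frac Ks+\lambda s\Bigr)-\lambda T\ge\int\Bigl(\frac K{s_*}+\lambda s_*\Bigr)-\lambda T=\int\frac K{s_*}.
\]
Strict convexity gives uniqueness a.e. Transporting $s_*$ back to the clock $t$ gives $\eta(t)=a(\chi(t))$. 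In the boundary case $T=\tau/h$, the constraints $\int\eta=\tau$ and $\eta\le h$ force $\eta=h$ a.e.

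\emph{Monotonicity and main obstacle.} If $K$ is strictly decreasing, then $K(\chi(t))$ increases in $t$, because $\chi(t)$ decreases in $t$. So $\eta(t)=\min\{h,\sqrt{\lambda/K(\chi(t))}\}$ is nonincreasing: it sits at the cap $h$ on an initial segment, if the cap binds at all, and then decreases strictly. The step I expect to be the main obstacle is making the reparametrization rigorous for merely measurable $\eta$. This requires justifying the change of variables through $\chi$ (coarea or the monotone change of variables), defining $s$ on the image of $\{\eta>0\}$, and verifying the zero-rate exclusion and a.e.\ uniqueness. I would handle this by working with the nondecreasing inverse clock $t(\chi)$ and its derivative. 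A zero-rate interval then appears as a jump of $t(\chi)$, and its removal argument reduces to comparing the singular and absolutely continuous parts of $dt$.
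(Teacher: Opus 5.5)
Your proposal is correct and is essentially the paper's argument. Both use Cauchy--Schwarz for \eqref{t3:eq-lr-m} and the uncapped formulas, pointwise minimization of the Lagrangian ($zK+\lambda/z$ over $0<z\le h$, equivalently your $K/s+\lambda s$ over $s\ge 1/h$) with weak duality, and strict monotonicity of $\lambda\mapsto\int_0^\tau\max\{h^{-1},\sqrt{K/\lambda}\}\,d\chi$ above $\tau/h$ to fix the multiplier.

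The only difference is that the paper never reparametrizes by $\chi$. It integrates the pointwise inequality $\eta^2K(\chi)+\lambda\ge\eta\,m_\lambda(\chi)$, with $m_\lambda=aK+\lambda/a$, over $[0,T]$, so the only change of variables needed is $\int_0^T\eta\,m_\lambda(\chi)\,dt=\int_0^\tau m_\lambda$. Because this inequality is strict at $\eta=0$ (as $\lambda>0$), zero-rate sets are ruled out automatically. That removes the obstacle you flagged: the exchange argument and the singular/absolutely continuous split of $dt$ become unnecessary. In your variable, the same effect comes from using $\int_0^\tau s\,d\chi=|\{\eta>0\}|\le T$ in the first duality inequality.
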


\begin{proof}
For $\lambda>0$, minimizing $zK(x)+\lambda/z$ over $0<z\le h$ gives
\eqref{eq:kernel}. The time integral
$\int_0^\tau\max\{h^{-1},\sqrt{K(x)/\lambda}\}\,dx$ is continuous,
tends to infinity as $\lambda\downarrow0$, and decreases strictly
above its minimum $\tau/h$. Thus $T>\tau/h$ determines one multiplier.

Put $m_\lambda(x)=a(x)K(x)+\lambda/a(x)$. Pointwise minimality gives
$z^2K(x)+\lambda\ge z m_\lambda(x)$ for $0\le z\le h$, including
zero. Since $\dot\chi=-\eta$,
\[
 \mathcal V[\eta]\ge\int_0^\tau m_\lambda(x)\,dx-\lambda T.
\]
Equality holds for the profile defined by
$t=\int_{\chi(t)}^\tau dx/a(x)$ and forces $\eta=a(\chi)$ almost
everywhere, proving uniqueness even against zero-rate intervals.
At $T=\tau/h$, feasibility forces $\eta=h$ almost everywhere.

Without clipping, $\sqrt\lambda=H(\tau)/T$ and the maximum rate is
$H(\tau)/(T\sqrt{\min K})$, giving the cap criterion and
\eqref{t3:eq-lr-arc}. For every feasible rate,
$H(\tau)=\int_0^T\eta(t)\sqrt{K(\chi(t))}\,dt$;
Cauchy--Schwarz gives \eqref{t3:eq-lr-m}.
Finally, decreasing $K$ makes $a(\chi)$ increase in $\chi$.
As $\chi$ decreases during training, clipping can occur only initially.
\end{proof}

\subsubsection{From the continuous optimum to discrete SGD}
\label{c2:policy}

\paragraph{Setup and comparison class.}\label{new:arc-setup}
Fix an eligible width $M$, integers $T,B\ge1$, and a constant backward
mantissa $p_\bw\in\mathbb N_0\cup\{\infty\}$. Forward features remain
exact and the output is the last iterate. Let
$q=4^{-p_\bw}$, with $q=0$ at exact precision, and use the common cap
\begin{equation}\label{app:lr-conservative-cap}
 \eta_{\max}=\min\{B\rho_{\rm sch},c_0/C_g\},
\end{equation}
where $g_1(S)\le C_g$ uniformly on $\Omega_M$.
This cap is chosen before the sketch draw. Assume
\begin{equation}\label{c2:arc-data}
 I=[\tau_*^{\rm sch},\min\{M^a,\eta_{\max}T\}]\ne\varnothing,
 \qquad BTM[1+c(p_\bw)]\le F.
\end{equation}
The cost is fixed for these data and does not depend on the rate shape.
All joint feedback gains vanish with exact forward features; the rate
ceiling, clock window and both width conditions still apply.

Write $\beta=(b-1)/a$ and define the fixed-time bias/approximation
part and the combined noise response by
\begin{equation}\label{eq:lr-precision-kernel}
 B_M(\tau)=M^{1-b}+\tfrac12(1+2\tau)^{-\beta},\qquad
 K_q(x)=(1+2x)^{1/a-2}\left[1+\frac qM(1+2x)\right].
\end{equation}
Let $\mathcal C_q$ be the measurable rates $0\le\eta\le\eta_{\max}$
on $[0,T]$ with total rate $\tau\in I$. Use the remaining clock
$\chi$ from \eqref{app:eq:lr-objective} and define
\begin{equation}\label{c2:arcJ}
 J_q[\eta]=B_M(\tau)+\frac{\sigma^2}{2B}
                  \int_0^T\eta(t)^2K_q(\chi(t))\,dt.
\end{equation}
This is a finite-horizon relaxation of the discrete scalar risk.
One time unit represents one update; we take no diffusion or
vanishing-step limit. The discrete rates associated with a continuous
profile are its cell averages
$(D_0\eta)_i=\int_i^{i+1}\eta(t)\,dt$, $0\le i<T$.

For each event sketch $S$, let $\mathcal P_q(S)$ contain all deterministic
$T$-step learning-rate schedules with these fixed $M,B,p_\bw$, exact
forward features, and the full scheduled assumptions at $S$. The
pointwise spectral cap in this comparator class may be less restrictive
than the common cap \eqref{app:lr-conservative-cap}.

\paragraph{Learning-rate construction.}
For each $\tau\in I$ with $\tau<\eta_{\max}T$, let $a_\tau$ and
$\lambda(\tau)$ be the profile and unique multiplier in
\eqref{eq:kernel}, with $h=\eta_{\max}$ and the kernel
\eqref{eq:lr-precision-kernel}. At $\tau=\eta_{\max}T$, use the
constant-cap profile.
Choose the smallest minimizer $\tau_{\rm opt}$ on $I$ of
\[
 j_q(\tau)=B_M(\tau)+\frac{\sigma^2}{2B}
                         \int_0^\tau a_\tau(x)K_q(x)\,dx.
\]
Define $\chi_*(t)$ by
$t=\int_{\chi_*(t)}^{\tau_{\rm opt}}dx/a_{\tau_{\rm opt}}(x)$ and set
$\eta_*(t)=a_{\tau_{\rm opt}}(\chi_*(t))$.

The following theorem converts the continuous solution into an admissible
discrete SGD schedule by averaging over update intervals. Its prediction
risk is within a fixed factor of the best schedule in the stated
comparison class.
\begin{theorem}[Learning-rate construction and risk bound]
\label{c2:arc-main}
The profile $\eta_*$ minimizes $J_q$ over $\mathcal C_q$.
Its cell averages are admissible discrete SGD rates and satisfy
\begin{equation}\label{c2:arc-risk}
 \mathscr R_S^{\rm last}(D_0\eta_*)
 \le C_{\rm lr}\inf_{\pi\in\mathcal P_q(S)}
                         \mathscr R_S^{\rm last}(\pi),
 \qquad S\in\Omega_M.
\end{equation}
The factor $C_{\rm lr}$ depends only on the fixed model and 
constants, admissibility thresholds and positive noise variance,
uniformly in $M,T,B,p_\bw,F$ and $S$.
\end{theorem}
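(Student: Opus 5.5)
Three steps: first show that $J_q$ is, up to fixed factors, the scalar risk $\Phi^{\rm last}$ of Theorem~\ref{cl:objective} with forward features exact; then solve the continuous problem using Lemma~\ref{prop:lr-full}; then move to discrete cell averages and compare with an arbitrary admissible schedule.

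\textbf{Step 1: matching $J_q$ to the scalar risk.} With $q_\fw=0$, every forward and joint term in \eqref{s3:eq:scalar-rounding} vanishes, and all three feedback gains are zero. The scalar objective therefore reduces to
\[
M^{1-b}+\widehat B^{\rm last}+\tfrac{\sigma^2}{2}\sum_u h_u K_2^{\rm last}(u)+\tfrac{\sigma^2 q}{2M}\sum_u h_u K_1^{\rm last}(u).
\]
Since $K_2^{\rm last}(u)+\frac qM K_1^{\rm last}(u)=K_q(\tau_T-\tau_{u+1})$, this is $B_M(\tau)+\frac{\sigma^2}{2B}\sum_u\eta_u^2K_q(\tau_T-\tau_{u+1})$. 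Theorem~\ref{cl:objective} then gives $\mathscr R_S^{\rm last}\asymp$ this discrete sum, uniformly over $\mathcal P_q(S)$ and $S\in\Omega_M$.

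\textbf{Step 2: the continuous optimum.} For each fixed total $\tau\in I$, Lemma~\ref{prop:lr-full} applies with $h=\eta_{\max}$, because $K_q$ is continuous and positive. It identifies $a_\tau$ as the unique minimizer at that total, with value $j_q(\tau)$. To verify this value, I evaluate $\int_0^T a^2K\,dt$ by the change of variables $dt=d\chi/a$. The map $\tau\mapsto j_q(\tau)$ is lower semicontinuous on the compact interval $I$; I would prove this via continuity of the multiplier $\lambda(\tau)$. Hence a smallest minimizer $\tau_{\rm opt}$ exists, and $\eta_*$ minimizes $J_q$ over $\mathcal C_q$.

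\textbf{Step 3a: admissibility of the cell averages.} The cell averages $D_0\eta_*$ have the same total rate $\tau_{\rm opt}\in I$, so the clock window and $\tau_*^{\rm sch}$ conditions hold. Each average is at most $\eta_{\max}$, which enforces the cap $\eta g_1\le c_0$ and the rate ceiling on all of $\Omega_M$. They are therefore admissible.

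\textbf{Step 3b: upper bound for $D_0\eta_*$.} I show $\sum_i (D_0\eta)_i^2K_q(\tau-\tau_{i+1})\le C\int_0^T\eta^2K_q(\chi)\,dt$ in three moves:
\begin{itemize}
\item Jensen gives $(D_0\eta)_i^2\le\int_i^{i+1}\eta^2$.
\item On cell $i$, the continuous remaining clock satisfies $\chi(t)\in[\tau-\tau_{i+1},\,\tau-\tau_{i+1}+\eta_{\max}]$.
\item The kernel $K_q(x)$ changes by at most a fixed factor when $x$ shifts by $\eta_{\max}$. This holds because $(1+2x)$ changes by at most the factor $1+2\eta_{\max}$.
\end{itemize}

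\textbf{Step 3c: lower bound for any $\pi\in\mathcal P_q(S)$.} Embed the discrete schedule as a piecewise-constant profile $\eta_\pi$. The same kernel-shift argument gives $J_q[\eta_\pi]\le C\cdot$(discrete scalar risk of $\pi$).

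The obstacle is that the comparator class may use rates above $\eta_{\max}$, since its pointwise cap only requires $\eta g_1(S)\le c_0$. In that case $\eta_\pi\notin\mathcal C_q$. I would handle this by inflating $T$: let $\eta_\pi'$ be $\eta_\pi$ with each cell split into $\lceil c_0/(g_1\eta_{\max})\rceil\le C$ subcells at scaled-down rate. This preserves the total $\tau$ and reduces $\int\eta^2K$. The horizon changes by a fixed factor, and the fixed-total value $H(\tau)^2/T$ from \eqref{t3:eq-lr-m} changes by at most that factor. So the minimum of $J_q$ at horizon $T$ is bounded by a constant times its minimum at horizon $CT$, which in turn is bounded by a constant times the scalar risk of $\pi$. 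A similar fixed-factor argument handles $\tau_\pi>\eta_{\max}T$ if $I$ is truncated.

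\textbf{Conclusion.} Chaining Steps 1–3 gives \eqref{c2:arc-risk}: $\mathscr R(D_0\eta_*)\lesssim J_q[\eta_*]\le J_q[\eta_\pi']\lesssim\mathscr R(\pi)$ for every comparator $\pi$. I expect the mismatch between the two caps, and the possibly clipped, capped regime in this bound, to need the most care.
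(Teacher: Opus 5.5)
Steps 1, 2, 3a and 3b match the paper's argument. These are the identity $K_2^{\rm last}+\frac qMK_1^{\rm last}=K_q$, the per-$\tau$ use of Lemma~\ref{prop:lr-full}, and the cell-averaging bound $\Phi(D_0\eta)\le(1+2\eta_{\max})^{2-1/a}J_q[\eta]$. One small omission in Step 2: continuity of the multiplier does not cover the endpoint $\tau=\eta_{\max}T$, where the profile is the constant cap and no multiplier is defined. The paper checks separately that $j_q(\tau)$ converges to the constant-cap value there.

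\textbf{The gap is in Step 3c.} You claim the fixed-total minimum of $J_q$ at horizon $T$ is within a constant of the one at horizon $kT$, using $H(\tau)^2/T$. But \eqref{t3:eq-lr-m} is only a lower bound; it equals the minimum only when the uncapped profile is feasible. When the cap binds, it can be far off. At $\tau=\eta_{\max}T$ the only feasible profile is the constant cap, with noise value $\eta_{\max}\int_0^\tau K_q\gtrsim\eta_{\max}$. For $q=0$, however, $H(\tau)^2/T\asymp\eta_{\max}\tau^{1/a-1}$, so the ratio grows like $\tau^{1-1/a}$. Moreover, a comparator using rates near $c_0/g_1(S)$ can have total above $\eta_{\max}T$. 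Then no element of $\mathcal C_q$ has that total, so no fixed-$\tau$ reduction exists, and ``truncating $I$'' does not fix this.

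\textbf{The missing idea is to sacrifice total rate, not horizon.} Scale the comparator by $d=\min\{1,c_g/C_g\}$, so that $d\eta_u\le\eta_{\max}$. From $d(1+2x)\le1+2dx\le1+2x$ you get $B_M(d\tau)\le d^{-\beta}B_M(\tau)$ and $d^2K_q(dx)\le K_q(x)$. Hence $\Phi(d\pi)\le d^{-\beta}\Phi(\pi)$, and the piecewise-constant extension of $d\pi$ has $J_q\le d^{-\beta}\Phi(\pi)$. Your splitting into $k$ subcells followed by compressing time back to $[0,T]$ is exactly this rescaling with $d=1/k$. So the detour through horizon $kT$ can only be repaired this way, and it must let the total change.

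\textbf{This creates a case you have not treated.} The value $d\tau_\pi$ may fall below $\tau_*^{\rm sch}$, so the scaled schedule leaves the clock window $I$. The paper handles it with the feasible constant schedule $\eta_i=\tau_*^{\rm sch}/T$. Its scalar risk is at most $E_*=\frac32+\sigma^2\rho_{\rm sch}\tau_*^{\rm sch}$. Meanwhile any such comparator has bias at least $\frac12(1+2\tau_*^{\rm sch}/d)^{-\beta}$, a fixed constant. The two cases together give $\inf_{\mathcal C_q}J_q\le C_{\rm cap}\Phi(\pi)$, and Theorem~\ref{cl:objective} then completes your chain.
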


\begin{proof}
\emph{Continuous optimization.}
Lemma~\ref{prop:lr-full} minimizes $J_q$ at each fixed $\tau$.
The unique multiplier depends continuously on $\tau<\eta_{\max}T$.
At the cap endpoint,
$\int_0^\tau(\eta_{\max}-a_\tau(x))\,dx
\le\eta_{\max}^2(T-\tau/\eta_{\max})\to0$.
Since $K_q$ is bounded, $j_q$ converges to the constant-cap value. Thus it is
continuous on compact $I$, so $\tau_{\rm opt}$ exists and $\eta_*$
minimizes $J_q$.

\emph{Discrete comparison.}
\phantomsection\label{c2:cell-ports}
For a discrete rate array $\pi=(\eta_i)_{i<T}$, the scalar risk
\eqref{cl:Phi} is
\[
 \Phi(\pi)=B_M(\tau)+\frac{\sigma^2}{2B}
       \sum_{i<T}\eta_i^2K_q(\tau_T-\tau_{i+1}).
\]
Let $E_0\pi$ be the piecewise-constant extension of a discrete schedule
capped by $\eta_{\max}$. On a unit interval, a capped profile's remaining
clock varies by at most $\eta_{\max}$. The two negative powers in $K_q$
therefore give
$D^{-1}K_q(\chi(i+1))\le K_q(\chi(t))\le K_q(\chi(i+1))$,
where $D=(1+2\eta_{\max})^{2-1/a}$.
Integration and Cauchy--Schwarz on each interval yield
\begin{equation}\label{c2:cell-values}
 D^{-1}\Phi(\pi)\le J_q(E_0\pi)\le\Phi(\pi),\qquad
 \Phi(D_0\eta)\le D J_q[\eta].
\end{equation}
Cell averaging preserves cumulative rate at integer times and the rate
cap. With zero joint gains and \eqref{c2:arc-data}, this proves
admissibility and unchanged cost for $D_0\eta_*$.
Also $D\le D_{\max}=(1+2c_0/C_g)^{2-1/a}$.

\emph{Comparison with all admissible schedules.}
\phantomsection\label{c2:clock-scaling}
Put $d=\min\{1,c_g/C_g\}>0$. Since $g_1(S)\ge c_g$, multiplying any
pointwise-admissible schedule by $d$ enforces the common cap. From
$d(1+2x)\le1+2dx\le1+2x$,
\[
 B_M(d\tau)\le d^{-\beta}B_M(\tau),\qquad
 d^2K_q(dx)\le K_q(x).
\]
Hence $\Phi(d\pi)\le d^{-\beta}\Phi(\pi)$.
If $\tau\ge\tau_*^{\rm sch}/d$, the scaled schedule stays in the clock
window; its piecewise-constant extension belongs to $\mathcal C_q$ and
has $J_q\le d^{-\beta}\Phi(\pi)$ by \eqref{c2:cell-values}.

For smaller $\tau$, use instead $\eta_i=\tau_*^{\rm sch}/T$, feasible
by \eqref{c2:arc-data}. Its scalar risk is at most
\begin{equation}\label{c2:arc-onset-constants}
 E_*=\tfrac32+\sigma^2\rho_{\rm sch}\tau_*^{\rm sch}.
\end{equation}
Indeed $B_M\le3/2$, $K_q\le2$ and
$\sum_i\eta_i^2/B\le\rho_{\rm sch}\tau_*^{\rm sch}$.
The comparator's bias is at least
$\tfrac12(1+2\tau_*^{\rm sch}/d)^{-\beta}$.
Thus both cases give a profile $E\pi\in\mathcal C_q$ with
\begin{equation}\label{c2:arc-caploss}
 J_q[E\pi]\le C_{\rm cap}\Phi(\pi),\qquad
 C_{\rm cap}=\max\{d^{-\beta},
        2E_*(1+2\tau_*^{\rm sch}/d)^\beta\}.
\end{equation}
Continuous optimality and \eqref{c2:cell-values} now imply
\begin{equation}\label{c2:arc-ports}
 \Phi(D_0\eta_*)\le D_{\max}\inf_{\eta\in\mathcal C_q}J_q[\eta]
 \le D_{\max}C_{\rm cap}\Phi(\pi)
 \qquad(\pi\in\mathcal P_q(S)).
\end{equation}
The two-sided risk comparison in Theorem~\ref{cl:objective} proves
\eqref{c2:arc-risk} with
$C_{\rm lr}=Q_\Phi D_{\max}C_{\rm cap}$.
The construction uses only the fixed bounds and training specifications
before the random draws, so it is independent of the realized sketch.
\end{proof} 

\paragraph{Comparing exact and rounded objectives.}
For the kernel \eqref{eq:lr-precision-kernel}, write $H_q$ for the
function $H$ in Lemma~\ref{prop:lr-full}.
Its fixed-total-rate uncapped noise value is given by
\eqref{t3:eq-lr-arc}. For exact arithmetic,
\begin{equation}\label{app:eq:lr-exact-integral}
 H_0(\tau)=a\left[(1+2\tau)^{1/(2a)}-1\right].
\end{equation}
On the common continuous class $\mathcal C_q$, put
$\tau_{\max}=\max I$ and
$\varepsilon_q=q(1+2\tau_{\max})/M$.
The pointwise bounds $K_0\le K_q\le(1+\varepsilon_q)K_0$ and common
nonnegative bias/approximation terms imply
\begin{equation}\label{app:eq:lr-objective-comparison}
 \inf_{\mathcal C_q}J_0\le\inf_{\mathcal C_q}J_q
 \le(1+\varepsilon_q)\inf_{\mathcal C_q}J_0.
\end{equation}
Thus a uniformly small correction changes the best scalar-objective
value only slightly. The discrete actual-risk comparison still carries
Theorem~\ref{c2:arc-main}'s fixed factor. Separately, nonnegativity of
SR-induced error gives
$\inf\mathscr R_0^{\rm last}\le\inf\mathscr R_q^{\rm last}$ on the
same admissible discrete class. A common-budget comparison requires
both computations to fit, namely
$F\ge BTM\max\{2,1+c(p_\bw)\}$.

\subsubsection{Interpreting the explicit learning-rate schedules}
\label{app:section4-details}

Substituting \eqref{eq:lr-precision-kernel} into
\eqref{eq:kernel} gives the cumulative-learning-rate expression
\eqref{eq:lr-rounded-schedule}, with $\chi(t)=\tau-\tau_t$ and
$\lambda_\bw=\lambda(\tau)$ for $\tau<\eta_{\max}T$.
Setting $q=0$ gives \eqref{eq:lr-exact-schedule}, with its own
normalization $\lambda_0$. At $\tau=\eta_{\max}T$, both profiles
are constant at the cap and a multiplier need not be unique. For the
same prescribed $\tau$, the two
profiles have area $\int_0^T\eta(t)\,dt=\tau$; their multipliers are
chosen separately. The theorem then minimizes the risk over $\tau$,
so optimized exact and rounded runs need not have the same area.

Without clipping, the fixed-total-rate objective value follows by
multiplying the minimum in \eqref{t3:eq-lr-arc} by $\sigma^2/(2B)$
and adding $B_M(\tau)$. This is an identity for the continuous scalar
objective; the discrete actual-risk guarantee includes the factor in
Theorem~\ref{c2:arc-main}.

\paragraph{Exact arithmetic and the finite endpoint.}
For exact arithmetic, \eqref{app:eq:lr-exact-integral} evaluates the
integral. Writing $c=\sqrt{\lambda_0}$ on an uncapped segment and
using $d\chi/dt=-\eta$ gives
\[
 \frac{d}{dt}(1+2\chi)^{1/(2a)}=-\frac ca.
\]
The segment ending at $\chi(T)=0$ therefore has the explicit time form
\begin{equation}\label{app:eq:lr-exact-time}
 \eta_{\rm exact}(t)=c\left[1+\frac ca(T-t)\right]^{2a-1}.
\end{equation}
Where $c(T-t)/a\gg1$, it follows the power $(T-t)^{2a-1}$.
This is the decay exponent of \citet[Result~2]{bordelon2026optimal}
in the regime $1<b<a$, after translating their notation. The added
one in our scalar response gives a finite endpoint: the continuous
rate tends to $c$, rather than extrapolating the bulk power to zero.
Thus the cited power-law asymptotics and our regularized scalar
expression agree in their common bulk regime, rather than being
identical finite-horizon formulas.

\paragraph{The backward-rounding correction.}
The ratio $q(1+2\chi)/M$ is the backward-rounding response relative to
the exact-SGD noise response. It is largest at the beginning, when
$\chi=\tau$, and decreases as training progresses. This is a relative
response comparison, not a claim that the earliest updates contribute
the largest absolute error. If $q(1+2\tau)/M\ll1$, the correction is
uniformly small. If it is much larger than one on an uncapped interval,
then
\[
 \eta(t)\approx\sqrt{\lambda_\bw M/q}\,
                     [1+2\chi(t)]^{(a-1)/(2a)}.
\]
For large remaining cumulative learning rate, integrating
$d\chi/dt=-\eta$ makes $\chi^{(a+1)/(2a)}$ approximately affine in
training time. Thus the corresponding rate has decay power
$(a-1)/(a+1)$. The zero point extrapolated by this bulk approximation
need not coincide with the actual training endpoint. Both dominance
and absence of clipping are required.
The approximation stops applying as the response ratio approaches one;
the full expression \eqref{eq:lr-rounded-schedule} describes that
transition and the endpoint. The normalization and selected total rate
also change with precision, so this argument does not imply a
universally lower peak or an earlier decay start. 

\subsection{Compute allocation: fixed and growing precisions}
\label{t3:sub-compute}

The compute-optimal fixed-precision laws are proved in
Proposition~\ref{app:prop-fixed-bit-compute}, with variable-constant-rate
extensions in Propositions~\ref{r4:prop-variable-rho-average}
and~\ref{r4:prop-variable-rho-last}.
Here Proposition~\ref{app:prop-matched-compute} gives the growing-precision
construction, and Theorem~\ref{new:global-compute} treats exact precision.
We also explain the allocation comparisons used in
Section~\ref{sec:compute-allocation}.
The cost is \eqref{eq:declared-compute}; for constant schedules, use
its width--clock form from Section~\ref{sec:compute-allocation} and
$BT=\tau/\rho$.
Producing features $Sx$ by dense multiplication would separately cost
order $Md$ per example. The declared linear-width charge assumes this
work is supplied, precomputed or accounted for separately; including it
can change the compute exponents.

Fix an integer $B\ge1$, $0<\eta\le\min\{c_0/C_g,B\rho_{\rm sch}\}$
and a relative-risk tolerance $\vartheta>0$. Use uniform averaging and
$T_M=\lfloor M^a/\eta\rfloor$. Both mantissas are finite integers,
constant within a run but allowed to depend on $M$.
Write $f_M$ for the charge \eqref{eq:declared-compute}.

\begin{proposition}[Growing precisions for uniform averaging]
\label{app:prop-matched-compute}
For sufficiently large eligible $M$, an admissible run satisfying
$\Delta^{\rm avg}\le\vartheta\mathscr R_0^{\rm avg}$ necessarily obeys
\begin{equation}\label{app:eq:matched-bit-growth}
 p_\bw\ge\frac{b-1}{2}\log_2M-O(1),\qquad
 p_\fw+p_\bw\ge\frac{a-1}{2}\log_2M-O(1).
\end{equation}
For sufficiently large fixed $C_\fw,C_\bw$, the choice
\begin{equation}\label{app:eq:matched-attaining-bits}
 p_\bw=\left\lceil\frac{b-1}{2}\log_2M+C_\bw\right\rceil,
 \qquad p_\fw=\left\lceil\frac{a-b}{2}\log_2M+C_\fw\right\rceil
\end{equation}
satisfies this tolerance and the full constant-rate conditions.
Among admissible finite-bit runs satisfying the tolerance, the least charge
at width $M$ is $\asymp M^{a+1}\log M$.
This split has full risk $\asymp M^{1-b}$ and attains
\eqref{r4:eq:growing-finite-risk} under budget $F$.
\end{proposition}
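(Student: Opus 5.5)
We work at the matched clock $\tau=\eta T_M\asymp M^a$ (indeed $M^a/2\le\tau\le M^a$ for large $M$), so Table~\ref{tab:where} and Corollary~\ref{cor:bits-full} apply with fixed $c_\tau$. For the uniform average, $D^{\rm avg}\asymp M^{1-b}+\rho s_MM^{1-a}\asymp M^{1-b}$ since $a>b$ and $\rho$, $\sigma^2$ are fixed. The coefficients are $A_\fw\asymp M^{1-b}$, $A_\bw\asymp\rho(s_M+M^{1-b})\asymp1$ and $A_\times\asymp M^{a-b}$.

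\emph{Necessity.} We use the necessary direction of Corollary~\ref{cor:bits-full} with $C=c_\Delta/C_{\rm R}$. Each active term must satisfy $A_iq_i\lesssim\vartheta D^{\rm avg}$.
\begin{itemize}
\item The backward term gives $4^{-p_\bw}\lesssim\vartheta M^{1-b}$, which is $p_\bw\ge\frac{b-1}{2}\log_2M-O(1)$.
\item The joint term gives $4^{-(p_\fw+p_\bw)}\lesssim\vartheta M^{1-b}/M^{a-b}=\vartheta M^{1-a}$, which is the second bound.
\end{itemize}
The forward term imposes only $q_\fw\lesssim\vartheta$, an $O(1)$ condition. This gives \eqref{app:eq:matched-bit-growth}.

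\emph{Sufficiency.} With the choice \eqref{app:eq:matched-attaining-bits}:
\begin{itemize}
\item $q_\bw\le4^{-C_\bw}M^{1-b}$, so $A_\bw q_\bw\le C4^{-C_\bw}M^{1-b}$.
\item $q_\fw q_\bw\le4^{-C_\fw-C_\bw}M^{1-a}$, so $A_\times q_\fw q_\bw\le C4^{-C_\fw-C_\bw}M^{1-b}$.
\item $A_\fw q_\fw\le C4^{-C_\fw}M^{1-b}$.
\end{itemize}
Taking $C_\fw,C_\bw$ large makes each term at most $\vartheta c_{\rm R}D^{\rm avg}/(3C_\Delta)$, so the sufficient test \eqref{app:eq:component-budget} holds. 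We still need admissibility. Corollary~\ref{ca:constant-corollary} reduces it to \eqref{s3:eq:constant-gain}. Since $\rho q_\fw q_\bw\tau/M\lesssim\rho4^{-C_\fw-C_\bw}M^{1-a}M^{a-1}=O(4^{-C_\fw-C_\bw})$, enlarging the constants gives the bound $\le\kappa_{\rm c}$. The remaining conditions hold by the choice of $\eta$ and $T_M$: $\eta g_1\le c_0$, $\rho\le\rho_{\rm sch}$, and $\tau\in[\tau_*,M^a]$ for large $M$. The full risk is then $\mathscr R_0^{\rm avg}+\Delta^{\rm avg}\asymp M^{1-b}$.

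\emph{Least charge.} The charge is $f_M=BMT_M[c(p_\fw)+c(p_\bw)]\asymp M^{a+1}(p_\fw+p_\bw+2p_0)$, since $c(p)$ is affine in $p$.
\begin{itemize}
\item The necessary bound $p_\fw+p_\bw\ge\frac{a-1}{2}\log_2M-O(1)$ gives $f_M\gtrsim M^{a+1}\log M$ for every admissible tolerant run.
\item The attaining split has $p_\fw+p_\bw=\frac{a-1}{2}\log_2M+O(1)$, so it achieves $\asymp M^{a+1}\log M$.
\end{itemize}

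\emph{Budget form.} We invert $F\asymp M^{a+1}\log M$ to get $M\asymp(F/\log F)^{1/(a+1)}$; choosing the largest eligible $M$ with $f_M\le F$ changes $M$ only by a constant factor. Then $\mathscr R^{\rm avg}\asymp M^{1-b}\asymp(F/\log F)^{-(b-1)/(a+1)}$, which is \eqref{r4:eq:growing-finite-risk}.

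\emph{Main obstacle.} Corollary~\ref{cor:bits-full}'s constants must stay uniform as the precisions grow with $M$. They do, since the constants of Theorem~\ref{thm:scheduled-main} are independent of the precision schedules. The one point needing care is that the matched-clock lower bound for $D^{\rm avg}$ is uniform, which holds because $\tau\ge M^a/2$.
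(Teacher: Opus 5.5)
Your proposal is correct and follows the paper's proof essentially step for step. In both, the matched clock gives $\mathscr R_0^{\rm avg}\asymp M^{1-b}$, and the necessary direction of Corollary~\ref{cor:bits-full}, applied to the backward and joint terms, yields \eqref{app:eq:matched-bit-growth}. Substituting the split \eqref{app:eq:matched-attaining-bits} into \eqref{eq:law-avg} and \eqref{s3:eq:constant-gain} gives the tolerance and admissibility, and the affine cost with $BT_M\asymp M^a/\rho$ gives the $M^{a+1}\log M$ charge and the $(F/\log F)^{-(b-1)/(a+1)}$ budget form.
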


\begin{proof}
For large $M$, $M^a/2\le\eta T_M\le M^a$, so
\eqref{eq:clean-law-avg} gives $\mathscr R_0^{\rm avg}\asymp M^{1-b}$.
The backward and joint lower comparisons in
Corollary~\ref{cor:bits-full} require
$q_\bw\lesssim M^{1-b}$ and $q_\fw q_\bw\lesssim M^{1-a}$.
Taking logarithms proves \eqref{app:eq:matched-bit-growth}.
For \eqref{app:eq:matched-attaining-bits},
$q_\bw\le4^{-C_\bw}M^{1-b}$,
$q_\fw\le4^{-C_\fw}M^{b-a}$ and
$q_\fw q_\bw\le4^{-(C_\fw+C_\bw)}M^{1-a}$.
Substitution in \eqref{eq:law-avg} gives the desired tolerance once
the offsets are large enough. Enlarging their sum also makes
$\rho q_\fw q_\bw\tau/M\le\rho4^{-(C_\fw+C_\bw)}\le\kappa_{\rm c}$.
Thus \eqref{s3:eq:constant-gain}, the stipulated rate and sufficiently
large widths verify full admissibility.
The necessary summed-bit bound gives the cost lower bound.
The displayed split matches it: its total mantissa length is
$O(\log M)$, the finite-precision cost is affine, and
$BT_M\asymp M^a/\rho$. Hence the least charge is
$\asymp M^{a+1}\log M$, with full risk $\asymp M^{1-b}$ for this split.
Choose $M$ at a sufficiently small fixed multiple of
$(F/\log F)^{1/(a+1)}$, rounded down to an integer, and use $T_M$
above. The charge is at most $F$ and the risk has order
\eqref{r4:eq:growing-finite-risk}.
\end{proof}

\subsubsection{The optimum when exact precision is allowed}
\label{new:compute-section}

\paragraph{Comparison classes.}\label{new:global-class}
A policy chooses integer $M,T,B\ge1$, deterministic nonnegative rates,
precisions in $\mathbb N_0\cup\{\infty\}$, and either the last iterate
or uniform average, before all random draws. Require both width
thresholds, a nonempty event $\Omega_M$ and cost at most $F$.
Let $\mathfrak P_F$ consist of these policies satisfying the full
scheduled class throughout $\Omega_M$.
For the lower bound only, enlarge this class by keeping the pointwise
cap $\eta_ug_1(S)\le c_0$ and dropping the rate ceiling, the lower and upper bounds on cumulative
learning rate, and feedback restrictions. We call this the capped
comparison class; no upper scaling law is needed on the enlarged class.

\paragraph{Integer construction.}
Let $\pi_F$ use uniform averaging, exact precision at both locations,
$B=1$ and constant learning rate $\eta_0$, with
\begin{equation}\label{new:global-attainer}
 \eta_0=\tfrac12\min\{c_0/C_g,\rho_{\rm sch}\},\qquad
 M_F=\left\lfloor(\eta_0F/4)^{1/(a+1)}\right\rfloor,\qquad
 T_F=\left\lfloor M_F^a/\eta_0\right\rfloor.
\end{equation}

The following theorem identifies the best possible risk exponent under
the declared compute model when exact precision is allowed.
Its lower bound covers both outputs, and a uniform-average construction
attains the order. Exact backward features with finite forward precision
can also attain the same order.
\begin{theorem}[Compute-optimal risk when exact precision is allowed]
\label{new:global-compute}\label{new:mixed-attainer}
Let $\gamma=(b-1)/(a+1)$. There are fixed $c,C>0$ such that, for
sufficiently large $F$, every policy in the capped comparison class has
conditional risk at least $cF^{-\gamma}$
on its sketch event, for either output. The construction
\eqref{new:global-attainer} belongs to $\mathfrak P_F$ and attains
the matching upper bound. Thus, for any fixed choice
$S_M\in\Omega_M$ at each feasible width,
\begin{equation}\label{new:frontier}
\csname ltx@label\endcsname{eq:application-global-compute}
 cF^{-\gamma}\le\inf_{\pi\in\mathfrak P_F}
 \mathscr R_{S_{M(\pi)}}(\pi)
 \le\mathscr R_{S_{M_F}}(\pi_F)\le CF^{-\gamma}.
\end{equation}
Fixed finite forward precision with exact backward features also
attains this order: with $W=1+c(p_\fw)$, replace $M_F$ by
$\lfloor(\eta_0F/(2W))^{1/(a+1)}\rfloor$ and use
$T_F=\lfloor M_F^a/\eta_0\rfloor$ at that width.
A sufficiently large fixed $p_\fw$ meets any fixed positive relative
SR-error tolerance.
\end{theorem}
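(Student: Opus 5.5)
The proof splits into a lower bound over the enlarged capped class and an upper bound for the explicit construction \eqref{new:global-attainer}.

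\textbf{Lower bound.} The point is that the enlarged class drops the clock window, rate ceiling and feedback conditions, so Theorem~\ref{thm:scheduled-main} cannot be invoked. I would use only quantities that survive without admissibility. First, rounding cannot help: by \eqref{as:actual-identity}, $\Delta^{\rm out}\ge0$ for every finite deterministic schedule, so $\mathscr R\ge\mathscr R_0=\bar A+E_0^{\rm out}$. Second, \eqref{as:approx-lower} gives $\bar A\ge c_AM^{1-b}$. Third, I need a bias lower bound valid for arbitrary capped schedules, including $\tau_T>M^a$. From \eqref{as:clean-identity} and the teacher sandwich of Lemma~\ref{source:thm:main},
\[
E_0^{\rm out}\ge\tfrac12 c_V\sum_j g_j^{b/a}d_{{\rm out},j}^2 .
\]
For both outputs, $d_{{\rm out},j}\ge\pi_j(T,0)\ge e^{-\Lambda_0g_j\tau_T}$ by \eqref{sc:eq:productnew}, since the average of decreasing products dominates the last one. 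The global-in-$x$ version of \eqref{cl:heat} then gives a bias lower bound $c(1+\tau_T)^{-(b-1)/a}$ whenever $\tau_T\le2M^a$. If $\tau_T>2M^a$, the approximation term alone already dominates the target, so no bias bound is needed there. Fourth, I need a cost inequality for arbitrary precisions: $c(p)\ge p_0/(p_{\rm ref}+p_0)$, so every step costs at least $cBM$. Together with the pointwise cap $\eta_u\le c_0/g_1\le c_0/c_g$ this gives $\tau_T\le CT$ and $F\ge cBMT\ge cM\tau_T$. Hence
\[
\mathscr R\gtrsim M^{1-b}+\min\{1,\tau_T^{-(b-1)/a}\},\qquad M\tau_T\lesssim F .
\]
Minimizing over $M\tau\lesssim F$ balances at $\tau\asymp M^a$ and gives $F^{-(b-1)/(a+1)}$. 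The case $\tau_T>2M^a$ is handled by $M\lesssim F^{1/(a+1)}$, so that $M^{1-b}\gtrsim F^{-\gamma}$.

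\textbf{Upper bound.} I would check that $\pi_F$ lies in $\mathfrak P_F$. With $B=1$ and $\eta_0\le\rho_{\rm sch}/2$, the rate ceiling holds. The cap holds because $\eta_0C_g\le c_0/2$. For large $F$, $\tau_{T}=\eta_0T_F\in[M_F^a/2,M_F^a]\supset[\tau_*,M_F^a]$. The feedback gains vanish with exact arithmetic. The cost is $f=2M_FT_F\le2M_F^{a+1}/\eta_0\le F/2$. Corollary~\ref{ca:constant-reference}, via \eqref{eq:clean-law-avg} with $\Delta=0$, then gives
\[
\mathscr R\asymp M_F^{1-b}+\tau^{-(b-1)/a}+\rho s_M\tau^{(1-a)/a}\asymp M_F^{1-b}\asymp F^{-\gamma},
\]
using $a>b$. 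The middle inequality in \eqref{new:frontier} is trivial.

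\textbf{Forward-finite variant.} The cost becomes $WM_FT_F\le F/2$. By \eqref{eq:law-avg} with $q_\bw=0$, only the forward term remains, $\rho q_\fw M^{-1}\tau^{(2-b)/a}\asymp q_\fw M^{1-b}$. The conditions $\Gamma_{\rm stab}=0$ and $\Gamma_{\rm conv}$ follow from $\Gamma_{f\to b}=0$, since the maximum is over an empty set. The order is therefore preserved, and taking $p_\fw$ large makes $q_\fw$ below any fixed multiple $\vartheta$ via Corollary~\ref{cor:bits-full}.

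\textbf{Main obstacle.} The hardest step is the bias lower bound for arbitrary, possibly nonconstant and unclocked, schedules in the enlarged class. For this I would rely only on \eqref{sc:eq:productnew} together with the spectral sum lower bound, restricted to indices $j\lesssim M$ so that $g_j\asymp j^{-a}$ on the sketch event.
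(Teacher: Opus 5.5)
Your proposal is correct and follows essentially the paper's proof. The lower bound uses that rounding only adds nonnegative risk, together with the approximation term and the mean bias via $d_{\iout,j}\ge\pi_j(T,0)\ge e^{-\Lambda_0 g_j\tau_T}$, combined with $M(1+\tau_T)\lesssim F$ from the per-step cost floor and the pointwise cap. The upper bound checks the exact (or exact-backward, finite-forward) averaged construction directly against \eqref{eq:clean-law-avg} and \eqref{eq:law-avg}. The paper merely packages your two cases into the single scale $m_0=\min\{M,(1+\tau_T)^{1/a}\}$ and sums $g_j^{b/a}$ over a block of indices near $m_0$ by hand.

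One small correction: only the upper half of \eqref{cl:heat} is global. For $\pi_j(T,0)^2$ (total clock $2\tau_T$) its lower half needs $\tau_T\le M^a$ rather than $2M^a$. Your complementary case, now $\tau_T>M^a$, is still handled by the approximation term exactly as you describe.
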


The same order holds for every class containing this attainer and
contained in the capped comparison class. All comparisons are
conditional on the event at each width, without a joint event over widths.

\begin{proof}
Set $m_0=\min\{M,(1+\tau_T)^{1/a}\}$.
Unbiasedness, conditional Jensen inequality and \eqref{as:U1} imply
\begin{equation}\label{new:mean-risk}
 \mathscr R^{\iout}\ge\bar A+
 \tfrac12\operatorname{tr}(GP_{\iout}VP_{\iout})
 \ge\bar A+\tfrac{c_V}{2}\sum_jg_j^{b/a}d_{\iout,j}^2.
\end{equation}
For either output, the propagation coefficients in
\eqref{eq:schedule-filters} satisfy $d_{\iout,j}\ge\pi_j(T,0)$.
We claim
\begin{equation}\label{new:mean-lower-bound}
 \mathscr R^{\iout}\ge c_*m_0^{1-b}
\end{equation}
for fixed $c_*>0$. If $m_0\ge M/4$, use
$\bar A\ge c_AM^{1-b}$ from \eqref{as:approx-lower}.
Otherwise $m_0=(1+\tau_T)^{1/a}$. The block
$\lceil m_0\rceil\le j<2\lceil m_0\rceil$ lies below $M$ and contains
at least $m_0$ indices.
There $g_j\tau_T\le C_g$ and $g_j^{b/a}\gtrsim m_0^{-b}$.
The global product bound \eqref{sc:eq:productnew} makes their sum in
\eqref{new:mean-risk} at least a fixed multiple of $m_0^{1-b}$,
including when $\tau_T=0$.

Set $w_*=2\min\{1,p_0/(p_{\rm ref}+p_0)\}>0$.
The cost and rate cap give $f\ge w_*MT$ and
$\tau_T\le\bar\eta T$, where $\bar\eta=c_0/c_g$.
Consequently,
\[
 m_0^{a+1}\le M(1+\tau_T)
 \le(1+\bar\eta)MT\le(1+\bar\eta)F/w_*.
\]
Since $1-b<0$, \eqref{new:mean-lower-bound} proves the lower order.

For sufficiently large $F$, $M_F^a/2\le\eta_0T_F\le M_F^a$.
Both width thresholds, the minimum cumulative learning rate, the rate
cap and rate ceiling hold; the feedback gains vanish.
The sketch events are nonempty by \eqref{as:event-probability}, and
$2M_FT_F\le2M_F^{a+1}/\eta_0\le F/2$.
Equation~\eqref{eq:clean-law-avg} gives full averaged risk
$O(M_F^{1-b})=O(F^{-\gamma})$.

With finite forward and exact backward precision, the gains still
vanish and the modified width gives cost at most $F/2$.
Equation~\eqref{eq:law-avg} gives forward SR-induced error
$O(q_\fw M_F^{1-b})$, so a sufficiently small fixed $q_\fw$ meets
the prescribed tolerance. The pointwise lower bound and uniform
attainment prove \eqref{new:frontier} and the intermediate-class claim.
\end{proof} 

\paragraph{Unequal precisions and the width requirement.}
The sufficient checks \eqref{app:eq:accuracy-first-width-tests} give
\eqref{r4:eq:compute-width-requirements} by using its three
leading width demands.
\begin{equation}\label{r4:eq:compute-width-requirements}
 M\gtrsim\max\left\{\tau^{1/a},\
       \frac{\rho\sigma^2q_\bw}{\vartheta}\tau^{b/a},\
       \frac{\rho q_\fw q_\bw}{\kappa_{\rm c}}\tau\right\}.
\end{equation}
At fixed total mantissa length, the affine
finite-precision cost and $q_\fw q_\bw$ stay unchanged. Reallocating one
bit from forward to backward precision divides the direct backward
comparison term by four and multiplies the direct forward term by four.

The two inequalities in \eqref{app:eq:matched-bit-growth} do not require
both mantissas to grow separately. A sufficiently large fixed forward
mantissa controls its direct term; growing the backward mantissa as
$(a-1)\log_4M+O(1)$ then meets both the backward and summed-bit tests.
The split \eqref{app:eq:matched-attaining-bits} is another sufficient
choice. These matched, constant-rate constructions do not establish
the global optimality of the logarithmic factor for all-finite policies.
The exact-allowed benchmark and growing-finite construction use
different precision families, and normalized exact cost is a modeling
choice rather than a hardware claim.
Finite precision costs less than the unit-priced exact option only when
$c(p)<1$, equivalently $p<p_{\rm ref}$. Growing mantissas eventually
cost more than that option, so the construction establishes an
achievable finite-only rate rather than an asymptotic cost advantage
over exact arithmetic.

\paragraph{Reducing the learning-rate-to-batch-size ratio.}
The integer construction \eqref{r4:eq:constant-last-integer-attainer}
also gives averaged attainers when evaluated using
\eqref{eq:clean-law-avg} and \eqref{eq:law-avg}. With backward-only rounding take $r=b$:
$\rho\asymp M^{1-b}$, risk $\asymp M^{1-b}$ and
$f\asymp M^{a+b}$. With both scales positive take $r=a$:
$\rho\asymp M^{1-a}$, risk $\asymp M^{1-b}$ and
$f\asymp M^{2a}$. To check the latter, the averaged backward, forward
and joint terms are respectively $O(M^{1-a})$, $O(M^{2-a-b})$ and
$O(M^{1-b})$. The cited construction verifies all width, clock, rate and feedback
conditions.
These policies reduce the rate ratio at matched clock;
the fixed-rate attainers instead increase width for a given clock.
Since $BT=\tau/\rho$, neither is a free reduction of stochastic noise.
At fixed $M,\tau$ and precisions, halving $\rho$ doubles both the number
of sampled examples and the modeled compute cost.

\subsubsection{Prediction risk at a fixed compute budget}
\label{app:avg-compute-example}

Figure~\ref{fig:frontier}(c) fixes $d=1024$,
$\lambda_j=j^{-2}$, $\lambda_j\vartheta_j=j^{-3/2}$,
$\sigma^2=0.1$, $B=32$ and $\eta=0.2$, with zero initialization.
The input distribution and teacher distribution are common across widths.
We evaluate the prediction risk of the uniformly averaged iterate, including
approximation error, conditional on each sketch and averaged over the
teacher, training data and SR, using Proposition~\ref{arithmetic:prop:exact}
and \eqref{arithmetic:eq:two-time}.

Use \eqref{eq:declared-compute} with $p_0=9$, $p_{\rm ref}=7$
and budget $F=218{,}103{,}808$. For each precision pair, we test various widths.
At every width, minimizing risk over
positive integer durations satisfying $f\le F$ and $\eta T\le M^2$
selects the longest feasible duration for all three sketches.
Minimizing the mean risk across these widths, we can get the following table.
\begin{center}
\begin{tabular}{lrrr}
\toprule
$(p_\fw,p_\bw)$ & $M$ & $T$ & Mean $\mathscr R^{\rm avg}(T)$\\
\midrule
$(\infty,\infty)$ & $176$ & $19{,}362$ & $0.10999058$\\
$(1,\infty)$ & $176$ & $23{,}831$ & $0.10513179$\\
$(\infty,1)$ & $176$ & $23{,}831$ & $0.10513438$\\
$(1,1)$ & $192$ & $28{,}398$ & $0.09932464$\\
\bottomrule
\end{tabular}
\end{center}
Each entry in last column averages the three trials. All selected training configurations
meet the same budget.
Joint one-bit SR $(1,1)$ permits $9.1\%$ greater width and $46.7\%$ more updates,
reducing mean risk by $9.7\%$ relative to the selected exact-arithmetic
training configuration. This illustrates the principle of Section~\ref{sec:compute-allocation}:
select precision jointly with width and training duration.
The comparison underlying \eqref{eq:precision-time-balance} extends to
different widths by evaluating each exact-arithmetic risk and SR-induced
error at its own width and duration.  

\section{Experimental details}
\label{app:experiments}
This appendix gives experimental settings, risk estimators and uncertainty
calculations. Appendices~\ref{app:exp-panels}--\ref{app:exp-figure2} describe
Figures~\ref{fig:compute_law}--\ref{fig:rules};
Appendix~\ref{app:exp-compute} describes the compute and data-reuse comparisons
in Figures~\ref{fig:frontier}--\ref{fig:reuse-compute};
Appendix~\ref{app:exp-rounding-controls} gives the supplementary controls in
Figure~\ref{fig:diagnostics}; and Appendix~\ref{app:exp-nn} describes the
neural-network experiments in Figure~\ref{fig:nn}.

\subsection{Rounding location, timing, and precision requirements}
\label{app:exp-panels}\label{app:exp-where}\label{app:exp-common}
We use \eqref{eq:sgd} with $a=2$, $b=3/2$,
$\sigma^2=0.1$, $B=32$, $d=4M$, fresh data and independent feature
rounding unless stated otherwise. The exact-arithmetic evaluation is \eqref{eq:penalty}, including approximation error. Below we first show the experimental detials of Figure~\ref{fig:compute_law}.

\paragraph{Rounding location.}
Figure~\ref{fig:compute_law}(a) uses one-bit precision at one location
and exact computation at the other, evaluated at the recorded step
nearest $\tau_T=M^a$. We present SR-induced error of each output divided by its
own exact-arithmetic risk. Risks average over teachers and training
randomness for each sketch; curves show medians across three sketches
per width.

\paragraph{Precision timing.}
\label{app:exp-timing}
Figure~\ref{fig:compute_law}(b) fixes $M=128$ and $T=31{,}715$,
switching between one and three bits after update $15{,}857$.
One bit followed by three uses $63{,}431$ mantissa bit-updates; the
reverse order uses $63{,}429$, so average precisions are nearly equal. Forward-rounding
tests keep backward precision at four bits; backward-rounding tests
keep forward features exact. We present the SR-induced error
with three bits early divided by that with three bits late. 

\paragraph{Precision requirements.}
Figure~\ref{fig:compute_law}(c) uses $M=16,32,64,128,256,512$ and
$p_\fw=p_\bw\in\{1,2,3,4\}$. At each width, precisions share the same
learning rate and training time. At the step nearest
$\tau_T=M^2$, we select the smallest tested precision satisfying accuracy tolerance in \eqref{eq:tolerance} for the last iterate with $\vartheta=1\%$. Training duration grows with
width. The table also gives requirements for two other tolerances;
``None'' means no tested precision succeeds.

\begin{center}
\begin{tabular}{lrrrrrr}
\toprule
Prediction risks & $M=16$ & $32$ & $64$ & $128$ & $256$ & $512$\\
\midrule
$1.10\times$ exact risk & $1$ & $1$ & $1$ & $1$ & $1$ & $2$\\
$1.01\times$ exact risk & $1$ & $2$ & $2$ & $2$ & $3$ & $3$\\
$1.001\times$ exact risk & $3$ & $3$ & $4$ & $4$ & None & None\\
\bottomrule
\end{tabular}
\end{center}

\subsection{Joint precision allocation and learning-rate design}
\label{app:exp-figure2}

Figure~\ref{fig:rules}(a) fixes $M=128$, $d=512$, $B=32$ and
$\eta=0.51660418$. Choose step $T=31{,}424$, nearest
$\tau_T=M^2$, we compare
$(p_\fw,p_\bw)=(1,4),(2,3),(3,2),(4,1)$ at equal compute under
\eqref{eq:declared-compute}. We show SR-induced error of tested precisions divided
by its exact-arithmetic risk for each case. 

Figure~\ref{fig:rules}(b,c) uses $M=32$, $d=128$, $B=4$, $T=4096$,
last-iterate output, exact forward features and
$p_\bw\in\{1,2,4,\infty\}$. The learning-rate
upper bound $\eta_{\max}=0.2$ is chosen before drawing sketches.
We average \eqref{eq:lr-exact-schedule} and
\eqref{eq:lr-rounded-schedule} over unit update intervals.
Figure~\ref{fig:rules}(b) shows the one-bit precision-dependent schedule,
the exact-SGD schedule design, and the constant learning-rate baseline
$\eta_t=\tau_T/T=512/4096=1/8$, all at $\tau_T=512$.
The exact-SGD design is constructed using \eqref{eq:lr-exact-schedule};
its name describes the schedule, rather than the training precision.

Figure~\ref{fig:rules}(c) compares the two learning-rate schedules
illustrated in Figure~\ref{fig:rules}(b), with both schedules evaluated at
the same backward precision. A regression problem consists of one randomly
drawn sketch and teacher, which remain fixed during repeated training.
For each schedule and precision, we test
$\tau_T\in\{16,32,64,128,256,512\}$ on four independent problems,
using 16 training runs per problem for each candidate. We choose the total
learning rate with the smallest mean last-iterate prediction risk across
these problems and runs.

For the plotted comparison, we generate eight new independent problems
that were not used for tuning. For each problem, we train with each selected
schedule 64 times. Each comparison matches one run of each schedule, using
the same batches, label noises and random numbers for rounding. Each
repetition uses new training data and random draws. The randomness for data,
label noise and rounding is mutually independent.
For each comparison, we subtract the final prediction risk of the exact-SGD
schedule from that of the precision-dependent schedule, both measured by
\eqref{eq:risk}. We average the 64 differences to obtain one result for each
problem. The plotted point is the average of these eight results. We compute
the standard error as their sample standard deviation divided by $\sqrt{8}$;
bars extend twice this amount above and below the point. Positive values
favor the exact-SGD schedule.
These experiments lie outside the conservative learning-rate class of
Assumption~\ref{as:assumption-schedules}, and do not directly test
Theorem~\ref{c2:arc-main} or establish a global optimum.

\subsection{Compute-budget comparisons}
We describe the width and duration comparisons in Figure~\ref{fig:frontier}
and the fixed-width data-reuse comparisons in Figure~\ref{fig:reuse-compute}.
All compute comparisons use \eqref{eq:declared-compute} with $p_0=9$
and $p_{\rm ref}=7$.

\paragraph{Width and training duration.}
\label{app:exp-compute}
Figure~\ref{fig:frontier}(a,b) uses $d=4M$, $\eta=0.5/g_1(S)$ and training to approximately $\tau_T=8M^2$.
Colored curves show uniformly averaged prediction risk as training
continues. At each budget, the black curve selects the smallest risk
among tested widths and recorded durations within that budget.
The red dashed curve in Figure~\ref{fig:frontier}(a) fits the
exact-arithmetic compute scaling, $F^{-1/6}$, in Table~\ref{tab:compute-precision}, adjusting only its multiplicative
coefficient. Figure~\ref{fig:frontier}(b) reuses that coefficient and
accounts for the update cost falling from $2BM$ to $5BM/4$, while
ignoring SR-induced error. Both curves are presented $6\%$ lower for
readability.  

Figure~\ref{fig:frontier}(c) uses the fixed task and three sketches of
Appendix~\ref{app:avg-compute-example}. Widths and durations are selected
using their mean prediction risk. At some small widths, $\eta T\le M^2$
ends training before the budget is used. Forward-only and backward-only
one-bit SR give nearly coincident curves in this comparison.

\paragraph{Fresh and reused data at fixed width.}
\label{app:exp-fixed-budget}
Figure~\ref{fig:reuse-compute} fixes $M=256$, $d=1024$, $B=32$ and
$\eta=0.2$. We compare exact arithmetic with independent SR for forward and
backward features at $p_\fw=p_\bw=2$; all other computations are exact.
Each of eight independent sketch--teacher--data tasks has 16 SR replicas
sharing its data and access order. Rounding streams are matched across
policies and data conditions within a task. Training uses fresh online batches or a fixed dataset of 2,048 or 16,384 examples. For the reused-data setting, after the first pass, existing examples are reshuffled without regenerating inputs or labels. All plotted reused-data runs complete the first pass. Thus fresh and reused training differ in data availability as well as
repeated access.

All risks use the uniform average outputs. Under \eqref{eq:declared-compute}, each exact-arithmetic update costs
$16384$ compute, and each update with two-bit SR at both feature locations
costs $11264$ compute. We use $F$ as a total compute ceiling. Prediction risks
were saved only at selected update counts, so for each method we take
the latest saved evaluation whose total cost is at most $F$. The table gives the selected update counts and
the percentage of $F$ actually used:
\begin{center}
\begin{tabular}{rrrrr}
\toprule
Budget $F$ & Exact updates & Exact use & SR updates & SR use\\
\midrule
$2^{24}$ & $1024$ & $100\%$ & $1361$ & $91.38\%$\\
$2^{25}$ & $2048$ & $100\%$ & $2871$ & $96.38\%$\\
$2^{26}$ & $4096$ & $100\%$ & $4722$ & $79.26\%$\\
$2^{27}$ & $8192$ & $100\%$ & $9961$ & $83.60\%$\\
\bottomrule
\end{tabular}
\end{center}
For example, at $F=2^{27}$, exact arithmetic training uses $8192$ updates and
exhausts the budget. SR can afford $11{,}915$ updates, but its latest
saved evaluation within the budget is at $9961$ updates, using only
$83.60\%$. The unused budget reflects the spacing of saved evaluations.
Figure~\ref{fig:reuse-compute} therefore compares available checkpoints
under a the same compute budget, with unequal actual compute use.

\subsection{Additional rounding locations and precision allocation}
\label{app:exp-rounding-controls}

Figure~\ref{fig:diagnostics} provides additional rounding-location controls
and an integer-precision allocation comparison, with protocols given below.
\begin{figure}[t]\centering
\includegraphics[width=.325\linewidth]{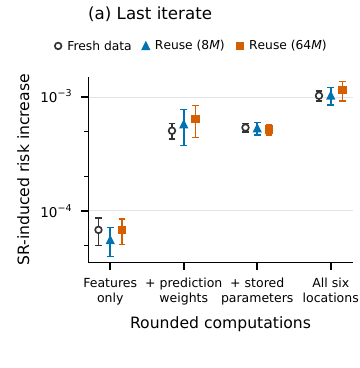}\hfill
\includegraphics[width=.325\linewidth]{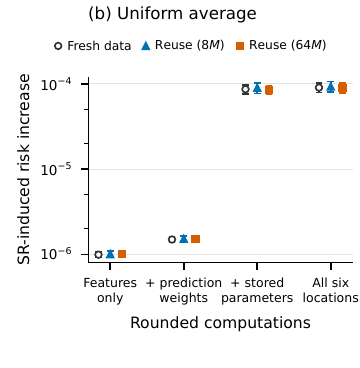}\hfill
\includegraphics[width=.325\linewidth]{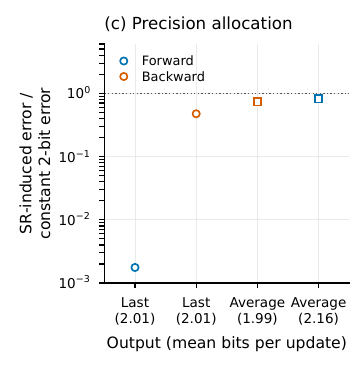}
\caption{\small Supplementary precision comparisons.
Figure~\ref{fig:diagnostics}(a,b) shows SR-induced risk increases for
last-iterate and uniformly averaged outputs at fixed duration, with
fresh or reused data. The four categories use feature rounding alone,
add temporary prediction-weight rounding, add stored-parameter rounding,
or round at all six locations, as defined below. The vertical axes use
different logarithmic ranges; bars show twice the standard error across
eight task means. Figure~\ref{fig:diagnostics}(c) compares four integer
precision schedules with constant two-bit precision at the same feature
location. Mean bit counts are shown; the realized budgets differ.}
\label{fig:diagnostics}
\end{figure}

\paragraph{Rounding locations and update order.}
Figure~\ref{fig:diagnostics}(a,b) uses the tasks and three data conditions
of Figure~\ref{fig:reuse-compute}, with $M=256$, $d=1024$, $B=32$,
$\eta_u=0.2$ and $T=16384$. Training uses fresh batches or fixed
datasets of $8M=2048$ or $64M=16384$ samples, reshuffled for each
subsequent pass with inputs and labels retained. Each of eight independent
sketch--teacher--data tasks has 16 SR replicas.

Recall $v_u$, $a_{u\ell}$, $b_{u\ell}$ and $Q_p$ from
\eqref{eq:copies}--\eqref{eq:sgd}, with $v_0=0$ and
$Q_\infty(x)=x$. Four additional precisions
$p_{\rm pred},p_{\rm res},p_{\rm grad},p_{\rm store}$ specify rounding
of prediction weights, residuals, minibatch gradients and stored
parameters, respectively. All five training consider the following quantizations:
\begin{equation}
\label{eq:exp-rounding-update}
\begin{aligned}
 a_{u\ell}&=Q_{p_{\fw,u}}(Sx_{u\ell}),&
 b_{u\ell}&=Q_{p_{\bw,u}}(Sx_{u\ell}),\\
 \widetilde v_u&=Q_{p_{\rm pred}}(v_u),&
 r_{u\ell}&=Q_{p_{\rm res}}(a_{u\ell}^{\top}\widetilde v_u-y_{u\ell}),\\
 \widehat g_u&=Q_{p_{\rm grad}}\!\left(\frac1B\sum_{\ell=1}^B b_{u\ell}r_{u\ell}\right),&
 v_{u+1}&=Q_{p_{\rm store}}(v_u-\eta_u\widehat g_u).
\end{aligned}
\end{equation}
The table lists the precision tuple
$(p_{\fw,u},p_{\bw,u},p_{\rm pred},p_{\rm res},p_{\rm grad},p_{\rm store})$ in each case:
\begin{center}
\begin{tabular}{ll}
\toprule
Training case & Precision tuple\\
\midrule
Exact-arithmetic reference & $(\infty,\infty,\infty,\infty,\infty,\infty)$\\
Features only & $(2,2,\infty,\infty,\infty,\infty)$\\
Features + prediction weights & $(2,2,2,\infty,\infty,\infty)$\\
Features + stored parameters & $(2,2,\infty,\infty,\infty,7)$\\
All six locations & $(2,2,2,2,2,7)$\\
\bottomrule
\end{tabular}
\end{center}
The temporary copy $\widetilde v_u$ is formed once per update, shared
by the $B$ predictions, and discarded afterward; the update starts
from $v_u$. Storage rounding is applied after the update, and its
result is carried to the next step. Distinct rounding locations
use independent draws, renewed each update; draws at the same location
are matched across policies and data conditions. Bit counts are
mantissa precisions as in Section~\ref{sec:setup}. Remaining arithmetic,
accumulation and evaluation use float64. No compute savings are assigned
to the four additional rounding locations.

\paragraph{Outputs and plotted risk increases.}
For every case we use $v_T^{\rm last}$ and $v_T^{\rm avg}$ from
\eqref{eq:output-weights}, without rounding the output average.
Let $v_T^{\iout,(0)}$ denote the exact-arithmetic output on the same
data and access order. For each rounded output, the plotted estimate
averages
$
 \frac12\bigl\|v_T^{\iout}-v_T^{\iout,(0)}\bigr\|_G^2,
$ for $\iout\in\{\mathrm{last},\mathrm{avg}\}$,
first over the 16 replicas within each task and then over the eight
tasks. This quantity estimates the rounding-induced risk increase in
\eqref{eq:penalty}. Risks are evaluated by \eqref{eq:risk}. Smaller SR-induced
error across data conditions need not imply smaller full risk. 

\paragraph{Integer precision allocation.}
\label{app:exp-precision}
Figure~\ref{fig:diagnostics}(c) illustrates the precision-allocation idea
in \eqref{eq:lagrange}. With $M=64$, $B=32$, $\eta=0.5/g_1$ and
$T=8869$, teacher-averaged second-moment recursions for one fixed sketch
compare time-varying integer precision with constant two-bit precision
at either feature location, keeping the other exact, for both outputs.
The plotted ratios compare SR-induced risk increases; realized bit
budgets differ, so these comparisons do not establish equal-budget
optimality.

\subsection{Neural-network rounding locations and evaluation}
\label{app:exp-nn}
Figure~\ref{fig:nn} reports final transformer validation loss and MLP
test loss, both cross-entropy using natural logarithms. Training without
SR disables only the added experimental stochastic-rounding operations.
Evaluation means making predictions and calculating validation or test loss
after training, with parameters fixed and no further updates. It disables
the added SR operations unless stated otherwise; the underlying arithmetic
described below remains unchanged. Bit counts refer
to mantissa precision, with unrestricted exponents.

\paragraph{Models and training.}
The transformer has six layers, six heads, width 384, context 256 and
30,041,856 parameters. It trains on approximately 200 million FineWeb-Edu
tokens for 12,207 updates, with batches of $64\times256$ tokens.
AdamW uses peak learning rate $6\times10^{-4}$, coefficients $(0.9,0.95)$,
weight decay $0.1$, gradient clipping at one and 100 warmup updates.
We use three seeds and 40 fixed validation batches. All transformer runs
use BF16/FP32 mixed-precision arithmetic, with BF16 autocast enabled
during both training and evaluation, including when added SR is disabled.
The CIFAR-10 MLP has architecture $3072$--$1024$--$10$ with ReLU and
FP32 computation. It trains for 25 epochs (9750 updates), batch size 128,
using SGD with momentum $0.9$, weight decay $5\times10^{-4}$, peak learning
rate $0.01$ and 200 warmup updates. We use 16 seeds and the full CIFAR-10
test set. Both models decrease the learning rate linearly during the
final $20\%$ of training and report last-iterate performance.

\paragraph{Rounding locations.}
Figure~\ref{fig:nn}(a) compares independent forward/backward activation
rounding at two bits, incoming weight-gradient signals at two bits,
temporary forward weights at two bits, and parameter storage at seven
bits. The combined case uses two bits at all four operation locations
and seven-bit storage. Optimizer states and the input-gradient product
use their original arithmetic.
Operation rounding affects linear layers except the tied language-model
head. Storage rounding affects all parameters, including embeddings,
normalization parameters and biases; disabling SR at evaluation does
not undo earlier storage rounding. These cases therefore differ in
parameter coverage as well as rounding location.

\begin{table}[htbp]\centering
\caption{Final transformer validation loss with added SR disabled at evaluation.
The table covers Figure~\ref{fig:nn}(a,b) and a three-bit storage comparison.}
\label{tab:nn-transformer}
{\small\setlength{\tabcolsep}{4pt}
\begin{tabular}{lr}
\toprule
Training configuration & Validation loss\\
\midrule
No added SR & $4.1846\pm0.0049$\\
Independent activations, 1 bit & $4.2452\pm0.0034$\\
Independent activations, 2 bits & $4.2016\pm0.0082$\\
Shared activations, 1 bit & $4.2238\pm0.0019$\\
Shared activations, 2 bits & $4.1976\pm0.0036$\\
Backpropagated gradients, 2 bits & $4.1852\pm0.0051$\\
Forward weights, 2 bits & $4.1907\pm0.0009$\\
Stored parameters, 7 bits & $4.1910\pm0.0036$\\
Combined: 2-bit computations, 7-bit storage & $4.2188\pm0.0029$\\
Stored parameters, 3 bits & $4.3555\pm0.0019$\\
\bottomrule
\end{tabular}}
\end{table}

\paragraph{Shared activation rounding.}
Figure~\ref{fig:nn}(b) uses one or two bits at both activation
computations. Independent SR rounds the activation separately for
forward and weight-gradient products; shared SR reuses the forward-rounded
activation. No other experimental SR is active. Seeds match initialization,
data order and evaluation examples. Rounding draws differ because sharing
uses fewer casts; the data stream is independent of those draws.
Each plotted loss difference subtracts same-seed training without SR.
The direct shared-minus-independent transformer differences are
$-0.02140\pm0.00311$ at one bit and $-0.00400\pm0.00502$ at two bits,
with twice the paired standard error. The two-bit comparison does not
clearly distinguish the methods. Sharing changes dependence between
rounding errors and can change the mean update, beyond
Theorem~\ref{thm:scheduled-main}.

\Needspace{22\baselineskip}
\paragraph{Training precision and rounding at test time.}
Figure~\ref{fig:nn}(c) uses one-, two- or three-bit forward activation
rounding during training. Each trained MLP makes predictions on the same
test examples twice, keeping its parameters fixed and making no updates:
with forward-activation SR disabled, and with SR enabled at its training
precision. Both tests use the MLP's underlying FP32 arithmetic.
The dashed line uses training and evaluation without SR. Accuracy in
Table~\ref{tab:nn-mlp} uses evaluation without SR.

\begin{table}[htbp]\centering
\caption{Final MLP test performance for Figure~\ref{fig:nn}(b,c).
Entries are means $\pm$ two standard errors across 16 seeds.
The two loss columns evaluate the same final parameters without or with SR;
accuracy uses evaluation without SR.}
\label{tab:nn-mlp}
{\small\setlength{\tabcolsep}{3pt}
\begin{tabular}{lrrr}
\toprule
Training configuration & \shortstack{Test loss\\SR off} & \shortstack{Test loss\\SR on} & \shortstack{Accuracy (\%)\\SR off}\\
\midrule
No added SR & $1.6198\pm0.0032$ & $1.6198\pm0.0032$ & $56.409\pm0.147$\\
Forward activations, 1 bit & $1.6237\pm0.0041$ & $1.6721\pm0.0056$ & $56.541\pm0.109$\\
Forward activations, 2 bits & $1.6210\pm0.0027$ & $1.6338\pm0.0030$ & $56.458\pm0.112$\\
Forward activations, 3 bits & $1.6212\pm0.0039$ & $1.6242\pm0.0039$ & $56.381\pm0.124$\\
Independent activations, 1 bit & $1.6208\pm0.0034$ & $1.6706\pm0.0041$ & $56.501\pm0.120$\\
Independent activations, 2 bits & $1.6223\pm0.0032$ & $1.6356\pm0.0035$ & $56.403\pm0.131$\\
Shared activations, 1 bit & $1.5581\pm0.0041$ & $1.6028\pm0.0051$ & $56.905\pm0.143$\\
Shared activations, 2 bits & $1.6081\pm0.0045$ & $1.6205\pm0.0048$ & $56.369\pm0.136$\\
\bottomrule
\end{tabular}}
\end{table}

\end{document}